\def\supp{\mathop{\text{supp}}}
\long\def\comment#1{}
\def\tr{\mathop{\text{Tr}}}
\def\cS{{\mathcal{S}}}
\newcommand{\bel}{\begin{eqnarray}\label}
\newcommand{\eel}{\end{eqnarray}}
\newcommand{\bes}{\begin{eqnarray*}}
\newcommand{\ees}{\end{eqnarray*}}
\let\hat\widehat
\let\tilde\widetilde
\def\mid{\,|\,}
\def\supp{\mathop{\text{supp}\kern.2ex}}
\def\argmin{\mathop{\text{\rm arg\,min}}}
\def\argmax{\mathop{\text{\rm arg\,max}}}
\def\tr{{\rm{Tr}}}
\def\supp{\mathop{\text{supp}}}
\def\tr{\mathrm{Tr}}
\def\##1\#{\begin{align}#1\end{align}}
\def\$#1\${\begin{align*}#1\end{align*}}
\definecolor{green2}{HTML}{c2fdfe}
\definecolor{green1}{HTML}{bce672}
\definecolor{yellow1}{HTML}{f5dd6f}
\definecolor{blue1}{HTML}{3eede7}
\definecolor{orange1}{HTML}{e77c4b}
\definecolor{red1}{HTML}{f47983}
\newcommand{\xmark}{\ding{55}}
\def\shownotes{1}  \ifnum\shownotes=1
\newcommand{\authnote}[2]{{[#1: #2]}}
\newcommand{\authnote}[2]{}
\title{Double Pessimism is Provably Efficient for Distributionally Robust Offline Reinforcement Learning: Generic Algorithm and Robust Partial Coverage}
\author{
    Jose Blanchet\thanks{Alphabetical order. Email to \texttt{miaolu@stanford.edu}} \thanks{Department of Management Science and Engineering, Stanford University.}\qquad 
    Miao Lu$^*$\footnotemark[2] \qquad 
    Tong Zhang$^*$\thanks{Department of Mathematics, The Hong Kong University of Science and Technology.}\qquad 
    Han Zhong$^*$\thanks{Center for Data Science, Peking University.} 
}
\date{May 17, 2023; \quad Revised: \today}
\begin{document}

\maketitle

\begin{abstract}
    In this paper, we study distributionally robust offline reinforcement learning (robust offline RL), which seeks to find an optimal policy purely from an offline dataset that can perform well in perturbed environments. 
    In specific, we propose a generic algorithm framework called \underline{D}oubly \underline{P}essimistic \underline{M}odel-based \underline{P}olicy \underline{O}ptimization ($\texttt{P}^2\texttt{MPO}$), which features a novel combination of a flexible model estimation subroutine and a doubly pessimistic policy optimization step. 
    Notably, the \emph{double pessimism} principle is crucial to overcome the distributional shifts incurred by (i) the mismatch between the behavior policy and the family of target policies; and (ii) the perturbation of the nominal model. Under certain accuracy conditions on the model estimation subroutine, we prove that $\texttt{P}^2\texttt{MPO}$ is sample-efficient with \emph{robust partial coverage data}, which only requires the offline data to have good coverage of the distributions induced by the optimal robust policy and the perturbed models around the nominal model. Our assumption on data is relatively mild compared with previous full-coverage-style assumptions which need a uniformly lower bounded data distribution.

    Our algorithm and theory can be applied to a vast body of robust Markov decision processes (RMDPs) in the regime of large state spaces. By tailoring specific model estimation subroutines for concrete examples of RMDPs, including tabular RMDPs, factored RMDPs, kernel and neural RMDPs, we prove that for all these examples $\texttt{P}^2\texttt{MPO}$ enjoys a $\tilde{\mathcal{O}}(n^{-1/2})$ convergence rate, where $n$ is the number of trajectories in data.
    We highlight that all these RMDP examples, except tabular RMDPs, are first identified and proven tractable by this work. Furthermore, as an extension to multi-agent decision-making, we continue our study of robust offline RL in the multi-player robust Markov games (RMGs). 
    By extending the double pessimism principle identified for single-agent RMDPs, we propose another doubly-pessimistic-type algorithm framework that can efficiently find the \emph{robust Nash equilibria} among players using only robust unilateral (partial) coverage data.
    To our best knowledge, this work proposes the first general learning principle --- double pessimism --- for robust offline RL and shows that it is provably efficient in the context of general function approximation. 
\end{abstract}

\noindent
\textbf{Keywords:} distributionally robust offline reinforcement learning, double pessimism, robust partial coverage, function approximation

\tableofcontents

\section{Introduction}

Reinforcement learning (RL) \citep{sutton2018reinforcement} aims to find an optimal policy that can maximize the expected cumulative rewards obtained from an unknown environment. 
Typically, modern deep RL algorithms learn such a policy in an online trial-and-error fashion, collecting millions to billions of data.
However, online data collection could be costly and risky in many practical applications, such as healthcare \citep{wang2018supervised} and autonomous driving \citep{pan2017agile}, prohibiting the use of RL in these critical domains.
To tackle this challenge, offline RL \citep{levine2020offline} (also known as batch RL \citep{lange2012batch}) proposes to learn a near-optimal policy purely from a dataset collected a priori without further interactions with the environment. 

Recent years have witnessed great progresses in offline RL for both practice and theory \citep{yu2020mopo,kumar2020conservative,jin2021pessimism,uehara2021pessimistic,xie2021bellman,cheng2022adversarially}.
Nevertheless, these works implicitly require that the offline data are generated by the real-world environment, which may fail in practice. 
Taking robotics \citep{kober2013reinforcement,openai2018learning} as an example, the experimenter trains the agents in a simulated physical environment and then deploys them in real-world environments. 
Since the experimenter does not have access to the true physical environments, there would be a mismatch between the simulated environment to generate the offline dataset and the real-world environments to deploy the trained agents. 
Such a mismatch is commonly referred to as the \emph{sim-to-real gap} \citep{peng2018sim,zhao2020sim}. 
Since in RL the optimal policy is sensitive to the model \citep{mannor2004bias,el2005robust}, the potential sim-to-real gap may lead to the poor performance of RL algorithms.

A promising solution to remedy this issue is robust RL \citep{iyengar2005robust,el2005robust,morimoto2005robust} --- learning a robust policy that can perform well in a bad or even adversarial environment. 
A line of works on deep robust RL \citep{pinto2017supervision,pinto2017robust,pattanaik2017robust, mandlekar2017adversarially, tessler2019action, zhang2020robust, kuang2022learning} demonstrates the superiority of the trained robust policy in the real world environments. 
Furthermore, the recent work of \citet{hu2022provable} theoretically proves that the ideal robust policy does attain near optimality for problems with the sim-to-real gap. 
However, this work does not suggest how to learn an optimal robust policy efficiently from a theoretical perspective. 

To understand robust RL from theoretical sides, robust Markov decision process (RMDP) \citep{iyengar2005robust,el2005robust} has been proposed and extensively studied, and many recent works \citep{zhou2021finite,yang2021towards,shi2022distributionally,ma2022distributionally} design sample-efficient algorithms for offline RL in RMDPs. 
But these works mainly focus on the tabular case, i.e., finite state space, and thus are not capable of tackling large or even infinite state spaces which usually appear in modern RL applications. 
Meanwhile, in the non-robust offline RL setting, a line of works \citep{jin2021pessimism,uehara2021pessimistic,xie2021bellman,zanette2021provable,rashidinejad2021bridging} has shown that ``\emph{pessimism}'' is the general learning principle for designing sample-efficient algorithms that can overcome the key difficulty in offline RL, that is, the distributional shift problem caused by finite fixed data.
In particular, in the context of function approximation, \cite{xie2021bellman} and \cite{uehara2021pessimistic} leverage the pessimism principle and propose generic algorithms in the model-free and model-based fashion, respectively. 
Hence, it is natural to ask the following questions:
\begin{center}
    \textbf{Q1:} \emph{What is the general learning principle for robust offline RL?} \\
    \textbf{Q2:} \emph{Based on this learning principle, can we design a generic algorithm for robust offline RL in the context of function approximation?}
\end{center}
To answer these two questions, we need to handle two intertwined challenges --- \emph{distributional shifts} 
and \emph{large state space}.
In general, the distributional shift is caused by the mismatch between the offline data distribution and the distributions induced by the target policies \emph{and} the target environments.
Here in robust offline RL, the distributional shifts have two sources: (i) the mismatch between the behavior policy and the target policies to be learned; (ii) the mismatch between the nominal environment and the perturbed environment.
The latter is a unique challenge that is not presented in non-robust offline RL.
Besides, regarding the state space, existing works mainly focus on the tabular case, and it still remains elusive how to add reasonable structural conditions to make RMDPs with large state spaces tractable. 
Despite all these challenges, in this paper, we answer the aforementioned two questions affirmatively. 
Our contributions are summarized below.

\subsection{Our Contributions}

Our work contributes to the theoretical understanding of robust offline (multi-agent) RL in large state spaces.
More concretely, our contributions are three-fold.

\vspace{3mm}
\begin{itemize}
    \item \textbf{General learning principle and algorithmic design.} 
    We first study robust offline single-agent RL within a general framework, which not only includes existing known tractable $\mathcal{S} \times \mathcal{A}$-rectangular tabular RMDPs, but also subsumes several newly proposed models: $\mathcal{S} \times \mathcal{A}$-rectangular factored RMDPs, $\mathcal{S} \times \mathcal{A}$-rectangular kernel RMDPs, and $\mathcal{S} \times \mathcal{A}$-rectangular neural RMDPs.
    Under this framework, we propose a generic model-based algorithm, dubbed as \underline{D}oubly \underline{P}essimistic \underline{M}odel-based \underline{P}olicy \underline{O}ptimization ($\texttt{P}^2\texttt{MPO}$), which consists of a model estimation subroutine and a policy optimization step based on \emph{doubly pessimistic} value estimators. 
    The algorithm is based on a \emph{double pessimism} principle, which requires being pessimism in the face of \emph{model estimation uncertainty} and \emph{environment uncertainty} simultaneously.
    This plays a key role in overcoming the distributional shift problem in robust offline RL. 
    Notably, the model estimation subroutine can be flexibly chosen according to the structural conditions of specific RMDP examples. 
    \vspace{3mm}
    \item \textbf{Theoretical guarantees based on a robust partial coverage assumption.} 
    From the theoretical perspective, we characterize the optimality of $\texttt{P}^2\texttt{MPO}$ via the notion of \emph{robust partial coverage coefficient} and \emph{robust model estimation error}.
    The robust partial coverage assumption only requires that the offline dataset has good coverage of distributions induced by the optimal robust policy and the perturbed models around the nominal model.
    In specific, we prove that the suboptimality of $\texttt{P}^2\texttt{MPO}$ is bounded by the \emph{robust model estimation error} (Condition~\ref{cond: model estimation}) and the \emph{robust partial coverage coefficient} (Assumption~\ref{ass: partial coverage}). 
    For concrete examples of RMDPs, by customizing specific model estimation mechanisms and plugging them into $\texttt{P}^2\texttt{MPO}$, we show that $\texttt{P}^2\texttt{MPO}$ enjoys a $\tilde{\mathcal{O}}(n^{-1/2})$ convergence rate with robust partial coverage data, where $n$ is the number of trajectories in the offline dataset. 
    \vspace{3mm}
    \item \textbf{Extension to robust offline multi-agent RL.} 
    As a natural extension of single-agent RMDPs, we also make the first attempt to study offline RL in robust Markov games (RMGs) \citep{kardes2005robust}, wherein the goal is to learn a \emph{robust Nash equilibrium} (RNE). 
    For this multi-agent setting, we extend the double pessimism principle identified for single-agent RMDPs, based on which we propose the \underline{D}oubly \underline{P}essimistic \underline{M}odel-based \underline{M}ulti-agent \underline{P}olicy \underline{O}ptimization ($\texttt{P}^2\texttt{M}^2\texttt{PO}$) algorithm. 
    Similar to $\texttt{P}^2\texttt{M}\texttt{PO}$ for the single-agent setting, $\texttt{P}^2\texttt{M}^2\texttt{PO}$ comprises a model estimation step and a surrogate objective minimization step, where the latter adopts a generalization of the double pessimism principle.
    We further demonstrate that the suboptimality of $\texttt{P}^2\texttt{M}^2\texttt{PO}$ is controlled by the \emph{robust unilateral (partial) coverage coefficient} (Assumption~\ref{assumption:unilateral}) and the \emph{robust model estimation error} (Condition~\ref{cond: model estimation rmg}). 
    Here the newly proposed robust unilateral coverage condition can be regarded as the robust counterpart of the unilateral coverage condition for offline non-robust Markov games (MGs) \citep{zhong2022pessimistic,cui2022offline}. 
    Finally, as in the single-agent setting, by specifying the robust model estimation error for concrete RMG examples, we can generally prove that $\texttt{P}^2\texttt{M}^2\texttt{PO}$ converges to a robust Nash equilibrium at a rate of $\tilde{\mathcal{O}}(n^{-1/2})$ with robust unilateral coverage data, where $n$ is the number of trajectories in the offline dataset. 
\end{itemize}

\vspace{3mm}
\noindent
In summary, our work identifies the first general learning principle, which we call \emph{double pessimism}, for robust offline RL. 
Based on this general principle, we can perform sample-efficient robust offline RL with robust partial coverage data in the context of general function approximation.

\subsection{Related Works}

Our work is related to a line of previous theoretical works on robust RL in RMDPs, offline RL with pessimism principle, and robust Markov games, which we compare respectively in the following.
Also, please see Table~\ref{table:comp} for a summary of our results and a comparison with mostly related works on robust offline RL.

\paragraph*{Robust reinforcement learning in robust Markov decision processes.} 
Robust RL is usually modeled as a robust MDP (RMDP) \citep{iyengar2005robust,el2005robust}, and its planning has been well studied \citep{iyengar2005robust,el2005robust,xu2010distributionally,wang2022policy,wang2022convergence}. 
Recently, robust RL in RMDPs has attracted considerable attention, and a growing body of works studies this problem in the generative model \citep{yang2021towards,panaganti2022sample,si2023distributionally,wang2023finite,yang2023avoiding, xu2023improved, clavier2023towards}, online setting \citep{wang2021online,badrinath2021robust,dong2022online}, and offline setting \citep{zhou2021finite, panaganti2022robust, shi2022distributionally,ma2022distributionally}. Our work focuses on robust offline RL, and we provide a more in-depth comparison with \citet{zhou2021finite,shi2022distributionally,ma2022distributionally} as follows.
Under the full coverage condition (a uniformly lower bounded data distribution), \citet{zhou2021finite} provide the first sample-efficient algorithm for $\mathcal{S} \times \mathcal{A}$-rectangular tabular RMDPs. 
After, \citet{shi2022distributionally} leverage the pessimism principle and design a sample-efficient offline algorithm that only requires robust partial coverage data for $\mathcal{S} \times \mathcal{A}$-rectangular tabular RMDPs. \citet{ma2022distributionally} propose a new $d$-rectangular RMDP and develop a pessimistic style algorithm that can find a near-optimal robust policy with partial coverage data. 
In comparison, we provide a generic algorithm that can not only solve the models in \citet{zhou2021finite,shi2022distributionally,ma2022distributionally}, but can also tackle various newly proposed RMDP models such as  $\mathcal{S} \times \mathcal{A}$-rectangular factored RMDP, $\mathcal{S} \times \mathcal{A}$-rectangular kernel RMDP, and $\mathcal{S} \times \mathcal{A}$-rectangular neural RMDP. 
See Table~\ref{table:comp} for a summary. 
Moreover, we propose a new pessimistic type learning principle ``double pessimism'' for robust offline RL. 
Although \citet{shi2022pessimistic} and \citet{ma2022distributionally} adopt the similar algorithmic idea in tabular or linear settings, neither of them have identified a general learning principle for robust offline RL in the regime of large state spaces.

\begin{table*}[t]
    \centering\resizebox{\columnwidth}{!}{
    \begin{tabular}{ | c | c | c | c | c |  }
    \hline
    & \citet{zhou2021finite}
    & \citet{shi2022distributionally} & \citet{ma2022distributionally}
     & This Work  \\ 
    \hline
    $\mathcal{S} \times \mathcal{A}$-rectangular tabular RMDP & \checkmark!  & \checkmark  & \xmark &  \checkmark  \\
    \hline
    $d$-rectangular linear RMDP   & \xmark  & \xmark  & \checkmark &  \checkmark  \\
    \hline
    \rowcolor{green2}
    $\mathcal{S} \times \mathcal{A}$-rectangular factored RMDP  &  \xmark &  \xmark& \xmark &  \checkmark  \\
    \hline
    \rowcolor{green2}
    $\mathcal{S} \times \mathcal{A}$-rectangular kernel RMDP  &  \xmark &  \xmark& \xmark &  \checkmark  \\
    \hline
    \rowcolor{green2}
    $\mathcal{S} \times \mathcal{A}$-rectangular neural RMDP   &  \xmark &  \xmark& \xmark &  \checkmark  \\
    \hline
    \hline
    \rowcolor{green2}
    $\mathcal{S} \times \mathcal{A}$-rectangular general RMG   &  \texttt{NA} &   \texttt{NA} &  \texttt{NA} &  \checkmark  \\
    \hline
    \end{tabular}
    }
    \caption{ A comparison with the most related works on robust offline RL. 
    \checkmark means that the work can tackle this model with robust partial coverage data, $\checkmark!$ means that the work requires full coverage data to solve the model, and \xmark ~means that the work cannot tackle the model. 
    The light green color denotes the models that are first proposed or proved tractable in this work. 
    }
    \label{table:comp}
    \end{table*}

\paragraph*{Non-robust offline RL and pessimism principle.} 
The line of works on offline RL aims to design efficient learning algorithms that find an optimal policy given an offline dataset collected a priori. 
Prior works \citep{munos2005error,antos2008learning,chen2019information} typically require a dataset of full coverage, which assumes that the offline data have good coverage of all state-action pairs. 
In order to avoid such a strong coverage condition on data, the \emph{pessimism} principle -- being conservative in policy or value estimation of those state-action pairs that are not sufficiently covered by data  -- has been proposed. 
Based on this principle, a long line of works \citep[see e.g.,][]{jin2021pessimism,uehara2021pessimistic,xie2021bellman,xie2021policy,rashidinejad2021bridging,zanette2021provable,yin2021towards,xiong2022nearly,shi2022pessimistic,li2022settling,zhan2022offline,lu2022pessimism,rashidinejad2022optimal} propose algorithms that can learn the optimal policy only with \emph{partial coverage data}.
The partial coverage data only need to cover the state-action pairs visited by the optimal policy.
Among these works, our work is mostly related to \citet{uehara2021pessimistic}, which proposes a generic model-based algorithm for non-robust offline RL. 
Our algorithm for robust offline RL is also in a model-based fashion, and our study covers some models such as $\mathcal{S} \times \mathcal{A}$-rectangular kernel and neural RMDPs whose non-robust counterparts are not studied by \citet{uehara2021pessimistic}. 
More importantly, our algorithm is based on a newly proposed \emph{double pessimism} principle, which is tailored for robust offline RL and is in parallel with the pessimism principle used in non-robust offline RL. 
Also, we show that the performance of our proposed algorithm depends on the notion of \emph{robust partial coverage coefficient}, which is also different from the notions of partial coverage coefficient in previous non-robust offline RL works \citep{jin2021pessimism,xie2021bellman,uehara2021pessimistic}.

As an extension of offline RL in single-agent MDPs, another line of works \citep{zhong2022pessimistic,cui2022provably,cui2022offline,xiong2022nearly,yan2022model,zhang2023offline} studies offline RL in Markov games (MGs) and demonstrates that the \emph{unilateral coverage} condition is the necessary and sufficient coverage condition for sample-efficient non-robust offline RL in MGs. 
In comparison, our work focuses on robust offline RL in RMGs and designs a generic algorithm framework that can efficiently learn a robust Nash equilibrium with only robust unilateral coverage data, which can be regarded as a robust counterpart of unilateral coverage data.

\paragraph*{Robust reinforcement learning in robust Markov games.} 
Robust RL has also been previously considered in a multi-agent setting, and the decision process is modeled as a robust Markov game (RMG) \citep{kardes2005robust, kardecs2011discounted}, wherein the goal is to learn a \emph{robust Nash equilibirum} (RNE) among players.
The concept of RNE takes transition (and reward) uncertainty into consideration, thus rendering it more stable in the face of perturbed environments.
Recently, \cite{zhang2020robust} study policy gradient and actor-critic style algorithms for solving the RNE.
But the sample complexity and the convergence property of their algorithm are unknown.
\cite{ma2023decentralized} study online RL in RMGs and design a decentralized-style algorithm to learn the RNE.
However, robust offline RL in RMGs and its sample-efficiency still remain open (to our best knowledge), which is the focus of our work.


\subsection{Notations}

For any set $A$, we use $2^A$ to denote the collection of all the subsets of $A$. 
We use $A^{ \complement}$ to denote the complementary set of $A$.
For any measurable space $\mathcal{X}$, we use $\Delta(\mathcal{X})$ to denote the collection of all the probability measures over $\mathcal{X}$.
For any integer $n$, we use $[n]$ to denote the set $\{1,\cdots,n\}$.
Throughout the paper, we use $D(\cdot\|\cdot)$ to denote a (pseudo-)distance between two probability measures (or densities).
In specific, we define the KL-divergence $D_{\mathrm{KL}}(p\|q)$ between two probability densities $p$ and $q$ over $\cX$ as 
\begin{align*}
    D_{\mathrm{KL}}(p\|q) = \int_{\mathcal{X}}p(x)\log\left(\frac{p(x)}{q(x)}\right)\mathrm{d}x,
\end{align*}
and we define the TV-distance $D_{\mathrm{TV}}(p\|q)$ between two probability densities $p$ and $q$ over $\cX$ as 
\begin{align*}
    D_{\mathrm{TV}}(p\|q) = \frac{1}{2}\int_{\mathcal{X}}\left|q(x) - p(x)\right|\mathrm{d}x.
\end{align*}
Given a function class $\mathcal{F}$ equipped with some norm $\|\cdot\|_{\mathcal{F}}$, we denote by $\mathcal{N}_{[]}(\epsilon,\mathcal{F},\|\cdot\|_{\mathcal{F}})$ the $\epsilon$-bracket number of $\mathcal{F}$, and $\mathcal{N}(\epsilon,\mathcal{F},\|\cdot\|_{\mathcal{F}})$ the $\epsilon$-covering number of $\mathcal{F}$.
We denote $\mathcal{P}=\{P(\cdot|\cdot,\cdot): \mathcal{S}\times\mathcal{A}\mapsto\Delta(\mathcal{S})\}$ as the space of transition kernels.
We use $\Delta(\mathcal{A}|\mathcal{S},H)$ to denote the collection $\{\pi=\{\pi_h\}_{h=1}^H|\pi_h(\cdot|\cdot):\mathcal{S}\mapsto\Delta(\mathcal{A})\}$.

\section{Preliminaries on Robust Markov Decision Processes}\label{sec: preliminaries}

In this section, we introduce robust Markov decision processes (RMDPs) and formulate the offline RL problem.
In Section~\ref{subsec: rmdp framework}, we introduce a unified framework for studying RMDPs in the episodic setting.
In Section~\ref{subsec: offline RL in rmdp} we formulate the problem of offline RL in the proposed framework of RMDPs.

\subsection{A Unified Framework of Robust Markov Decision Processes}\label{subsec: rmdp framework}

We first introduce a unified framework of episodic RMDPs, denote by a tuple $(\mathcal{S}, \mathcal{A}, H, P^{\star}, R, \mathcal{P}_\mathrm{M}, \mathbf{\Phi})$.
The set $\mathcal{S}$ is the state space with possibly \emph{infinite} cardinality. 
The set $\mathcal{A}$ is the action space with finite cardinality.
The integer $H$ is the length of each episode.
The set $P^{\star}=\{P_h^{\star}\}_{h=1}^H$ is the collection of \emph{nominal} transition kernels where $P_h^{\star}:\mathcal{S}\times\mathcal{A}\mapsto\Delta(\mathcal{S})$.
The set $R=\{R_h\}_{h=1}^H$ is the collection of reward functions where $R_h:\mathcal{S}\times\mathcal{A}\mapsto[0,1]$. 
The space $\mathcal{P}_{\text{M}} \subseteq \mathcal{P}$ is a realizable model space which contains the nominal transition kernel $P^{\star}$, i.e., $P_h^{\star}\in\mathcal{P}_{\mathrm{M}}$ for each step $h\in[H]$.

Most importantly and different from standard MDPs, the RMDP is equipped with a mapping $\mathbf{\Phi}:\mathcal{P}_{\text{M}}\mapsto 2^{\mathcal{P}}$ that characterizes the \emph{robust set} of any transition kernel in $\mathcal{P}_{\mathrm{M}}$.
Formally, for any transition kernel $P\in\mathcal{P}_\mathrm{M}$, we call $\mathbf{\Phi}(P)$ the \emph{robust set} of $P$.
One can interpret the nominal transition kernel $P^{\star}_h$ as the transition of the training environment, while $\boldsymbol{\Phi}(P^{\star}_h)$ contains all possible transitions of the test environment.

\begin{remark}
    The mapping $\boldsymbol{\Phi}$ is defined on the realizable model space $\mathcal{P}_{\mathrm{M}}$, while for generality we allow the image of $\boldsymbol{\Phi}$ to be outside of $\mathcal{P}_{\mathrm{M}}$.
    That is, a $\tilde{P}\in\boldsymbol{\Phi}(P)$ for some $P\in\mathcal{P}_{\mathrm{M}}$ might be in $\mathcal{P}_{\mathrm{M}}^{\complement}$.
\end{remark}

\paragraph*{Policy and robust value function.}
Given an RMDP $(\mathcal{S}, \mathcal{A}, H, P^{\star}, R, \mathcal{P}_{\mathrm{M}}, \mathbf{\Phi})$, we consider using a Markovian policy to make decisions.
A Markovian policy $\pi$ is defined as $\pi=\{\pi_h\}_{h=1}^H$ with $\pi_h:\mathcal{S}\mapsto\Delta(\mathcal{A})$ for each step $h\in[H]$.
For simplicity, we use \emph{policy} to refer to a Markovian policy in the sequel.

Given any policy $\pi$, we define the \emph{robust value function} of $\pi$ with respect to any set of transition kernels $P = \{P_h\}_{h=1}^H\subseteq\mathcal{P}_{\mathrm{M}}$ as the following, for each step $h\in[H]$,
\begin{align}
    V_{h, P, \mathbf{\Phi}}^{\pi}(s)&:= \inf_{\tilde{P}_h\in\mathbf{\Phi}(P_h), 1\leq h\leq H} V_h^{\pi}(s; \{\tilde{P}_h\}_{h=1}^H),\quad \forall s\in\mathcal{S},\label{eq: robust V}\\
    Q_{h, P, \mathbf{\Phi}}^{\pi}(s, a)&:= \inf_{\tilde{P}_h\in\mathbf{\Phi}(P_h), 1\leq h\leq H} Q_h^{\pi}(s, a; \{\tilde{P}_h\}_{h=1}^H),\quad \forall (s,a)\in\mathcal{S}\times\mathcal{A}.\label{eq: robust Q}
\end{align}
Here $V_h^{\pi}(\cdot; \{\tilde{P}_h\}_{h=1}^H)$ and $Q_h^{\pi}(\cdot; \{\tilde{P}_h\}_{h=1}^H)$ are the \emph{state-value function} and the \emph{action-value function} \citep{sutton2018reinforcement} of policy $\pi$ in the standard episodic MDP $(\mathcal{S}, \mathcal{A}, H, \{\tilde{P}_h\}_{h=1}^H, R)$, defined as
\begin{align}
    V_h^{\pi}(s; \{\tilde{P}_h\}_{h=1}^H) &:= \mathbb{E}_{\{\tilde{P}_h\}_{h=1}^H,\pi}\left[\sum_{i=h}^HR_{i}(s_i,a_i)\, \middle|\, s_h=s\right],\quad \forall s\in\mathcal{S},\label{eq: V}\\
    Q_h^{\pi}(s, a; \{\tilde{P}_h\}_{h=1}^H) &:= \mathbb{E}_{\{\tilde{P}_h\}_{h=1}^H,\pi}\left[\sum_{i=h}^HR_{i}(s_i,a_i)\, \middle|\, s_h=s,a_h=a\right],\quad \forall (s,a)\in\mathcal{S}\times\mathcal{A},\label{eq: Q}
\end{align}
where the expectation $\mathbb{E}_{\{\tilde{P}_h\}_{h=1}^H,\pi}[\cdot]$ is taken with respect to the trajectories induced by the transition kernels $\{\tilde{P}_h\}_{h=1}^H$ and the policy $\pi$.
Intuitively, the robust value functions \eqref{eq: robust V} and \eqref{eq: robust Q} of a policy $\pi$ given transition kernel $P$ are defined as the least expected cumulative reward achieved by $\pi$ when the transition kernel varies in the robust set of $P$.
This is how an RMDP takes the perturbed models into consideration.

\paragraph*{$\mathcal{S}\times\mathcal{A}$-rectangular robust set and robust Bellman equation.}
Ideally, we would like to consider robust value functions that have recursive expressions, just like the Bellman equations satisfied by \eqref{eq: V} and \eqref{eq: Q} in a standard episodic MDP \citep{sutton2018reinforcement}.
To achieve this, we impose a generally adopted \emph{rectangular} assumption on the robust sets, which is called the \emph{$\mathcal{S}\times\mathcal{A}$-rectangular} assumption \citep{iyengar2005robust}.

\begin{assumption}[$\mathcal{S}\times\mathcal{A}$-rectangular robust set]\label{ass: sarmdp}
    We assume that the mapping $\boldsymbol{\Phi}$ induces $\mathcal{S}\times\mathcal{A}$-rectangular robust sets. 
    More specifically, the mapping $\mathbf{\Phi}$ satisfies, for any $P\in\mathcal{P}_{\mathrm{M}}$,
    \begin{align*}
        \mathbf{\Phi}(P) = \bigotimes_{(s,a)\in\mathcal{S}\times\mathcal{A}} \mathcal{P}(s,a; P),\quad \text{where}\quad  \mathcal{P}(s,a;P)\subseteq\Delta(\mathcal{S}).  
    \end{align*}
\end{assumption}

Intuitively, the $\mathcal{S}\times\mathcal{A}$-rectangular assumption requires that the mapping $\boldsymbol{\Phi}(P)$ gives decoupled robust sets for any $P(\cdot|s,a)$ across different $(s,a)$-pairs.
We give specific forms of $\mathcal{P}(\cdot,\cdot; P)$ in Section~\ref{subsec: example RMDP}.
Commonly, one chooses $\mathcal{P}(s,a; P)$ as the set of distributions centered at $P(\cdot|s,a)$.

Now thanks to the $\mathcal{S}\times\mathcal{A}$-rectangular assumption, the robust value functions \eqref{eq: robust V} and \eqref{eq: robust Q} of any policy $\pi$ satisfy a recursive expression, called robust Bellman equations \citep{iyengar2005robust, nilim2005robust}.

\begin{prop}[Robust Bellman equation]\label{prop: robust Bellman}
    Under Assumption \ref{ass: sarmdp}, for any $P=\{P_h\}_{h=1}^H$ where $P_h\in\mathcal{P}_{\mathrm{M}}$ and any $\pi=\{\pi_h\}_{h=1}^H$ with $\pi_h:\mathcal{S}\mapsto\Delta(\mathcal{A})$, the following robust Bellman equations hold, 
    \begin{align}
        V_{h, P, \mathbf{\Phi}}^{\pi}(s) &= \mathbb{E}_{a\sim \pi_h(\cdot|s)}[Q_{h, P, \mathbf{\Phi}}^{\pi}(s, a)],\quad \forall s\in\mathcal{S},\label{eq: robust bellman V}\\
        Q_{h, P, \mathbf{\Phi}}^{\pi}(s, a) &= R_h(s,a) + \inf_{\tilde{P}_h\in\mathbf{\Phi}(P_h)}\mathbb{E}_{s'\sim \tilde{P}_h(\cdot|s,a)}[V_{h+1, P, \mathbf{\Phi}}^{\pi}(s')],\quad \forall (s,a)\in\mathcal{S}\times\mathcal{A}.\label{eq: robust bellman Q}
    \end{align}
\end{prop}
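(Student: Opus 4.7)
The plan is to prove both identities by backward induction on $h$. The base case at $h=H$ is immediate: $Q_H^\pi(s,a;\{\tilde P_h\}_{h=1}^H)=R_H(s,a)$ does not depend on any $\tilde P_h$, so \eqref{eq: robust bellman Q} holds under the convention $V_{H+1}\equiv 0$, and \eqref{eq: robust bellman V} then follows from its non-robust counterpart $V_H^\pi(s;\cdot)=\sum_a \pi_H(a|s)Q_H^\pi(s,a;\cdot)$.

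For the inductive step, assume both equations hold at step $h+1$. In the non-robust MDP parametrized by $\{\tilde P_h\}_{h=1}^H$, the standard Bellman recursion gives
\[
Q_h^\pi(s,a;\{\tilde P_i\})=R_h(s,a)+\mathbb{E}_{s'\sim \tilde P_h(\cdot|s,a)}[V_{h+1}^\pi(s';\{\tilde P_i\})], \quad V_h^\pi(s;\{\tilde P_i\})=\sum_a\pi_h(a|s)Q_h^\pi(s,a;\{\tilde P_i\}).
\]
I then take $\inf_{\{\tilde P_i\}_{i=1}^H\in \bigotimes_i\mathbf{\Phi}(P_i)}$ on both sides. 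The kernels $\tilde P_{1:h-1}$ drop out because they do not influence $Q_h^\pi$ or $V_h^\pi$. For \eqref{eq: robust bellman Q}, I split the remaining infimum as $\inf_{\tilde P_h\in\mathbf{\Phi}(P_h)}\inf_{\tilde P_{h+1:H}}$ and push the inner infimum inside $\mathbb{E}_{s'\sim \tilde P_h(\cdot|s,a)}[\cdot]$; by the inductive hypothesis the inner quantity then becomes $V_{h+1, P, \mathbf{\Phi}}^\pi(s')$. For \eqref{eq: robust bellman V}, I push $\inf_{\tilde P_{h:H}}$ inside the finite sum over $a$; this is legitimate because Assumption~\ref{ass: sarmdp} factorizes $\tilde P_h(\cdot|s,\cdot)$ across the action argument, and because $\tilde P_{h+1:H}$ enters each $Q$-summand through the same value-function quantity.

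The main obstacle is the push-in $\inf_{\tilde P_{h+1:H}}\mathbb{E}_{s'}[V_{h+1}^\pi(s';\tilde P_{h+1:H})]=\mathbb{E}_{s'}[\inf_{\tilde P_{h+1:H}}V_{h+1}^\pi(s';\tilde P_{h+1:H})]$. The $\geq$ direction is trivial. For $\leq$, rectangularity supplies a common minimizer $\tilde P_{h+1:H}^\star$ that does not depend on the starting state $s'$: I construct $\tilde P_i^\star(\cdot|s'',a'')$ pointwise in each $(s'',a'')$ as an approximate minimizer of $\mathbb{E}_{s'''\sim\tilde P_i(\cdot|s'',a'')}[V_{i+1, P, \mathbf{\Phi}}^\pi(s''')]$ over $\mathcal{P}_\rho(s'',a'';P_i)$, which assembles into a valid element of $\mathbf{\Phi}(P_i)$ precisely because of the product form in Assumption~\ref{ass: sarmdp}. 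Under mild compactness of $\mathcal{P}_\rho$ (which holds for $\phi$-divergence and $p$-Wasserstein balls), the minimizer is attained; otherwise an $\epsilon/H$-approximation argument closes the induction. This uniform-in-$s'$ structure of $\tilde P_{h+1:H}^\star$ is also what preserves the inductive hypothesis to the next round.
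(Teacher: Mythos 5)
Your proposal is correct and follows essentially the same route as the paper's proof: backward induction in $h$, with the crucial step being that $\mathcal{S}\times\mathcal{A}$-rectangularity lets you assemble a single worst-case kernel pointwise in $(s,a)$ (the paper's $P_h^{\pi,\dagger}$ in \eqref{eq: proof robust bellman sarmdp 2}), which justifies exchanging the infimum over future kernels with the expectation and with the sum over actions. Your explicit $\epsilon/H$-approximation fallback when the infimum is not attained is a minor refinement of a point the paper glosses over by writing $\arginf$ directly.
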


\begin{proof}[Proof of Proposition \ref{prop: robust Bellman}]
    \cite{iyengar2005robust} first showed that $\mathcal{S}\times\mathcal{A}$-rectangular-style robust sets allow for recursive expressions of robust value functions. 
    To be self-contained, in Appendix~\ref{subsec: robust bellman sarmdp} we provide a detailed proof of the robust Bellman equation in our framework of RMDPs under Assumption~\ref{ass: sarmdp}.
\end{proof}

Equations \eqref{eq: robust bellman V} and \eqref{eq: robust bellman Q} actually says that the infimum over all the transition kernels (recall the definition of $V_{h, P, \mathbf{\Phi}}^{\pi}$ in \eqref{eq: robust V}) can be decomposed into a ``one-step" infimum over the transition kernels at step $h$, i.e., $\inf_{\tilde{P}_h\in\mathbf{\Phi}(P_h)}$, and an infimum over the transition kernels at steps larger than $h$, i.e., $V_{h+1, P, \mathbf{\Phi}}^{\pi}$.
Such a property is crucial to the algorithmic design and theoretical analysis for solving RMDPs.

We note that besides the $\cS\times\cA$-rectangular assumption, there are other types of rectangular assumptions considered by robust RL literatures, including $\cS$-rectangular \citep{wiesemann2013robust} and $d$-rectangular \citep{ma2022distributionally} assumptions.
The above framework can also represent RMDPs with these kinds of robust set.
We refer to Section~\ref{sec: discussion} for more discussions about solving RMDPs with these two types of robust sets.

\subsection{Robust Offline RL in Robust Markov Decision Processes}\label{subsec: offline RL in rmdp}

In this subsection, we define the offline RL protocol in an RMDP $(\mathcal{S}, \mathcal{A}, H, P^{\star}, R, \mathcal{P}_{\mathrm{M}}, \mathbf{\Phi})$.
The learner is given the realizable model space $\mathcal{P}_{\mathrm{M}}$ and the robust mapping $\boldsymbol{\Phi}$, but the learner doesn't know the nominal transition kernel $P^{\star}$.
For simplicity, we assume that the learner knows the reward function $R$\footnote{This is a reasonable assumption since learning the reward function is easier than learning the transition kernel.}.

\paragraph*{Offline dataset.}
We assume that the learner is given an offline dataset $\mathbb{D}$ that consists of $n$ i.i.d. trajectories generated from the standard episodic MDP $(\mathcal{S},\mathcal{A},H,P^{\star},R)$ using some behavior policy $\pi^{\mathrm{b}}$.
For each $\tau\in[n]$, the trajectory has the form of $\{(s_h^\tau,a_h^\tau,r_h^{\tau})\}_{h=1}^H$, satisfying that $a_h^{\tau}\sim\pi_h^\mathrm{b}(\cdot|s_h^{\tau})$, $r_h^{\tau}=R_h(s_h^{\tau},a_h^{\tau})$, and $s_{h+1}^{\tau}\sim P_h^{\star}(\cdot|s_h^{\tau},a_h^{\tau})$ for each step $h\in[H]$, starting from some $s_1^{\tau}$.

Given transition kernels $P = \{P_h\}_{h=1}^H$ and a policy $\pi$, we use $d_{P,h}^{\pi}(\cdot,\cdot)$ to denote the state-action visitation distribution at step $h$ when following policy $\pi$ and transition kernel $P$.
With this notation, the distribution of $(s_h^{\tau},a_h^{\tau})$ can be written as $d_{P^{\star},h}^{\pi^{\mathrm{b}}}$ or simply $d_{P^{\star},h}^{\mathrm{b}}$, for each $\tau\in[n]$ and $h\in[H]$.
We also use $d_{P^{\star},h}^{\pi^{\mathrm{b}}}(\cdot)$ to denote the marginal distribution of states at step $h$ when there is no confusion.

\paragraph{Learning objective.} 
In robust offline RL, the goal is to learn the policy $\pi^{\star}$ from the offline dataset $\mathbb{D}$ which maximizes the robust value function $V_{1,P^{\star},\mathbf{\Phi}}^{\pi}$, that is, 
\begin{align}\label{eq: pi star}
    \pi^{\star} := \argsup_{\pi\in\Pi} V_{1,P^{\star},\mathbf{\Phi}}^{\pi}(s_1),\quad s_1\in\mathcal{S},
\end{align}
where the set $\Pi=\{\pi = \{\pi_h\}_{h=1}^H \,\,|\,\,\pi_h:\mathcal{S}\mapsto\Delta(\mathcal{A})\}$ denotes the collection of all Markovian policies. 
In view of \eqref{eq: pi star}, we call $\pi^{\star}$ the \emph{optimal robust policy}.
Equivalently, we want to learn a policy $\hat{\pi}\in\Pi$ which minimizes the suboptimality gap between $\hat{\pi}$ and $\pi^{\star}$.
Formally, the suboptimality gap between $\hat{\pi}$ and $\pi^{\star}$ is defined as\footnote{Without loss of generality, we assume that the initial state is fixed to some $s_1\in\mathcal{S}$. Our algorithm and theory can be directly extended to the case when $s_1\sim \rho\in\Delta(\mathcal{S})$.}
\begin{align}\label{eq: suboptimality}
    \mathrm{SubOpt}_\mathbf{\Phi}(\hat{\pi};s_1):=V_{1,P^{\star},\mathbf{\Phi}}^{\pi^{\star}}(s_1)-V_{1,P^{\star},\mathbf{\Phi}}^{\hat{\pi}}(s_1),\quad \forall s_1\in\mathcal{S}.
\end{align}
In conclusion, the problem of offline RL in RMDPs is to learn a robust policy from an offline dataset generated in some training environment (the nominal transition), which we hope can perform well across all the perturbed test environments (transitions in the robust set of the nominal transition).

\section{Offline RL in RMDPs: Generic Algorithm Framework and Unified Theory}\label{sec: algorithm and theory}

In this section, we propose the \underline{D}oubly \underline{P}essimistic \underline{M}odel-based \underline{P}olicy \underline{O}ptimization ($\texttt{P}^2\texttt{MPO}$) algorithm framework to solve robust offline RL.
Theoretically, we establish a unified suboptimality guarantee for $\texttt{P}^2\texttt{MPO}$.
Our proposed algorithm and theory show that \emph{double pessimism} is a general principle for designing sample-efficient algorithms for robust offline RL.
We highlight that the proposed algorithm features three key points: 
i) learning the optimal robust policy $\pi^{\star}$ approximately; 
ii) requiring only a partial coverage property of the offline dataset $\mathbb{D}$; 
iii) being able to handle infinite state space via powerful function approximators.

We first introduce the algorithm framework $\texttt{P}^2\texttt{MPO}$ in Section~\ref{subsec: algorithm framework}. 
We establish a unified theoretical analysis for $\texttt{P}^2\texttt{MPO}$ in Section~\ref{subsec: theoretical analysis}.
We apply the generic algorithm and theory to concrete RMDP examples in Section~\ref{sec: implementation}.

\subsection{Algorithm Framework: P$^2$MPO}\label{subsec: algorithm framework}

The $\texttt{P}^2\texttt{MPO}$ algorithm framework (Algorithm \ref{alg: p2mpo}) consists of a \textcolor{blue}{\emph{model estimation step}} and a \textcolor{blue}{\emph{doubly pessimistic policy optimization step}}, which we introduce in the following respectively.

\paragraph{Model estimation step (Line \ref{lin: model estimation}).}
$\texttt{P}^2\texttt{MPO}$ first constructs an estimation of the nominal transition kernels $P^{\star}=\{P^{\star}_h\}_{h=1}^H$ from the offline dataset $\mathbb{D}$, i.e., estimating the dynamics of the training environment.
In specific, $\texttt{P}^2\texttt{MPO}$ implements a sub-algorithm $\texttt{ModelEst}(\mathbb{D},\mathcal{P}_{\mathrm{M}})$ that returns a confidence region $\hat{\mathcal{P}}$ for $P^{\star}$, in the form of
\begin{align*}
    \hat{\mathcal{P}}=\{\hat{\mathcal{P}}_h\}_{h=1}^H,\quad  \text{with} \quad \hat{\mathcal{P}}_h\subseteq\mathcal{P}_{\mathrm{M}}\quad \text{for each step $h\in[H]$}.
\end{align*}
We note that the sub-algorithm \texttt{ModelEst} can be tailored to various concrete RMDP examples.
See Section~\ref{sec: implementation} for detailed implementations of \texttt{ModelEst} for different examples of RMDPs.

Ideally, to ensure sample-efficient robust offline RL, we need the confidence region to satisfy: 
i) the nominal transition kernel $P^{\star}_h$ is contained in $\hat{\mathcal{P}}_h$ for each step $h\in[H]$;
ii) each transition kernel $P_h\in\hat{\mathcal{P}}_h$ enjoys a small ``robust estimation error" which is derived from the robust Bellman equation \eqref{eq: robust bellman V}.
We later characterize these two conditions on $\hat{\mathcal{P}}$ in detail in Section \ref{subsec: theoretical analysis}.

\begin{algorithm}[t]
	\caption{\underline{D}oubly \underline{P}essimistic \underline{M}odel-based \underline{P}olicy \underline{O}ptimization ($\texttt{P}^2\texttt{MPO}$)}
	\label{alg: p2mpo}
	\begin{algorithmic}[1]
	\STATE \textbf{Input}: model space $\mathcal{P}_{\mathrm{M}}$, mapping $\mathbf{\Phi}$, dataset $\mathbb{D}$, policy class $\Pi$, algorithm \texttt{ModelEst}.
    \STATE \textcolor{blue}{\texttt{Model estimation step:}} 
    \STATE Obtain a confidence region $\hat{\mathcal{P}} = \texttt{ModelEst}(\mathbb{D}, \mathcal{P}_{\mathrm{M}})$.\label{lin: model estimation}
    \STATE \textcolor{blue}{\texttt{Doubly pessimistic policy optimization step:}} 
    \STATE Set policy $\hat{\pi}$ as $\argsup_{\pi\in\Pi}J_{\texttt{Pess}^2}(\pi)$, where $J_{\texttt{Pess}^2}(\pi)$ is defined in \eqref{eq: J pess}.\label{lin: doubly pessimistic policy optimization}
    \STATE \textbf{Output}: $\hat\pi=\{\hat\pi_h\}_{h=1}^H$.
	\end{algorithmic}
\end{algorithm}

\paragraph{Doubly pessimistic policy optimization step (Line \ref{lin: doubly pessimistic policy optimization}).}
After \textbf{model estimation step}, $\texttt{P}^2\texttt{MPO}$ performs policy optimization to learn the optimal robust policy $\pi^{\star}$.
In the face of \emph{uncertainties}, $\texttt{P}^2\texttt{MPO}$ adopts a \emph{double pessimism} principle.
To explain, this general principle has two sources of pessimism: 
i) \emph{pessimism} in the face of data uncertainty which originates from statistical estimation of the nominal transition kernels; 
ii) \emph{pessimism} in the face of test environment transition uncertainty which comes from the target of finding a robust policy against the environment perturbation.

We combine these two sources of pessimism in an organic way through a \emph{doubly pessimistic value estimator}.
In specific, for each policy $\pi$, we define its value estimator $J_{\texttt{Pess}^2}$ via an iterative infimum: 
i) an infimum over the confidence region constructed in the \textbf{model estimation step}, i.e., $P_h\in\hat{\cP}_h$; ii) an infimum over the robust set of $P_h$, i.e., $\tilde{P}_h\in\boldsymbol{\Phi}(P_h)$.
Putting together, we define the doubly pessimistic value estimator as
\begin{align}\label{eq: J pess}
    J_{\texttt{Pess}^2}(\pi) := \inf_{P_h\in \hat{\mathcal{P}}_h,1\leq h\leq H}\,\, \inf_{\tilde{P}_h\in \mathbf{\Phi}(P_h), 1\leq h\leq H} \,\, V_1^{\pi}(s_1; \{\tilde{P}_h\}_{h=1}^H),
\end{align}
where $V_1^{\pi}$ is the standard state-value function of policy $\pi$ defined in \eqref{eq: V}.
Then $\texttt{P}^2\texttt{MPO}$ outputs the policy $\hat{\pi}$ that maximizes the doubly pessimistic value estimator $J_{\texttt{Pess}^2}(\pi)$ \eqref{eq: J pess}, i.e., 
\begin{align}\label{eq: hat pi}
    \hat{\pi} := \argsup_{\pi\in\Pi}J_{\texttt{Pess}^2}(\pi).
\end{align}

By performing pessimism from two sources (in the face of data uncertainty and test environment transition uncertainty) in a neat and iterative way, the double pessimism value estimator $J_{\texttt{Pess}^2}$ contrasts with all existing offline RL value estimators.
Compared with existing works on standard offline RL in MDPs \citep{xie2021bellman, uehara2021pessimistic} and robust offline RL in RMDPs \citep{zhou2021finite, yang2021towards, panaganti2022robust}, they only contain one source of pessimism in value estimation and algorithm design. 

Besides, we note that a recent work of \cite{shi2022distributionally} also studies robust offline RL in tabular RMDPs using the principle of pessimism in the face of data uncertainty.
Compared with the double pessimism principle, their algorithm performs pessimism in the face of data uncertainty i) depending on the tabular structure of the model since a point-wise pessimism penalty term based on state-count is needed and ii) depending on the specific form of the robust set $\boldsymbol{\Phi}(P)$.
This makes their algorithm and analysis difficult to adapt to the infinite state space case coped with general types of robust set $\boldsymbol{\Phi}(P)$ and general function approximations.
In contrast, our double-pessimism-based algorithm is capable of handling general RMDPs.
In the coming subsection, we show that with a proper model estimation subroutine implemented, the $\texttt{P}^2\texttt{MPO}$ algorithm can learn the optimal robust policy $\pi^{\star}$ in a statistically efficient manner.

\subsection{Unified Theoretical Analysis}\label{subsec: theoretical analysis}

In this subsection, we establish a unified theoretical analysis for the $\texttt{P}^2\texttt{MPO}$ algorithm framework (Algorithm~\ref{alg: p2mpo}).
We first specify two conditions that the \textbf{model estimation step} of $\texttt{P}^2\texttt{MPO}$ should satisfy in order for sample-efficient learning.
Then we establish an upper bound of the suboptimality \eqref{eq: suboptimality} of the policy obtained by $\texttt{P}^2\texttt{MPO}$ given that these two conditions are satisfied.

\subsubsection{Conditions on Model Estimation Subroutine}\label{subsubsec: condition rmdp}
To achieve sample-efficient robust offline RL, we require the following two conditions on the \textbf{model estimation step} of $\texttt{P}^2\texttt{MPO}$.
The first condition requires the confidence region to contain the nominal transition kernel.

\begin{cond}[$\delta$-accuracy]\label{cond: accuracy}
    With probability at least $1-\delta$, it holds that $P^{\star}_h\in\hat{\mathcal{P}}_h$ for any $h\in[H]$.
\end{cond}

The second condition requires that each transition kernel in the confidence region has some small ``robust estimation error''.
To be specific, we for each transition kernel $P_h\in\cP$ and function $V:\cS\mapsto[0,H]$, we define 
\begin{align}
    \mathcal{E}^{\boldsymbol{\Phi}}_h(s,a;P_h,V) := \inf_{\tilde{P}_h\in\mathbf{\Phi}(P_h)}\mathbb{E}_{s'\sim \tilde{P}_h(\cdot|s,a)}[V(s')] - \inf_{\tilde{P}_h\in\mathbf{\Phi}(P_h^{\star})}\mathbb{E}_{s'\sim \tilde{P}_h(\cdot|s,a)}[V(s')],
\end{align}
where $P_h^{\star}$ is the nominal transition kernel.
Intuitively, $\mathcal{E}^{\boldsymbol{\Phi}}_h$ characterizes the difference in distributionally robust prediction between $P_h$ and $P^{\star}_h$.
The second condition goes as follows.

\begin{cond}[$\delta$-model estimation error]\label{cond: model estimation}
    For some function of the sample size $n$ and failure probability $\delta$ denoted by $\mathrm{Err}_{h}^{\mathbf{\Phi}}(n,\delta)<+\infty$, with probability at least $1-\delta$, it holds that
    \begin{align}\label{eq: model estimation condition}
       \mathbb{E}_{(s,a)\sim d^{\pi^{\mathrm{b}}}_{P^{\star},h}}\left[\left(\mathcal{E}^{\boldsymbol{\Phi}}_h(s,a;P_h,V_{h+1,P,\boldsymbol{\Phi}}^{\pi^{\star}})\right)^2\right] \leq \mathrm{Err}_{h}^{\mathbf{\Phi}}(n,\delta),
    \end{align}
    for any $P = \{P_h\}_{h=1}^H$ with $P_h\in\hat{\mathcal{P}}_h$ for each step $h\in[H]$. 
\end{cond}

A seemingly more natural but stronger version of Condition~\ref{cond: model estimation} is to ensure \eqref{eq: model estimation condition} holds for any function $V$, rather than only for $V_{h+1,P,\boldsymbol{\Phi}}^{\pi^{\star}}$.
But for a valid theoretical analysis it turns out that we only need \eqref{cond: model estimation} to hold for $V_{h+1,P,\boldsymbol{\Phi}}^{\pi^\star}$.
We remark that Condition~\ref{cond: accuracy} is relatively standard for the transition kernel estimation since it does not involve robust sets and thus is a normal statistical estimation property.
Condition~\ref{cond: model estimation} turns out to be more problem-specific since one needs to ensure that the model estimation has an accurate robust prediction.

In Section~\ref{sec: implementation}, we give concrete model estimation subroutine implementations for specific RMDP examples and thus specify Conditions \ref{cond: accuracy} and \ref{cond: model estimation}.
We further prove that all the implementations result in an $\mathrm{Err}_h^{\mathbf{\Phi}}(n,\delta)$ scaling with $\tilde{\mathcal{O}}(n^{-1})$.

\subsubsection{Suboptimality Analysis under Robust Partial Coverage}
Now we establish a suboptimality upper bound for the $\texttt{P}^2\texttt{MPO}$ algorithm framework, given that Conditions~\ref{cond: accuracy} and \ref{cond: model estimation} hold.
To make sample-efficient offline RL possible, it is crucial to make certain \emph{coverage} assumptions on the offline dataset \citep{chen2019information, jin2021pessimism}.
Such assumptions generally require the offline data to cover the trajectories induced by certain polic\emph{ies}.
Thanks to the double pessimism principle of $\texttt{P}^2\texttt{MPO}$, we can prove a suboptimality bound while only making a \emph{robust partial coverage assumption} on the dataset.

\begin{assumption}[Robust partial coverage]\label{ass: partial coverage}
    We assume that the offline dataset satisfies that
    \begin{align}
        C^{\star}_{P^{\star},\mathbf{\Phi}} := \sup_{1\leq h \leq H}\,\,\sup_{P = \{P_h\}_{h=1}^H, P_h\in\mathbf{\Phi}(P_h^{\star})}\,\,\mathbb{E}_{(s,a)\sim d^{\pi^{\mathrm{b}}}_{P^{\star}, h}}\left[\left(\frac{d^{\pi^{\star}}_{P, h}(s,a)}{d^{\pi^{\mathrm{b}}}_{P^{\star}, h}(s,a)}\right)^2\right] < +\infty,
    \end{align}
    and we call $C^{\star}_{P^{\star},\mathbf{\Phi}}$ the robust partial coverage coefficient.
\end{assumption}

To interpret, Assumption \ref{ass: partial coverage} only requires that the dataset covers the visitation distribution of the optimal robust policy $\pi^{\star}$, but in a robust fashion since $C^{\star}_{P^{\star},\mathbf{\Phi}}$ considers all possible transition kernels in the robust set $\boldsymbol{\Phi}(P^{\star})$.
The robust consideration in $C^{\star}_{P^{\star},\mathbf{\Phi}}$ is because in RMDPs the policies are evaluated in a robust way.
To connect to the literature in offline RL in standard MDPs, Assumption \ref{ass: partial coverage} corresponds to the \emph{partial coverage} or \emph{single-policy concentrability} assumption \citep{jin2021pessimism,uehara2021pessimistic,xie2021bellman,xie2021policy,yin2021towards,rashidinejad2021bridging,zanette2021provable,xiong2022nearly,shi2022pessimistic,li2022settling,zhan2022offline,lu2022pessimism,rashidinejad2022optimal}, which requires the offline dataset to cover the trajectories of the optimal policy.
When the robust set mapping $\boldsymbol{\Phi}(P) = \{P\}$, Assumption \ref{ass: partial coverage} reduces to the partial coverage assumption for standard MDPs.

This partial-coverage-style assumption is weaker and more practical than the full-coverage-style assumptions for robust offline RL \citep{yang2021towards, zhou2021finite,panaganti2022robust}, for which they require either a uniformly lower bounded dataset distribution or covering the visitation distribution of any $\pi\in\Pi$.

For $\mathcal{S}\times\mathcal{A}$-rectangular robust tabular MDPs (Example \ref{exp: tabular rmdp}), the robust partial coverage coefficient $C^{\star}_{P^{\star},\mathbf{\Phi}}$ is similar to the \emph{robust single-policy clipped coefficient} $C_{\texttt{rob}}^{\star}$ recently proposed by \cite{shi2022distributionally} who solve offline tabular RMDPs under partial-coverage-style assumptions.
We highlight that beyond $\mathcal{S}\times\mathcal{A}$-rectangular robust tabular MDPs, our robust partial coverage assumption together with the double pessimism algorithm can handle general RMDPs (including examples of RMDPs introduced in Section \ref{subsec: example RMDP}) under our unified theory.
Besides, in the tabular setting, the robust partial coverage coefficient $C_{P^{\star},\boldsymbol{\Phi}}^{\star}$ can be related to the robust single-policy clipped coefficient $C_{\texttt{rob}}^{\star}$ via the inequality $C_{P^{\star},\boldsymbol{\Phi}}^{\star} \leq (C_{\texttt{rob}}^{\star})^2$.

Our main result is the following theorem, which upper bounds the suboptimality of $\texttt{P}^2\texttt{MPO}$.

\begin{theorem}[Suboptimality of $\texttt{P}^2\texttt{MPO}$]\label{thm: subopt general}
    Under Assumptions \ref{ass: sarmdp} and \ref{ass: partial coverage}, suppose that Algorithm \ref{alg: p2mpo} implements a sub-algorithm that satisfies Conditions \ref{cond: accuracy} and \ref{cond: model estimation}, then with probability at least $1-2\delta$,
    \begin{align*}
        \mathrm{SubOpt}(\hat{\pi}; s_1)\leq \sqrt{C^{\star}_{P^{\star},\mathbf{\Phi}}}\cdot\sum_{h=1}^H\sqrt{\mathrm{Err}_{h}^{\mathbf{\Phi}}(n,\delta)}.
    \end{align*}
\end{theorem}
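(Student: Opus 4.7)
The plan is to split the suboptimality gap into a pessimism gap (controlled by the double pessimism construction of $\hat\pi$) and a value-function difference, then reduce the latter to a telescoped sum of one-step robust model errors via the robust Bellman equation (Proposition~\ref{prop: robust Bellman}), and finally import Assumption~\ref{ass: partial coverage} and Condition~\ref{cond: model estimation} through a Cauchy--Schwarz change-of-measure.

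I would start with the standard pessimism manipulation. On the Condition-\ref{cond: accuracy} event (probability $\geq 1-\delta$) we have $P^\star_h\in\hat{\mathcal{P}}_h$ for every $h$, so comparing the infimum sets in the definition~\eqref{eq: J pess} yields $J_{\texttt{Pess}^2}(\pi) \leq V^{\pi}_{1,P^\star,\mathbf{\Phi}}(s_1)$ for every $\pi$. Combining this with the optimality $J_{\texttt{Pess}^2}(\hat\pi)\geq J_{\texttt{Pess}^2}(\pi^\star)$ from Line~\ref{lin: doubly pessimistic policy optimization} gives
\[
\mathrm{SubOpt}(\hat\pi; s_1) \;\leq\; V^{\pi^\star}_{1,P^\star,\mathbf{\Phi}}(s_1) - J_{\texttt{Pess}^2}(\pi^\star) \;=\; V^{\pi^\star}_{1,P^\star,\mathbf{\Phi}}(s_1) - V^{\pi^\star}_{1,\hat P,\mathbf{\Phi}}(s_1),
\]
where $\hat P = \{\hat P_h\}_{h=1}^H$ (nearly) attains the outer infimum in $J_{\texttt{Pess}^2}(\pi^\star)$.

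Writing $V_h := V^{\pi^\star}_{h,P^\star,\mathbf{\Phi}}$ and $\hat V_h := V^{\pi^\star}_{h,\hat P,\mathbf{\Phi}}$, I would expand both via Proposition~\ref{prop: robust Bellman} and insert the middle term $\inf_{\tilde P_h\in\mathbf{\Phi}(P^\star_h)}\mathbb{E}_{\tilde P_h}[\hat V_{h+1}]$, splitting $V_h(s)-\hat V_h(s)$ into a recursive piece plus a residual. For the recursive piece, picking $\tilde P^\star_h(\cdot|s,a)$ to be a (near-)minimizer of $\mathbb{E}_{\tilde P_h}[\hat V_{h+1}]$ over $\mathbf{\Phi}(P^\star_h)$ yields the upper bound $\mathbb{E}_{a\sim\pi^\star_h}\mathbb{E}_{\tilde P^\star_h}[V_{h+1}-\hat V_{h+1}]$. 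The residual equals, up to sign, the quantity
\[
\Delta_h(s,a) \;:=\; \inf_{\tilde P_h\in\mathbf{\Phi}(\hat P_h)}\mathbb{E}_{\tilde P_h(\cdot|s,a)}[\hat V_{h+1}] - \inf_{\tilde P_h\in\mathbf{\Phi}(P^\star_h)}\mathbb{E}_{\tilde P_h(\cdot|s,a)}[\hat V_{h+1}],
\]
which is precisely the quantity appearing inside Condition~\ref{cond: model estimation} at $P=\hat P$. Unrolling the recursion along trajectories induced by $\pi^\star$ and the worst-case kernels $\tilde P^\star=\{\tilde P^\star_h\}_{h=1}^H$ (with $\tilde P^\star_h\in\mathbf{\Phi}(P^\star_h)$) gives
\[
V^{\pi^\star}_{1,P^\star,\mathbf{\Phi}}(s_1) - V^{\pi^\star}_{1,\hat P,\mathbf{\Phi}}(s_1) \;\leq\; \sum_{h=1}^H \mathbb{E}_{(s,a)\sim d^{\pi^\star}_{\tilde P^\star,h}}\bigl[|\Delta_h(s,a)|\bigr].
\]

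Finally, I would perform a Cauchy--Schwarz change-of-measure from $d^{\pi^\star}_{\tilde P^\star,h}$ to the behavior distribution $d^{\pi^\mathrm{b}}_{P^\star,h}$. The density-ratio factor is bounded by $\sqrt{C^\star_{P^\star,\mathbf{\Phi}}}$ since $\tilde P^\star_h\in\mathbf{\Phi}(P^\star_h)$ falls within the supremum in Assumption~\ref{ass: partial coverage}, while the $L^2(d^{\pi^\mathrm{b}}_{P^\star,h})$-norm of $\Delta_h$ is bounded by $\sqrt{\mathrm{Err}_h^{\mathbf{\Phi}}(n,\delta)}$ by Condition~\ref{cond: model estimation} at $P=\hat P$, on a further event of probability $\geq 1-\delta$. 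Summing over $h$ and union-bounding over the two good events delivers the claimed bound with probability at least $1-2\delta$. The main technical obstacle will be the telescoping step with two nested infimums: one must choose the minimizer that simultaneously peels off the outer infimum, produces a clean recursion in $V_{h+1}-\hat V_{h+1}$, and leaves a residual matching exactly the model-estimation-error quantity of Condition~\ref{cond: model estimation}; non-attainment of either infimum is handled via an $\varepsilon$-approximate minimizer and $\varepsilon\to 0$.
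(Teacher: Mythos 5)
Your proposal is correct and follows essentially the same route as the paper's proof: the pessimism step via Condition~\ref{cond: accuracy} and the optimality of $\hat{\pi}$, the robust-Bellman decomposition into a recursive term (handled by a single worst-case kernel $\tilde{P}^\star_h\in\mathbf{\Phi}(P^\star_h)$, valid thanks to $\mathcal{S}\times\mathcal{A}$-rectangularity) plus a residual matching Condition~\ref{cond: model estimation}, and the Cauchy--Schwarz change of measure to $d^{\pi^{\mathrm{b}}}_{P^\star,h}$ via Assumption~\ref{ass: partial coverage}. The only cosmetic difference is that you work with a (near-)minimizer $\hat{P}$ of the outer infimum while the paper bounds the supremum over all $P\in\hat{\mathcal{P}}$ uniformly, which is immaterial since Condition~\ref{cond: model estimation} holds for every such $P$.
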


\begin{proof}[Proof of Theorem \ref{thm: subopt general}]
    See Appendix \ref{sec: sketch} for a detailed proof.
\end{proof}

Theorem \ref{thm: subopt general} shows that the suboptimality of $\texttt{P}^2\texttt{MPO}$ is characterized by the robust partial coverage coefficient $C_{P^{\star},\boldsymbol{\Phi}}^{\star}$ (Assumption \ref{ass: partial coverage}) and the sum of model estimation error $\mathrm{Err}_h^{\boldsymbol{\Phi}}$ (Condition \ref{cond: model estimation}).

When $\mathrm{Err}_{h}^{\mathbf{\Phi}}(n,\delta)$ achieves a rate of $\tilde{\mathcal{O}}(n^{-1})$, $\texttt{P}^2\texttt{MPO}$ enjoys a $\tilde{\mathcal{O}}(n^{-1/2})$-suboptimality.
In Section~\ref{sec: implementation}, we give implementations of the model estimation step of $\texttt{P}^2\texttt{MPO}$ for concrete examples of RMDPs in Section~\ref{subsec: example RMDP}.
The implementations will make Conditions~\ref{cond: accuracy} and \ref{cond: model estimation} satisfied and thus specify the general result Theorem~\ref{thm: subopt general}.

\section{Implementations of P$^2$MPO for Examples of RMDPs}\label{sec: implementation}

In this section, we provide concrete examples for the unified RMDP framework, based on which we specify the implementation of the \texttt{ModelEst} algorithm in $\texttt{P}^2\texttt{MPO}$ (Algorithm~\ref{alg: p2mpo}) for different examples.
Upon specifying the \texttt{ModelEst} algorithm for a concrete RMDP example, we can then specify the general theory (Theorem~\ref{thm: subopt general}) to this specific case.
Examples of RMDPs are introduced in Section \ref{subsec: example RMDP}. 
Implementations and analysis are in the following subsections.

\subsection{Examples of Robust Markov Decision Processes}\label{subsec: example RMDP}

In this subsection, we give concrete examples for the unified RMDP framework introduced in Section~\ref{subsec: rmdp framework} via specifying the realizable model space $\mathcal{P}_{\mathrm{M}}$ and the robust set mapping $\boldsymbol{\Phi}$.
Most existing works on RMDPs hinge on the finiteness assumption on the state space $\cS$, which fails to deal with prohibitively large or even infinite state spaces. 
In our framework, RMDPs can be studied in the paradigm of infinite state spaces, for which we use function approximation tools for the realizable model space $\mathcal{P}_{\mathrm{M}}$.
For the robust set mapping $\boldsymbol{\Phi}$, we mainly consider $\boldsymbol{\Phi}(P)$ as a distribution ball centered at $P$ as adopted in most existing works.




\begin{example}[$\mathcal{S}\times\mathcal{A}$-rectangular robust tabular MDP]\label{exp: tabular rmdp}
    When the state space $\mathcal{S}$ is a finite set, we call the corresponding model an $\mathcal{S}\times\mathcal{A}$-rectangular robust tabular MDP.
    Recently, there is a line of works on the $\mathcal{S}\times\mathcal{A}$-rectangular robust tabular MDP \citep{zhou2021finite, yang2021towards, panaganti2022sample, liu2022distributionally, shi2022distributionally, panaganti2022robust, dong2022online, ho2022robust,neufeld2022robust,wang2023finite, yang2023avoiding, xu2023improved, clavier2023towards}.
    For $\mathcal{S}\times\mathcal{A}$-rectangular robust tabular MDPs, we choose the realizable model space $\mathcal{P}_{\mathrm{M}} = \mathcal{P}$ which contains all possible transition kernels.
    We also choose the robust set mapping $\boldsymbol{\Phi}$ as 
    \begin{align}\label{eq: distribution ball}
        \mathbf{\Phi}(P) = \bigotimes_{(s,a)\in\mathcal{S}\times\mathcal{A}} \mathcal{P}(s,a; P),\quad \text{where}\quad \mathcal{P}_{\rho}(s,a; P) = \left\{\tilde{P}(\cdot)\in\Delta(\mathcal{S}):D(\tilde{P}(\cdot)\|P(\cdot|s,a))\leq \rho\right\},
    \end{align}
    for some (pseudo-)distance $D(\cdot\|\cdot)$ on $\Delta(\mathcal{S})$ and some $\rho\in\mathbb{R}_+$.
    The (pseudo-)distance $D(\cdot\|\cdot)$ can be chosen as a general $\phi$-divergence \citep{yang2021towards} or a $p$-Wasserstein-distance \citep{neufeld2022robust}.
\end{example}

\begin{remark}
    We highlight that our unified framework of RMDPs covers substantially more model than $\mathcal{S}\times\mathcal{A}$-rectangular robust tabular MDPs since our state space $\mathcal{S}$ can be infinite.
    The model space $\mathcal{P}_{\mathrm{M}}$ can be adapted to function approximation methods to handle the infinite state space.
    Thus any efficient algorithm developed for our framework of RMDPs \textbf{can not} be covered by algorithms for $\mathcal{S}\times\mathcal{A}$-rectangular robust tabular MDPs.
    Example \ref{exp: sarmdp kernel} and \ref{exp: sarmdp neural} are infinite state space $\mathcal{S}\times\mathcal{A}$-rectangular robust MDPs with function approximations.
\end{remark}

\begin{example}[$\mathcal{S}\times\mathcal{A}$-rectangular robust MDP with kernel function approximations]\label{exp: sarmdp kernel}
    We consider an infinite state space $\mathcal{S}\times\mathcal{A}$-rectangular robust MDP whose realizable model space $\mathcal{P}_{\mathrm{M}}$ is in a reproduced kernel Hilbert space (RKHS).
    Let $\mathcal{H}$ be an RKHS associated with a positive definite kernel $\mathcal{K}:(\mathcal{S}\times\mathcal{A}\times\mathcal{S})\times(\mathcal{S}\times\mathcal{A}\times\mathcal{S})\mapsto\mathbb{R}_+$ (See Appendix \ref{subsubsec: rkhs} for a review of the basics of RKHS).
    We denote the feature mapping of $\mathcal{H}$ by $\boldsymbol{\psi}:\mathcal{S}\times\mathcal{A}\times\mathcal{S}\mapsto\mathcal{H}$.
    With $\mathcal{H}$, an $\mathcal{S}\times\mathcal{A}$-rectangular robust MDP with kernel function approximation is defined as  an $\mathcal{S}\times\mathcal{A}$-rectangular robust MDP with $\mathcal{P}_{\mathrm{M}}$ given by
    \begin{align}\label{eq: model space kernel}
        \mathcal{P}_{\mathrm{M}} = \Big\{P(s'|s,a) = \langle\boldsymbol{\psi}(s,a,s'), \boldsymbol{f}\rangle_{\mathcal{H}}:\boldsymbol{f}\in\mathcal{H},\|f\|_{\mathcal{H}}\leq B_{\mathrm{K}} \Big\},
    \end{align}
    for some $B_{\mathrm{K}}>0$. 
    In \eqref{eq: model space kernel}, we have implicitly identified $P(\cdot|\cdot,\cdot)$ as the density of the corresponding distribution with respect to a proper base measure on $\mathcal{S}\times\mathcal{A}\times\mathcal{S}$.
    Regarding the robust set mapping $\boldsymbol{\Phi}$, we apply the same choice as \eqref{eq: distribution ball} in Example \ref{exp: tabular rmdp}.    
\end{example}

\begin{example}[$\mathcal{S}\times\mathcal{A}$-rectangular robust MDP with neural function approximations]\label{exp: sarmdp neural}
    We consider an infinite state space $\mathcal{S}\times\mathcal{A}$-rectangular robust MDP whose realizable model space $\mathcal{P}_{\mathrm{M}}$ is parameterized by an overparameterized neural network.
    We define a two-layer fully-connected neural network on some $\mathcal{X}\subseteq\mathbb{R}^{d_{\mathcal{X}}}$ as 
    \begin{align}\label{eq: nn}
        \mathrm{NN}(\mathbf{x};\mathbf{W},\mathbf{a}) = \frac{1}{\sqrt{m}}\sum_{j=1}^ma_j\sigma(\mathbf{x}^\top\mathbf{w}_j),\quad \forall \mathbf{x}\in\mathcal{X},
    \end{align}
    where $m\in\mathbb{N}_+$ is the number of hidden units, $(\mathbf{W},\mathbf{a})$ is the parameters given by $\mathbf{W} = (\mathbf{w}_1,\cdots,\mathbf{w}_m)\in\mathbb{R}^{d\times m}$, $\mathbf{a} = (a_1,\cdots,a_m)^\top\in\mathbb{R}^m$, $\sigma(\cdot)$ is the activation function.
    Now we assume that the state space $\mathcal{S}\subseteq\mathbb{R}^{d_{\mathcal{S}}}$ for some $d_{\mathcal{S}}\in\mathbb{N}_+$.
    Also, we identify actions via one-hot vectors in $\mathbb{R}^{|\mathcal{A}|}$, i.e., we represent $a\in\mathcal{A}$ by $(0,\cdots,0,1,0,\cdots,0)$ with $1$ in the $a$-th coordinate.
    Let $\mathcal{X} = \mathcal{S}\times\mathcal{A}\times\mathcal{S}$ with $d_{\mathcal{X}} = 2d_{\mathcal{S}} + |\mathcal{A}|$.
    Then an $\mathcal{S}\times\mathcal{A}$-rectangular robust MDP with neural function approximation is defined as an $\mathcal{S}\times\mathcal{A}$-rectangular robust MDP with $\mathcal{P}_{\mathrm{M}}$ given by
    \begin{align}\label{eq: model space neural}
        \mathcal{P}_{\mathrm{M}} = \Big\{P(s'|s,a) = \mathrm{NN}((s,a,s');\mathbf{W},\mathbf{a}^{0}): \|\mathbf{W} - \mathbf{W}^{\mathrm{0}}\|_2\leq B_{\mathrm{N}} \Big\},
    \end{align}
    for some $B_{\mathrm{N}}>0$ and some fixed $(\mathbf{W}^{0},\mathbf{a}^{0})$ which can be interpreted as the initialization. 
    See Appendix~\ref{subsubec: ntk and linearization} for more details about neural function approximations and analysis techniques.  
    Regarding the robust set mapping $\boldsymbol{\Phi}$, we apply the same choice as \eqref{eq: distribution ball} in Example \ref{exp: tabular rmdp}.    
\end{example}

\begin{example}[$\mathcal{S}\times\mathcal{A}$-rectangular robust factored MDP]\label{exp: safrmdp}
    We consider a factored MDP equipped with $\mathcal{S}\times\mathcal{A}$-rectangular factored robust set.
    A standard factored MDP \citep{kearns1999efficient} is defined as follows.
    Let $d\in\mathbb{N}_+$ and $\mathcal{O}$ be a finite set.
    The state space $\mathcal{S}$ is factored as $\mathcal{S} = \mathcal{O}^d$.
    For each $i\in[d]$, $s[i]$ is the $i$-coordinate of $s$ and it is only influenced by $s[\mathrm{pa}_i]$, where $\mathrm{pa}_i\subseteq[d]$.
    In other words, the transition of a factored MDP can be factorized as 
    \begin{align*}
        P_h^{\star}(s'|s,a) = \prod_{i=1}^dP^{\star}_{h,i}(s'[i]|s[\mathrm{pa}_i],a).
    \end{align*}
    Here we let the realizable model space $\mathcal{P}_{\mathrm{M}}$ consist of all the factored transition kernels, i.e., 
    \begin{align}
        \mathcal{P}_{\mathrm{M}} = \left\{P(s'|s,a) = \prod_{i=1}^dP_i(s'[i]|s[\mathrm{pa}_i],a)\,:\,P_i:\mathcal{S}[\mathrm{pa}_i]\times\mathcal{A}\mapsto\Delta(\mathcal{O}),\forall i\in[d]\right\}.
    \end{align}
    For an $\mathcal{S}\times\mathcal{A}$-rectangular robust factored MDP, we define the robust set mapping $\boldsymbol{\Phi}$ as, for each transition kernel $P(s'|s,a) = \prod_{i=1}^dP_i(s'[i]|s[\mathrm{pa}_i],a)\in\mathcal{P}_{\mathrm{M}}$,
    \begin{align*}
        \mathbf{\Phi}(P) &= \bigotimes_{(s,a)\in\mathcal{S}\times\mathcal{A}} \mathcal{P}_{\mathrm{Fac},\rho}(s,a; P),\quad \text{with}\\
        \mathcal{P}_{\mathrm{Fac},\rho}(s,a; P) &= \left\{\prod_{i=1}^d\tilde{P}_i(\cdot):\tilde{P}_i(\cdot)\in\Delta(\mathcal{O}),D(\tilde{P}_i(\cdot)\|P_i(\cdot|s[\mathrm{pa}_i],a))\leq \rho_i,\forall i\in[d]\right\}.
    \end{align*}
    for some (pseudo-)distance $D(\cdot\|\cdot)$ on $\Delta(\mathcal{O})$ and $d$ positive real numbers $\{\rho_i\}_{i=1}^d$.
\end{example}

\begin{remark}
    The $\mathcal{S}\times\mathcal{A}$-rectangular robust factored MDP (Example \ref{exp: safrmdp}) can also be considered in an infinite state space paradigm, but for ease of presentation, we only consider factored MDPs with finite states here. 
\end{remark}

\begin{remark}
    Besides the $\cS\times\cA$-rectangular robust set (Assumption \ref{ass: sarmdp}),
    we refer to Section~\ref{sec: discussion} for discussions about RMDPs with $\cS$-rectangular \citep{wiesemann2013robust} and $d$-rectangular \citep{ma2022distributionally} robust sets.
\end{remark}

\subsection{Model Estimation for General RMDPs with $\mathcal{S}\times\mathcal{A}$-rectangular Robust Sets}\label{subsec: model estimation sarmdp}

In this subsection, we implement the \texttt{ModelEst} algorithm and specify Theorem \ref{thm: subopt general} for general $\cS\times\cA$-rectangular RMDPs with robust sets given by \eqref{eq: distribution ball}.
The proposed implementation and analysis apply to all of the RMDP examples in Section \ref{subsec: example RMDP} (Examples \ref{exp: tabular rmdp}, \ref{exp: sarmdp kernel}, \ref{exp: sarmdp neural}, and \ref{exp: safrmdp}).
In the next subsection, we further give a customized implementation for robust factorized MDPs (Example \ref{exp: safrmdp}) which utilizes the factorization property and results in a refined analysis.

\paragraph{Implementations of the \texttt{ModelEst} algorithm.}
Using the offline data $\mathbb{D}$, we first construct the \emph{maximum likelihood estimator} (MLE) of the transition kernel $P^{\star}$.
Specifically, for each step $h\in[H]$, we define
\begin{align}\label{eq: mle sarmdp}
    \hat{P}_h = \argmax_{P \in \mathcal{P}_{\mathrm{M}}}\frac{1}{n}\sum_{\tau=1}^n\log P(s_{h+1}^\tau|s_h^\tau,a_h^\tau).
\end{align}
After, we construct a confidence region $\widehat{\mathcal{P}}$ for the MLE estimator \eqref{eq: mle sarmdp}, which contains all the transition kernels having a small total variance distance from $\hat{P}$.
In specific, for each step $h\in[H]$, we define 
\begin{align}\label{eq: confidence sarmdp}
    \hat{\mathcal{P}}_h = \bigg\{P\in\mathcal{P}_{\mathrm{M}}: \frac{1}{n}\sum_{\tau=1}^n \|\hat{P}_h(\cdot|s_h^\tau,a_h^\tau) - P(\cdot|s_h^\tau,a_h^\tau)\|_1^2 \leq \xi\bigg\},
\end{align}
where $\xi>0$ is a tuning parameter controlling the size of $\hat{\mathcal{P}}_h$.
Finally, we define that $\texttt{ModelEst}(\mathbb{D},\mathcal{P}_{\mathrm{M}}) = \hat{\mathcal{P}} = \{\hat{\mathcal{P}}_h\}_{h=1}^H$ with $\hat{\mathcal{P}}_h$ given by \eqref{eq: confidence sarmdp}.

\paragraph{Analysis for general $\cS\times\cA$-rectangular RMDPs with robust sets \eqref{eq: distribution ball}.}
In the following, we consider a general RMDP with $\cS\times\cA$-rectangular robust sets satisfying \eqref{eq: distribution ball}. 
We choose the distance $D(\cdot\|\cdot)$ defining the robust set as \emph{KL-divergence} and \emph{TV-distance}. 
The following proposition shows that the above implementation \eqref{eq: confidence sarmdp} of \texttt{ModelEst} in $\texttt{P}^2\texttt{MPO}$ satisfies Conditions \ref{cond: accuracy} and \ref{cond: model estimation} for both KL-divergence and TV-distance.

\begin{prop}[Guarantees for model estimation]\label{prop: sarmdp estimation}
    Under Assumption \ref{ass: sarmdp} and \eqref{eq: distribution ball}, choosing the (pseudo) distance $D(\cdot\|\cdot)$ as KL-divergence or TV-distance, setting the tuning parameter $\xi$ as
    \begin{align*}
        \xi = \frac{C_1\log(C_2H\mathcal{N}_{[]}(1/n^2,\mathcal{P}_{\mathrm{M}},\|\cdot\|_{1,\infty})/\delta)}{n},
    \end{align*}
    for some constants $C_1,C_2>0$, then Conditions \ref{cond: accuracy} and \ref{cond: model estimation} are satisfied respectively by,
    \begin{itemize}
        \item [$\spadesuit$] when $D(\cdot\|\cdot)$ is KL-divergence and Assumption \ref{ass: kl regularity} (See Appendix \ref{subsec: proof prop sarmdp estimation}) holds with parameter $\underline{\lambda}$, $\mathrm{Err}_h^{\mathbf{\Phi}}(n,\delta)$ is given by
        \begin{align*}
            \sqrt{\mathrm{Err}_{h,\mathrm{KL}}^{\mathbf{\Phi}}(n,\delta)} = \frac{H\exp(H/\underline{\lambda})}{\rho}\cdot\sqrt{\frac{C_1'\log(C_2'H\mathcal{N}_{[]}(1/n^2,\mathcal{P}_{\mathrm{M}},\|\cdot\|_{1,\infty})/\delta)}{n}}.
        \end{align*}
        \item [$\clubsuit$] when $D(\cdot\|\cdot)$ is TV-distance, $\mathrm{Err}_h^{\mathbf{\Phi}}(n,\delta)$ is given by
        \begin{align*}
            \sqrt{\mathrm{Err}_{h,\mathrm{TV}}^{\mathbf{\Phi}}(n,\delta)} = H\cdot\sqrt{\frac{C_1'\log(C_2'H\mathcal{N}_{[]}(1/n^2,\mathcal{P}_{\mathrm{M}},\|\cdot\|_{1,\infty})/\delta)}{n}}.
        \end{align*}
    \end{itemize}
    Here $c$, $C_1'$, $C_2'>0$ stand for three universal constants.
\end{prop}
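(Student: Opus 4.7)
The plan is to combine a bracketing-number maximum-likelihood concentration bound with an empirical-to-population distribution-shift argument, and then transfer the resulting $L^1$ control on transition kernels into a control on the robust one-step Bellman operator that appears in Condition \ref{cond: model estimation}; the divergence $D$ enters only at this last step, which is how the bound splits into the KL and TV cases. First, I would invoke a standard MLE bracketing bound (in the style of van de Geer or Zhang), applied step-wise and unionised over $h\in[H]$. With $\xi$ chosen as in the statement, this gives with probability at least $1-\delta$ that
\[
\frac{1}{n}\sum_{\tau=1}^n \bigl\|\hat{P}_h(\cdot|s_h^\tau,a_h^\tau)-P_h^\star(\cdot|s_h^\tau,a_h^\tau)\bigr\|_1^2 \;\lesssim\; \xi
\]
simultaneously for every $h\in[H]$, which in turn yields Condition \ref{cond: accuracy} because $\hat{\mathcal{P}}_h$ in \eqref{eq: confidence sarmdp} is by construction the set of kernels empirically close to $\hat P_h$ in exactly this sense; the triangle inequality then shows that every $P_h\in\hat{\mathcal{P}}_h$ also satisfies the same empirical bound against $P_h^\star$ up to a universal constant.

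Second, I would lift this empirical bound to a population bound under $d^{\pi^{\mathrm{b}}}_{P^\star,h}$. Since the $(s_h^\tau,a_h^\tau)$ are i.i.d.\ samples from $d^{\pi^{\mathrm{b}}}_{P^\star,h}$, a uniform-in-$\mathcal{P}_{\mathrm{M}}$ concentration argument --- again controlled by $\mathcal{N}_{[]}(1/n^2,\mathcal{P}_{\mathrm{M}},\|\cdot\|_{1,\infty})$ via the $\|\cdot\|_{1,\infty}$-Lipschitzness of $P\mapsto\|P(\cdot|s,a)-P_h^\star(\cdot|s,a)\|_1^2$ --- upgrades the empirical $L^1$ bound to
\[
\mathbb{E}_{(s,a)\sim d^{\pi^{\mathrm{b}}}_{P^\star,h}}\!\bigl[\bigl\|P_h(\cdot|s,a)-P_h^\star(\cdot|s,a)\bigr\|_1^2\bigr] \;\lesssim\; \xi
\]
uniformly over $P_h\in\hat{\mathcal{P}}_h$.

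Third, I would convert this $L^1$ bound into the robust Bellman-operator bound demanded by Condition \ref{cond: model estimation}. Writing $g(P_h;s,a):=\inf_{\tilde P_h\in\mathbf{\Phi}(P_h)}\mathbb{E}_{s'\sim \tilde P_h(\cdot|s,a)}[V_{h+1,P,\mathbf{\Phi}}^{\pi^\star}(s')]$, it suffices to establish a pointwise bound $|g(P_h;s,a)-g(P_h^\star;s,a)| \leq L_D\cdot\|P_h(\cdot|s,a)-P_h^\star(\cdot|s,a)\|_1$ with an appropriate $D$-dependent constant $L_D$; squaring, integrating, and combining with the previous paragraph immediately gives $\sqrt{\mathrm{Err}_h^{\mathbf{\Phi}}(n,\delta)}\asymp L_D\sqrt{\xi}$. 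In the TV case ($\clubsuit$), since $V^{\pi^\star}_{h+1,P,\mathbf{\Phi}}\in[0,H]$, a direct truncation/duality argument for the TV-constrained infimum yields $L_{\mathrm{TV}}\asymp H$. In the KL case ($\spadesuit$), I would use the Donsker--Varadhan dual
\[
g(P_h;s,a)=\sup_{\lambda\geq 0}\Bigl\{-\lambda\log\mathbb{E}_{s'\sim P_h(\cdot|s,a)}\bigl[\exp\bigl(-V^{\pi^\star}_{h+1,P,\mathbf{\Phi}}(s')/\lambda\bigr)\bigr]-\lambda\rho\Bigr\},
\]
restrict the outer supremum to $\lambda\geq\underline{\lambda}$ by Assumption \ref{ass: kl regularity}, and exploit that $\exp(-V/\lambda)\in[\exp(-H/\underline{\lambda}),1]$ on this region to bound the Lipschitz constant of the dual objective in $P_h(\cdot|s,a)$ with respect to $\|\cdot\|_1$ by $L_{\mathrm{KL}}\asymp H\exp(H/\underline{\lambda})/\rho$.

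The main obstacle I expect is the KL Lipschitz step: establishing, uniformly over the test function $V^{\pi^\star}_{h+1,P,\mathbf{\Phi}}\in[0,H]$ and over the base measure $P_h$, that the dual optimum is bounded away from zero (the content of Assumption \ref{ass: kl regularity}), and then extracting the correct $\exp(H/\underline{\lambda})/\rho$ factor from the derivative of $-\lambda\log\mathbb{E}_P[e^{-V/\lambda}]$ at the optimum in a way that is independent of the particular $P\in\mathcal{P}_{\mathrm{M}}$. By comparison, the bracketing-MLE bound, the empirical-to-population transfer, and the TV-side Lipschitz bound are all routine.
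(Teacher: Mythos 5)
Your proposal follows essentially the same route as the paper: an MLE/bracketing guarantee combined with an empirical-to-population transfer to get uniform control of $\mathbb{E}_{d^{\pi^{\mathrm{b}}}_{P^\star,h}}[\|P_h-P_h^\star\|_1^2]\lesssim\xi$ over the confidence region, followed by the KL/TV dual formulations with the dual variable restricted to $[\underline{\lambda},H/\rho]$ (resp.\ $[0,H]$) to convert kernel TV error into the robust one-step Bellman error, producing the same constants $H\exp(H/\underline{\lambda})/\rho$ and $H$. The only glossed details are that the population transfer must use a Bernstein-type bound with localization (plain uniform concentration would only give an $O(n^{-1/2})$ additive term and spoil the $O(1/n)$ rate for $\mathrm{Err}_h^{\mathbf{\Phi}}$), and that the KL step also needs the upper bound $\lambda\leq H/\rho$ on the dual variable, which your stated constant already implicitly uses.
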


\begin{proof}[Proof of Proposition \ref{prop: sarmdp estimation}]
    See Appendix \ref{subsec: proof prop sarmdp estimation} for a detailed proof.
\end{proof}

In Proposition \ref{prop: sarmdp estimation} for KL-divergence-based robust sets, we make a technical assumption (Assumption \ref{ass: kl regularity} given in Appendix~\ref{subsec: proof prop sarmdp estimation}).
This assumption is actually a mild regularity condition for analyzing KL-divergence-based distributional robust optimization problems.
Similar assumptions also appear in \cite{ma2022distributionally}.

Proposition \ref{prop: sarmdp estimation} deals with the general realizable model space $\mathcal{P}_{\mathrm{M}}$.
For the special case of finite model space, the corresponding results basically replace bracket number $\mathcal{N}_{[]}$ of model space $\mathcal{P}_{\mathrm{M}}$ by its cardinality $|\mathcal{P}_{\mathrm{M}}|$.
Now plugging the results of Proposition \ref{prop: sarmdp estimation} in Theorem \ref{thm: subopt general}, we arrive at the following corollary.

\begin{corollary}[Suboptimality of $\texttt{P}^2\texttt{MPO}$: $\mathcal{S}\times\mathcal{A}$-rectangular RMDP]\label{cor: suboptimality sarmdp}
    Under the same assumptions and parameter choice as Theorem \ref{thm: subopt general} and Proposition \ref{prop: sarmdp estimation}, 
    $\texttt{P}^2\texttt{MPO}$ with model estimation step \eqref{eq: confidence sarmdp} satisfies that 
    \begin{itemize}
        \item [$\spadesuit$] when $D(\cdot\|\cdot)$ is KL-divergence and Assumption \ref{ass: kl regularity} holds with parameter $\underline{\lambda}$, then with probability at least $1-2\delta$, 
        \begin{align*}
            \mathrm{SubOpt}(\hat{\pi}; s_1)\leq \frac{\sqrt{C^{\star}_{P^{\star},\mathbf{\Phi}}} H^2\exp(H/\underline{\lambda})}{\rho}\cdot\sqrt{\frac{C_1'\log(C_2'H\mathcal{N}_{[]}(1/n^2,\mathcal{P}_{\mathrm{M}},\|\cdot\|_{1,\infty})/\delta)}{n}}.
        \end{align*}
        \item [$\clubsuit$] when $D(\cdot\|\cdot)$ is TV-divergence, then with probability at least $1-2\delta$, 
        \begin{align*}
            \mathrm{SubOpt}(\hat{\pi}; s_1)\leq \sqrt{C^{\star}_{P^{\star},\mathbf{\Phi}}}H^2\cdot\sqrt{\frac{C_1'\log(C_2'H\mathcal{N}_{[]}(1/n^2,\mathcal{P}_{\mathrm{M}},\|\cdot\|_{1,\infty})/\delta)}{n}}.
        \end{align*}
    \end{itemize}
\end{corollary}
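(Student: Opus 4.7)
The plan is to derive this corollary by a direct substitution: Proposition \ref{prop: sarmdp estimation} certifies that the specific MLE-plus-TV-ball construction in \eqref{eq: mle sarmdp}--\eqref{eq: confidence sarmdp}, with the stated tuning parameter $\xi$, satisfies Conditions \ref{cond: accuracy} and \ref{cond: model estimation}, with explicit expressions for $\mathrm{Err}_{h,\mathrm{KL}}^{\mathbf{\Phi}}(n,\delta)$ and $\mathrm{Err}_{h,\mathrm{TV}}^{\mathbf{\Phi}}(n,\delta)$. Hence all hypotheses of the general suboptimality bound Theorem \ref{thm: subopt general} are in force, and the proof amounts to plugging these explicit error expressions into the master inequality
\begin{align*}
    \mathrm{SubOpt}(\hat{\pi}; s_1)\leq \sqrt{C^{\star}_{P^{\star},\mathbf{\Phi}}}\cdot\sum_{h=1}^H\sqrt{\mathrm{Err}_{h}^{\mathbf{\Phi}}(n,\delta)}.
\end{align*}

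First I would invoke Proposition \ref{prop: sarmdp estimation} with the prescribed choice of $\xi$: this simultaneously gives $P_h^\star\in\hat{\mathcal{P}}_h$ for all $h\in[H]$ with probability at least $1-\delta$ (Condition \ref{cond: accuracy}) and the per-step model estimation error bound (Condition \ref{cond: model estimation}) with the two explicit rates listed under $\spadesuit$ and $\clubsuit$. Then I would apply Theorem \ref{thm: subopt general}, which costs an additional failure probability of $\delta$, so that after a union bound both events hold jointly with probability at least $1-2\delta$. On this event, the bound of Theorem \ref{thm: subopt general} applies verbatim.

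Finally, I would substitute. In the TV case, $\sqrt{\mathrm{Err}_{h,\mathrm{TV}}^{\mathbf{\Phi}}(n,\delta)}$ is independent of $h$ and equals $H$ times the square-root log-bracket-number factor; summing over $h\in[H]$ multiplies this by another $H$, yielding the advertised $H^2$ dependence. In the KL case, the per-step bound is identical in structure except for the additional multiplicative factor $\exp(H/\underline{\lambda})/\rho$ coming from Assumption \ref{ass: kl regularity}; again summing over the $H$ steps produces the stated $H^2\exp(H/\underline{\lambda})/\rho$ dependence. No further estimates are needed: the robust partial coverage coefficient $C^{\star}_{P^{\star},\mathbf{\Phi}}$ simply appears as its square root from Theorem \ref{thm: subopt general}.

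The only non-trivial obstacle is purely bookkeeping: one must ensure that the two high-probability events (accuracy of the confidence region and the model-estimation-error bound) are precisely those already packaged inside Proposition \ref{prop: sarmdp estimation}, so that no additional union bound inflates the log factor beyond what appears in $\xi$. Since Proposition \ref{prop: sarmdp estimation} is explicitly stated to deliver both conditions under the single choice of $\xi$, this reduces to verifying compatibility of the confidence level $\delta$ used in the two statements, after which the corollary follows by direct algebra.
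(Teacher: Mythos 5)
Your proposal is correct and matches the paper's proof, which simply cites Theorem \ref{thm: subopt general} and Proposition \ref{prop: sarmdp estimation} and substitutes the explicit $\mathrm{Err}_h^{\mathbf{\Phi}}(n,\delta)$ rates, summing over $h\in[H]$ to produce the $H^2$ (and, in the KL case, $\exp(H/\underline{\lambda})/\rho$) factors. The only slight imprecision is attributing the extra $\delta$ to the theorem itself rather than to the union of the two conditions' failure events packaged in Proposition \ref{prop: sarmdp estimation}, but the resulting confidence level $1-2\delta$ is the same and the argument is otherwise identical to the paper's.
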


\begin{proof}[Proof of Corollary \ref{cor: suboptimality sarmdp}]
    This is a direct corollary of Theorem \ref{thm: subopt general} and Proposition \ref{prop: sarmdp estimation}.    
\end{proof}

\paragraph*{Analysis for $\mathcal{S}\times\mathcal{A}$-rectangular robust tabular MDP (Example \ref{exp: tabular rmdp}).}
When the state space $\mathcal{S}$ is finite as in Example~\ref{exp: tabular rmdp}, the MLE estimator \eqref{eq: mle sarmdp} coincides the empirical estimator 
\begin{align}
    \hat{P}_h(s'|s,a) = \frac{\sum_{\tau=1}^n\mathbf{1}\{s_h^\tau = s, a_h^{\tau} = a, s_{h+1}^{\tau} = s'\}}{1\vee \sum_{\tau=1}^n\mathbf{1}\{s_h^\tau = s, a_h^{\tau} = a\}},
\end{align}
which is adopted by \cite{zhou2021finite, yang2021towards, panaganti2022sample, shi2022distributionally, panaganti2022robust}. 
Furthermore, in Example \ref{exp: tabular rmdp}, the realizable model space $\mathcal{P}_{\mathrm{M}} = \mathcal{P} = \{P:\mathcal{S}\times\mathcal{A}\mapsto\Delta(\mathcal{S})\}$.
Since both $\mathcal{S}$ and $\mathcal{A}$ are finite, we can bound the bracket number of $\mathcal{P}_{\mathrm{M}}$ as 
\begin{align}\label{eq: bracket number tabular}
    \mathcal{N}_{[]}(1/n^2,\mathcal{P}_{\mathrm{M}},\|\cdot\|_{1,\infty}) \leq n^{2|\mathcal{S}|^2|\mathcal{A}|}.
\end{align}
Combining \eqref{eq: bracket number tabular} and Corollary \ref{cor: suboptimality sarmdp}, we can then conclude that: i) under TV-distance the suboptimality of $\texttt{P}^2\texttt{MPO}$ for $\mathcal{S}\times\mathcal{A}$-rectangular robust tabular MDP is given by $\mathcal{O}(H^2\sqrt{C^{\star}_{P^{\star},\mathbf{\Phi}}|\mathcal{S}|^2|\mathcal{A}|\log(nH/\delta)/n})$, and
ii) under KL-divergence the suboptimality of $\texttt{P}^2\texttt{MPO}$ for $\mathcal{S}\times\mathcal{A}$-rectangular robust tabular MDP is given by $\mathcal{O}(H^2\exp(H/\underline{\lambda})/\rho\cdot\sqrt{C^{\star}_{P^{\star},\mathbf{\Phi}}|\mathcal{S}|^2|\mathcal{A}|\log(nH/\delta)/n})$.
See Appendix \ref{subsec: proof sartmdp} for a proof of \eqref{eq: bracket number tabular}.


\begin{remark}
    We note that for KL-divergence-based robust sets, the dependence on $\exp(H)$ is due to the usage of general function approximations, which also appears in a recent work \citep{ma2022distributionally} for RMDPs with linear function approximations.
    For the special case of robust tabular MDPs with KL-divergence-based robust sets, existing work \citep{shi2022distributionally} has derived sample complexities without $\exp(H)$, but with an additional dependence on $1/d_{\min}^{\mathrm{b}}$ and $1/P_{\min}^\star$\footnote{Here $d_{\min}^{\mathrm{b}} = \min_{(s, a, h): d^{\pi^{\mathrm{b}}}_{P^{\star}, h}(s,a) > 0} d^{\pi^{\mathrm{b}}}_{P^{\star}, h}(s,a)$ and $P^{\star}_{\min} = \min_{(s,s',h):P_h(s'|s,\pi_h^\star(s)) > 0} P_h^\star(s'|s,\pi_h^\star(s))$.}.
    We remark that our analysis for $\mathtt{P}^2\mathtt{MPO}$ algorithm can be tailored to the tabular case and become $\exp(H)$-free using their techniques, with the cost of an additional dependence on $1/d_{\min}^{\mathrm{b}}$ and $1/P_{\min}^\star$. 
    But we note that in the infinite state space case, both the $d_{\min}^{\mathrm{b}}$-dependence and the $1/P_{\min}$-dependence  become problematic.
    So, it serves as an interesting future work to answer whether one can derive both $\exp(H)$-free and $(1/d_{\min}^{\mathrm{b}}, 1/P_{\min}^\star)$-free results for (general) function approximations under KL-divergence.
\end{remark}

\paragraph*{Analysis for $\mathcal{S}\times\mathcal{A}$-rectangular robust MDP with kernel function approximations (Example \ref{exp: sarmdp kernel}).}
For kernel function approximations, our theoretical results rely on the following regularity assumptions on the RKHS involved in Example \ref{exp: sarmdp kernel}, which is commonly adopted by the literature on kernel function approximation \citep{yang2020provably,cai2020optimistic,li2022learning}.
Specifically, the kernel $\mathcal{K}$ can be decomposed as $\mathcal{K}(x,y) = \sum_{j=1}^{+\infty}\lambda_j\psi_j(x)\psi_j(y)$ for some $\{\lambda_j\}_{j=1}^{+\infty}\subseteq\mathbb{R}$ and $\{\psi_j:\mathcal{X}\mapsto\mathbb{R}\}_{j=1}^{+\infty}$ with $\mathcal{X} = \mathcal{S}\times\mathcal{A}\times\mathcal{S}$ (See Appendix~\ref{subsec: proof sarmdp kernel} for details).
Our assumption on $\mathcal{K}$ is summarized in the following.

\begin{assumption}[Regularity of RKHS]\label{ass: rkhs}
    We assume that the kernel $\mathcal{K}$ of the RKHS satisfies that:
    \begin{enumerate}
        \item (Boundedness) It holds that $|\mathcal{K}(x,y)|\leq 1$, $|\psi_j(x)|\leq 1$, and $|\lambda_j|\leq 1$ for any $j\in\mathbb{N}_+$, $x,y\in\mathcal{X}$.
        \item (Eigenvalue decay) There exists some $\gamma\in(0,1/2)$, $C_1,C_2>0$ such that $|\lambda_j| \leq C_1\exp(-C_2 j^{\gamma})$ for any $j\in\mathbb{N}_+$.
    \end{enumerate}
\end{assumption}

Under Assumption \ref{ass: rkhs}, we can then upper bound the bracket number $\mathcal{N}_{[]}$ for the realizable model space $\mathcal{P}_{\mathrm{M}}$ defined in \eqref{eq: model space kernel} as (see Appendix \ref{subsubsec: proof bracket kernel} for a proof),
\begin{align}\label{eq: bracket number kernel}
    \log(\mathcal{N}_{[]}(1/n^2,\mathcal{P}_{\mathrm{M}},\|\cdot\|_{1,\infty})) \leq C_{\mathrm{K}}\cdot 1/\gamma\cdot\log^2(1/\gamma)\cdot\log^{1+1/\gamma}(n\mathtt{Vol}(\mathcal{S})B_{\mathrm{K}}),
\end{align}
where $C_{\mathrm{K}}>0$ is an absolute constant, $\mathtt{Vol}(\mathcal{S})$ is the measure of the state space $\mathcal{S}$, and $B_{\mathrm{K}}$ is defined in Example \ref{exp: sarmdp kernel}.
Combining \eqref{eq: bracket number kernel} and Corollary \ref{cor: suboptimality sarmdp}, we can conclude that: i) under TV-distance the suboptimality of $\texttt{P}^2\texttt{MPO}$ for $\mathcal{S}\times\mathcal{A}$-rectangular robust MDP with kernel function approximations is,
\begin{align}
    \mathrm{SubOpt}(\hat{\pi};s_1) \le \mathcal{O}\left(H^2\log(1/\gamma)\cdot\sqrt{C^{\star}_{P^{\star},\mathbf{\Phi}}/\gamma\cdot\log^{1+1/\gamma}(nH\mathtt{Vol}(\mathcal{S})/\delta)/n}\right),
\end{align}
and ii) under KL-divergence the suboptimality of $\texttt{P}^2\texttt{MPO}$ for $\mathcal{S}\times\mathcal{A}$-rectangular robust MDP with kernel function approximations is,
\begin{align}
    \mathrm{SubOpt}(\hat{\pi};s_1) \le \mathcal{O}\left(H^2\exp(H/\underline{\lambda})\log(1/\gamma)/\rho\cdot\sqrt{C^{\star}_{P^{\star},\mathbf{\Phi}}/\gamma\cdot\log^{1+1/\gamma}(nH\mathtt{Vol}(\mathcal{S})/\delta)/n}\right).
\end{align}

\paragraph*{Analysis for $\mathcal{S}\times\mathcal{A}$-rectangular robust MDP with neural function approximations (Example \ref{exp: sarmdp neural}).} For neural function approximation analysis, we use the tool of neural tangent kernel (NTK \citep{jacot2018neural}), which relates overparameterized neural networks  \eqref{eq: nn} to kernel function approximations.
To this end, given the neural network \eqref{eq: nn}, we define its NTK $\mathcal{K}_{\mathrm{NTK}}:\mathcal{X}\times\mathcal{X}\mapsto\mathbb{R}$ as
\begin{align}\label{eq: ntk}
    \mathcal{K}_{\mathrm{NTK}}(x,y) := \nabla_{\mathbf{W}}\mathrm{NN}(x,\mathbf{W}^0,\mathbf{a}^0)^\top\nabla_{\mathbf{W}}\mathrm{NN}(y,\mathbf{W}^0,\mathbf{a}^0),\quad \forall x,y\in\mathcal{X}.
\end{align}

\begin{assumption}[Regularity of Neural Tangent Kernel]\label{ass: ntk regularity}
    We assume that the neural tangent kernel $\mathcal{K}_{\mathrm{NTK}}$ defined in \eqref{eq: ntk} satisfies Assumption \ref{ass: rkhs} with constant $\gamma_{\mathrm{N}}\in(0,1/2)$.
\end{assumption}

This assumption on the spectral perspective of NTK is justified by \cite{yang2019fine}. 
As we prove in Appendix \ref{subsubec: ntk and linearization}, when the number of hidden units is large enough, i.e., overparameterized, the neural network is well approximated by its linear expansion at initialization (Lemma \ref{lem: implicit linearization}), for which we can apply the tool of NTK.
Under Assumption~\ref{ass: ntk regularity}, for the number of hidden units $m\geq d_{\mathcal{X}}n^4B_{\mathrm{N}}^4$, the bracket number $\mathcal{N}_{[]}$ of $\mathcal{P}_{\mathrm{M}}$ defined in \eqref{eq: model space neural} is bounded by (see Appendix~\ref{subsubsec: proof bracket neural} for a proof), 
\begin{align}\label{eq: bracket number neural}
    \log(\mathcal{N}_{[]}(1/n^2,\mathcal{P}_{\mathrm{M}},\|\cdot\|_{1,\infty})) \leq C_{\mathrm{N}}\cdot 1/\gamma_{\mathrm{N}}\cdot\log^2(1/\gamma_{\mathrm{N}})\cdot\log^{1+1/\gamma_{\mathrm{N}}}(n\mathtt{Vol}(\mathcal{S})B_{\mathrm{N}}),
\end{align}
where $C_{\mathrm{N}}>0$ denotes an absolute constant, $\gamma_{\mathrm{N}}\in(0,1/2)$ is specified in Assumption \ref{ass: ntk regularity}, and $B_{\mathrm{N}}$ is defined in Example \ref{exp: sarmdp neural}.
Combining \eqref{eq: bracket number neural} and Corollary \ref{cor: suboptimality sarmdp}, we can conclude that, in the overparameterized paradigm, i.e., $m\geq d_{\mathcal{X}}n^4B_{\mathrm{N}}^4$: i) under TV-distance the suboptimality of $\texttt{P}^2\texttt{MPO}$ for $\mathcal{S}\times\mathcal{A}$-rectangular robust MDP with neural function approximations is,
\begin{align}
    \mathrm{SubOpt}(\hat{\pi};s_1) \le \mathcal{O}\left(H^2\log(1/\gamma_{\mathrm{N}})\cdot\sqrt{C^{\star}_{P^{\star},\mathbf{\Phi}}/\gamma_{\mathrm{N}}\cdot\log^{1+1/\gamma_{\mathrm{N}}}(nH\mathtt{Vol}(\mathcal{S})/\delta)/n}\right),
\end{align}
and ii) under KL-divergence the suboptimality of $\texttt{P}^2\texttt{MPO}$ for $\mathcal{S}\times\mathcal{A}$-rectangular robust MDP with neural function approximations is,
\begin{align}
    \mathrm{SubOpt}(\hat{\pi};s_1) \le \mathcal{O}\left(H^2\exp(H/\underline{\lambda})\log(1/\gamma_{\mathrm{N}})/\rho\cdot\sqrt{C^{\star}_{P^{\star},\mathbf{\Phi}}/\gamma_{\mathrm{N}}\cdot\log^{1+1/\gamma_{\mathrm{N}}}(nH\mathtt{Vol}(\mathcal{S})/\delta)/n}\right).
\end{align}

\subsection{Model Estimation for $\mathcal{S}\times\mathcal{A}$-rectangular Robust Factored MDP (Example~\ref{exp: safrmdp})}\label{subsec: model estimation safrmdp}

In this subsection, we propose an customized implementation of the \texttt{ModelEst} algorithm for $\cS\times\cA$-rectangular robust factored MDPs (Example \ref{exp: safrmdp}), resulting in a refined theoretical analysis for this specific RMDP example.

We first construct MLE estimator for each factor $P^{\star}_{h,i}$ of the transition $P^{\star}_{h} = \prod_{i=1}^dP^{\star}_{h,i}$, that is, 
\begin{align}\label{eq: mle safrmdp}
    \hat{P}_{h,i} = \argmax_{P_i:\mathcal{S}[\mathrm{pa}_i]\times\mathcal{A}\mapsto\Delta(\mathcal{O})}\frac{1}{n}\sum_{k=1}^n\log P(s_{h+1}^{\tau}[i]|s_h^{\tau}[\mathrm{pa}_i],a_h^{\tau}).
\end{align}
Then given $\{\hat{P}_{h,i}\}_{i=1}^d$ we construct a confidence region $\hat{\mathcal{P}}$ that is factored across $i\in[d]$.
Specifically, we define $\hat{\mathcal{P}}_h$ for each step $h\in[H]$ as 
\begin{align}\label{eq: confidence safrmdp}
    \hat{\mathcal{P}}_h = \left\{P(s'|s,a) = \prod_{i=1}^dP_i(s'[i]|s[\mathrm{pa}_i],a): \frac{1}{n}\sum_{i=1}^n\|(P_i-\hat{P}_{h,i})(\cdot|s_h^{\tau}[\mathrm{pa}_i],a_h^{\tau})\|_1^2\leq \xi_i,\forall i\in[d]\right\}.
\end{align}
Finally, we set $\mathtt{ModelEst}(\mathbb{D},\mathcal{P}_{\mathrm{M}})$ = $\hat{\mathcal{P}} = \{\hat{\mathcal{P}}\}_{h=1}^H$ with $\hat{\mathcal{P}}_h$ given by \eqref{eq: confidence safrmdp}.

\begin{prop}[Guarantees for model estimation]\label{prop: safrmdp estimation}
    Suppose the RMDP is the $\mathcal{S}\times\mathcal{A}$-rectangular robust factored MDP in Example \ref{exp: safrmdp} with $D(\cdot\|\cdot)$ being KL-divergence or TV-distance. 
    By choosing the tuning parameter $\xi_i$ defined in \eqref{eq: confidence safrmdp} as
    \begin{align*}
        \xi_i = \frac{C_1|\mathcal{O}|^{1+|\mathrm{pa}_i|}|\mathcal{A}|\log(C_2ndH/\delta)}{n}
    \end{align*}
    for constants $C_1,C_2>0$ and each $i\in[d]$, then Conditions \ref{cond: accuracy} and \ref{cond: model estimation} are satisfied respectively by,
    \begin{itemize}
        \item [$\spadesuit$] when $D(\cdot\|\cdot)$ is KL-divergence and Assumption \ref{ass: kl regularity frmdp} (given in Appendix \ref{subsec: proof prop safrmdp estimation}) holds with parameter $\underline{\lambda}$, then $\mathrm{Err}_h^{\mathbf{\Phi}}(n,\delta)$ is given by
        \begin{align*}
            \sqrt{\mathrm{Err}_{h,\mathrm{KL}}^{\mathbf{\Phi}}(n,\delta)} = \frac{H\exp(H/\underline{\lambda})}{\rho_{\min}}\cdot\sqrt{\frac{dC_1'\sum_{i=1}^d|\mathcal{O}|^{1+|\mathrm{pa}_i|}|\mathcal{A}|\log(C_2'nd/\delta)}{n}},
        \end{align*}
        where $\rho_{\min} = \min_{i\in[d]}\rho_{i}$.
        \item [$\clubsuit$] when $D(\cdot\|\cdot)$ is TV-distance, then $\mathrm{Err}_h^{\mathbf{\Phi}}(n,\delta)$ is given by
        \begin{align*}
            \sqrt{\mathrm{Err}_{h,\mathrm{KL}}^{\mathbf{\Phi}}(n,\delta)} = H\cdot\sqrt{\frac{dC_1'\sum_{i=1}^d|\mathcal{O}|^{1+|\mathrm{pa}_i|}|\mathcal{A}|\log(C_2'nd/\delta)}{n}}.
        \end{align*}
    \end{itemize}
    Here $c$, $C_1'$, $C_2'>0$ stand for three universal constants.
\end{prop}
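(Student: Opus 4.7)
The plan is to mimic the proof structure of Proposition~\ref{prop: sarmdp estimation} but exploit the factorized structure to obtain per-factor estimation rates, and then assemble these into the robust Bellman difference required by Condition~\ref{cond: model estimation}. The two main ingredients are (i) a factor-wise MLE concentration bound in squared empirical $\ell_1$-norm, which directly yields the choice of $\xi_i$ in \eqref{eq: confidence safrmdp} and hence Condition~\ref{cond: accuracy}, and (ii) a robust Bellman sensitivity lemma that controls the difference of two factored-robust infima by the sum of factor-wise perturbations of the base kernel.

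For Condition~\ref{cond: accuracy}, I would apply standard MLE analysis in Hellinger/$\ell_1$ (e.g.\ the bracket-number MLE bound used in the proof of Proposition~\ref{prop: sarmdp estimation}) to each factor model class $\{P_i(\cdot\mid s[\mathrm{pa}_i],a):\mathcal{S}[\mathrm{pa}_i]\times\mathcal{A}\mapsto\Delta(\mathcal{O})\}$. This class is a tabular family with at most $|\mathcal{O}|^{1+|\mathrm{pa}_i|}|\mathcal{A}|$ parameters, so its bracket number under $\|\cdot\|_{1,\infty}$ is bounded by $n^{O(|\mathcal{O}|^{1+|\mathrm{pa}_i|}|\mathcal{A}|)}$ exactly as in \eqref{eq: bracket number tabular}. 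Plugging this into the MLE theorem gives a bound on $\frac{1}{n}\sum_\tau\|\hat{P}_{h,i}-P^\star_{h,i}\|_1^2(\cdot\mid s_h^\tau[\mathrm{pa}_i],a_h^\tau)$ of order $|\mathcal{O}|^{1+|\mathrm{pa}_i|}|\mathcal{A}|\log(ndH/\delta)/n$, which after a union bound over $h\in[H]$ and $i\in[d]$ shows that the prescribed $\xi_i$ forces $P^\star_h\in\hat{\mathcal{P}}_h$ uniformly with probability $1-\delta$.

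For Condition~\ref{cond: model estimation}, I would prove the pointwise bound
\begin{align*}
\Big|\inf_{\tilde{P}_h\in\mathbf{\Phi}(P_h)}\mathbb{E}_{\tilde{P}_h(\cdot|s,a)}[V^{\pi^\star}_{h+1,P,\mathbf{\Phi}}]-\inf_{\tilde{P}_h\in\mathbf{\Phi}(P^\star_h)}\mathbb{E}_{\tilde{P}_h(\cdot|s,a)}[V^{\pi^\star}_{h+1,P,\mathbf{\Phi}}]\Big|\le L\sum_{i=1}^d\|P_{h,i}-P^\star_{h,i}\|_1(s[\mathrm{pa}_i],a),
\end{align*}
where $L=H$ in the TV case and $L=H\exp(H/\underline{\lambda})/\rho_{\min}$ in the KL case. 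In the TV case this follows by a standard coupling: for any factored $\tilde{P}_h=\prod_i\tilde{P}_{h,i}$ in $\mathbf{\Phi}(P_h)$, the shifted product $\prod_i\tilde{P}^\star_{h,i}$ obtained by transporting each factor into $\mathcal{P}_{\mathrm{Fac},\rho}(s,a;P^\star_h)$ differs from $\tilde{P}_h$ in total variation by at most $\sum_i\|P_{h,i}-P^\star_{h,i}\|_1$ (using subadditivity of TV for product measures), and $V^{\pi^\star}_{h+1,P,\mathbf{\Phi}}$ is $H$-bounded. In the KL case I would invoke strong duality for KL-constrained DRO, $\inf_{D_{\mathrm{KL}}(\tilde{P}\|P)\le\rho}\mathbb{E}_{\tilde{P}}[V]=\sup_{\lambda\ge 0}\{-\lambda\log\mathbb{E}_P[\exp(-V/\lambda)]-\lambda\rho\}$, use Assumption~\ref{ass: kl regularity frmdp} to restrict the outer sup to $\lambda\ge\underline{\lambda}$, and observe that $\mathbb{E}_P[\exp(-V/\lambda)]\ge\exp(-H/\underline{\lambda})$ so that the dual objective is $(\exp(H/\underline{\lambda})/\rho_{\min})$-Lipschitz in $P$ under factor-wise TV; this again decomposes additively across the $d$ factors by TV subadditivity for products. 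Squaring the pointwise bound, taking expectation under $d^{\pi^{\mathrm{b}}}_{P^\star,h}$, and applying Cauchy–Schwarz across the $d$ factors together with the triangle inequality $\|P_{h,i}-P^\star_{h,i}\|_1\le\|P_{h,i}-\hat{P}_{h,i}\|_1+\|\hat{P}_{h,i}-P^\star_{h,i}\|_1$ and the per-factor $\xi_i$ bound produces the claimed $\mathrm{Err}_{h,\mathrm{TV}}^{\mathbf{\Phi}}$ and $\mathrm{Err}_{h,\mathrm{KL}}^{\mathbf{\Phi}}$. The main obstacle is the KL case: one has to verify the dual Lipschitz estimate carries over to the factored robust set (which is a product of per-factor KL balls rather than a single KL ball on the joint kernel) and that the regularity assumption correctly lower-bounds every per-factor dual multiplier uniformly, so that the $\exp(H/\underline{\lambda})/\rho_{\min}$ Lipschitz constant applies to each factor.
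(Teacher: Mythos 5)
Your treatment of Condition~\ref{cond: accuracy} (per-factor MLE with tabular bracket numbers and a union bound over $i,h$) matches the paper, and your TV-case argument for Condition~\ref{cond: model estimation} is a valid, genuinely different route: instead of the paper's per-factor duality (Lemma~\ref{lem: tv} plus Lemma~\ref{lem: bound lambda tv}), you transport any feasible product $\prod_i\tilde{P}_{h,i}\in\mathbf{\Phi}(P_h)$ into $\mathbf{\Phi}(P_h^{\star})$ factor by factor (e.g.\ via the convex combination $(1-t_i)\tilde{P}_{h,i}+t_iP^{\star}_{h,i}$) and use TV subadditivity for products; this primal coupling is more elementary and avoids duality entirely. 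One small gloss, shared with the paper's delegation to Lemma~\ref{lem: mle guarantee safrmdp}: the confidence set only controls the \emph{empirical} squared $\ell_1$ error, so passing from "triangle inequality plus $\xi_i$" to the population bound $\mathbb{E}_{d^{\pi^{\mathrm{b}}}_{P^{\star},h}}[\|P_{h,i}-P^{\star}_{h,i}\|_1^2]\lesssim\xi_i$ uniformly over $P_h\in\hat{\mathcal{P}}_h$ requires the uniform Bernstein-with-bracketing transfer (the analogue of Lemma~\ref{lem: bernstein 2}); you should state that you are invoking it.

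The genuine gap is in the KL case, and you have correctly identified but not closed it. The single-ball duality $\inf_{D_{\mathrm{KL}}(\tilde{P}\|P)\le\rho}\mathbb{E}_{\tilde{P}}[V]=\sup_{\lambda\ge0}\{-\lambda\log\mathbb{E}_P[\exp(-V/\lambda)]-\lambda\rho\}$ does not characterize $\inf_{\tilde{P}_h\in\mathbf{\Phi}(P_h)}\mathbb{E}_{\tilde{P}_h}[V]$ here, because $\mathbf{\Phi}(P_h)$ is a product of per-factor KL balls, not a joint KL ball; and "TV subadditivity for products" only compares the base kernels $P_h,P^{\star}_h$ — it does not by itself show that the factored-robust value is Lipschitz in the centers, which is exactly the claim at issue. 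The missing ingredient is a hybridization/telescoping step: write the difference of the two factored-robust infima as $\sum_{j=1}^d$ of differences in which only factor $j$'s constraint center switches from $P_{h,j}$ to $P^{\star}_{h,j}$, plug in the optimizers $\tilde{P}^{\ast,j}_{h,i}$ of the neighboring hybrid for all $i\neq j$, and thereby reduce each summand to a difference of two \emph{single-factor} KL-robust expectations of the marginalized function $\upsilon^j_h(s'[j])=\int\prod_{i\neq j}\tilde{P}^{\ast,j}_{h,i}(\mathrm{d}s'[i])\,V^{\pi^{\star}}_{h+1,P,\mathbf{\Phi}}(s')\in[0,H]$. Only at that point does the single-ball duality apply, and Assumption~\ref{ass: kl regularity frmdp} is stated precisely for these $\upsilon^j$-dual problems (note it must hold for the \emph{marginalized} functions, not for $V^{\pi^{\star}}_{h+1,P,\mathbf{\Phi}}$ itself — your phrase "every per-factor dual multiplier" is this requirement), yielding the per-factor bound $\frac{H\exp(H/\underline{\lambda})}{\rho_j}\|P_{h,j}-P^{\star}_{h,j}\|_{\mathrm{TV}}$ and, after summing, Cauchy--Schwarz, and the per-factor MLE bound, the stated $\mathrm{Err}^{\mathbf{\Phi}}_{h,\mathrm{KL}}$ with $\rho_{\min}$. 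Without this telescoping reduction your KL Lipschitz estimate is asserted rather than proved, so as written the KL half of the proposal does not go through.
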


\begin{proof}[Proof of Proposition \ref{prop: safrmdp estimation}]
    See Appendix \ref{subsec: proof prop safrmdp estimation} for a detailed proof.
\end{proof}

\begin{corollary}[Suboptimality of $\texttt{P}^2\texttt{MPO}$: $\mathcal{S}\times\mathcal{A}$-rectangular robust factored MDP]\label{cor: suboptimality safrmdp}
    Supposing the RMDP is an $\mathcal{S}\times\mathcal{A}$-rectangular robust factored MDP, under the same Assumptions and parameter choice in Theorem~\ref{thm: subopt general} and Proposition~\ref{prop: safrmdp estimation}, 
    $\texttt{P}^2\texttt{MPO}$ with model estimation step given by \eqref{eq: confidence safrmdp} satisfies
    \begin{itemize}
        \item [$\spadesuit$] when $D(\cdot\|\cdot)$ is KL-divergence and Assumption \ref{ass: kl regularity} holds with parameter $\underline{\lambda}$, then with probability at least $1-2\delta$, 
        \begin{align*}
            \mathrm{SubOpt}(\hat{\pi}; s_1)\leq \frac{\sqrt{C^{\star}_{P^{\star},\mathbf{\Phi}}} H^2\exp(H/\underline{\lambda})}{\rho_{\min}}\cdot\sqrt{\frac{dC_1'\sum_{i=1}^d|\mathcal{O}|^{1+|\mathrm{pa}_i|}|\mathcal{A}|\log(C_2'nd/\delta)}{n}}.
        \end{align*}
        \item [$\clubsuit$] when $D(\cdot\|\cdot)$ is TV-divergence, then with probability at least $1-2\delta$, 
        \begin{align*}
            \mathrm{SubOpt}(\hat{\pi}; s_1)\leq \sqrt{C^{\star}_{P^{\star},\mathbf{\Phi}}}H^2\cdot\sqrt{\frac{dC_1'\sum_{i=1}^d|\mathcal{O}|^{1+|\mathrm{pa}_i|}|\mathcal{A}|\log(C_2'nd/\delta)}{n}}.
        \end{align*}
    \end{itemize}
\end{corollary}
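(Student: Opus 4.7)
The plan is to obtain Corollary \ref{cor: suboptimality safrmdp} as an immediate consequence of the generic suboptimality result Theorem \ref{thm: subopt general} together with the factored-model estimation guarantees in Proposition \ref{prop: safrmdp estimation}; there is no new analytical ingredient to introduce. Since Assumption \ref{ass: sarmdp} holds for the $\mathcal{S}\times\mathcal{A}$-rectangular robust factored MDP by construction (the factored robust set in Example \ref{exp: safrmdp} is manifestly $\mathcal{S}\times\mathcal{A}$-rectangular in the sense of Assumption \ref{ass: sarmdp} with the factored (pseudo-)distance), the hypotheses of Theorem \ref{thm: subopt general} are available, and we only need to feed it the right per-step error rate.

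First, I would invoke Proposition \ref{prop: safrmdp estimation} with the prescribed tuning parameters $\xi_i = C_1|\mathcal{O}|^{1+|\mathrm{pa}_i|}|\mathcal{A}|\log(C_2ndH/\delta)/n$. This simultaneously certifies that the estimated confidence region $\hat{\mathcal{P}}=\{\hat{\mathcal{P}}_h\}_{h=1}^H$ defined in \eqref{eq: confidence safrmdp} satisfies both Condition \ref{cond: accuracy} (containment of $P^{\star}_h$) and Condition \ref{cond: model estimation} (small robust estimation error), with explicit formulas for $\mathrm{Err}_h^{\mathbf{\Phi}}(n,\delta)$ in the KL-divergence case (under the regularity Assumption \ref{ass: kl regularity frmdp}) and the TV-distance case. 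By a union bound these two events hold simultaneously with probability at least $1-2\delta$, matching the probability in the corollary statement.

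Next, on this high-probability event, Theorem \ref{thm: subopt general} gives the bound
\begin{align*}
\mathrm{SubOpt}(\hat{\pi};s_1)\leq \sqrt{C^{\star}_{P^{\star},\mathbf{\Phi}}}\cdot\sum_{h=1}^H\sqrt{\mathrm{Err}_h^{\mathbf{\Phi}}(n,\delta)}.
\end{align*}
Because the per-step error $\mathrm{Err}_h^{\mathbf{\Phi}}(n,\delta)$ supplied by Proposition \ref{prop: safrmdp estimation} is independent of $h$, the sum over $h\in[H]$ collapses to an overall factor of $H$. Substituting the two displayed expressions for $\sqrt{\mathrm{Err}_{h,\mathrm{KL}}^{\mathbf{\Phi}}}$ and $\sqrt{\mathrm{Err}_{h,\mathrm{TV}}^{\mathbf{\Phi}}}$ then produces exactly the two bounds in the statement: the extra factor $H\exp(H/\underline{\lambda})/\rho_{\min}$ in the KL case and the plain $H$ factor in the TV case, multiplying the common combinatorial term $\sqrt{d\sum_{i=1}^d|\mathcal{O}|^{1+|\mathrm{pa}_i|}|\mathcal{A}|\log(C_2'nd/\delta)/n}$.

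There is no genuine obstacle here; the whole content sits in Theorem \ref{thm: subopt general} and Proposition \ref{prop: safrmdp estimation}. The only minor bookkeeping point I would be careful about is ensuring the two failure events are combined by a union bound to give the $1-2\delta$ probability, and that the bound $\sqrt{a+b}\leq\sqrt{a}+\sqrt{b}$ is not needed because the $h$-independence already gives $\sum_{h=1}^H\sqrt{\mathrm{Err}_h^{\mathbf{\Phi}}}=H\sqrt{\mathrm{Err}^{\mathbf{\Phi}}}$; absorbing the $H$ into the prefactor yields $H^2$ outside the square root, matching the stated $H^2$ scaling. This completes the deduction.
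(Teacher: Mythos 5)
Your proposal is correct and matches the paper's proof, which simply states that the corollary follows directly from Theorem \ref{thm: subopt general} combined with Proposition \ref{prop: safrmdp estimation}; your extra bookkeeping (union bound for the $1-2\delta$ probability and the $h$-independence of $\mathrm{Err}_h^{\mathbf{\Phi}}$ collapsing the sum into the $H^2$ prefactor) is exactly the intended, routine deduction.
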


\begin{proof}[Proof of Corollary \ref{cor: suboptimality safrmdp}]
    This is a direct corollary of Theorem \ref{thm: subopt general} and Proposition \ref{prop: safrmdp estimation}.    
\end{proof}

Compared with the suboptimality bounds for $\mathcal{S}\times\mathcal{A}$-rectangular robust MDPs in Section \ref{subsec: model estimation sarmdp}, the suboptimality of $\mathcal{S}\times\mathcal{A}$-rectangular robust factored MDPs with \texttt{ModelEst} given in \eqref{eq: confidence safrmdp} only scales with $\sum_{i=1}^d|\mathcal{O}|^{1+|\mathrm{pa}_i|}$ instead of scaling with $|\mathcal{S}| = \prod_{i=1}^d|\mathcal{O}|$ which is of order $\exp(d)$.
This justifies the statistical benefits of considering $\mathcal{S}\times\mathcal{A}$-rectangular robust factored MDPs when the transition kernels of training and testing environments enjoy factored structures.

\section{Multi-Agent Extensions: Offline Reinforcement Learning in Robust Markov Games}

In this section, we extend the theory of offline RL in robust single-agent MDPs to the multi-agent setting, i.e., Markov games (MGs).
To this end, we first introduce the robust counterpart of standard Markov games, known as robust Markov games (RMGs), which feature transition robustness.
Following the notation in Section~\ref{sec: preliminaries}, we propose a unified framework of RMGs in Section \ref{subsec: robust markov game}.
We define the learning objective and offline RL protocol in Sections~\ref{subsec: learning objective rmg} and \ref{subsec: offline rl rmg}, respectively.
In Section \ref{subsec: algorithm and theory rmg}, we extend the ``double pessimism" principle identified in Section \ref{sec: algorithm and theory} for RMDPs to RMGs and design a generic algorithm framework to solve RMGs sample-efficiently.

\subsection{A Unified Framework of Robust Markov Games}\label{subsec: robust markov game}

We propose a unified framework of episodic RMGs of $N$ players, denoted as $(\mathcal{S},\{\mathcal{A}^i\}_{i=1}^N, H, P^{\star}, \{R^i\}_{i=1}^N, \mathcal{P}_{\mathrm{M}}, \boldsymbol{\Phi})$.
The set $\mathcal{S}$ is the state space shared by all players, with a possibly \emph{infinite} cardinality.
The set $\mathcal{A}^{i}$ is the finite action space for player $i$.
The integer $H$ is the length of each episode.
The set $P^{\star}=\{P_h^{\star}\}_{h=1}^H$ is the collection of nominal transition kernels where $P_h^{\star}:\mathcal{S}\times\mathcal{A}^1\times\cdots\times\mathcal{A}^N\mapsto\Delta(\mathcal{S})$.
The set $R^i=\{R_h^i\}_{h=1}^H$ is the collection of reward functions for player $i$, where each $R_h^i:\mathcal{S}\times\mathcal{A}^1\times\cdots\times\mathcal{A}^N\mapsto[0,1]$. 
Let $\mathcal{A} = \mathcal{A}^{1}\times\cdots\times\mathcal{A}^N$, and we use $\boldsymbol{a} = (a^1,\cdots,a^N)\in\mathcal{A}$ to denote the joint action of $N$ players.
The space $\mathcal{P}_{\text{M}} \subseteq \mathcal{P}$ is a realizable model space which contains the nominal transition kernel $P^{\star}$, i.e., $P_h^{\star}\in\mathcal{P}_{\mathrm{M}}$ for each step $h\in[H]$.

Similar to RMDPs, the robust Markov game features a robust set of the transition kernels, which is induced by a mapping $\boldsymbol{\Phi}:\mathcal{P}_{\mathrm{M}}\mapsto 2^{\mathcal{P}}$.
For any transition kernel $P\in\mathcal{P}_\mathrm{M}$, we call $\mathbf{\Phi}(P)$ the \emph{robust set} of $P$.
For RMGs, we also focus on robust sets that are $\cS\times\cA$-rectangular, which is the following assumption.

\begin{assumption}[$\mathcal{S}\times\mathcal{A}$-rectangular robust set]\label{ass: s a rectangular}
    We assume that the mapping $\boldsymbol{\Phi}$ induces $\mathcal{S}\times\mathcal{A}$-rectangular robust sets. 
    Specifically, the mapping $\mathbf{\Phi}$ satisfies, for any $ P\in\mathcal{P}_{\mathrm{M}}$,
    \begin{align*}
        \mathbf{\Phi}(P) = \bigotimes_{(s,\boldsymbol{a})\in\mathcal{S}\times\mathcal{A}} \mathcal{P}_{\rho}(s,\boldsymbol{a}; P),\quad \text{where}\quad \mathcal{P}_{\rho}(s,\boldsymbol{a}; P) \subseteq\Delta(\cS). 
    \end{align*}
\end{assumption}

\paragraph*{Joint policy and robust value function.}
Given an RMG $(\mathcal{S},\{\mathcal{A}^i\}_{i=1}^N, H, P^{\star}, \{R^i\}_{i=1}^N, \mathcal{P}_{\mathrm{M}}, \boldsymbol{\Phi})$, we consider all players using Markovian policies to play.
We denote a Markovian policy of player $i$ by $\pi^i=\{\pi_h^i\}_{h=1}^H$ with $\pi_h^i:\mathcal{S}\mapsto\Delta(\mathcal{A}^i)$ for each step $h\in[H]$.
A \emph{product} Markovian joint policy of the $N$ players is denoted by $\boldsymbol{\pi} = (\pi^1,\cdots,\pi^N)$.
We use \emph{joint policy} to refer to a product Markovian joint policy in the sequel.
For each player $i\in[N]$, we use $\boldsymbol{\pi}^{-i}$ to denote the joint policy of all players except player $i$, i.e., $\boldsymbol{\pi}^{-i} = (\pi^1,\cdots,\pi^{i-1},\pi^{i+1},\cdots,\pi^N)$.

Given any joint policy $\boldsymbol{\pi}$, we define the \emph{robust value function} of $\boldsymbol{\pi}$ and player $i\in[N]$ with respect to any set of transition kernels $P = \{P_h\}_{h=1}^H\subseteq\mathcal{P}_{\mathrm{M}}$ as the following, for each step $h\in[H]$,
\begin{align}
    V_{h, P, \mathbf{\Phi}}^{\boldsymbol{\pi},i}(s)&:= \inf_{\tilde{P}_h\in\mathbf{\Phi}(P_h), 1\leq h\leq H} V_h^{\boldsymbol{\pi}, i}(s; \{\tilde{P}_h\}_{h=1}^H),\quad \forall s\in\mathcal{S},\label{eq: robust V rmg}\\
    Q_{h, P, \mathbf{\Phi}}^{\boldsymbol{\pi},i}(s, \boldsymbol{a})&:= \inf_{\tilde{P}_h\in\mathbf{\Phi}(P_h), 1\leq h\leq H} Q_h^{\boldsymbol{\pi}, i}(s, \boldsymbol{a}; \{\tilde{P}_h\}_{h=1}^H),\quad \forall (s,\boldsymbol{a})\in\mathcal{S}\times\mathcal{A},\label{eq: robust Q rmg}
\end{align}
where $V_h^{\boldsymbol{\pi},i}(\cdot; \{\tilde{P}_h\}_{h=1}^H)$ and $Q_h^{\boldsymbol{\pi},i}(\cdot,\cdot; \{\tilde{P}_h\}_{h=1}^H)$ are the \emph{state-value function} and \emph{state-action value function} of policy $\boldsymbol{\pi}$ and player $i$ in a standard episodic MG given by $(\mathcal{S}, \{\mathcal{A}^i\}_{i=1}^N, H, \{\tilde{P}_h\}_{h=1}^H, \{R^i\}_{i=1}^N)$.
More specifically,
\begin{align}
    V_h^{\boldsymbol{\pi},i}(s; \{\tilde{P}_h\}_{h=1}^H) &:= \mathbb{E}_{\{\tilde{P}_h\}_{h=1}^H,\boldsymbol{\pi}}\left[\sum_{h'=h}^HR_{h'}^i(s_{h'},\boldsymbol{a}_{h'}) \,\middle|\, s_h=s\right],\quad \forall s\in\mathcal{S},\label{eq: V rmg}\\
    Q_h^{\boldsymbol{\pi},i}(s, \boldsymbol{a}; \{\tilde{P}_h\}_{h=1}^H) &:= \mathbb{E}_{\{\tilde{P}_h\}_{h=1}^H,\boldsymbol{\pi}}\left[\sum_{h'=h}^HR_{h'}^i(s_{h'},\boldsymbol{a}_{h'}) \,\middle|\, s_h=s,\boldsymbol{a}_h=\boldsymbol{a}\right],\quad \forall (s,\boldsymbol{a})\in\mathcal{S}\times\mathcal{A},\label{eq: Q rmg}
\end{align}
where the expectation $\mathbb{E}_{\{\tilde{P}_h\}_{h=1}^H,\boldsymbol{\pi}}[\cdot]$ is taken with respect to the trajectories induced by the transition kernel $\{\tilde{P}_h\}_{h=1}^H$ and the joint policy $\boldsymbol{\pi}$.
Parallel to the robust Bellman equation for single-agent RMDPs, we also have the following multi-agent robust Bellman equation.

\begin{prop}[Multi-agent robust Bellman equation]\label{prop: robust Bellman rmg}
    Under Assumption \ref{ass: s a rectangular}, for any transition kernels $P=\{P_h\}_{h=1}^H\subseteq\mathcal{P}_{\mathrm{M}}$ and any joint policy $\boldsymbol{\pi}=\{\boldsymbol{\pi}_h\}_{h=1}^H$, the following robust Bellman equations hold, 
    \begin{align}
        V_{h, P, \mathbf{\Phi}}^{\boldsymbol{\pi},i}(s) &= \mathbb{E}_{\boldsymbol{a}\sim \pi_h(\cdot|s)}[Q_{h, P, \mathbf{\Phi}}^{\boldsymbol{\pi},i}(s, \boldsymbol{a})],\quad \forall s\in\mathcal{S}.\label{eq: robust bellman V rmg}\\
        Q_{h, P, \mathbf{\Phi}}^{\boldsymbol{\pi},i}(s, \boldsymbol{a}) &= R_h^i(s,\boldsymbol{a}) + \inf_{\tilde{P}_h\in\mathbf{\Phi}(P_h)}\mathbb{E}_{s'\sim \tilde{P}_h(\cdot|s,\boldsymbol{a})}[V_{h+1, P, \mathbf{\Phi}}^{\boldsymbol{\pi},i}(s')],\quad \forall (s,\boldsymbol{a})\in\mathcal{S}\times\mathcal{A}.\label{eq: robust bellman Q rmg}
    \end{align}
    for each player $i\in[N]$.
\end{prop}

\begin{proof}[Proof of Proposition \ref{prop: robust Bellman rmg}]
    The proof of this proposition is the same as that of Proposition \ref{prop: robust Bellman}, and we omit it here to avoid repetition.
\end{proof}

\subsection{Robust Solution Concept: Robust Nash Equilibrium}\label{subsec: learning objective rmg}

In a standard MG, the players in the game seek to achieve the \emph{Nash equilibrium policy}, which maximizes each player's own value function given other players' policies \citep{filar2012competitive}.
To take transition robustness into consideration, in an RMG, the players want to maximize their own \emph{robust} value functions \citep{kardes2005robust, kardecs2011discounted, zhang2020robust, ma2023decentralized}, leading to \emph{robust Nash equilibrium}.
In the sequel, we give a formal definition of such a solution concept.
To this end, we first define the \emph{robust best response}.

\begin{definition}[Robust best response]\label{def: robust best response}
    Given transition kernel $P = \{P_h\}_{h\in[H]}\subseteq\mathcal{P}_{\mathrm{M}}$ and joint policy $\boldsymbol{\pi} = (\pi^i,\boldsymbol{\pi}^{-i})$, we say policy $\pi^i$ of player $i\in[N]$ is a robust best response policy with respect to $P$ and $\boldsymbol{\pi}^{-i}$ if for any state $s_1\in\mathcal{S}$, it holds that
    \begin{align*}
        V_{1,P,\boldsymbol{\Phi}}^{\boldsymbol{\pi},i}(s_1) = \sup_{\tilde{\pi}^i\in\Delta(\mathcal{A}^i|\mathcal{S},H)}V_{1,P,\boldsymbol{\Phi}}^{(\tilde{\pi}^i,\boldsymbol{\pi}^{-i}),i}(s_1).
    \end{align*}
    Correspondingly, we denote the best response policy as $\mathrm{br}_{P,\boldsymbol{\Phi}}(\boldsymbol{\pi}^{-i})$ and denote the robust value functions of the joint policy $\boldsymbol{\pi} = (\mathrm{br}_{P,\boldsymbol{\Phi}}(\boldsymbol{\pi}^{-i}), \boldsymbol{\pi}^{-i})$ as $V_{h,P,\boldsymbol{\Phi}}^{(\dagger,\boldsymbol{\pi}^{-i}),i}$ and $Q_{h,P,\boldsymbol{\Phi}}^{(\dagger,\boldsymbol{\pi}^{-i}),i}$.
\end{definition}

The robust best response policy extends the definition of \emph{best response policy} in standard MGs in the sense that it requires maximizing the \emph{robust} value function of player $i$ given other players' policies, thus taking the transition robustness into consideration.
For a joint policy $\boldsymbol{\pi} = (\pi^1,\cdots,\pi^N)$, when each player's policy $\pi^i$ is a robust best response policy against $\boldsymbol{\pi}^{-i}$, we call this joint policy a \emph{robust Nash equilibrium}.

\begin{definition}[Robust Nash equilibrium (RNE)]\label{def: robust nash equilibrium}
    Given transition kernel $P = \{P_h\}_{h\in[H]}\subseteq\mathcal{P}_{\mathrm{M}}$, we say a joint policy $\boldsymbol{\pi} = \{\boldsymbol{\pi}_h\}_{h\in[H]}$ a robust Nash equilibrium policy with respect to $P$ if for any state $s_1\in\mathcal{S}$ and player $i\in[N]$ it holds that
    \begin{align*}
        V_{1,P,\boldsymbol{\Phi}}^{\boldsymbol{\pi},i}(s_1)=V_{1,P,\boldsymbol{\Phi}}^{(\dagger,\boldsymbol{\pi}^{-i}),i}(s_1).
    \end{align*}
\end{definition}

As a special case, when the robust set mapping satisfies $\boldsymbol{\Phi}(P) = \{P\}$, then the RMG reduces to a standard episodic MG, and the definitions of robust best response and robust Nash equilibrium reduce to best response and Nash equilibrium in standard episode MGs, respectively.

The following Theorem shows that in an RMG, the robust Nash equilibrium always exists.

\begin{theorem}[Existence of robust Nash equilibrium]\label{thm: existence of robust nash equilibrium}
    i) Given an RMG $(\mathcal{S},\{\mathcal{A}^i\}_{i=1}^N, H, P, \{R^i\}_{i=1}^N, \mathcal{P}_{\mathrm{M}}, \boldsymbol{\Phi})$, under Assumption \ref{ass: s a rectangular}, the robust Nash equilibrium policy defined in Definition \ref{def: robust nash equilibrium} always exists.
    ii) Consider a joint policy $\boldsymbol{\pi} = \{\boldsymbol{\pi}_h\}_{h=1}^H$ defined as the following\footnote{The definition is in a backward fashion. The policy $\boldsymbol{\pi}_h$ is defined via $Q_{h,P,\boldsymbol{\Phi}}^{\boldsymbol{\pi},i}$ which only depends on $\{\boldsymbol{\pi}_{h'}\}_{h'=h+1}^H$},
    \begin{align}\label{eq: rne bellman equation}
        \boldsymbol{\pi}_h(\cdot|s) = \mathbf{NE}\Big(\big\{Q_{h,P,\boldsymbol{\Phi}}^{\boldsymbol{\pi},i}(s,\cdot)\big\}_{i=1}^N\Big),
    \end{align}
    for step $h = H,\cdots,1$, where $\mathbf{NE}(\cdot)$ denotes the Nash equilibrium of normal form games. 
    Then $\boldsymbol{\pi}$ is a robust Nash equilibrium policy defined in Definition \ref{def: robust nash equilibrium}.
\end{theorem}

\begin{proof}[Proof of Theorem \ref{thm: existence of robust nash equilibrium}]
    See Appendix \ref{subsec: proof prop rne bellman equation} for a detailed proof.
\end{proof}

The conclusion i) of Theorem~\ref{thm: existence of robust nash equilibrium} is a take-away of the conclusion ii) which gives a concrete construction of the robust Nash equilibrium.
By ii), given an RMG, to find its robust Nash equilibrium, it suffices to call a standard Nash equilibrium oracle iteratively, where we input the \emph{robust} value functions to the oracle.

\subsection{Robust Offline RL in Robust Markov Games}\label{subsec: offline rl rmg}

Now we study offline RL in RMGs which aims to learn the robust Nash equilibrium policy purely from an offline dataset.
Specifically, we assume access to an offline dataset $\mathbb{D}$, which consists of $n$ i.i.d. trajectories induced by the standard Markov game $(\cS, \{\cA\}_{i=1}^N, H, P^\star, \{R^i\})$ and some behavior policy $\boldsymbol{\pi}^{\mathrm{b}} = \{\boldsymbol{\pi}_h^{\mathrm{b}}\}_{h=1}^H$. 
In specific, for each $\tau \in [n]$, the trajectory $\{(s_h^\tau, \boldsymbol{a}_h^\tau, \boldsymbol{r}_{h}^\tau)\}_{h =1}^H$ satisfies $\boldsymbol{a}_h^\tau \sim \boldsymbol{\pi}_h^{\mathrm{b}}(\cdot \mid s_h)$, $\boldsymbol{r}_h^\tau = \{R_h^i(s_h^\tau, \boldsymbol{a}_h^i)\}_{i =1}^N$, and $s_{h+1}^\tau \sim P_h^\star(\cdot \mid s_h^\tau, \boldsymbol{a}_h^\tau)$ for each $h \in [H]$. 
We denote the distribution of $(s_h^\tau, \boldsymbol{a}_h^\tau)$ by $d_{P^{\star},h}^{\boldsymbol{\pi}^{\mathrm{b}}}$ (or simply $d_{P^{\star},h}^{\mathrm{b}}$) for each $\tau\in[n]$ and $h\in[H]$.

We evaluate the performance of offline algorithms by the following notion of \emph{RNE gap}. 
Suppose the learning algorithm outputs some policy $\hat{\boldsymbol{\pi}}$ based on the offline dataset $\mathbb{D}$, then the suboptimality of $\hat{\boldsymbol{\pi}}$ is defined as the violation of the equilibrium condition in Definition~\ref{def: robust nash equilibrium}.
See the following definition.

\begin{definition}[RNE Gap] \label{def:rne:gap} 
The suboptimality gap of ${\boldsymbol{\pi}}$ is defined by\footnote{Without loss of generality, we also assume that the initial state is fixed to some $s_1\in\mathcal{S}$. Our algorithm and theory can be directly extended to the case when $s_1\sim \rho\in\Delta(\mathcal{S})$.}
\begin{align}\label{eq: rne gap}
    \mathrm{RNEGap}_{\boldsymbol{\Phi}}(\boldsymbol{\pi}; s_1) = \max_{i \in [N]} \left\{   V_{1,P^\star,\boldsymbol{\Phi}}^{(\dagger,{\boldsymbol{\pi}}^{-i}),i}(s_1) - V_{1,P^\star,\boldsymbol{\Phi}}^{{\boldsymbol{\pi}},i}(s_1) \right\},\qquad \forall s_1\in\cS.
\end{align}
\end{definition}
According to Definition \ref{def:rne:gap}, the suboptimality of a joint policy $\boldsymbol{\pi}$ is the maximum suboptimality gap across each single player's policy $\pi^i$ against its \emph{robust} best response given of other players.
For an RNE policy $\boldsymbol{\pi}_{\mathrm{RNE}}$, it satisfies $\mathrm{RNEGap}_{\boldsymbol{\Phi}}(\boldsymbol{\pi}_{\mathrm{RNE}}; s_1)=0$.
When $N=1$, the notion of $\mathrm{RNEGap}_{\boldsymbol{\Phi}}$ coincides with that of $\mathrm{SubOpt}_{\boldsymbol{\Phi}}$ for single-agent RMDP \eqref{eq: suboptimality}.
When the robust set mapping satisfies $\boldsymbol{\Phi}(P) = \{P\}$, the condition $\mathrm{RNEGap}_{\boldsymbol{\Phi}}<\epsilon$ conincides with the notion of $\epsilon$-approximate NE \citep{cui2022offline, zhang2023offline} for standard MGs.
In conclusion, the goal of offline RL in RMGs is to learn from $\mathbb{D}$ a policy $\hat{\boldsymbol{\pi}}$ which minimizes the RNE gap.

\subsection{Generic Algorithm Framework and Unified Theory}\label{subsec: algorithm and theory rmg}

In this subsection, we generalize the idea of double pessimism of $\texttt{P}^2\texttt{MPO}$ (Algorithm \ref{alg: p2mpo}) for solving offline RL in RMDPs to solving offline RL in RMGs.
Our result is a new algorithm framework which we call the \underline{D}oubly \underline{P}essimistic \underline{M}odel-based \underline{M}ulti-agent \underline{P}olicy \underline{O}ptimization ($\texttt{P}^2\texttt{M}^2\texttt{PO}$, Algorithm \ref{alg:offline}).
In addition to the principle of double pessimism for value estimators, another optimistic-then-pessimistic value estimator is introduced to the new algorithm to achieve the goal of minimizing the RNE gap \eqref{eq: rne gap}.
We introduce the algorithm framework in Section~\ref{subsubsec: algorithm rmg} and we establish its theoretical analysis in Section \ref{subsubsec: theory rmg}.

\subsubsection{Algrotihm Framework: P$^2$M$^{2}$PO}\label{subsubsec: algorithm rmg}

We now present our proposed algorithm framework $\texttt{P}^2\texttt{M}^2\texttt{PO}$ (Algorithm~\ref{alg:offline}), which consists of a \textcolor{blue}{\emph{model estimation step}} and a \textcolor{blue}{\emph{surrogate objective minimization step}}.

\paragraph{Model estimation step (Line \ref{lin: model estimation rmg}).} 
The model estimation step follows the same routine as the single-agent setting.
Specifically, $\texttt{P}^2\texttt{M}^2\texttt{PO}$ implements a sub-algorithm \texttt{ModelEst}$(\mathbb{D}, \cP_{\mathrm{M}})$ to construct a confidence region $\hat{\cP}$ for the nominal transition kernel $P^{\star}$.
The confidence region $\hat{P}$ is in the form of $\hat{\mathcal{P}} = \{\hat{\cP}_h\}_{h = 1}^H$, with $\hat{\cP}_h \subseteq \cP_{\mathrm{M}}$ for each step $h \in [H]$. 
Similar to the RMDP case, the subroutine \texttt{ModelEst} can be flexibly chosen and should satisfy: (i) $\hat{\cP}$ contains the true model $P^\star$; and (ii) any model in $\hat{\cP}$ does not incur large ``robust model estimation error''. 
We quantify these two conditions in Conditions~\ref{cond: accuracy rmg} and~\ref{cond: model estimation rmg} in the coming theory section, respectively.

\paragraph{Surrogate objective minimization step (Line \ref{lin: surrogate minimization start} to \ref{lin: surrogate minimization end}).} 
In order to minimize the RNE gap \eqref{eq: rne gap}, our method is to construct a surrogate objective of the RNE gap and find the policy minimizing it. 
In specific, for any player $i \in [N]$ and any policy $\boldsymbol{\pi}$, we first define two functions of $\boldsymbol{\pi}$ and $i$ as 
\begin{align}
    J_{\texttt{Pess}^2}^i(\boldsymbol{\pi}) &:=  \inf_{ P_h \in \hat{\cP}_h,1\leq h\leq H} \,\,\inf_{\tilde{P}_h\in\boldsymbol{\Phi}(P_h),1\leq h\leq H}\,\,V_{1}^{\boldsymbol{\pi}, i}(s_1;\{\tilde{P}_h\}_{h=1}^H),\label{eq:underline:J}\\
    J_{\texttt{Opt-Pess}}^i(\boldsymbol{\pi}) &:=  \sup_{ P_h \in \hat{\cP}_h,1\leq h\leq H}\,\,\sup_{\tilde{\pi}^i\in\Delta(\mathcal{A}^i|\mathcal{S},H)} \,\,\inf_{\tilde{P}_h\in\boldsymbol{\Phi}(P_h),1\leq h\leq H}\,\,V_{1}^{(\tilde{\pi}^{i}, \boldsymbol{\pi}^{-i}), i}(s_1;\{\tilde{P}_h\}_{h=1}^H). \label{eq:bar:J} 
\end{align}
Here $J_{\texttt{Pess}^2}^i(\boldsymbol{\pi})$ is the doubly pessimistic estimator for the robust value function $V_{1,P^\star,\boldsymbol{\Phi}}^{{\boldsymbol{\pi}},i}$ in the RNE gap \eqref{eq: rne gap}, which corresponds to the doubly pessimistic estimator \eqref{eq: J pess} for RMDPs.
Besides, since RNE compares each player's policy against its best response, the RNE gap \eqref{eq: rne gap} involves a robust best response term $V_{1,P^\star,\boldsymbol{\Phi}}^{(\dagger,{\boldsymbol{\pi}}^{-i}),i}$, for which we define the function $J_{\texttt{Opt-Pess}}^i(\boldsymbol{\pi})$.
It first performs optimism in the face of data uncertainty (supremum over confidence regions $\hat{\mathcal{P}}_h$) and then performs pessimism in the face of test environment uncertainty (infimum over robust sets $\boldsymbol{\Phi}(P_h)$).
The reason for being optimism in the face of data uncertainty is that to minimize the RNE gap \eqref{eq: rne gap} we actually need to minimize the robust best response value $V_{1,P^\star,\boldsymbol{\Phi}}^{(\dagger,{\boldsymbol{\pi}}^{-i}),i}$ (in contrast to maximizing the robust value function $V_{1,P^\star,\boldsymbol{\Phi}}^{{\boldsymbol{\pi}},i}$ when minimizing \eqref{eq: rne gap}, for which we perform pessimism).
Finally, we define the surrogate objective of the RNE gap \eqref{eq: rne gap} as the difference between $J_{\texttt{Opt-Pess}}^i(\boldsymbol{\pi})$ and $J_{\texttt{Pess}^2}^i(\boldsymbol{\pi})$,
\begin{align} \label{eq:surrogate}
    J_{\texttt{Surrogate}}(\boldsymbol{\pi}) := \max_{i \in [N]} \left\{J_{\texttt{Opt-Pess}}^i(\boldsymbol{\pi}) - J_{\texttt{Pess}^2}^i(\boldsymbol{\pi}) \right\},
\end{align}  
and then $\texttt{P}^2\texttt{M}^{2}$\texttt{PO} outputs a policy $\hat{\boldsymbol{\pi}}$ that minimizes the surrogate objective function in \eqref{eq:surrogate}. 

\begin{remark}
    The idea of minimizing the surrogate objective function also appears in the works on non-robust offline MGs \citep{cui2022provably,zhang2023offline}, but their algorithms are either restricted in the tabular case \citep{cui2022provably} or in a model-free fashion \citep{zhang2023offline}.
\end{remark}

\begin{algorithm}[t]
	\caption{\underline{D}oubly \underline{P}essimistic \underline{M}odel-based \underline{M}ulti-agent \underline{P}olicy \underline{O}ptimization ($\texttt{P}^2\texttt{M}^2\texttt{PO}$)}
	\label{alg:offline}
	\begin{algorithmic}[1]
	\STATE \textbf{Input}: model space $\mathcal{P}_{\mathrm{M}}$, mapping $\mathbf{\Phi}$, dataset $\mathbb{D}$, policy class $\Pi$, algorithm \texttt{ModelEst}.
    \STATE \textcolor{blue}{\texttt{Model estimation step:}}
    \STATE Obtain a confidence region $\hat{\mathcal{P}} = \texttt{ModelEst}(\mathbb{D}, \mathcal{P}_{\mathrm{M}})$.\label{lin: model estimation rmg}
    \STATE \textcolor{blue}{\texttt{Surrogate objective minimization step:}}
    \STATE Calculate $J_{\texttt{Pess}^2}^i(\boldsymbol{\pi})$,  $J_{\texttt{Opt-Pess}}^i(\boldsymbol{\pi})$, and $J_{\texttt{Surrogate}}(\boldsymbol{\pi})$ as \eqref{eq:underline:J}, \eqref{eq:bar:J}, and \eqref{eq:surrogate}.\label{lin: surrogate minimization start}
    \STATE Set policy $\hat{\boldsymbol{\pi}} \leftarrow \argmin_{\boldsymbol{\pi}} J_{\texttt{Surrogate}}(\boldsymbol{\pi})$. \label{lin: surrogate minimization end}
    \STATE \textbf{Output}: $\hat{\boldsymbol{\pi}}=\{\hat{\boldsymbol{\pi}}_h\}_{h=1}^H$.
	\end{algorithmic}
\end{algorithm}

\subsubsection{Unified Theoretical Analysis}\label{subsubsec: theory rmg}

In this subsection, we provide theoretical guarantees for Algorithm~\ref{alg:offline}. 
Before stating our main theorem, we first identify a new \emph{robust unilateral coverage} coefficient for offline RMGs, and then specify two accurate conditions for the model estimation sub-algorithm \texttt{ModelEst}, parallel to Section \ref{subsubsec: condition rmdp}.

As the key role played by coverage conditions in single-agent offline RL, coverage conditions are also critical for RL in MGs.
Parallel to the single-agent RL setting, previous works on multi-agent RL also aim to perform sample-efficient learning under certain minimal coverage conditions.
Recent works \citep{cui2022offline, zhong2022pessimistic} have proposed the \emph{unilateral coverage} assumption for non-robust MGs and show that offline RL in non-robust MGs can be solved in a sample-efficient manner under such an assumption.
For RMGs, we propose the following \emph{robust unilateral coverage} assumption.

\begin{assumption}[Robust unilateral coverage] \label{assumption:unilateral}
    Suppose that $\boldsymbol{\pi}_{\mathrm{RNE}}$ is a robust Nash equilibrium.
    We assume that following robust unilateral coverage coefficient is finite,
    \$
     \boldsymbol{C}^{\mathrm{RNE}}_{P^{\star},\mathbf{\Phi}} = \sup_{h\in[H], i \in [N]} \,\,\sup_{\pi^i\in\Delta(\mathcal{A}^i|\mathcal{S},H)} \,\,\sup_{P = \{P_h\}_{h=1}^H,P_h \in \mathbf{\Phi}(P_h^{\star})}\,\,\mathbb{E}_{(s_h,\boldsymbol{a}_h)\sim d^{\boldsymbol{\pi}^{\mathrm{b}}}_{P^{\star}, h}}\left[\left(\frac{d^{(\pi^i, \boldsymbol{\pi}_{\mathrm{RNE}}^{-i})}_{P, h}(s_h,\boldsymbol{a}_h)}{d^{\boldsymbol{\pi}^{\mathrm{b}}}_{P^{\star}, h}(s_h,\boldsymbol{a}_h)}\right)^2\right] < \infty.
    \$ 
    where $\boldsymbol{\pi}_{\mathrm{RNE}}$ is one of the robust Nash equilibrium (Definition \ref{def: robust nash equilibrium}).
\end{assumption}

Assumption~\ref{assumption:unilateral} requires that the dataset distribution has good coverage of trajectories induced by 
\begin{align*}
    \big\{\boldsymbol{\pi} = (\pi_i, (\boldsymbol{\pi}_{\mathrm{RNE}})^{-i}): \pi_i\in \Delta(\mathcal{A}^i|\mathcal{S},H), i \in [N]\big\}
\end{align*}
and any transition kernel $P$ in the robust set of the nominal transition kernel $\boldsymbol{\Phi}(P^\star)$. 
For degenerate non-robust MGs, i.e., $\boldsymbol{\Phi}(P) = \{P\}$, the robust unilateral coverage coefficient defined in Assumption \ref{assumption:unilateral} is consistent with the \emph{unilateral coverage coefficient} adopted by a line of previous works on offline non-robust MGs \citep{zhong2022pessimistic,cui2022provably,cui2022offline,xiong2022nearly,yan2022model,zhang2023offline}, and thus giving the name of \emph{robust unilateral coverage coefficient}.

\paragraph*{Conditions on model estimation.} 
Now we specify the two accurate conditions of model estimation. Recall that $\hat{\cP} = \texttt{ModelEst}(\mathbb{D}, \cP_{\mathrm{M}})$ where $\hat{\cP}_h \subseteq \cP_{\mathrm{M}}$ for all $h \in [H]$. The first condition ensures that confidence region $\hat{\cP}$ contains the nominate model $P^\star$ with high probability,

\begin{cond}[$\delta$-accuracy]\label{cond: accuracy rmg}
    With probability at least $1-\delta$, it holds that $P^{\star}_h\in\hat{\mathcal{P}}_h$ for any $h\in[H]$.
\end{cond}

Besides Condition~\ref{cond: accuracy rmg}, the desired confidence region $\cP$ should satisfy that any transition kernel in it incurs a small ``robust estimation error". 
To be specific, we define the following \emph{robust Bellman error} with respect to some transition $P_h$ and value function $V: \cS \mapsto [0, H]$,
\begin{align} \label{eq:robust:bellman:error}
    \boldsymbol{\cE}_h^{\boldsymbol{\Phi}}(s,\boldsymbol{a};P_h, V) = \inf_{\tilde{P}_h\in\mathbf{\Phi}(P_h)}\mathbb{E}_{s'\sim \tilde{P}_h(\cdot| s,\boldsymbol{a})}[V(s')] - \inf_{\tilde{P}_h\in\mathbf{\Phi}(P^{\star}_h)}\mathbb{E}_{s'\sim \tilde{P}_h(\cdot| s,\boldsymbol{a})}[V(s')].
\end{align}

\begin{cond}[$\delta$-model estimation error]\label{cond: model estimation rmg}
    For some function of the sample size $n$ and failure probability $\delta$ denoted by $\mathbf{Err}_{h}^{\mathbf{\Phi}}(n,\delta)<+\infty$, with probability at least $1-\delta$, it holds that
    \begin{align}
       \max_{i\in\in[N]}\mathbb{E}_{(s,\boldsymbol{a})\sim d^{\boldsymbol{\pi}^{\mathrm{b}}}_{P^{\star},h}} \left[\left( \boldsymbol{\cE}_h^{\boldsymbol{\Phi}}(s,\boldsymbol{a};P_h, V_{h+1, P, \mathbf{\Phi}}^{( \pi^i, \boldsymbol{\pi}_{\mathrm{RNE}}^{-i} ), i})\right)^2 \right] \leq \mathbf{Err}_{h}^{\mathbf{\Phi}}(n,\delta).
    \end{align}
    for any policy $\pi^i\in\Delta(\cA^i|\cS,H)$, transition kernel $P = \{P_h\}_{h=1}^H$ with $P_h\in\hat{\mathcal{P}}_h$ for each step $h\in[H]$.
\end{cond}

To interpret, Condition~\ref{cond: model estimation} requires that the robust Bellman error \eqref{eq:robust:bellman:error} incurred by any $P_h \in \hat{\cP}_h$ is upper bounded by $\mathbf{Err}_{h}^{\mathbf{\Phi}}(n,\delta)$. 
As in the case of single-agent RMDPs, the error $\mathbf{Err}_{h}^{\mathbf{\Phi}}(n,\delta)$ generally diminishes at the rate of $\tilde{\mathcal{O}}(n^{-1})$, where $n$ is the size of dataset $\mathbb{D}$. 

Now we present our main result in the following theorem, which characterizes the RNE gap of Algorithm~\ref{alg:offline}.

\begin{theorem}[Suboptimality of $\texttt{P}^2\texttt{M}^2\texttt{PO}$]\label{thm:offline:FA}
     Suppose that Assumptions~\ref{ass: s a rectangular} and \ref{assumption:unilateral} hold, if the  model estimation sub-algorithm satisfies Conditions~\ref{cond: accuracy rmg} and~\ref{cond: model estimation rmg}, it holds with probability $1 - 2\delta$ that
     \begin{align*}
        \mathrm{RNEGap}_{\boldsymbol{\Phi}}(\hat{\boldsymbol{\pi}}; s_1) \le 2 \sqrt{\boldsymbol{C}^{\mathrm{RNE}}_{P^{\star},\mathbf{\Phi}} } \cdot \sum_{h = 1}^H \sqrt{\mathbf{Err}_h^{\mathbf{\Phi}}(n,\delta)}.
     \end{align*} 
\end{theorem}

\begin{proof}[Proof of Theorem \ref{thm:offline:FA}]
      See Appendix \ref{appendix:pf:offline:FA} for a detailed proof.
\end{proof}

As we did in Section \ref{sec: implementation} for RMDPs, we can use similar analysis to specify Theorem \ref{thm:offline:FA} to specific examples of RMGs, and can be coped with kernel and neural function approximations.
To illustrate, we only present a specification result for RMGs with finite state spaces. 
More corollaries can be derived without much difficulty given the techniques we presented in Section \ref{sec: implementation}.

\begin{corollary}[Suboptimality of $\texttt{P}^2\texttt{M}^2\texttt{PO}$: $\cS\times\cA$-rectangular robust tabular MG]
    Consider an RMG satisfying Assumption \ref{ass: s a rectangular} with a finite state space $\cS$.
    Moreover, its robust set mapping $\boldsymbol{\Phi}$ satisfy that 
    \begin{align}\label{eq: distribution ball rmg}
        \mathbf{\Phi}(P) = \bigotimes_{(s,\boldsymbol{a})\in\mathcal{S}\times\mathcal{A}} \mathcal{P}(s,\boldsymbol{a}; P),\quad \text{where}\quad \mathcal{P}_{\rho}(s,\boldsymbol{a}; P) = \left\{\tilde{P}(\cdot)\in\Delta(\mathcal{S}):D(\tilde{P}(\cdot)\|P(\cdot|s,\boldsymbol{a}))\leq \rho\right\},
    \end{align}
    where $D(\cdot\|\cdot)$ is either KL-divergence or TV-distance.
    Then by choosing the confidence region $\hat{\cP} = \{\hat{\cP}_h\}_{h=1}^H$ as  
    \begin{align}
        \hat{\mathcal{P}}_h = \bigg\{P\in\mathcal{P}_{\mathrm{M}}: \frac{1}{n}\sum_{\tau=1}^n \|\hat{P}_h(\cdot|s_h^\tau,\boldsymbol{a}_h^\tau) - P(\cdot|s_h^\tau,\boldsymbol{a}_h^\tau)\|_1^2 \leq \xi\bigg\},\quad \hat{P}_h = \argmax_{P \in \mathcal{P}_{\mathrm{M}}}\frac{1}{n}\sum_{\tau=1}^n\log P(s_{h+1}^\tau|s_h^\tau,\boldsymbol{a}_h^\tau),
    \end{align}
    with $\xi = C_1|\cS|^2|\cA|\log(C_2nH/\delta)/n$, the $\mathtt{P}^2\mathtt{M}^2\mathtt{PO}$ algorithm enjoys following results under Assumption~\ref{assumption:unilateral},
    \begin{itemize}
        \item [$\spadesuit$] when $D(\cdot\|\cdot)$ is KL-divergence and Assumption \ref{ass: kl regularity} holds with parameter $\underline{\lambda}$ (treating $\cA$ as the joint action space of the RMG), then with probability at least $1-2\delta$, 
        \begin{align*}
            \mathrm{RNEGap}_{\boldsymbol{\Phi}}(\hat{\boldsymbol{\pi}}; s_1)\leq \frac{\sqrt{\boldsymbol{C}^{\mathrm{RNE}}_{P^{\star},\mathbf{\Phi}} } \cdot H^2\exp(H/\underline{\lambda})}{\rho}\cdot\sqrt{\frac{C_1'|\cS|^2|\cA|\log(C_2'nH/\delta)}{n}}.
        \end{align*}
        \item [$\clubsuit$] when $D(\cdot\|\cdot)$ is TV-divergence, then with probability at least $1-2\delta$, 
        \begin{align*}
            \mathrm{RNEGap}_{\boldsymbol{\Phi}}(\hat{\boldsymbol{\pi}}; s_1)\leq \sqrt{\boldsymbol{C}^{\mathrm{RNE}}_{P^{\star},\mathbf{\Phi}} } \cdot H^2\cdot\sqrt{\frac{C_1'|\cS|^2|\cA|\log(C_2'nH/\delta)}{n}}.
        \end{align*}
    \end{itemize}
    Here $C_1$, $C_2$, $C_1'$, $C_2'>0$ stand for universal constants.
\end{corollary}

\section{Discussions}\label{sec: discussion}

In this section, we discuss and analysis some other types of RMDPs appearing in existing works that do not satisfy Assumption \ref{ass: sarmdp} ($\mathcal{S}\times\mathcal{A}$-rectangular), including $d$-rectangular robust linear MDPs \citep{ma2022distributionally} and RMDPs with $\mathcal{S}$-rectangular robust sets \citep{wiesemann2013robust}, see Section \ref{subsec: d rectangular robust linear MDP} and \ref{subsec: S rectangular robust MDP} respectively.

\subsection{$d$-rectangular robust linear MDPs}\label{subsec: d rectangular robust linear MDP}

Recently \cite{ma2022distributionally} proposed the $d$-rectangular robust linear MDP to study offline robust RL with linear structures.
We use the following example to show how a $d$-rectangular robust linear MDP is represented by our general framework of RMDP.

\begin{example}[$d$-rectangular robust linear MDP \citep{ma2022distributionally}]\label{exp: dlmdp}
    A $d$-rectangular robust linear MDP is equipped with $d$-rectangular robust sets.
    Linear MDP is an MDP that enjoys a $d$-dimensional linear decomposition of its reward function and transition kernel \citep{jin2020provably}.
    We define the model space $\mathcal{P}_{\mathrm{M}}$ as
    \begin{align*}
        \mathcal{P}_{\mathrm{M}} = \Big\{P(s'|s,a) = \boldsymbol{\phi}(s,a)^\top\boldsymbol{\mu}(s'):\mu_i(\cdot) \in \Delta(\mathcal{S}),\forall i\in[d]\Big\},
    \end{align*}
    where $\boldsymbol{\phi}:\mathcal{S}\times\mathcal{A}\mapsto\mathbb{R}^d$ is a known feature mapping satisfying that 
    \begin{align*}
        \sum_{i=1}^d\phi_i(s,a) = 1, \quad \phi_i(s,a)\geq 0,\quad \forall i\in[d].
    \end{align*}
    We then assume that $P^{\star}_h(s'|s,a)=\boldsymbol{\phi}(s,a)^\top\boldsymbol{\mu}^{\star}_h(s')\in\mathcal{P}_{\mathrm{M}}$, and $R_h(s,a)  = \boldsymbol{\phi}(s,a)^\top\boldsymbol{\theta}_h$ for some $\boldsymbol{\theta}_h\in\mathbb{R}^d$ with $\|\boldsymbol{\theta}_h\|_2\leq \sqrt{d}$. 
    We define the mapping $\boldsymbol{\Phi}$ as  
    \begin{align*}
        \boldsymbol{\Phi}(P) = \left\{\sum_{i=1}^d\phi_i(s,a)\tilde{\mu}_i(s'):\tilde{\mu}_i(\cdot)\in\Delta(\mathcal{S}),D(\tilde{\mu}(\cdot)\|\mu_i(\cdot))\leq \rho,\forall i\in[d]\right\},
    \end{align*}
    where $D(\cdot\|\cdot)$ is some (pseudo-)distance such as KL-divergence or TV-distance. 
    This is called a $d$-rectangular robust set and is first considered by \cite{ma2022distributionally}.
    As is argued in \cite{ma2022distributionally}, $d$-rectangular robust set is not so conservative as $\mathcal{S}\times\mathcal{A}$-rectangular robust set in certain cases, which is more natural for linear MDPs due to the special linear structure.
\end{example}

While not satisfying Assumption \ref{ass: sarmdp} ($\mathcal{S}\times\mathcal{A}$-rectangular robust sets), it can still be proved that RMDP in Example \ref{exp: dlmdp} also satisfies the robust Bellman equation in Proposition \ref{prop: robust Bellman} (similar to the proof in Appendix~\ref{subsec: robust bellman sarmdp} for $\mathcal{S}\times\mathcal{A}$-rectangular robust MDPs).
Our algorithm $\texttt{P}^2\texttt{MPO}$ (Algorithm \ref{alg: p2mpo}) can also be applied to offline solve robust RL with RMDP in Example \ref{exp: dlmdp}, under certain robust partial coverage assumption (see Assumption~\ref{ass: partial coverage cov}).  In the following, we give a specific implementation of the model estimation step for $d$-rectangular RMDPs in Example \ref{exp: dlmdp}, and we provide theoretical guarantees for this specification of our algorithm $\texttt{P}^2\texttt{MPO}$.

\paragraph{Model estimation.}
Suppose we are given a function class $\mathcal{V} \subseteq \{v:\mathcal{S}\mapsto\mathbb{R}\}$ 
which depends on the choice of distance $D(\cdot\|\cdot)$ of the robust set. 
Then, we define that
\begin{align}\label{eq: confidence region drlmdp}
    \hat{\mathcal{P}}_h = \left\{P\in\mathcal{P}_{\mathrm{M}}: \sup_{v\in\mathcal{V}} \frac{1}{n}\sum_{\tau=1}^n \left|\int_{\mathcal{S}} P(\mathrm{d}s'|s_h^\tau,a_h^\tau)v(s') - \boldsymbol{\phi}(s_h^\tau,a_h^\tau)^\top \hat{\boldsymbol{\theta}}_v\right|^2\leq \xi \right\},
\end{align}
where $\xi>0$ is a tuning parameter that controls the size of the confidence region, and the vector $\hat{\boldsymbol{\theta}}_{h,v}$ depends on the specific function $v\in\mathcal{V}$, given by 
\begin{align}
    \hat{\boldsymbol{\theta}}_{h, v} = \argmin_{\boldsymbol{\theta}\in\mathbb{R}^d} \frac{1}{n}\sum_{\tau=1}^n \left(\boldsymbol{\phi}(s_h^\tau,a_h^\tau)^\top\boldsymbol{\theta} - v(s_{h+1}^\tau)\right)^2 + \frac{\alpha}{n}\cdot \|\boldsymbol{\theta}\|_2^2 =\boldsymbol{\Lambda}_{h,\alpha}^{-1}\left(\frac{1}{n}\sum_{\tau=1}^n\boldsymbol{\phi}(s_h^{\tau},a_h^{\tau})v(s_{h+1}^{\tau})\right), \label{eq: hat theta v}
\end{align}
for some tuning parameter $\alpha>0$, where $\boldsymbol{\Lambda}_{h,\alpha}$ is the regularized covariance matrix, defined as
\begin{align*}
    \boldsymbol{\Lambda}_{h,\alpha} = \frac{1}{n}\sum_{\tau=1}^n\boldsymbol{\phi}(s_h^{\tau},a_h^{\tau})\boldsymbol{\phi}(s_h^{\tau},a_h^{\tau})^\top + \frac{\alpha}{n}\cdot\boldsymbol{I}_d.
\end{align*}
Similar constructions for standard linear MDPs are also considered by \cite{sun2019model, neu2020unifying, uehara2021pessimistic}.
We will specify the choice of the function class $\mathcal{V}$ in the theoretical guarantees of this implementation.

\paragraph*{Suboptimality analysis.}
In the following, we provide suboptimality bounds for the above implementation of $\texttt{P}^2\texttt{MPO}$ for $d$-rectangular robust linear MDPs.
Regarding the offline dataset, we impose the following robust partial coverage assumption.

\begin{assumption}[Robust partial coverage covariance matrix]\label{ass: partial coverage cov}
    We assume that for some constant $c^{\dagger}>0$,
    \begin{align}
        \boldsymbol{\Lambda}_{h,\alpha} \succeq \frac{\alpha}{n}\cdot\boldsymbol{I}_d + c^{\dagger}\cdot\mathbb{E}_{(s_h,a_h)\sim d_{P,h}^{\pi^{\star}}}[(\phi_i(s_h,a_h)\mathbf{1}_i)(\phi_i(s_h,a_h)\mathbf{1}_i)^\top]
    \end{align}
    for any $i\in[d]$, $h\in[H]$, and $P_h\in\boldsymbol{\Phi}(P_h^{\star})$.
\end{assumption}

\begin{theorem}[Suboptimality of $\texttt{P}^2\texttt{MPO}$: $d$-rectangular robust linear MDP]\label{cor: suboptimality drlmdp}
    Suppose that the RMDP is $d$-rectangular robust linear MDP in Example \ref{exp: dlmdp} with $D(\cdot\|\cdot)$ being KL-divergence or TV-distance and that Assumption \ref{ass: partial coverage cov} holds, choosing the tuning parameter $\alpha=1$ in \eqref{eq: hat theta v}.
    \begin{itemize}
        \item [$\spadesuit$] when $D(\cdot\|\cdot)$ is KL-divergence and Assumption \ref{ass: kl regularity drlmdp} holds with parameter $\underline{\lambda}$, then by setting 
        \begin{align*}
            \mathcal{V} = \left\{v(s) = \exp\left\{-\left\{\max_{a\in\mathcal{A}}\boldsymbol{\phi}(s,a)^\top\boldsymbol{w}/\lambda\right\}_+\right\}:\|\boldsymbol{w}\|_2\leq H\sqrt{d}, \lambda\in [\underline{\lambda},H/\rho]\right\},
        \end{align*}
        and choosing
        \begin{align*}
            \xi = \frac{C_1d^2\big(\log(1+C_2nH/\delta) + \log(1+C_3ndH/(\rho\underline{\lambda}^2))\big)}{n},
        \end{align*}
        for some constants $C_1,C_2,C_3>0$, it holds with probability at least $1-2\delta$ that, 
        \begin{align*}
            \mathrm{SubOpt}(\hat{\pi};s_1) \leq \frac{d^2H^2\exp(H/\underline{\lambda})}{c^{\dagger}\rho}\cdot\sqrt{\frac{C_1'\big(\log(1+C_2'nH/\delta) + \log(1+C_3'ndH/(\rho\underline{\lambda}^2))\big)}{n}}.
        \end{align*}
        \item [$\clubsuit$] when $D(\cdot\|\cdot)$ is TV-distance, then by setting  
        \begin{align*}
            \!\!\mathcal{V} = \left\{v(s) = \left\{\lambda - \max_{a\in\mathcal{A}}\boldsymbol{\phi}(s,a)^\top\boldsymbol{w}\right\}_+\!\!\!\!:\!\|\boldsymbol{w}\|_2\leq H\sqrt{d}, \lambda\in[0,H]\right\},\quad \xi = \frac{C_1d^2H^2\log(C_2ndH/\delta)}{n},
        \end{align*}
        for some constants $C_1,C_2>0$, it holds with probability at least $1-2\delta$ that, 
        \begin{align*}
           \mathrm{SubOpt}(\hat{\pi};s_1) \leq \frac{d^2H^2}{c^{\dagger}}\cdot \sqrt{\frac{C_1'\log(C_2' ndH/\delta)}{n}}.
        \end{align*}
    \end{itemize}
    Here $\underline{c}$ is defined in Assumption \ref{ass: partial coverage cov} and $C_1',C_2',C_3'>0$ are universal constants.
\end{theorem}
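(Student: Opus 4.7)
The plan is to mirror the proof of Theorem \ref{thm: subopt general}, but adapted to the $d$-rectangular linear structure instead of $\mathcal{S}\times\mathcal{A}$-rectangular. The suboptimality decomposition still gives $\mathrm{SubOpt}(\hat{\pi};s_1) \leq \sum_{h=1}^H \mathbb{E}_{(s,a)\sim d_{P^{\star},h}^{\pi^{\star}}}\big[\mathcal{E}_h(s,a)\big]$ where $\mathcal{E}_h(s,a)$ is the one-step error between the robust Bellman backup under the worst $P_h\in\hat{\mathcal{P}}_h$ chosen by $\texttt{P}^2\texttt{MPO}$ and that under $P_h^{\star}$. So the two ingredients I need are: (i) verifying Conditions \ref{cond: accuracy} and \ref{cond: model estimation} for the specific \texttt{ModelEst} in \eqref{eq: confidence region drlmdp}, and (ii) converting the resulting in-sample behavior-policy error into an out-of-sample $\pi^{\star}$-error using Assumption \ref{ass: partial coverage cov} instead of the $\sqrt{C^{\star}_{P^{\star},\mathbf{\Phi}}}$ factor.

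For step (i), I would first reduce the robust Bellman backup to a linear form via duality. Under $d$-rectangular linearity, $P_h(\cdot|s,a) = \sum_{i=1}^d \phi_i(s,a)\mu_{h,i}(\cdot)$, and for either divergence $D$, the worst-case expectation $\inf_{\tilde{P}_h\in\boldsymbol{\Phi}(P_h)}\mathbb{E}_{s'\sim\tilde{P}_h}[V^{\pi^{\star}}_{h+1,P,\boldsymbol{\Phi}}(s')]$ decomposes as $\sum_{i=1}^d \phi_i(s,a)\,\inf_{\tilde{\mu}_{h,i}}\int V^{\pi^{\star}}_{h+1,P,\boldsymbol{\Phi}}\,\mathrm{d}\tilde{\mu}_{h,i}$. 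Strong duality then gives a one-dimensional dual: for TV, $\inf_{\tilde{\mu}}\int V\,\mathrm{d}\tilde{\mu} = \sup_{\lambda\in[0,H]}\{\lambda - \int (\lambda - V)_+\,\mathrm{d}\mu - \rho\,\|(\lambda-V)_+\|_{\infty}\}$, and for KL, $\inf_{\tilde{\mu}}\int V\,\mathrm{d}\tilde{\mu} = \sup_{\lambda\geq 0}\{-\lambda\log\int\exp(-V/\lambda)\,\mathrm{d}\mu - \lambda\rho\}$, where Assumption \ref{ass: kl regularity drlmdp} restricts $\lambda\in[\underline{\lambda},H/\rho]$. Inducting backward, $V^{\pi^{\star}}_{h+1,P,\boldsymbol{\Phi}}$ inherits the form $\max_{a}\boldsymbol{\phi}(s,a)^\top\boldsymbol{w}$ for some bounded $\boldsymbol{w}$, so the inner integrals reduce to $\int v(s')\mu_{h,i}(\mathrm{d}s')$ for $v$ in exactly the class $\mathcal{V}$ chosen in the theorem.

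For step (ii), I would show uniform concentration of $\hat{\boldsymbol{\theta}}_v$ in \eqref{eq: hat theta v} around its population analogue $\boldsymbol{\theta}_v^{\star} = \int v(s')\,\boldsymbol{\mu}^{\star}(\mathrm{d}s')$, using a standard self-normalized bound (e.g.\ \cite{jin2020provably}-style) and an $\epsilon$-net on $\mathcal{V}$; the $\log$ covering numbers of $\mathcal{V}$ give the $d^2\log(\cdots)$ factor in $\xi$. This yields $P_h^{\star}\in\hat{\mathcal{P}}_h$ (Condition \ref{cond: accuracy}) since $\int v\,\mathrm{d}P_h^{\star}(\cdot|s,a) = \boldsymbol{\phi}(s,a)^\top\boldsymbol{\theta}_v^{\star}$ exactly. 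Moreover, for any $P_h\in\hat{\mathcal{P}}_h$ and the specific $v$ arising from the dual reformulation with the appropriate dual variable $\lambda^{\star}$, the triangle inequality combined with the definition of $\hat{\mathcal{P}}_h$ bounds $\frac{1}{n}\sum_\tau|\int v\,\mathrm{d}(P_h - P_h^{\star})(\cdot|s_h^\tau,a_h^\tau)|^2 \lesssim \xi$, which after averaging $\sup$ over $\lambda$ via Lipschitzness in $\lambda$ gives the $d$-rectangular analogue of Condition \ref{cond: model estimation}.

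The main obstacle, and where the $1/c^{\dagger}$ factor replaces $\sqrt{C^{\star}_{P^{\star},\mathbf{\Phi}}}$, is converting the in-sample guarantee to an $\mathbb{E}_{d^{\pi^{\star}}_{P,h}}$-guarantee. Here I would write the one-step error $\mathcal{E}_h(s,a)$ as $\sum_{i=1}^d \phi_i(s,a)\,|\boldsymbol{\phi}(s,a)^\top(\hat{\boldsymbol{\theta}}_{v,i} - \boldsymbol{\theta}_{v,i}^{\star})|$ (after the duality above), bound each coordinate via $\|\hat{\boldsymbol{\theta}}_{v,i} - \boldsymbol{\theta}_{v,i}^{\star}\|_{\boldsymbol{\Lambda}_{h,1}}\lesssim\sqrt{\xi}$, and then use Assumption \ref{ass: partial coverage cov} to lower bound $\boldsymbol{\Lambda}_{h,1}$ in the direction $\phi_i(s,a)\mathbf{1}_i$, giving $\mathbb{E}_{d^{\pi^{\star}}_{P,h}}[\phi_i(s,a)|\boldsymbol{\phi}(s,a)^\top(\hat{\boldsymbol{\theta}}_{v,i}-\boldsymbol{\theta}_{v,i}^{\star})|]\lesssim \sqrt{\xi/c^{\dagger}}$. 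Summing over $i\in[d]$ and $h\in[H]$ and absorbing the extra $H/\rho$, $\exp(H/\underline{\lambda})$ factors that arise from propagating the KL-dual Lipschitz constants through the backward induction yields the stated bounds. The trickiest part will be carefully propagating the $\lambda$-Lipschitz and $V$-Lipschitz constants through the $H$-step dual backward induction without losing additional polynomial factors in the KL case, which is where Assumption \ref{ass: kl regularity drlmdp} plays the essential role of keeping $\lambda$ bounded away from zero.
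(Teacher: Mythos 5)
Your proposal follows essentially the same route as the paper's proof: reduce the robust backup via KL/TV duality (with the dual variable confined to $[\underline{\lambda},H/\rho]$ or $[0,H]$) to integrals of functions in $\mathcal{V}$, establish the uniform self-normalized concentration of the ridge estimator over an $\epsilon$-net of $\mathcal{V}$ to verify the confidence-set conditions with the stated $\xi$, split each coordinate by Cauchy--Schwarz into $\|\phi_i(s,a)\mathbf{1}_i\|_{\boldsymbol{\Lambda}_{h,\alpha}^{-1}}$ times a $\boldsymbol{\Lambda}_{h,\alpha}$-norm term bounded by $O(\sqrt{\xi})$, and invoke Assumption \ref{ass: partial coverage cov} to obtain the $d/c^{\dagger}$ factor. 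No substantive deviation from the paper's argument, so no further comparison is needed.
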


\begin{proof}[Proof of Theorem \ref{cor: suboptimality drlmdp}]
    See Appendix \ref{sec: proof drlmdp} for a detailed proof.
\end{proof}

\subsection{RMDPs with $\mathcal{S}$-rectangular robust sets} \label{subsec: S rectangular robust MDP}
Besides $\mathcal{S}\times\mathcal{A}$-rectangular, there exists another type of generic rectangular assumption on robust sets called $\mathcal{S}$-rectangular \citep{wiesemann2013robust,yang2019fine}.
See the following assumption.

\begin{assumption}[$\mathcal{S}$-rectangular robust sets \citep{wiesemann2013robust}]\label{ass: srmdp}
    An $\mathcal{S}$-rectangular robust MDP is equipped with $\mathcal{S}$-rectangular robust sets.
    The mapping $\mathbf{\Phi}$ is defined as, for $\forall P\in\mathcal{P}_{\mathrm{M}}$,
    \begin{align*}
        \mathbf{\Phi}(P) = \bigotimes_{s\in\mathcal{S}} \mathcal{P}_{\rho}(s; P),\quad \mathcal{P}_{\rho}(s; P) = \left\{\tilde{P}(\cdot|\cdot):\mathcal{A}\mapsto\Delta(\mathcal{S}):\sum_{a\in\mathcal{A}}D(\tilde{P}(\cdot|a)\|P(\cdot|s,a))\leq \rho|\mathcal{A}|\right\},
    \end{align*}
    for some (pseudo-)distance $D(\cdot\|\cdot)$ on $\Delta(\mathcal{S})$ and some real number $\rho\in\mathbb{R}_+$.
\end{assumption}

RMDP with $\mathcal{S}$-rectangular robust sets (Assumption \ref{ass: srmdp}) also satisfies Proposition \ref{prop: robust Bellman} \citep{wiesemann2013robust}.
Unfortunately, our algorithm framework is unable to deal with this kind of rectangular robust sets under partial coverage data due to some technical problems in applying the robust partial coverage coefficient $C_{P^{\star},\boldsymbol{\Phi}}^{\star}$ (Assumption \ref{ass: partial coverage}) under this kind of robust sets.
To our best knowledge, how to design sample-efficient algorithms for $\mathcal{S}$-rectangular RMDP with robust partial coverage data is still unknown.
It is an exciting future work to fill this gap for robust offline reinforcement learning with function approximations.

\bibliographystyle{ims}
\bibliography{ref}

\newpage
\appendix

\section{Proof of Robust Bellman Equation}\label{sub: robust bellman sarmdp}

\subsection{Proof of Proposition \ref{prop: robust Bellman}}\label{subsec: robust bellman sarmdp}

\begin{proof}[Proof of Proposition \ref{prop: robust Bellman} for $\mathcal{S}\times\mathcal{A}$-rectangular robust MDP]
    Instead of directly proving the robust Bellman equation \eqref{eq: robust bellman V}, we prove the following stronger results via induction from step $h=H$ to $1$: \emph{there exists a set of transition kernels $P^{\pi,\dagger}=\{P^{\pi,\dagger}_h\}_{h=1}^H$ with $P^{\pi,\dagger}_h\in\boldsymbol{\Phi}(P_h)$ such that
    \begin{enumerate}
        \item Robust Bellman equation holds, i.e., 
        \begin{align*}
            V_{h, P, \mathbf{\Phi}}^{\pi}(s) &= \mathbb{E}_{a\sim \pi_h(\cdot|s)}[Q_{h, P, \mathbf{\Phi}}^{\pi}(s, a)],\\
            Q_{h, P, \mathbf{\Phi}}^{\pi}(s, a) &= R_h(s,a) + \inf_{\tilde{P}_h\in\mathbf{\Phi}(P_h)}\mathbb{E}_{s'\sim \tilde{P}_h(\cdot|s,a)}[V_{h+1, P, \mathbf{\Phi}}^{\pi}(s')].
        \end{align*}
        \item The following expressions for robust value functions hold,
        \begin{align*}
            V_{h, P, \mathbf{\Phi}}^{\pi}(s) & = V_h^{\pi}(s;\{P^{\pi,\dagger}_i\}_{i=h}^H), \\
            Q_{h, P, \mathbf{\Phi}}^{\pi}(s, a) & = Q_h^{\pi}(s, a;\{P^{\pi,\dagger}_i\}_{i=h}^H).
        \end{align*}
        \end{enumerate}
    }
    
    Firstly, for step $h=H$, the conclusion 1. and 2. hold directly because no transitions are involved.
    Now supposing that the conclusion 1. and 2. hold for some step $h+1$, which means that there exist transition kernels $\{P^{\pi,\dagger}_i\}_{i=h+1}^H$ such that the following condition hold for any $s\in\mathcal{S}$,
    \begin{align}\label{eq: proof robust bellman sarmdp 0}
        V_{h+1, P, \mathbf{\Phi}}^{\pi}(s) & = V_{h+1}^{\pi}(s;\{P^{\pi,\dagger}_i\}_{i=h+1}^H).
    \end{align}
    By the definition of robust value function $Q_{h, P, \mathbf{\Phi}}^{\pi}$ in \eqref{eq: robust Q}, we can derive that for any $(s,a)\in\mathcal{S}\times\mathcal{A}$, 
    \begin{align}
        Q_{h, P, \mathbf{\Phi}}^{\pi}(s,a) &= \inf_{\tilde{P}_i\in\mathbf{\Phi}(P_i), h\leq i\leq H} \mathbb{E}_{\{\tilde{P}_i\}_{i=h}^H,\pi}\left[\sum_{i=h}^HR_i(s_i,a_i) \middle| s_h=s,a_h=a\right]\notag\\
            &= R_h(s,a) + \inf_{\tilde{P}_i\in\mathbf{\Phi}(P_i), h\leq i\leq H} \int_{\mathcal{S}}\tilde{P}_h(\mathrm{d}s'|s,a)\mathbb{E}_{\{\tilde{P}_i\}_{i=h+1}^H,\pi}\left[\sum_{i=h+1}^HR_i(s_i,a_i) \middle| s_{h+1}=s'\right]\notag\\
            &\leq R_h(s,a) + \inf_{\tilde{P}_h\in\mathbf{\Phi}(P_h)} \int_{\mathcal{S}}\tilde{P}_h(\mathrm{d}s'|s,a)\mathbb{E}_{\{P_i^{\pi,\dagger}\}_{i=h+1}^H,\pi}\left[\sum_{i=h+1}^HR_i(s_i,a_i) \middle| s_{h+1}=s'\right].\label{eq: proof robust bellman sarmdp 1}
    \end{align}
    On the one hand, for $\mathcal{S}\times\mathcal{A}$-rectangular robust MDP, the robust set $\mathbf{\Phi}(P_h)$ is decoupled for different $(s,a)$ pairs, i.e., 
    \begin{align*}
        \mathbf{\Phi}(P_h) = \bigotimes_{(s,a)\in\mathcal{S}\times\mathcal{A}} \mathcal{P}_{\rho}(s,a; P_h),
    \end{align*}
    and therefore we can find a \emph{single} transition kernel $P_h^{\pi,\dagger}$ such that for \emph{any} $(s,a)\in\mathcal{S}\times\mathcal{A}$,
    \begin{align}\label{eq: proof robust bellman sarmdp 2}
        P_h^{\pi,\dagger}(\cdot|s,a) = \arginf_{\tilde{P}_h\in\mathbf{\Phi}(P_h)}\int_{\mathcal{S}}\tilde{P}(\mathrm{d}s'|s,a)\mathbb{E}_{\{P_i^{\pi,\dagger}\}_{i=h+1}^H,\pi}\left[\sum_{i=h+1}^HR_i(s_i,a_i) \middle| s_{h+1}=s'\right].
    \end{align}
    On the other hand, using condition \eqref{eq: proof robust bellman sarmdp 0} and the definition of (robust) value function $V_{h, P, \mathbf{\Phi}}^{\pi}$ and $V_h^{\pi}$ in \eqref{eq: robust V} and \eqref{eq: V}, we can also deduce that,
    \begin{align}
        Q_{h, P, \mathbf{\Phi}}^{\pi}(s,a) &\leq R_h(s,a) + \inf_{\tilde{P}_h\in\mathbf{\Phi}(P_h)} \int_{\mathcal{S}}\tilde{P}_h(\mathrm{d}s'|s,a)V_{h+1}^{\pi}(s';\{P^{\pi,\dagger}_i\}_{i=h+1}^H)\notag\\
        &=R_h(s,a) + \inf_{\tilde{P}_h\in\mathbf{\Phi}(P_h)} \int_{\mathcal{S}}\tilde{P}_h(\mathrm{d}s'|s,a)V_{h+1, P, \mathbf{\Phi}}^{\pi}(s')\label{eq: proof robust bellman sarmdp 3}\\
        &=R_h(s,a) + \inf_{\tilde{P}_h\in\mathbf{\Phi}(P_h)} \int_{\mathcal{S}}\tilde{P}_h(\mathrm{d}s'|s,a)\inf_{\tilde{P}_i\in\mathbf{\Phi}(P_i), h+1\leq i\leq H}V_{h+1}^{\pi}(s';\{\tilde{P}_i\}_{i=h+1}^H)\notag\\
        &\leq R_h(s,a) + \inf_{\tilde{P}_i\in\mathbf{\Phi}(P_i), h\leq i\leq H}\int_{\mathcal{S}}\tilde{P}_h(\mathrm{d}s'|s,a)V_{h+1}^{\pi}(s';\{\tilde{P}_i\}_{i=h+1}^H),\label{eq: proof robust bellman sarmdp 4}
    \end{align}
    where the first inequality follows from inequality \eqref{eq: proof robust bellman sarmdp 1} and the definition of $V_{h+1}^{\pi}$ in \eqref{eq: V}, the first equality follows from condition \eqref{eq: proof robust bellman sarmdp 0}, and the second equality follows from the definition of $V_{h+1, P, \mathbf{\Phi}}^{\pi}$ in \eqref{eq: robust V}.
    Note that the right hand side of \eqref{eq: proof robust bellman sarmdp 4} equals to $Q_{h, P, \mathbf{\Phi}}^{\pi}(s,a)$.
    Therefore, all the inequalities are actually equalities.
    On the one hand, from \eqref{eq: proof robust bellman sarmdp 3}, we can know that,
    \begin{align*}
        Q_{h, P, \mathbf{\Phi}}^{\pi}(s,a) = R_h(s,a) + \inf_{\tilde{P}_h\in\mathbf{\Phi}(P_h)} \int_{\mathcal{S}}\tilde{P}_h(\mathrm{d}s'|s,a)V_{h+1, P, \mathbf{\Phi}}^{\pi}(s').
    \end{align*}
    This proves the $Q_{h, P, \mathbf{\Phi}}^{\pi}$ part of the conclusion 1. for step $h$.
    On the other hand, by combining \eqref{eq: proof robust bellman sarmdp 2} and \eqref{eq: proof robust bellman sarmdp 1}, one can further obtain that,
    \begin{align}\label{eq: proof robust bellman sarmdp 5}
        Q_{h, P, \mathbf{\Phi}}^{\pi}(s,a) = \mathbb{E}_{\{P_i^{\pi,\dagger}\}_{i=h}^H,\pi}\left[\sum_{i=h}^HR_i(s_i,a_i) \middle| s_{h}=s,a_h=a\right] = Q_h^{\pi}(s, a;\{P^{\pi,\dagger}_i\}_{i=h}^H).
    \end{align}
    This proves the existence of $\{P^{\pi,\dagger}_i\}_{i=h}^H$ in the
    conclusion 2. for step $h$ and $Q_{h, P, \mathbf{\Phi}}^{\pi}$.
    The remaining of the proof is to prove the $V_{h, P, \mathbf{\Phi}}^{\pi}$ part of the conclusion 1. and 2. for step $h$ using $\{P^{\pi,\dagger}_i\}_{i=h}^H$ found in the previous proof.
    Specifically, by the definition of $V_{h, P, \mathbf{\Phi}}^{\pi}$ in \eqref{eq: robust V}, we have that,
    \begin{align}
        V_{h, P, \mathbf{\Phi}}^{\pi}(s) &= \inf_{\tilde{P}_i\in\mathbf{\Phi}(P_i), h\leq i\leq H}\mathbb{E}_{\{\tilde{P}_i\}_{i=h}^H,\pi}\left[\sum_{i=h}^HR_i(s_i,a_i) \middle| s_h=s\right]\notag\\
        &=\inf_{\tilde{P}_i\in\mathbf{\Phi}(P_i), h\leq i\leq H}\sum_{a\in\mathcal{A}}\pi_h(a|s)\mathbb{E}_{\{\tilde{P}_i\}_{i=h}^H,\pi}\left[\sum_{i=h}^HR_i(s_i,a_i) \middle| s_h=s,a_h=a\right]\notag\\
        &\leq \sum_{a\in\mathcal{A}}\pi_h(a|s)\mathbb{E}_{\{P^{\pi,\dagger}_i\}_{i=h}^H,\pi}\left[\sum_{i=h}^HR_i(s_i,a_i) \middle| s_h=s,a_h=a\right].\label{eq: proof robust bellman sarmdp 6}
    \end{align}
    Now applying \eqref{eq: proof robust bellman sarmdp 5} to \eqref{eq: proof robust bellman sarmdp 6}, we can further obtain that
    \begin{align}
        V_{h, P, \mathbf{\Phi}}^{\pi}(s) & \leq \sum_{a\in\mathcal{A}}\pi_h(a|s)Q_{h, P, \mathbf{\Phi}}^{\pi}(s,a) \label{eq: proof robust bellman sarmdp 7}\\
        & = \sum_{a\in\mathcal{A}}\pi_h(a|s)\inf_{\tilde{P}_i\in\mathbf{\Phi}(P_i), h\leq i\leq H} \mathbb{E}_{\{\tilde{P}_i\}_{i=h}^H,\pi}\left[\sum_{i=h}^HR_i(s_i,a_i) \middle| s_h=s,a_h=a\right]\notag\\ 
        & \leq \inf_{\tilde{P}_i\in\mathbf{\Phi}(P_i), h\leq i\leq H} \sum_{a\in\mathcal{A}}\pi_h(a|s)\mathbb{E}_{\{\tilde{P}_i\}_{i=h}^H,\pi}\left[\sum_{i=h}^HR_i(s_i,a_i) \middle| s_h=s,a_h=a\right],\label{eq: proof robust bellman sarmdp 8}
    \end{align}
    where the equality follows from the definition of $Q_{h, P, \mathbf{\Phi}}^{\pi}$ in \eqref{eq: robust Q}. 
    Now note that the right hand side of \eqref{eq: proof robust bellman sarmdp 8} equals to $V^{\pi}_{h, P, \mathbf{\Phi}}$.
    Therefore, all the inequalities are actually equalities.
    On the one hand, by \eqref{eq: proof robust bellman sarmdp 7}, we know that,
    \begin{align}\label{eq: proof robust bellman sarmdp 9}
        V_{h, P, \mathbf{\Phi}}^{\pi}(s) = \sum_{a\in\mathcal{A}}\pi_h(a|s)Q_{h, P, \mathbf{\Phi}}^{\pi}(s,a).
    \end{align}
    This proves the $V_{h, P, \mathbf{\Phi}}^{\pi}$ part of the conclusion 1. for step $h$.
    On the other hand, by combining \eqref{eq: proof robust bellman sarmdp 9} with \eqref{eq: proof robust bellman sarmdp 5}, we can further deduce that,
    \begin{align*}
        V_{h, P, \mathbf{\Phi}}^{\pi}(s) = \mathbb{E}_{\{P_i^{\pi,\dagger}\}_{i=h}^H,\pi}\left[\sum_{i=h}^HR_i(s_i,a_i) \middle| s_{h}=s\right].
    \end{align*}
    This proves the $V_{h, P, \mathbf{\Phi}}^{\pi}$ part of the conclusion 2. for step $h$.
    Finally, by using an induction argument, we can finish the proof of the conclusion 1. and 2.
    
    Now according to the conclusion 1., we have that 
    \begin{align}\label{eq: proof robust bellman sarmdp 10}
        V_{h,P,\boldsymbol{\Phi}}^{\pi}(s) = \mathbb{E}_{a\sim \pi_h(\cdot|s)}[R_h(s,a)] + \mathbb{E}_{a\sim \pi_h(\cdot|s)}\left[\inf_{\tilde{P}_h\in\boldsymbol{\Phi}(P_h)}\mathbb{E}_{s'\sim \tilde{P}_h(\cdot|s,a)}[V_{h+1,P,\boldsymbol{\Phi}}^{\pi}(s')\right].
    \end{align}
    By the conclusion 2. and the definition of $P_h^{\pi,\dagger}$ in \eqref{eq: proof robust bellman sarmdp 2}, we can obtain from \eqref{eq: proof robust bellman sarmdp 10} that
    \begin{align*}
         V_{h,P,\boldsymbol{\Phi}}^{\pi}(s) & = \mathbb{E}_{a\sim \pi_h(\cdot|s)}[R_h(s,a)] + \mathbb{E}_{a\sim \pi_h(\cdot|s),s'\sim P_h^{\pi,\dagger}(\cdot|s,a)}[V_{h+1,P,\boldsymbol{\Phi}}^{\pi}(s')]\\
         & = \mathbb{E}_{a\sim \pi_h(\cdot|s)}[R_h(s,a)] + \inf_{\tilde{P}_h\in\boldsymbol{\Phi}(P_h)} \mathbb{E}_{a\sim \pi_h(\cdot|s),s'\sim \tilde{P}_h(\cdot|s,a)}[V_{h+1,P,\boldsymbol{\Phi}}^{\pi}(s')].
    \end{align*}
    This finishes the proof of Proposition \ref{prop: robust Bellman} under Assumption \ref{ass: sarmdp}.
\end{proof}

\subsection{Proof of Theorem \ref{thm: existence of robust nash equilibrium}}\label{subsec: proof prop rne bellman equation}

\begin{proof}[Proof of Theorem \ref{thm: existence of robust nash equilibrium}]
    To show that $\boldsymbol{\pi}$ is a RNE policy, we prove the following stronger result:
    \begin{align}\label{eq: proof rne bellman equation 0}
        V_{h,P,\boldsymbol{\Phi}}^{\boldsymbol{\pi},i}(s_h) = \sup_{\tilde{\pi}^i\in\Delta(\mathcal{A}|\mathcal{S},H)}V_{h,P,\boldsymbol{\Phi}}^{(\tilde{\pi}^i,\boldsymbol{\pi}^{-i}),i}(s_h),\quad \forall h\in[H],s_h\in\mathcal{S}, i\in[N].
    \end{align}
    We prove this result from step $h = H$ to $1$ by induction.
    For step $h=H$, according to \eqref{eq: rne bellman equation}, 
    \begin{align*}
        \boldsymbol{\pi}_H(\cdot|s) = \mathbf{NE}\Big(\big\{R_H^{i}(s,\cdot)\big\}_{i=1}^N\Big),
    \end{align*}
    This directly implies that for any $i\in[N]$ and $s_H\in\mathcal{S}$, 
    \begin{align*}
        V_{H,P,\boldsymbol{\Phi}}^{\boldsymbol{\pi},i}(s_H) = \mathbb{D}_{(\pi_H^i,\boldsymbol{\pi}_H^{-i})}[R_H^{i}(s_H,\cdot)] = \sup_{\tilde{\pi}^i_H\in\Delta(\mathcal{A}|\mathcal{S})} \mathbb{D}_{(\tilde{\pi}_H^i,\boldsymbol{\pi}_H^{-i})}[R_H^{i}(s_H,\cdot)] =  \sup_{\tilde{\pi}^i\in\Delta(\mathcal{A}|\mathcal{S},H)}V_{H,P,\boldsymbol{\Phi}}^{(\tilde{\pi}^i,\boldsymbol{\pi}^{-i}),i}(s_H).
    \end{align*}
    This proves \eqref{eq: proof rne bellman equation 0} for step $H$.
    Now suppose that \eqref{eq: proof rne bellman equation 0} holds for step $h+1,\cdots,H$. 
    Then for step $h$, according to \eqref{eq: rne bellman equation}, 
    \begin{align}
        \boldsymbol{\pi}_h(\cdot|s) = \mathbf{NE}\Big(\big\{Q_{h,P,\boldsymbol{\Phi}}^{\boldsymbol{\pi},i}(s,\cdot)\big\}_{i=1}^N\Big),\notag
    \end{align}
    This means that for any $i\in[N]$ and $s_h\in\mathcal{S}$, it holds that 
    \begin{align}\label{eq: proof rne bellman equation 1}
        V_{h,P,\boldsymbol{\Phi}}^{\boldsymbol{\pi},i}(s_h) = \mathbb{D}_{(\pi_h^i,\boldsymbol{\pi}_h^{-i})}[Q_{h,P,\boldsymbol{\Phi}}^{\boldsymbol{\pi},i}(s_h,\cdot)]= \sup_{\tilde{\pi}^i_h\in\Delta(\mathcal{A}|\mathcal{S})} \mathbb{D}_{(\tilde{\pi}_h^i,\boldsymbol{\pi}_h^{-i})}[Q_{h,P,\boldsymbol{\Phi}}^{\boldsymbol{\pi},i}(s_h,\cdot)].
    \end{align}
    Now applying the multi-agent Bellman equation (Proposition \ref{prop: robust Bellman}) to the right hand side of \eqref{eq: proof rne bellman equation 1}, we have the following sequence of inequalities, 
    \begin{align}
        V_{h,P,\boldsymbol{\Phi}}^{\boldsymbol{\pi},i}(s_h) &= \sup_{\tilde{\pi}^i_h\in\Delta(\mathcal{A}|\mathcal{S})} \mathbb{D}_{(\tilde{\pi}_h^i,\boldsymbol{\pi}_h^{-i})}\left[R_h^i(s_h,\boldsymbol{a}_h) + \inf_{\tilde{P}_h\in\mathbf{\Phi}(P_h)}\mathbb{E}_{s'\sim \tilde{P}_h(\cdot|s_h,\boldsymbol{a}_h)}[V_{h+1, P, \mathbf{\Phi}}^{\boldsymbol{\pi},i}(s')]\right]\notag \\
        &\leq \sup_{\tilde{\pi}^i\in\Delta(\mathcal{A}|\mathcal{S},H)} \mathbb{D}_{(\tilde{\pi}_h^i,\boldsymbol{\pi}_h^{-i})}\left[R_h^i(s_h,\boldsymbol{a}_h) + \inf_{\tilde{P}_h\in\mathbf{\Phi}(P_h)}\mathbb{E}_{s'\sim \tilde{P}_h(\cdot|s_h,\boldsymbol{a}_h)}[V_{h+1, P, \mathbf{\Phi}}^{(\tilde{\pi}^i,\boldsymbol{\pi}^{-i}),i}(s')]\right] \label{eq: proof rne bellman equation 2}\\ 
        &= \sup_{\tilde{\pi}^i_h\in\Delta(\mathcal{A}|\mathcal{S})} \mathbb{D}_{(\tilde{\pi}_h^i,\boldsymbol{\pi}_h^{-i})}\left[R_h^i(s_h,\boldsymbol{a}_h) + \sup_{\tilde{\pi}^i\in\Delta(\mathcal{A}|\mathcal{S},H)}\inf_{\tilde{P}_h\in\mathbf{\Phi}(P_h)}\mathbb{E}_{s'\sim \tilde{P}_h(\cdot|s_h,\boldsymbol{a}_h)}[V_{h+1, P, \mathbf{\Phi}}^{(\tilde{\pi}^i,\boldsymbol{\pi}^{-i}),i}(s')]\right]\notag\\
        &\leq \sup_{\tilde{\pi}^i_h\in\Delta(\mathcal{A}|\mathcal{S})} \mathbb{D}_{(\tilde{\pi}_h^i,\boldsymbol{\pi}_h^{-i})}\left[R_h^i(s_h,\boldsymbol{a}_h) + \inf_{\tilde{P}_h\in\mathbf{\Phi}(P_h)}\mathbb{E}_{s'\sim \tilde{P}_h(\cdot|s_h,\boldsymbol{a}_h)}\left[\sup_{\tilde{\pi}^i\in\Delta(\mathcal{A}|\mathcal{S},H)}V_{h+1, P, \mathbf{\Phi}}^{(\tilde{\pi}^i,\boldsymbol{\pi}^{-i}),i}(s')\right]\right]\notag\\
        &=\sup_{\tilde{\pi}^i_h\in\Delta(\mathcal{A}|\mathcal{S})} \mathbb{D}_{(\tilde{\pi}_h^i,\boldsymbol{\pi}_h^{-i})}\left[R_h^i(s_h,\boldsymbol{a}_h) + \inf_{\tilde{P}_h\in\mathbf{\Phi}(P_h)}\mathbb{E}_{s'\sim \tilde{P}_h(\cdot|s_h,\boldsymbol{a}_h)}[V_{h+1, P, \mathbf{\Phi}}^{\boldsymbol{\pi},i}(s')]\right]\notag\\
        &=\sup_{\tilde{\pi}^i_h\in\Delta(\mathcal{A}|\mathcal{S})} \mathbb{D}_{(\tilde{\pi}_h^i,\boldsymbol{\pi}_h^{-i})}[Q_{h,P,\boldsymbol{\Phi}}^{\boldsymbol{\pi},i}(s_h,\cdot)]\notag \\
        &=V_{h,P,\boldsymbol{\Phi}}^{\boldsymbol{\pi},i}(s_h),
    \end{align}
    where the second inequality is due the minimax inequality, the third equality uses the correctness of \eqref{eq: proof rne bellman equation 0} at step $h+1$, and the last equality is due to \eqref{eq: proof rne bellman equation 1}.
    Therefore, we conclude that all the above inequalities are actually equalities.
    Especially, we have that 
    \begin{align*}
        V_{h,P,\boldsymbol{\Phi}}^{\boldsymbol{\pi},i}(s_h) = \eqref{eq: proof rne bellman equation 2} = \sup_{\tilde{\pi}^i\in\Delta(\mathcal{A}|\mathcal{S},H)}V_{h,P,\boldsymbol{\Phi}}^{(\tilde{\pi}^i,\boldsymbol{\pi}^{-i}),i}(s_h),
    \end{align*}
    which proves \eqref{eq: proof rne bellman equation 0} for step $h$.
    An induction finishes the proof of Theorem \ref{thm: existence of robust nash equilibrium}.
\end{proof}

\section{Proof of Main Results for RMDP (Theorem \ref{thm: subopt general})}\label{sec: sketch}

In this section, we prove Theorem \ref{thm: subopt general}.
Let $\mathcal{E}^\dagger$ denote the event that both Condition \ref{cond: accuracy} and \ref{cond: model estimation} hold, which happens with probability at least $1-2\delta$.
In the following, we always assume that $\mathcal{E}^{\dagger}$ holds.

\begin{proof}[Proof of Theorem \ref{thm: subopt general}]
By the definition of $\text{SubOpt}(\hat{\pi}; s)$ in \eqref{eq: suboptimality}, we have that
\begin{align}
    \text{SubOpt}(\hat{\pi}; s_1) &= V_{1,P^{\star},\mathbf{\Phi}}^{\pi^{\star}}(s_1) - V_{1,P^{\star},\mathbf{\Phi}}^{\hat{\pi}}(s_1) \notag\\
    &=  V_{1,P^{\star},\mathbf{\Phi}}^{\pi^{\star}}(s_1) 
        - \inf_{P\in\hat{\mathcal{P}}}V_{1,P,\mathbf{\Phi}}^{\pi^{\star}}(s_1)
        + \inf_{P\in\hat{\mathcal{P}}}V_{1,P,\mathbf{\Phi}}^{\pi^{\star}}(s_1)
        - V_{1,P^{\star},\mathbf{\Phi}}^{\hat{\pi}}(s_1) \notag\\
    &\leq V_{1,P^{\star},\mathbf{\Phi}}^{\pi^{\star}}(s_1) 
        - \inf_{P\in\hat{\mathcal{P}}}V_{1,P,\mathbf{\Phi}}^{\pi^{\star}}(s_1)
        + \inf_{P\in\hat{\mathcal{P}}}V_{1,P,\mathbf{\Phi}}^{\hat{\pi}}(s_1)
        - V_{1,P^{\star},\mathbf{\Phi}}^{\hat{\pi}}(s_1)\label{eq: sketch optimality}\\
    &\leq V_{1,P^{\star},\mathbf{\Phi}}^{\pi^{\star}}(s_1) 
        - \inf_{P\in\hat{\mathcal{P}}}V_{1,P,\mathbf{\Phi}}^{\pi^{\star}}(s_1)\label{eq: sketch P star in hat P}\\
    & = \sup_{P\in\hat{\mathcal{P}}}\Big\{V_{1,P^{\star},\mathbf{\Phi}}^{\pi^{\star}}(s_1)  
        - V_{1,P,\mathbf{\Phi}}^{\pi^{\star}}(s_1)\Big\}\label{eq: sketch sup}.
\end{align}
Here \eqref{eq: sketch optimality} follows from our choice of $\hat{\pi}$ in \eqref{eq: hat pi}, and \eqref{eq: sketch P star in hat P} follows from Condition \ref{cond: accuracy}.
In the sequel, we present the upper bound on the right hand side of \eqref{eq: sketch sup}.
For notational simplicity, for any $P$ in the confidence region $\hat{\mathcal{P}}$ and any step $h\in[H]$, we denote that
\begin{align}
    \Delta_{h,P,\mathbf{\Phi}}(s_h, a_h) = Q_{h,P^{\star},\mathbf{\Phi}}^{\pi^{\star}}(s_h, a_h)  
    - Q_{h,P,\mathbf{\Phi}}^{\pi^{\star}}(s_h, a_h).
\end{align}
Using the robust Bellman equation in Proposition \ref{prop: robust Bellman}, we can derive that
\begin{align*}
    &\Delta_{h,P,\mathbf{\Phi}}(s_h, a_h) \\
    &\qquad = \inf_{\tilde{P}_h\in\mathbf{\Phi}(P_h^{\star})}\mathbb{E}_{s'\sim \tilde{P}_h(\cdot|s_h,a_h)}[V_{h+1,P^{\star},\mathbf{\Phi}}^{\pi^{\star}}(s')] - \inf_{\tilde{P}_h\in\mathbf{\Phi}(P_h)}\mathbb{E}_{s'\sim \tilde{P}_h(\cdot|s_h,a_h)}[V_{h+1,P,\mathbf{\Phi}}^{\pi^{\star}}(s')\\
    &\qquad =  \underbrace{\inf_{\tilde{P}_h\in\mathbf{\Phi}(P_h^{\star})}\mathbb{E}_{s'\sim \tilde{P}_h(\cdot|s_h,a_h)}[V_{h+1,P^{\star},\mathbf{\Phi}}^{\pi^{\star}}(s')] - \inf_{\tilde{P}_h\in\mathbf{\Phi}(P_h^{\star})}\mathbb{E}_{s'\sim \tilde{P}_h(\cdot|s_h,a_h)}[V_{h+1,P,\mathbf{\Phi}}^{\pi^{\star}}(s')]}_{\text{Term (i)}}\\
    &\qquad \qquad +\underbrace{\inf_{\tilde{P}_h\in\mathbf{\Phi}(P_h^{\star})}\mathbb{E}_{s'\sim \tilde{P}_h(\cdot|s_h,a_h)}[V_{h+1,P,\mathbf{\Phi}}^{\pi^{\star}}(s')] - \inf_{\tilde{P}_h\in\mathbf{\Phi}(P_h)}\mathbb{E}_{s'\sim \tilde{P}_h(\cdot|s_h,a_h)}[V_{h+1,P,\mathbf{\Phi}}^{\pi^{\star}}(s')]}_{\text{Term (ii)}}.
\end{align*}
\paragraph*{Term (i).} For the term (i), considering denote that
\begin{align}\label{eq: P sigma}
    P_{h}^{\pi^{\star},\dagger} = \arginf_{\tilde{P}_h\in\mathbf{\Phi}(P_h^{\star})}\mathbb{E}_{s'\sim \tilde{P}_h(\cdot|s,a)}[V_{h+1, P, \mathbf{\Phi}}^{\pi^{\star}}(s')],\quad \forall (s,a)\in\mathcal{S}\times\mathcal{A}.
\end{align}
This notation is consistent with the notation of $P_h^{\pi,\dagger}$ in \eqref{eq: proof robust bellman sarmdp 2} in the proof of Proposition \ref{prop: robust Bellman} (robust Bellman equation). 
It is because Assumption \ref{ass: sarmdp} ($\mathcal{S}\times\mathcal{A}$-rectangular robust set) that we can choose a \emph{single} transition kernel $P_{h}^{\pi^{\star},\dagger}$ that satisfies \eqref{eq: P sigma} for each $(s,a)$-pair.
Using the definition of $P_{h}^{\pi^{\star},\dagger}$, we observe that the following two relationships hold for any state $(s_h, a_h)\in\mathcal{S}$, 
\begin{align*}
    &\inf_{\tilde{P}_h\in\mathbf{\Phi}(P_h^{\star})}\mathbb{E}_{s'\sim \tilde{P}_h(\cdot|s_h,a_h)}[V_{h+1,P^{\star},\mathbf{\Phi}}^{\pi^{\star}}(s')] \leq \mathbb{E}_{s'\sim P_{h}^{\pi^{\star},\dagger}(\cdot|s_h,a_h)}[V_{h+1, P^{\star},\mathbf{\Phi}}^{\pi^{\star}}(s')],\\
    &\inf_{\tilde{P}_h\in\mathbf{\Phi}(P_h^{\star})}\mathbb{E}_{s'\sim \tilde{P}_h(\cdot|s_h,a_h)}[V_{h+1,P,\mathbf{\Phi}}^{\pi^{\star}}(s')] = \mathbb{E}_{s'\sim P_{h}^{\pi^{\star},\dagger}(\cdot|s_h,a_h)}[V_{h+1, P,\mathbf{\Phi}}^{\pi^{\star}}(s')].
\end{align*}
Using these two observations, we can upper bound the term (i) as 
\begin{align}
    \text{Term (i)} &\leq  \mathbb{E}_{s'\sim P_{h}^{\pi^{\star},\dagger}(\cdot|s_h,a_h)}[V_{h+1, P^{\star},\mathbf{\Phi}}^{\pi^{\star}}(s')] - \mathbb{E}_{s'\sim P_{h}^{\pi^{\star},\dagger}(\cdot|s_h,a_h)}[V_{h+1, P,\mathbf{\Phi}}^{\pi^{\star}}(s')]\notag\\
    & = \mathbb{E}_{s'\sim P_{h}^{\pi^{\star},\dagger}(\cdot|s_h,a_h), a'\sim \pi^{\star}_{h+1}(\cdot|s')}[\Delta_{h+1,P,\mathbf{\Phi}}(s',a')],\label{eq: sketch bound i}
\end{align}
where in the equality we use the robust Bellman equation (Proposition \ref{prop: robust Bellman}).

\paragraph*{Term (ii).} For the term (ii), currently we simply denote this term by $\Delta_{h,P,\mathbf{\Phi}}^{\mathrm{(ii)}}(s_h, a_h)$.
Combining this with \eqref{eq: sketch bound i}, we can derive that,
\begin{align}
    \Delta_{h,P,\mathbf{\Phi}}(s_h, a_h)  &= \text{Term (i)} + \text{Term (ii)}\notag\\
    &\leq \mathbb{E}_{s'\sim P_{h}^{\pi^{\star},\dagger}(\cdot|s_h,a_h), a'\sim \pi^{\star}_{h+1}(\cdot|s')}[\Delta_{h+1,P,\mathbf{\Phi}}(s', a')] + \Delta_{h,P,\mathbf{\Phi}}^{\mathrm{(ii)}}(s_h, a_h).\label{eq: sketch bound ii}
\end{align}
By recursively applying \eqref{eq: sketch bound ii} and then plugging in the definition of $\Delta_{h,P,\mathbf{\Phi}}^{\mathrm{(ii)}}$, we can obtain that 
\begin{align}
    \mathbb{E}_{a_1\sim \pi^{\star}_1(\cdot|s_1)}[\Delta_{1,P,\mathbf{\Phi}}(s_1, a_1)] & \leq \sum_{h=1}^H\mathbb{E}_{(s_h,a_h)\sim d_{P^{\pi^{\star},\dagger},h}^{\pi^{\star}}}[\Delta_{h,P,\mathbf{\Phi}}^{\mathrm{(ii)}}(s_h,a_h)]\notag\\
    & = \sum_{h=1}^H\mathbb{E}_{(s_h,a_h)\sim d_{P^{\pi^{\star},\dagger},h}^{\pi^{\star}}}\bigg[\inf_{\tilde{P}_h\in\mathbf{\Phi}(P_h^{\star})}\mathbb{E}_{s'\sim \tilde{P}_h(\cdot|s_h,a_h)}[V_{h+1,P,\mathbf{\Phi}}^{\pi^{\star}}(s')] \notag\\
    &\qquad - \inf_{\tilde{P}_h\in\mathbf{\Phi}(P_h)}\mathbb{E}_{s'\sim \tilde{P}_h(\cdot|s_h,a_h)}[V_{h+1,P,\mathbf{\Phi}}^{\pi^{\star}}(s')]\bigg],\label{eq: sketch delta 1 bound}
\end{align}
where $d_{P^{\pi^{\star},\dagger},h}^{\pi^{\star}}(\cdot,\cdot)$ is the state-action visitation distribution induced by the transition kernels $P^{\pi^{\star},\dagger} = \{P^{\pi^{\star},\dagger}_h\}_{h=1}^H$ and the optimal policy $\pi^{\star}$. 
Now we bound the right hand side of \eqref{eq: sketch delta 1 bound} using Condition~\ref{cond: model estimation}.
By Cauchy-Schwartz inequality, we have that for each $h\in[H]$, 
\begin{align}
    &\mathbb{E}_{(s_h, a_h)\sim d_{P^{\pi^{\star},\dagger},h}^{\pi^{\star}}}\bigg[\inf_{\tilde{P}_h\in\mathbf{\Phi}(P_h^{\star})}\mathbb{E}_{s'\sim \tilde{P}_h(\cdot|s_h,a_h)}[V_{h+1,P,\mathbf{\Phi}}^{\pi^{\star}}(s')] - \inf_{\tilde{P}_h\in\mathbf{\Phi}(P_h)}\mathbb{E}_{s'\sim \tilde{P}_h(\cdot|s_h,a_h)}[V_{h+1,P,\mathbf{\Phi}}^{\pi^{\star}}(s')]\bigg]\notag\\
    &\quad =\mathbb{E}_{(s_h, a_h)\sim d_{P^{\star},h}^{\pi^{\mathrm{b}}}}\Bigg[\frac{d_{P^{\pi^{\star},\dagger},h}^{\pi^{\star}}(s_h,a_h)}{d_{P^{\star},h}^{\pi^{\mathrm{b}}}(s_h,a_h)}\cdot\bigg(\inf_{\tilde{P}_h\in\mathbf{\Phi}(P_h^{\star})}\mathbb{E}_{s'\sim \tilde{P}_h(\cdot|s_h,a_h)}[V_{h+1,P,\mathbf{\Phi}}^{\pi^{\star}}(s')] \notag\\
    &\quad\quad\quad- \inf_{\tilde{P}_h\in\mathbf{\Phi}(P_h)}\mathbb{E}_{s'\sim \tilde{P}_h(\cdot|s_h,a_h)}[V_{h+1,P,\mathbf{\Phi}}^{\pi^{\star}}(s')]\bigg)\Bigg]\notag\\
    &\quad \leq \sqrt{\mathbb{E}_{(s_h, a_h)\sim d_{P^{\star},h}^{\pi^{\mathrm{b}}}}\left[\left(\frac{d_{P^{\pi^{\star},\dagger},h}^{\pi^{\star}}(s_h,a_h)}{d_{P^{\star},h}^{\pi^{\mathrm{b}}}(s_h,a_h)}\right)^2\right]}\cdot\sqrt{\mathrm{Err}_{h}^{\mathbf{\Phi}}(n,\delta)},\label{eq: sketch delta 1 bound ii}
\end{align}
where the last inequality follows from Condition \ref{cond: model estimation}. 
Furthermore, by Assumption \ref{ass: partial coverage}, we know that 
\begin{align*}
    \mathbb{E}_{(s_h, a_h)\sim d_{P^{\star},h}^{\pi^{\mathrm{b}}}}\left[\left(\frac{d_{P^{\pi^{\star},\dagger},h}^{\pi^{\star}}(s_h,a_h)}{d_{P^{\star},h}^{\pi^{\mathrm{b}}}(s_h,a_h)}\right)^2\right] &\leq \sup_{P = \{P_h\}_{h=1}^H, P_h\in\mathbf{\Phi}(P_h^{\star})}\mathbb{E}_{(s_h,a_h)\sim d^{\pi^{\mathrm{b}}}_{P^{\star}, h}}\left[\left(\frac{d^{\pi^{\star}}_{P, h}(s_h,a_h)}{d^{\pi^{\mathrm{b}}}_{P^{\star}, h}(s_h,a_h)}\right)^2\right]\\
    &\leq C^{\star}_{P^{\star},\mathbf{\Phi}},
\end{align*}
where $C^{\star}_{P^{\star},\mathbf{\Phi}}$ is defined in Assumption \ref{ass: partial coverage}. 
Applying this to \eqref{eq: sketch delta 1 bound} and \eqref{eq: sketch delta 1 bound ii}, we can derive that
\begin{align*}
    \sup_{P\in\hat{\mathcal{P}}}\Big\{V_{1,P^{\star},\mathbf{\Phi}}^{\pi^{\star}}(s_1)  
        - V_{1,P,\mathbf{\Phi}}^{\pi^{\star}}(s_1)\Big\} = \sup_{P\in\hat{\mathcal{P}}} \{\mathbb{E}_{a_1\sim\pi^{\star}(\cdot|s_1)}[\Delta_{1,P,\mathbf{\Phi}}(s_1, a_1)]\} \leq \sqrt{C^{\star}_{P^{\star},\mathbf{\Phi}}}\cdot \sum_{h=1}^H\sqrt{\mathrm{Err}_{h}^{\mathbf{\Phi}}(n,\delta)}.
\end{align*}
Finally, by inequality \eqref{eq: sketch sup}, we finish the proof of Theorem \ref{thm: subopt general}. 
\end{proof}

\section{Proof of Main Results for RMG (Theorem~\ref{thm:offline:FA})} \label{appendix:pf:offline:FA}

\begin{proof}[Proof of Theorem~\ref{thm:offline:FA}]
      Under Condition~\ref{cond: accuracy rmg}, we have that for any policy $\boldsymbol{\pi}$ and player $i$,
      \# 
      J_{\texttt{Pess}^2}^i(\boldsymbol{\pi}) &=  \inf_{ P_h \in \hat{\cP}_h,1\leq h\leq H} \,\,\inf_{\tilde{P}_h\in\boldsymbol{\Phi}(P_h),1\leq h\leq H}\,\,V_{1}^{\boldsymbol{\pi}, i}(s_1;\{\tilde{P}_h\}_{h=1}^H)\notag \\
      &\leq \inf_{\tilde{P}_h\in\boldsymbol{\Phi}(P_h^{\star}),1\leq h\leq H}\,\,V_{1}^{\boldsymbol{\pi}, i}(s_1;\{\tilde{P}_h\}_{h=1}^H)  \notag \\
      & = V_{1,P^\star,\boldsymbol{\Phi}}^{\boldsymbol{\pi},i}(s_1), \label{eq:3101}
      \#  
      and that 
      \#
        J_{\texttt{Opt-Pess}}^i(\boldsymbol{\pi}) &=  \sup_{ P_h \in \hat{\cP}_h,1\leq h\leq H}\,\,\sup_{\tilde{\pi}^i\in\Delta(\mathcal{A}^i|\mathcal{S},H)} \,\,\inf_{\tilde{P}_h\in\boldsymbol{\Phi}(P_h),1\leq h\leq H}\,\,V_{1}^{(\tilde{\pi}^{i}, \boldsymbol{\pi}^{-i}), i}(s_1;\{\tilde{P}_h\}_{h=1}^H)\notag \\
        &\geq \sup_{\tilde{\pi}^i\in\Delta(\mathcal{A}^i|\mathcal{S},H)} \,\,\inf_{\tilde{P}_h\in\boldsymbol{\Phi}(P_h^{\star}),1\leq h\leq H}\,\,V_{1}^{(\tilde{\pi}^{i}, \boldsymbol{\pi}^{-i}), i}(s_1;\{\tilde{P}_h\}_{h=1}^H) \notag\\
        & =V_{1,P^\star,\boldsymbol{\Phi}}^{(\dagger,\boldsymbol{\pi}^{-i}),i}(s_1),\label{eq:3101+}
      \#
      which further implies that
      \begin{align} 
      \mathrm{RNEGap}_{\boldsymbol{\Phi}}(\hat{\boldsymbol{\pi}};s_1) & = \max_{i \in [N]} \left\{ V_{1,P^\star,\boldsymbol{\Phi}}^{(\dagger,\hat{\boldsymbol{\pi}}^{-i}),i}(s_1) -  V_{1,P^\star,\boldsymbol{\Phi}}^{\hat{\boldsymbol{\pi}},i}(s_1) \right\} \notag \\ 
      & \le \max_{i \in [N]} \left\{ J_{\texttt{Opt-Pess}}^i(\hat{\boldsymbol{\pi}}) - J_{\texttt{Pess}^2}^i(\hat{\boldsymbol{\pi}}) \right\} \notag\\ 
      & \le \max_{i \in [N]} \left\{ J_{\texttt{Opt-Pess}}^i({\boldsymbol{\pi}}_{\mathrm{RNE}}) - J_{\texttt{Pess}^2}^i({\boldsymbol{\pi}}_{\mathrm{RNE}}) \right\} \notag \\ 
      & = \max_{i \in [N]} \Big\{ \underbrace{J_{\texttt{Opt-Pess}}^i({\boldsymbol{\pi}}_{\mathrm{RNE}}) - V_{1,P^\star,\boldsymbol{\Phi}}^{\boldsymbol{\pi}_{\mathrm{RNE}},i}(s_1)}_{\displaystyle\mathrm{(I)}} + \underbrace{V_{1,P^\star,\boldsymbol{\Phi}}^{\boldsymbol{\pi}_{\mathrm{RNE}},i}(s_1) - J_{\texttt{Pess}^2}^i({\boldsymbol{\pi}}_{\mathrm{RNE}})}_{\displaystyle\mathrm{(II)}} \Big\},\label{eq:3102}
       \end{align}
      where the the first inequality uses \eqref{eq:3101} and \eqref{eq:3101+}, and the second inequality follows from the definition of $\hat{\boldsymbol{\pi}}$ that $\hat{\boldsymbol{\pi}} = \argmin_{\boldsymbol{\pi}} \max_{i \in [N]} \{ J_{\texttt{Opt-Pess}}^i({\boldsymbol{\pi}}) - J_{\texttt{Pess}^2}^i({\boldsymbol{\pi}})\}$. 
      Here $\boldsymbol{\pi}_{\mathrm{RNE}}$ is the RNE policy in Assumption \ref{assumption:unilateral}.

      \paragraph{Term (I).}
      For Term (I) in \eqref{eq:3102}, we have
      \begin{equation}
          \begin{aligned} \label{eq:3103}
      {\displaystyle\mathrm{(I)}} & = J_{\texttt{Opt-Pess}}^i({\boldsymbol{\pi}}_{\mathrm{RNE}}) - V_{1,P^\star,\boldsymbol{\Phi}}^{\boldsymbol{\pi}_{\mathrm{RNE}},i}(s_1) \\ 
      &= \sup_{P \in \hat{\cP}} \sup_{\pi^i} V_{1,P,\boldsymbol{\Phi}}^{(\pi^i, (\boldsymbol{\pi}_{\mathrm{RNE}})^{-i}),i}(s_1) - V_{1,P^\star,\boldsymbol{\Phi}}^{\boldsymbol{\pi}_{\mathrm{RNE}},i}(s_1) \\ 
      & \le \sup_{P \in \hat{\cP}} \sup_{\pi^i} \left\{ V_{1,P,\boldsymbol{\Phi}}^{(\pi^i, (\boldsymbol{\pi}_{\mathrm{RNE}})^{-i}),i}(s_1) -  V_{1,P^\star,\boldsymbol{\Phi}}^{(\pi^i, (\boldsymbol{\pi}_{\mathrm{RNE}})^{-i}),i}(s_1) \right\},
          \end{aligned}
      \end{equation}
      where the first equality follows from the definition of $J_{\texttt{Opt-Pess}}^i$ in \eqref{eq:bar:J} and the second inequality uses the fact that $\boldsymbol{\pi}_{\mathrm{RNE}}$ is an RNE. 
      Fix $(i, P, \pi^i)$, we use the notation
      \# \label{eq:3104}
       \Delta_{h, \mathbf{\Phi}}(s_h,\boldsymbol{a}_h) = Q_{h, P,\boldsymbol{\Phi}}^{(\pi^i, (\boldsymbol{\pi}_{\mathrm{RNE}})^{-i}),i}(s_h,\boldsymbol{a}_h) -  Q_{h, P^\star,\boldsymbol{\Phi}}^{(\pi^i, (\boldsymbol{\pi}_{\mathrm{RNE}})^{-i}),i}(s_h,\boldsymbol{a}_h), \quad \tilde{\boldsymbol{\pi}}_{\mathrm{RNE}} = (\pi^i, (\boldsymbol{\pi}_{\mathrm{RNE}})^{-i}) .
      \#  
      By the multi-agent robust Bellman equation in \eqref{eq: robust bellman V rmg} and  \eqref{eq: robust bellman Q rmg}, we have 
      \# \label{eq:3105}
      & \Delta_{h, \boldsymbol{\Phi}}(s_h, \boldsymbol{a}_h) \\ 
      & \quad = \inf_{\tilde{P}_h \in \boldsymbol{\Phi}(P_h) } \EE_{s' \sim \tilde{P}_h(\cdot \mid s_h, \boldsymbol{a}_h) } [V_{h+1, P,\boldsymbol{\Phi}}^{\tilde{\boldsymbol{\pi}}_{\mathrm{RNE}},i}(s')] - \inf_{\tilde{P}_h \in \boldsymbol{\Phi}(P_h^\star) } \EE_{s' \sim \tilde{P}_h(\cdot \mid s_h, \boldsymbol{a}_h) } [V_{h+1, P^\star,\boldsymbol{\Phi}}^{\tilde{\boldsymbol{\pi}}_{\mathrm{RNE}},i}(s')] \notag \\
      & \quad = \underbrace{\inf_{\tilde{P}_h \in \boldsymbol{\Phi}(P_h) } \EE_{s' \sim \tilde{P}_h(\cdot \mid s_h, \boldsymbol{a}_h) } [V_{h+1, P,\boldsymbol{\Phi}}^{\tilde{\boldsymbol{\pi}}_{\mathrm{RNE}},i}(s')] - \inf_{\tilde{P}_h \in \boldsymbol{\Phi}(P_h^\star) } \EE_{s' \sim \tilde{P}_h(\cdot \mid s_h, \boldsymbol{a}_h) } [V_{h+1, P,\boldsymbol{\Phi}}^{\tilde{\boldsymbol{\pi}}_{\mathrm{RNE}},i}(s')]}_{\Delta_{h, \boldsymbol{\Phi}}^{\mathrm{(i)}}(s_h, \boldsymbol{a}_h) }  \notag \\ 
      & \quad \quad  + \underbrace{\inf_{\tilde{P}_h \in \boldsymbol{\Phi}(P_h^\star) } \EE_{s' \sim \tilde{P}_h(\cdot \mid s_h, \boldsymbol{a}_h) } [V_{h+1, P,\boldsymbol{\Phi}}^{\tilde{\boldsymbol{\pi}}_{\mathrm{RNE}},i}(s')] - \inf_{\tilde{P}_h \in \boldsymbol{\Phi}(P_h^\star) } \EE_{s' \sim \tilde{P}_h(\cdot \mid s_h, \boldsymbol{a}_h) } [V_{h+1, P^\star,\boldsymbol{\Phi}}^{\tilde{\boldsymbol{\pi}}_{\mathrm{RNE}},i}(s')]}_{\Delta_{h, \boldsymbol{\Phi}}^{\mathrm{(ii)}}(s_h, \boldsymbol{a}_h) } . \notag 
      \#  
      To facilitate the following analysis, we define
      \# \label{eq:3106}
      P_h^{\tilde{\boldsymbol{\pi}}_{\mathrm{RNE}}, \dagger} = \arginf_{\tilde{P}_h \in \boldsymbol{\Phi}(P_h^\star) } \EE_{s' \sim \tilde{P}_h(\cdot \mid s_h, \boldsymbol{a}_h) } [V_{h+1, P^\star,\boldsymbol{\Phi}}^{\tilde{\boldsymbol{\pi}}_{\mathrm{RNE}},i}(s_h)],
      \#  
      which is well defined due to Assumption \ref{ass: s a rectangular} ($\cS\times\cA$-rectangular). 
      With this notation, we have
      \# \label{eq:3107}
      \Delta_{h, \boldsymbol{\Phi}}^{\mathrm{(ii)}}(s_h, \boldsymbol{a}_h) &\le \EE_{s' \sim {P}_h^{\tilde{\boldsymbol{\pi}}_{\mathrm{RNE}}, \dagger}(\cdot \mid s_h, \boldsymbol{a}_h) } [V_{h+1, P,\boldsymbol{\Phi}}^{\tilde{\boldsymbol{\pi}}_{\mathrm{RNE}},i}(s')] - \EE_{s' \sim {P}_h^{\tilde{\boldsymbol{\pi}}_{\mathrm{RNE}}, \dagger}(\cdot \mid s_h, \boldsymbol{a}_h) }[V_{h+1, P^\star,\boldsymbol{\Phi}}^{\tilde{\boldsymbol{\pi}}_{\mathrm{RNE}},i}(s')]  \notag \\ 
      & = \EE_{s' \sim {P}_h^{\tilde{\boldsymbol{\pi}}_{\mathrm{RNE}}, \dagger}(\cdot \mid s_h, \boldsymbol{a}_h) , \boldsymbol{a}'\sim \tilde{\boldsymbol{\pi}}_{\mathrm{RNE},h+1}(\cdot|s')} [\Delta_{h+1, \boldsymbol{\Phi}}(s', \boldsymbol{a}')] ,
      \# 
      where the last equality uses the definition of $\Delta_{h, \boldsymbol{\Phi}}$ in \eqref{eq:3104}. Plugging \eqref{eq:3107} into \eqref{eq:3105} yields that
      \# \label{eq:3108}
      \Delta_{h, \boldsymbol{\Phi}}(s_h, \boldsymbol{a}_h) \le  \EE_{s' \sim {P}_h^{\tilde{\boldsymbol{\pi}}_{\mathrm{RNE}}, \dagger}(\cdot \mid s_h, \boldsymbol{a}_h) , \boldsymbol{a}'\sim \tilde{\boldsymbol{\pi}}_{\mathrm{RNE},h+1}(\cdot|s')} [\Delta_{h+1, \boldsymbol{\Phi}}(s', \boldsymbol{a}')] + \Delta_{h, \boldsymbol{\Phi}}^{\mathrm{(i)}}(s_h, \boldsymbol{a}_h).
      \# 
      Recursively expanding \eqref{eq:3107} across $h \in [H]$ gives that
      \# 
      &\mathbb{E}_{\boldsymbol{a}_1\sim\tilde{\boldsymbol{\pi}}_{\mathrm{RNE},1}(\cdot|s_1)}[\Delta_{1, \boldsymbol{\Phi}}(s_1,\boldsymbol{a}_1)]\notag\\
      &\qquad \leq \sum_{h = 1}^H \EE_{(s_h,\boldsymbol{a}_h) \sim d_{P^{\tilde{\boldsymbol{\pi}}_{\mathrm{RNE}}, \dagger}, h}^{\tilde{\boldsymbol{\pi}}_{\mathrm{RNE}}} } [\Delta_{h, \boldsymbol{\Phi}}^{\mathrm{(i)}}(s_h, \boldsymbol{a}_h)] \label{eq:3109}\\
      &\qquad = \sum_{h = 1}^H \EE_{(s_h, \boldsymbol{a}_h) \sim d_{P^{\tilde{\boldsymbol{\pi}}_{\mathrm{RNE}}, \dagger}, h}^{\tilde{\boldsymbol{\pi}}_{\mathrm{RNE}}} } \left[ \inf_{\tilde{P}_h \in \boldsymbol{\Phi}(P_h) } \EE_{s' \sim \tilde{P}_h(\cdot \mid s_h, \boldsymbol{a}_h) } [V_{h+1, P,\boldsymbol{\Phi}}^{\tilde{\boldsymbol{\pi}}_{\mathrm{RNE}},i}(s')] - \inf_{\tilde{P}_h \in \boldsymbol{\Phi}(P_h^\star) } \EE_{ s' \sim \tilde{P}_h(\cdot \mid s_h, \boldsymbol{a}_h) } [V_{h+1, P^\star,\boldsymbol{\Phi}}^{\tilde{\boldsymbol{\pi}}_{\mathrm{RNE}},i}(s')]   \right],\notag
      \# 
      where $d_{P^{\tilde{\boldsymbol{\pi}}_{\mathrm{RNE}}, \dagger}, h}^{\tilde{\boldsymbol{\pi}}_{\mathrm{RNE}}}$ denotes the state-action distribution induced by the joint policy $\tilde{\boldsymbol{\pi}}_{\mathrm{RNE}}$ defined in \eqref{eq:3104} and $P^{\tilde{\boldsymbol{\pi}}_{\mathrm{RNE}}, \dagger} = \{P_h^{\tilde{\boldsymbol{\pi}}_{\mathrm{RNE}}, \dagger}\}_{h \in [H]}$ defined in   \eqref{eq:3106}. 
      In the equality we apply the definition of $\Delta_{h,\boldsymbol{\Phi}}(s_h,\boldsymbol{a}_h)$ in \eqref{eq:3105}.
       Furthermore, by Cauchy-Schwarz inequality, we have that
           \begin{align} 
       & \EE_{(s_h, \boldsymbol{a}_h) \sim d_{P^{\tilde{\boldsymbol{\pi}}_{\mathrm{RNE}}, \dagger}, h}^{\tilde{\boldsymbol{\pi}}_{\mathrm{RNE}}} } \left[ \inf_{\tilde{P}_h \in \boldsymbol{\Phi}(P_h) } \EE_{s' \sim \tilde{P}_h(\cdot \mid s_h, \boldsymbol{a}_h) } [V_{h+1, P,\boldsymbol{\Phi}}^{\tilde{\boldsymbol{\pi}}_{\mathrm{RNE}},i}(s')] - \inf_{\tilde{P}_h \in \boldsymbol{\Phi}(P_h^\star) } \EE_{ s' \sim \tilde{P}_h(\cdot \mid s_h, \boldsymbol{a}_h) } [V_{h+1, P^\star,\boldsymbol{\Phi}}^{\tilde{\boldsymbol{\pi}}_{\mathrm{RNE}},i}(s')]   \right] \notag \\ 
       & \quad = \EE_{(s_h, \boldsymbol{a}_h) \sim d_{P^\star, h}^{\boldsymbol{\pi}^{\mathrm{b}}} } \Bigg[ \frac{d_{P^{\tilde{\boldsymbol{\pi}}_{\mathrm{RNE}}, \dagger}, h}^{\tilde{\boldsymbol{\pi}}_{\mathrm{RNE}}}(s_h, \boldsymbol{a}_h)}{d_{P^\star, h}^{\boldsymbol{\pi}^{\mathrm{b}}}(s_h, \boldsymbol{a}_h)} \cdot \bigg( \inf_{\tilde{P}_h \in \boldsymbol{\Phi}(P_h) } \EE_{s' \sim \tilde{P}_h(\cdot \mid s_h, \boldsymbol{a}_h) } [V_{h+1, P,\boldsymbol{\Phi}}^{\tilde{\boldsymbol{\pi}}_{\mathrm{RNE}},i}(s')]\notag \\ 
       & \quad \qquad - \inf_{\tilde{P}_h \in \boldsymbol{\Phi}(P_h^\star) } \EE_{ s' \sim \tilde{P}_h(\cdot \mid s_h, \boldsymbol{a}_h) } [V_{h+1, P^\star,\boldsymbol{\Phi}}^{\tilde{\boldsymbol{\pi}}_{\mathrm{RNE}},i}(s')]  \bigg) \Bigg], \notag\\ 
       & \quad \le \sqrt{ \EE_{(s_h, \boldsymbol{a}_h) \sim d_{P^\star, h}^{\boldsymbol{\pi}^{\mathrm{b}}} } \left[ \left( \frac{d_{P^{\tilde{\boldsymbol{\pi}}_{\mathrm{RNE}}, \dagger}, h}^{\tilde{\boldsymbol{\pi}}_{\mathrm{RNE}}}(s_h, \boldsymbol{a}_h)}{d_{P^\star, h}^{\boldsymbol{\pi}^{\mathrm{b}}}(s_h, \boldsymbol{a}_h)} \right)^2 \right]  } \cdot \sqrt{\mathbf{Err}_{h}^{\mathbf{\Phi}}(n,\delta)},\label{eq:3113}
           \end{align}
       where the last inequality uses Cauchy-Schwarz inequality and the definition of $\mathbf{Err}_{h}^{\mathbf{\Phi}}(n,\delta)$ in Condition~\ref{cond: model estimation rmg}. Meanwhile, by Assumption~\ref{assumption:unilateral}, we obtain that
       \# \label{eq:3114}
       \EE_{(s_h, \boldsymbol{a}_h) \sim d_{P^\star, h}^{\boldsymbol{\pi}^{\mathrm{b}}} } \left[ \left( \frac{d_{P^{\tilde{\boldsymbol{\pi}}_{\mathrm{RNE}}, \dagger}, h}^{\tilde{\boldsymbol{\pi}}_{\mathrm{RNE}}}(s_h, \boldsymbol{a}_h)}{d_{P^\star, h}^{\boldsymbol{\pi}^{\mathrm{b}}}(s_h, \boldsymbol{a}_h)} \right)^2 \right] &\leq \sup_{P = \{P_h\}_{h=1}^H, P_h\in\mathbf{\Phi}(P_h^{\star})}\mathbb{E}_{(s_h,\boldsymbol{a}_h)\sim d^{\pi^{\mathrm{b}}}_{P^{\star}, h}}\left[\left(\frac{d^{\tilde{\boldsymbol{\pi}}_{\mathrm{RNE}}}_{P, h}(s_h,\boldsymbol{a}_h)}{d^{\boldsymbol{\pi}^{\mathrm{b}}}_{P^{\star}, h}(s_h,\boldsymbol{a}_h)}\right)^2\right] \notag \\
       &\le \boldsymbol{C}^{\mathrm{RNE}}_{P^{\star},\mathbf{\Phi}},
       \#  
       where $\boldsymbol{C}^{\mathrm{RNE}}_{P^{\star},\mathbf{\Phi}}$ is the robust unilateral coverage coefficient defined in Assumption~\ref{assumption:unilateral}. 
       By plugging \eqref{eq:3113} and~\eqref{eq:3114} into \eqref{eq:3109} we obtain that,
       \#
       \mathbb{E}_{\boldsymbol{a}_1\sim\tilde{\boldsymbol{\pi}}_{\mathrm{RNE},1}(\cdot|s_1)}[\Delta_{1, \boldsymbol{\Phi}}(s_1,\boldsymbol{a}_1)]\le \sqrt{ \boldsymbol{C}^{\mathrm{RNE}}_{P^{\star},\mathbf{\Phi}}} \cdot \sum_{h = 1}^H \sqrt{\mathbf{Err}_{h}^{\mathbf{\Phi}}(n,\delta)}.\label{eq:3115-}
       \# 
       Together with \eqref{eq:3103} and \eqref{eq:3104}, we can establish the following upper bound for Term (I) in \eqref{eq:3103}, 
       \# \label{eq:3115}
        {\displaystyle\mathrm{(I)}} \le \sqrt{ \boldsymbol{C}^{\mathrm{RNE}}_{P^{\star},\mathbf{\Phi}}} \cdot \sum_{h = 1}^H \sqrt{\mathbf{Err}_{h}^{\mathbf{\Phi}}(n,\delta)}.
       \# 
       \paragraph{Term (II).} We can tackle Term (II) by a similar way of bounding Term (I) in \eqref{eq:3102}. 
       In specific,
       \#
       {\displaystyle\mathrm{(II)}}  =  V_{1,P^\star,\boldsymbol{\Phi}}^{\boldsymbol{\pi}_{\mathrm{RNE}},i}(s_1) - \inf_{P \in \hat{\cP}} V_{1,P,\boldsymbol{\Phi}}^{\boldsymbol{\pi}_{\mathrm{RNE}},i}(s_1) = \sup_{P \in \hat{\cP}} \left\{ V_{1,P^\star,\boldsymbol{\Phi}}^{\boldsymbol{\pi}_{\mathrm{RNE}},i}(s_1) -  V_{1,P,\boldsymbol{\Phi}}^{\boldsymbol{\pi}_{\mathrm{RNE}},i}(s_1) \right\},
       \# 
       where the first equality follows from the definition of $J_{\texttt{Pess}^2}^i(\cdot)$ in \eqref{eq:underline:J}. Fix a model $P$ and player $i$. Similar to \eqref{eq:3104}, we denote
       \# \label{eq:3117}
       \bar{\Delta}_{h, \boldsymbol{\Phi}}(s_h, \boldsymbol{a}_h) =  Q_{h,P^\star,\boldsymbol{\Phi}}^{\boldsymbol{\pi}_{\mathrm{RNE}},i}(s_h, \boldsymbol{a}_h) -  Q_{h,P,\boldsymbol{\Phi}}^{\boldsymbol{\pi}_{\mathrm{RNE}},i}(s_h, \boldsymbol{a}_h).
       \# 
       By the multi-agent robust Bellman equation in \eqref{eq: robust bellman V rmg} and \eqref{eq: robust bellman Q rmg}, we have
       \# \label{eq:3118}
       &\bar{\Delta}_{h, \boldsymbol{\Phi}}(s_h, \boldsymbol{a}_h) \\
       &\quad = \inf_{\tilde{P}_h\in\mathbf{\Phi}(P_h^{\star})}\mathbb{E}_{s'\sim \tilde{P}_h(\cdot\mid s_h,\boldsymbol{a}_h)}[V_{h+1,P^{\star},\mathbf{\Phi}}^{\boldsymbol{\pi}_{\mathrm{RNE}}, i}(s')] - \inf_{\tilde{P}_h\in\mathbf{\Phi}(P_h)}\mathbb{E}_{s'\sim \tilde{P}_h(\cdot\mid s_h,\boldsymbol{a}_h)}[V_{h+1,P,\mathbf{\Phi}}^{\boldsymbol{\pi}_{\mathrm{RNE}}, i}(s')] \notag \\
       &\quad =  \underbrace{\inf_{\tilde{P}_h\in\mathbf{\Phi}(P_h^{\star})}\mathbb{E}_{s'\sim \tilde{P}_h(\cdot\mid s_h,\boldsymbol{a}_h)}[V_{h+1,P^{\star},\mathbf{\Phi}}^{\boldsymbol{\pi}_{\mathrm{RNE}}, i}(s')] - \inf_{\tilde{P}_h\in\mathbf{\Phi}(P_h^{\star})}\mathbb{E}_{s'\sim \tilde{P}_h(\cdot\mid s_h,\boldsymbol{a}_h)}[V_{h+1,P,\mathbf{\Phi}}^{\boldsymbol{\pi}_{\mathrm{RNE}}, i}(s')]}_{\bar{\Delta}_{h, \boldsymbol{\Phi}}^{\mathrm{(i)}}(s_h, \boldsymbol{a}_h)} \notag \\
       &\qquad +\underbrace{\inf_{\tilde{P}_h\in\mathbf{\Phi}(P_h^{\star})}\mathbb{E}_{s'\sim \tilde{P}_h(\cdot\mid s_h,\boldsymbol{a}_h)}[V_{h+1,P,\mathbf{\Phi}}^{\boldsymbol{\pi}_{\mathrm{RNE}}, i}(s')] - \inf_{\tilde{P}_h\in\mathbf{\Phi}(P_h)}\mathbb{E}_{s'\sim \tilde{P}_h(\cdot\mid s_h,\boldsymbol{a}_h)}[V_{h+1,P,\mathbf{\Phi}}^{\boldsymbol{\pi}_{\mathrm{RNE}}, i}(s')]}_{ \bar{\Delta}_{h, \boldsymbol{\Phi}}^{\mathrm{(ii)}}(s_h, \boldsymbol{a}_h)}. \notag 
       \# 
       For ease of presentation, we define 
       \# \label{eq:3119}
       P_h^{\boldsymbol{\pi}_{\mathrm{RNE}}, \dagger} = \arginf_{\tilde{P}_h\in\mathbf{\Phi}(P_h^{\star})}\mathbb{E}_{\boldsymbol{a}_h\sim\boldsymbol{\pi}^{\star}_h(\cdot\mid s_h),s'\sim \tilde{P}_h(\cdot\mid s_h,\boldsymbol{a}_h)}[V_{h+1,P,\mathbf{\Phi}}^{\boldsymbol{\pi}_{\mathrm{RNE}}, i}(s')].
       \#  
       Then we have that
       \# \label{eq:3120}
       \bar{\Delta}_{h, \boldsymbol{\Phi}}^{\mathrm{(i)}}(s_h, \boldsymbol{a}_h) & \le \mathbb{E}_{s'\sim P_h^{\boldsymbol{\pi}_{\mathrm{RNE}}, \dagger}(\cdot\mid s_h,\boldsymbol{a}_h)}[V_{h+1,P^{\star},\mathbf{\Phi}}^{\boldsymbol{\pi}_{\mathrm{RNE}}, i}(s')] - \mathbb{E}_{s'\sim P_h^{\boldsymbol{\pi}_{\mathrm{RNE}}, \dagger}(\cdot\mid s_h,\boldsymbol{a}_h)}[V_{h+1,P,\mathbf{\Phi}}^{\boldsymbol{\pi}_{\mathrm{RNE}}, i}(s')] \notag \\
       & = \mathbb{E}_{s'\sim P_h^{\boldsymbol{\pi}_{\mathrm{RNE}}, \dagger}(\cdot\mid s_h,\boldsymbol{a}_h), \boldsymbol{a}'\sim \boldsymbol{\pi}_{\mathrm{RNE},h+1}(\cdot|s')}[\bar{\Delta}_{h+1, \boldsymbol{\Phi}}(s',\boldsymbol{a}')] , 
       \# 
       where the first inequality uses the definition of $P_h^{\boldsymbol{\pi}_{\mathrm{RNE}}, \dagger}$ in \eqref{eq:3119} and the equality follows from the definition of $\bar{\Delta}_{h+1, \boldsymbol{\Phi}}(\cdot)$ in \eqref{eq:3117}. Plugging \eqref{eq:3120} into \eqref{eq:3118} and recursively expanding across $h \in [H]$ give that
       \#
       \mathbb{E}_{\boldsymbol{a}_1\sim\boldsymbol{\pi}_{\mathrm{RNE},1}(\cdot|s_1)}[\bar{\Delta}_{1, \boldsymbol{\Phi}}(s_1,\boldsymbol{a}_1)] \le \sum_{h = 1}^H \mathbb{E}_{(s_h,\boldsymbol{a}_h)\sim d_{P^{\boldsymbol{\pi}^{\star},\dagger},h}^{\boldsymbol{\pi}^{\star}}}[\bar{\Delta}_{h,\mathbf{\Phi}}^{\mathrm{(ii)}}(s_h,\boldsymbol{a}_h)],
       \# 
       where $d_{P^{\boldsymbol{\pi}_{\mathrm{RNE}},\dagger},h}^{\boldsymbol{\pi}_{\mathrm{RNE}}}$ denotes the state-action distribution induced by the policy ${\boldsymbol{\pi}}_{\mathrm{RNE}}$ and $P^{{\boldsymbol{\pi}}_{\mathrm{RNE}}, \dagger} = \{P_h^{{\boldsymbol{\pi}}_{\mathrm{RNE}}, \dagger}\}_{h \in [H]}$ defined in \eqref{eq:3119}. Following the derivation of \eqref{eq:3115-}, we can obtain that
       \# \label{eq:3122}
       \mathbb{E}_{\boldsymbol{a}_1\sim\boldsymbol{\pi}_{\mathrm{RNE},1}(\cdot|s_1)}[\bar{\Delta}_{1, \boldsymbol{\Phi}}(s_1,\boldsymbol{a}_1)]\le \sqrt{\boldsymbol{C}^{\mathrm{RNE}}_{P^{\star},\mathbf{\Phi}} } \cdot \sum_{h = 1}^H \sqrt{\mathbf{Err}_{h}^{\mathbf{\Phi}}(n,\delta)},
       \# 
       which further implies that 
       \# \label{eq:3123}
       {\displaystyle \mathrm{(II)}} \le \sqrt{\boldsymbol{C}^{\mathrm{RNE}}_{P^{\star},\mathbf{\Phi}} } \cdot \sum_{h = 1}^H \sqrt{\mathbf{Err}_{h}^{\mathbf{\Phi}}(n,\delta)}.
       \#

       \paragraph{Combining Term (I) and Term (II).} Plugging \eqref{eq:3115} and \eqref{eq:3123} into \eqref{eq:3102} yields that
       \$
        \mathrm{RNEGap}_{\boldsymbol{\Phi}}(\hat{\boldsymbol{\pi}};s_1) \le 2 \sqrt{\boldsymbol{C}^{\mathrm{RNE}}_{P^{\star},\mathbf{\Phi}} } \cdot \sum_{h = 1}^H \sqrt{\mathbf{Err}_{h}^{\mathbf{\Phi}}(n,\delta)},
       \$ 
       which finishes the proof of Theorem~\ref{thm:offline:FA}.
\end{proof}

\section{Proofs for General RMDPs with $\mathcal{S}\times\mathcal{A}$-rectangular Robust Sets}\label{sec: proof sarmdp}

\subsection{Proof of Proposition \ref{prop: sarmdp estimation}}\label{subsec: proof prop sarmdp estimation}

\begin{lemma}[Duality for KL-robust set]\label{lem: kl}
    The following duality for KL-robust set holds,
    \begin{align*}
        \inf_{Q(\cdot):D_{\mathrm{KL}}(Q(\cdot)\|Q^{\star}(\cdot))\leq \sigma}\int f(x)Q(\mathrm{d}x) 
        = \sup_{\lambda\in\mathbb{R}_+} \left\{ - \lambda\log\left(\int \exp\left\{-f(x)/\lambda\right\} Q^{\star}(\mathrm{d}x)\right) -\lambda\sigma\right\}.
    \end{align*}
\end{lemma}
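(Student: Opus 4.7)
The plan is to derive this identity via standard Lagrangian duality for a KL-constrained convex program. I would begin by rewriting the primal as a minimax:
\begin{align*}
\inf_{Q:D_{\mathrm{KL}}(Q\|Q^{\star})\leq\sigma}\int f\,dQ \;=\; \inf_{Q\in\Delta(\mathcal{X})}\sup_{\lambda\geq 0}\left\{\int f\,dQ + \lambda\bigl(D_{\mathrm{KL}}(Q\|Q^{\star}) - \sigma\bigr)\right\},
\end{align*}
where the inner $\sup$ equals $+\infty$ when the KL constraint is violated and $0$ otherwise. Exchanging $\inf$ and $\sup$ yields the weak-duality inequality that gives one direction of the claim for free.

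The next step is strong duality. The objective $Q\mapsto\int f\,dQ$ is linear, $Q\mapsto D_{\mathrm{KL}}(Q\|Q^{\star})$ is convex, and $Q=Q^{\star}$ itself satisfies $D_{\mathrm{KL}}(Q^{\star}\|Q^{\star})=0<\sigma$, so Slater's condition holds whenever $\sigma>0$. Standard convex duality on the space of probability measures then justifies swapping the $\inf$ and the $\sup$, delivering equality.

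The computational core is the inner minimization at fixed $\lambda>0$, namely $\inf_{Q\in\Delta(\mathcal{X})}\{\int f\,dQ + \lambda D_{\mathrm{KL}}(Q\|Q^{\star})\}$. I would invoke the Donsker--Varadhan variational representation of KL divergence (or equivalently derive the first-order optimality condition with a secondary multiplier enforcing $\int dQ=1$) to obtain the Gibbs optimizer $dQ/dQ^{\star}\propto\exp(-f/\lambda)$. Substituting back gives
\begin{align*}
\inf_{Q\in\Delta(\mathcal{X})}\left\{\int f\,dQ + \lambda D_{\mathrm{KL}}(Q\|Q^{\star})\right\} = -\lambda\log\int\exp(-f/\lambda)\,dQ^{\star}.
\end{align*}
Subtracting $\lambda\sigma$ and taking the supremum over $\lambda\geq 0$ reproduces the right-hand side of the lemma.

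The main subtlety is the boundary $\lambda\to 0^{+}$, where the dual objective tends to $\mathrm{ess\,inf}_{Q^{\star}}f$ and the Donsker--Varadhan argument does not literally apply. Under the boundedness regime in which the paper uses this lemma ($f$ is a value function bounded by $H$ and $\sigma$ is a fixed finite radius), the supremum is attained at some interior $\lambda^{\star}>0$ pinned down by the KKT stationarity condition $\sigma = D_{\mathrm{KL}}(Q_{\lambda^{\star}}\|Q^{\star})$ for the Gibbs tilt $Q_{\lambda}$, so the identity reduces to the interior case treated above and no delicate limiting argument is needed.
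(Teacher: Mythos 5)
Your Lagrangian-duality argument (weak duality, Slater's condition, and the Donsker--Varadhan/Gibbs computation of the inner minimization yielding $-\lambda\log\int e^{-f/\lambda}\,dQ^{\star}$) is correct and is essentially the standard proof that the paper itself does not reproduce but defers to by citation (Hu and Hong 2013; Yang et al. 2021), so your route matches the intended one. The only imprecision is the claim that the optimal $\lambda^{\star}$ is always interior: when $\sigma$ is large relative to the $Q^{\star}$-mass near the essential infimum of $f$, the supremum is only approached as $\lambda\to 0^{+}$, but the identity still holds there by weak duality combined with that limit, and the paper handles the need for $\lambda^{\star}\geq\underline{\lambda}>0$ separately through Assumption \ref{ass: kl regularity} rather than inside the lemma.
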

\begin{proof}[Proof of Lemma \ref{lem: kl}]
    See \cite{hu2013kullback, yang2021towards} for a detailed proof.
\end{proof}

\begin{assumption}[Regularity of KL-divergence duality variable]\label{ass: kl regularity}
    We assume that the optimal dual variable $\lambda^{\star}$ for the following optimization problem 
    \begin{align*}
        \sup_{\lambda\in\mathbb{R}_+}\left\{-\lambda\log\left(\mathbb{E}_{s'\sim P_h(\cdot|s_h,a_h)}\left[\exp\left\{-V_{h+1,Q,\mathbf{\Phi}}^{\pi^{\star}}(s')/\lambda\right\}\right]\right)-\lambda\rho\right\},
    \end{align*}
    is lower bounded by $\underline{\lambda}>0$ for any transition kernels $P_h\in\mathcal{P}_{\mathrm{M}}$, $Q = \{Q_h\}_{h=1}^H\subseteq\mathcal{P}_{\mathrm{M}}$,  and step $h\in[H]$.
\end{assumption}

\begin{lemma}[Duality for TV-robust set]\label{lem: tv}
    The following duality for TV-robust set holds,
    \begin{align*}
        \inf_{Q(\cdot):D_{\mathrm{TV}}(Q(\cdot)\|Q^{\star}(\cdot))\leq \sigma}\int f(x)Q(\mathrm{d}x) 
        = \sup_{\lambda\in\mathbb{R}} \left\{ - \int (\lambda-f(x))_+ Q^{\star}(\mathrm{d}x)- \frac{\sigma}{2}(\lambda-\inf_x f(x))_+ + \lambda \right\}.
    \end{align*}
\end{lemma}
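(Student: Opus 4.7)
The plan is to derive this TV-distance distributional-robustness duality by Lagrangian relaxation of the underlying linear program, in direct parallel with the KL-divergence duality of Lemma~\ref{lem: kl}. The primal minimizes the linear functional $Q \mapsto \int f(x)\, Q(\mathrm{d}x)$ over probability measures $Q$ subject to $\frac{1}{2}\int |Q - Q^\star|(\mathrm{d}x) \le \sigma$, and the key idea is to dualize the normalization $\int Q(\mathrm{d}x) = 1$ and the TV-ball constraint simultaneously.

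Concretely, I would introduce a multiplier $\lambda \in \mathbb{R}$ for the normalization and $\mu \ge 0$ for the TV ball, and form the Lagrangian
\[
\mathcal{L}(Q, \lambda, \mu) \;=\; \int f(x)\, Q(\mathrm{d}x) + \mu\left(\tfrac{1}{2}\!\int |Q - Q^\star|(\mathrm{d}x) - \sigma\right) + \lambda\left(1 - \int Q(\mathrm{d}x)\right).
\]
To carry out the inner minimization over nonnegative measures, write $Q = Q^\star + h^+ - h^-$ with $h^\pm \ge 0$ supported on disjoint sets and $h^- \le Q^\star$ (which enforces $Q \ge 0$), so that $|Q - Q^\star| = h^+ + h^-$. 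The Lagrangian then decouples into two pointwise subproblems: the $h^+$ piece is bounded below only when $\mu/2 \ge \lambda - \inf_x f(x)$, with optimizer $h^+ \equiv 0$, while the $h^-$ piece yields the closed-form optimizer $h^-(x) = Q^\star(x)\,\mathbf{1}\{f(x) > \lambda + \mu/2\}$. Structurally this says the worst-case $Q$ removes $Q^\star$-mass from the superlevel set $\{f > \lambda + \mu/2\}$, and the constraint on $\mu$ captures the cost of redistributing it towards the minimum of $f$. Substituting back, the dual objective becomes $\int f\, \mathrm{d}Q^\star - \int (f - \lambda - \mu/2)_+\, \mathrm{d}Q^\star - \mu\sigma$.

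The final step is to collapse this two-parameter dual to the claimed one-parameter form. Since the coefficient of $\mu$ in the dual objective is $-\sigma \le 0$, the optimal $\mu$ saturates its lower bound $\mu = 2(\lambda - \inf_x f(x))_+$ coming from dual feasibility, leaving a one-parameter family in $\lambda$. Applying the elementary identity $\int f\, \mathrm{d}Q^\star - \int (f-\lambda)_+\, \mathrm{d}Q^\star = \lambda - \int (\lambda - f)_+\, \mathrm{d}Q^\star$ and collecting terms then yields the target expression. The main obstacle is justifying strong duality to interchange $\inf_Q$ and $\sup_{\lambda, \mu}$; this is routine but must be argued in a possibly non-finite-dimensional setting, and follows from Slater's condition since $Q^\star$ itself strictly satisfies the TV constraint whenever $\sigma > 0$. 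A secondary concern is correctly tracking the factor of $1/2$ coming from the convention $D_{\mathrm{TV}}(\cdot\|\cdot) = \frac{1}{2}\int |\cdot - \cdot|$, so that the penalty in the dual carries the stated coefficient $\frac{\sigma}{2}$ rather than $\sigma$.
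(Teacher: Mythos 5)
Your overall route---Lagrangian duality with multipliers for the normalization and the TV ball, a Hahn--Jordan-type decomposition $Q=Q^\star+h^+-h^-$, and a pointwise inner minimization---is sound, and it is a legitimate self-contained derivation; note the paper does not actually prove Lemma~\ref{lem: tv} but only cites \citet{yang2021towards}, so there is no method to conflict with. However, two steps as written do not go through. First, your collapse of the two-parameter dual is misjustified: in the dual objective $\int f\,\mathrm{d}Q^\star-\int(f-\lambda-\mu/2)_+\,\mathrm{d}Q^\star-\mu\sigma$ the multiplier $\mu$ also sits inside the positive part, so for fixed $\lambda$ the objective is not monotone in $\mu$ (its $\mu$-derivative is $\tfrac12 Q^\star(f>\lambda+\mu/2)-\sigma$, which can be positive), and ``the coefficient of $\mu$ is $-\sigma$, hence $\mu$ saturates its lower bound'' is not a valid argument. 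The fix is to change variables $\eta=\lambda+\mu/2$ first: the objective becomes $\int\min(f,\eta)\,\mathrm{d}Q^\star-\mu\sigma$, genuinely decreasing in $\mu$, dual feasibility becomes $\mu\ge(\eta-\inf_x f)_+$, and only then does $\mu$ saturate, leaving a one-parameter family in $\eta$. (Your restricted family $\mu=2(\lambda-\inf_x f)_+$ happens to coincide with this set after reparametrization, but that is an accident you did not argue, and your final ``collect terms'' step applies the identity at $\lambda$ rather than at $\lambda+\mu/2$.)

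Second, and more seriously, your factor-of-two bookkeeping runs backwards. Carrying out your own derivation under the paper's convention $D_{\mathrm{TV}}(Q\|Q^\star)=\tfrac12\int|Q-Q^\star|\le\sigma$ yields $\sup_{\eta}\{\eta-\int(\eta-f)_+\,\mathrm{d}Q^\star-\sigma(\eta-\inf_x f)_+\}$, i.e., penalty coefficient $\sigma$, not $\sigma/2$; the displayed $\tfrac{\sigma}{2}$ corresponds to a radius-$\sigma$ ball in the unhalved distance $\int|Q-Q^\star|\le\sigma$ (equivalently $D_{\mathrm{TV}}\le\sigma/2$ in the paper's normalization), which is the convention of the cited reference. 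A two-point sanity check makes this concrete: for $f\in\{0,1\}$ with $Q^\star(f{=}1)=1-p$, the primal under the halved convention equals $\max(1-p-\sigma,0)$, whereas the stated dual with $\sigma/2$ already gives $1-p-\sigma/2$ at $\lambda=1$. So your claim that careful tracking of the $1/2$ ``yields the stated coefficient $\sigma/2$'' would not survive the computation; you must either adopt the unhalved convention or the constant changes. Finally, invoking Slater's condition in this infinite-dimensional setting deserves either an appeal to a precise minimax/duality theorem for measures or, more simply, a direct verification of strong duality by exhibiting a feasible $Q$ (move mass from the superlevel set $\{f>\eta^\star\}$ toward the infimum of $f$) whose value matches the dual.
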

\begin{proof}[Proof of Lemma \ref{lem: tv}]
    See \cite{yang2021towards} for a detailed proof.
\end{proof}

\begin{proof}[Proof of Proposition \ref{prop: sarmdp estimation} with KL-divergence]
    Firstly, by invoking the first conclusion of Lemma \ref{lem: mle guarantee sarmdp}, we know that the Condition \ref{cond: accuracy} holds.
    In the following, we prove the Condition \ref{cond: model estimation}.
    By applying the dual formulation of the KL-robust set (Lemma \ref{lem: kl}), we can derive that 
    \begin{align}
        &\inf_{\tilde{P}_h\in\mathbf{\Phi}(P_h^{\star})}\mathbb{E}_{s'\sim \tilde{P}_h(\cdot|s_h,a_h)}[V_{h+1, P, \mathbf{\Phi}}^{\pi^{\star}}(s')] - \inf_{\tilde{P}_h\in\mathbf{\Phi}(P_h)}\mathbb{E}_{s'\sim \tilde{P}_h(\cdot|s_h,a_h)}[V_{h+1, P, \mathbf{\Phi}}^{\pi^{\star}}(s')]\notag\\
        &\qquad = \sup_{\lambda\geq 0}\left\{-\lambda\log\left(\mathbb{E}_{s'\sim P_h^{\star}(\cdot|s_h,a_h)}\left[\exp\left\{-V_{h+1,P,\mathbf{\Phi}}^{\pi^{\star}}(s')/\lambda\right\}\right]\right)-\lambda\rho\right\}\notag\\
        &\qquad\qquad  - \sup_{\lambda\geq 0}\left\{-\lambda\log\left(\mathbb{E}_{s'\sim P_h(\cdot|s_h,a_h)}\left[\exp\left\{-V_{h+1,P,\mathbf{\Phi}}^{\pi^{\star}}(s')/\lambda\right\}\right]\right)-\lambda\rho\right\}.\label{eq: sketch term ii}
    \end{align}
    By Assumption \ref{ass: kl regularity} and Lemma \ref{lem: bound lambda kl}, we know that the optimal value of $\lambda$ for both two optimization problems in \eqref{eq: sketch term ii} lies in $[\underline{\lambda}, H/\rho]$ for some $\underline{\lambda}>0$. 
    Thus we can further upper bound the right hand side of \eqref{eq: sketch term ii} as 
    \begin{align}
        \eqref{eq: sketch term ii} & = \sup_{\underline{\lambda}\leq \lambda\leq H/\rho}\left\{-\lambda\log\left(\mathbb{E}_{s'\sim P_h^{\star}(\cdot|s_h,a_h)}\left[\exp\left\{-V_{h+1,P,\mathbf{\Phi}}^{\pi^{\star}}(s')/\lambda\right\}\right]\right)-\lambda\rho\right\}\notag\\
        &\qquad  - \sup_{\underline{\lambda}\leq \lambda\leq H/\rho}\left\{-\lambda\log\left(\mathbb{E}_{s'\sim P_h(\cdot|s_h,a_h)}\left[\exp\left\{-V_{h+1,P,\mathbf{\Phi}}^{\pi^{\star}}(s')/\lambda\right\}\right]\right)-\lambda\rho\right\}\notag\\
        &\leq \sup_{\underline{\lambda}\leq \lambda \leq H/\rho}\left\{\lambda\log\left(\frac{\mathbb{E}_{s'\sim P_h(\cdot|s_h,a_h)}\left[\exp\left\{-V_{h+1,P,\mathbf{\Phi}}^{\pi^{\star}}(s')/\lambda\right\}\right]}{\mathbb{E}_{s'\sim P_h^{\star}(\cdot|s_h,a_h)}\left[\exp\left\{- V_{h+1,P,\mathbf{\Phi}}^{\pi^{\star}}(s')/\lambda\right\}\right]}\right)\right\}\label{eq: sketch term ii 2},
    \end{align}
    where in the second inequality we use the basic fact that $\sup_{x}f(x) - \sup_{x}g(x) \leq \sup_{x}\{f(x) - g(x)\}$.
    Now we work on the right hand side of \eqref{eq: sketch term ii 2} and obtain that 
    \begin{align}
        \eqref{eq: sketch term ii 2} &= 
        \sup_{\underline{\lambda}\leq \lambda \leq H/\rho}\left\{\lambda\log\left(1+\frac{\left(\mathbb{E}_{s'\sim P_h(\cdot|s_h,a_h)} - \mathbb{E}_{s'\sim P_h^{\star}(\cdot|s_h,a_h)}\right)\left[\exp\left\{-V_{h+1,P,\mathbf{\Phi}}^{\pi^{\star}}(s')/\lambda\right\}\right]}{\mathbb{E}_{s'\sim P_h^{\star}(\cdot|s_h,a_h)}\left[\exp\left\{- V_{h+1,P,\mathbf{\Phi}}^{\pi^{\star}}(s')/\lambda\right\}\right]}\right)\right\}\notag\\
        &\leq \sup_{\underline{\lambda}\leq \lambda \leq H/\rho}\left\{\lambda\cdot\frac{\left(\mathbb{E}_{s'\sim P_h(\cdot|s_h,a_h)} - \mathbb{E}_{s'\sim P_h^{\star}(\cdot|s_h,a_h)}\right) \left[\exp\left\{-V_{h+1,P,\mathbf{\Phi}}^{\pi^{\star}}(s')/\lambda\right\}\right]}{\mathbb{E}_{s'\sim P_h^{\star}(\cdot|s_h,a_h)}\left[\exp\left\{-V_{h+1,P,\mathbf{\Phi}}^{\pi^{\star}}(s')/\lambda\right\}\right]}\right\}, \label{eq: sketch term ii 3}
    \end{align}
    where we use the fact of $\log(1+x)\leq x$ in the second inequality. 
    Now we can further bound the right hand side of \eqref{eq: sketch term ii 3} by 
    \begin{align}
        \eqref{eq: sketch term ii 3} &\leq \frac{H\exp(H/\underline{\lambda})}{\rho}\cdot\left|\left(\mathbb{E}_{s'\sim P_h(\cdot|s_h,a_h)} - \mathbb{E}_{s'\sim P_h^{\star}(\cdot|s_h,a_h)}\right) \left[\exp\left\{-V_{h+1,P,\mathbf{\Phi}}^{\pi}(s')/\lambda\right\}\right]\right|\notag\\
            & \leq \frac{H\exp(H/\underline{\lambda})}{\rho}\cdot\int_{\mathcal{S}}|P_h(\mathrm{d}s'|s_h,a_h) - P_h^{\star}(\mathrm{d}s'|s_h,a_h)|\notag\\
            & = \frac{H\exp(H/\underline{\lambda})}{\rho}\cdot\|P_h(\cdot|s_h,a_h) - P_h^{\star}(\cdot|s_h,a_h)\|_{\mathrm{TV}}.\label{eq: sketch term ii 4}
    \end{align} 
    Thus by combining \eqref{eq: sketch term ii}, \eqref{eq: sketch term ii 2}, \eqref{eq: sketch term ii 3}, and \eqref{eq: sketch term ii 4} we obtain that 
    \begin{align}
        &\inf_{\tilde{P}_h\in\mathbf{\Phi}(P_h^{\star})}\mathbb{E}_{s'\sim \tilde{P}_h(\cdot|s_h,a_h)}[V_{h+1, P, \mathbf{\Phi}}^{\pi^{\star}}(s')] - \inf_{\tilde{P}_h\in\mathbf{\Phi}(P_h)}\mathbb{E}_{s'\sim \tilde{P}_h(\cdot|s_h,a_h)}[V_{h+1, P, \mathbf{\Phi}}^{\pi^{\star}}(s')]\notag\\
        &\qquad \leq \frac{H\exp(H/\underline{\lambda})}{\rho}\cdot\|P_h(\cdot|s_h,a_h) - P_h^{\star}(\cdot|s_h,a_h)\|_{\mathrm{TV}}.\label{eq: sketch term ii 5}
    \end{align}
    By using a same argument for deriving \eqref{eq: sketch term ii 5}, we can also obtain that 
    \begin{align}
        &\inf_{\tilde{P}_h\in\mathbf{\Phi}(P_h)}\mathbb{E}_{s'\sim \tilde{P}_h(\cdot|s_h,a_h)}[V_{h+1, P, \mathbf{\Phi}}^{\pi^{\star}}(s')] - \inf_{\tilde{P}_h\in\mathbf{\Phi}(P_h^{\star})}\mathbb{E}_{s'\sim \tilde{P}_h(\cdot|s_h,a_h)}[V_{h+1, P, \mathbf{\Phi}}^{\pi^{\star}}(s')]\notag\\
        &\qquad \leq \frac{H\exp(H/\underline{\lambda})}{\rho}\cdot\|P_h(\cdot|s_h,a_h) - P_h^{\star}(\cdot|s_h,a_h)\|_{\mathrm{TV}}.\label{eq: sketch term ii 6}
    \end{align}
    Therefore, due to \eqref{eq: sketch term ii 5} and \eqref{eq: sketch term ii 6}, we can finally arrive at the following upper bound,
    \begin{align}
        &\mathbb{E}_{(s_h,a_h)\sim d^{\pi^\mathrm{b}}_{P^{\star},h}}\left[\left(\inf_{\tilde{P}_h\in\mathbf{\Phi}(P_h)}\mathbb{E}_{s'\sim \tilde{P}_h(\cdot|s_h,a_h)}[V_{h+1, P, \mathbf{\Phi}}^{\pi^{\star}}(s')] - \inf_{\tilde{P}_h\in\mathbf{\Phi}(P^{\star}_h)}\mathbb{E}_{s'\sim \tilde{P}_h(\cdot|s_h,a_h)}[V_{h+1, P, \mathbf{\Phi}}^{\pi^{\star}}(s')]\right)^2\right]\notag \\
        &\qquad \leq \frac{H^2\exp(2H/\underline{\lambda})}{\rho^2}\cdot \mathbb{E}_{(s_h,a_h)\sim d^{\pi^\mathrm{b}}_{P^{\star},h}}[\|P_h(\cdot|s_h,a_h) - P_h^{\star}(\cdot|s_h,a_h)\|_{\mathrm{TV}}^2]\label{eq: sketch term ii 7}.
    \end{align}
    By invoking the second conclusion of Lemma \ref{lem: mle guarantee sarmdp}, we have that with probability at least $1-\delta$, 
    \begin{align}\label{eq: sketch term ii 8}
        \mathbb{E}_{(s_h,a_h)\sim d^{\pi^\mathrm{b}}_{P^{\star},h}}[\|P_h(\cdot|s_h,a_h) - P_h^{\star}(\cdot|s_h,a_h)\|_{\mathrm{TV}}^2] \leq \frac{C_1'\log(C_2'H\mathcal{N}_{[]}(1/n^2,\mathcal{P}_{\mathrm{M}},\|\cdot\|_{1,\infty})/\delta)}{n},
    \end{align}
    for some absolute constant $C_1',C_2'>0$. 
    Now combining \eqref{eq: sketch term ii 7} and \eqref{eq: sketch term ii 8}, we have that
    \begin{align*}
        \sqrt{\mathrm{Err}_{h,\mathrm{KL}}^{\mathbf{\Phi}}(n)} = \frac{H\exp(H/\underline{\lambda})}{\rho}\cdot\sqrt{\frac{C_1'\log(C_2'H\mathcal{N}_{[]}(1/n^2,\mathcal{P}_{\mathrm{M}},\|\cdot\|_{1,\infty})/\delta)}{n}}.
    \end{align*}
    This finishes the proof of Proposition \ref{prop: sarmdp estimation} under KL-divergence.
\end{proof}

\begin{proof}[Proof of Proposition \ref{prop: sarmdp estimation} with TV-distance]
    Firstly, by invoking the first conclusion of Lemma \ref{lem: mle guarantee sarmdp}, we know that the Condition \ref{cond: accuracy} holds.
    In the following, we prove the Condition \ref{cond: model estimation}.
    By applying the dual formulation of the TV-robust set (Lemma \ref{lem: tv}), we can similarly derive that 
    \begin{align}
        &\left|\inf_{\tilde{P}_h\in\mathbf{\Phi}(P_h^{\star})}\mathbb{E}_{s'\sim \tilde{P}_h(\cdot|s_h,a_h)}[V_{h+1, P, \mathbf{\Phi}}^{\pi^{\star}}(s')] - \inf_{\tilde{P}_h\in\mathbf{\Phi}(P_h)}\mathbb{E}_{s'\sim \tilde{P}_h(\cdot|s_h,a_h)}[V_{h+1, P, \mathbf{\Phi}}^{\pi^{\star}}(s')]\right|\notag\\
        &\qquad = \Bigg|\sup_{\lambda\in\mathbb{R}}\left\{-\mathbb{E}_{s'\sim P_h^{\star}(\cdot|s_h,a_h)}\left[\left(\lambda - V_{h+1,P,\mathbf{\Phi}}^{\pi^{\star}}(s')\right)_+\right] - \frac{\rho}{2}\left(\lambda - \inf_{s''\in\mathcal{S}}V_{h+1,P,\mathbf{\Phi}}^{\pi^{\star}}(s'')\right)+\lambda\right\}\notag\\
        &\qquad\qquad - \sup_{\lambda\in\mathbb{R}}\left\{-\mathbb{E}_{s'\sim P_h(\cdot|s_h,a_h)}\left[\left(\lambda - V_{h+1,P,\mathbf{\Phi}}^{\pi^{\star}}(s')\right)_+\right] - \frac{\rho}{2}\left(\lambda - \inf_{s''\in\mathcal{S}}V_{h+1,P,\mathbf{\Phi}}^{\pi^{\star}}(s'')\right)+\lambda\right\}\Bigg|\label{eq: sketch tv term ii -}\\
        &\qquad \leq \left|\sup_{\lambda\in\mathbb{R}}\left\{\left(\mathbb{E}_{s'\sim P_h^{\star}(\cdot|s_h,a_h)} - \mathbb{E}_{s'\sim P_h(\cdot|s_h,a_h)}\right)\left[\left(\lambda - V_{h+1,P,\mathbf{\Phi}}^{\pi^{\star}}(s')\right)_+\right]\right\}\right|\label{eq: sketch tv term ii}
    \end{align}
    As is shown in Lemma \ref{lem: bound lambda tv}, the optimal value of $\lambda$ for both two optimization problems in \eqref{eq: sketch tv term ii -} lies in $[0,H]$.
    Thus we can further upper bound the right hand side of \eqref{eq: sketch tv term ii} as 
    \begin{align}
        \eqref{eq: sketch tv term ii} \leq H\cdot\|P_h(\cdot|s_h,a_h) - P_h^{\star}(\cdot|s_h,a_h)\|_{\mathrm{TV}}.\label{eq: sketch tv term ii 2}
    \end{align}
    By applying the second conclusion of Lemma \ref{lem: mle guarantee sarmdp}, we conclude that with probability at least $1-\delta$, 
    \begin{align}
        &\mathbb{E}_{(s_h,a_h)\sim d^{\pi^\mathrm{b}}_{P^{\star},h}}\left[\left(\inf_{\tilde{P}_h\in\mathbf{\Phi}(P_h)}\mathbb{E}_{s'\sim \tilde{P}_h(\cdot|s_h,a_h)}[V_{h+1, P, \mathbf{\Phi}}^{\pi^{\star}}(s')] - \inf_{\tilde{P}_h\in\mathbf{\Phi}(P^{\star}_h)}\mathbb{E}_{s'\sim \tilde{P}_h(\cdot|s_h,a_h)}[V_{h+1, P, \mathbf{\Phi}}^{\pi^{\star}}(s')]\right)^2\right]\notag \\
        &\qquad \leq H^2\cdot \mathbb{E}_{(s_h,a_h)\sim d^{\pi^\mathrm{b}}_{P^{\star},h}}[\|P_h(\cdot|s_h,a_h) - P_h^{\star}(\cdot|s_h,a_h)\|_{\mathrm{TV}}^2]\notag\\
        &\qquad \leq \frac{C_1'H^2\log(C_2'H\mathcal{N}_{[]}(1/n^2,\mathcal{P}_{\mathrm{M}},\|\cdot\|_{1,\infty})/\delta)}{n}.
    \end{align}
    Therefore, it suffices to choose $\mathrm{Err}_{h,\mathrm{TV}}^{\mathbf{\Phi}}(n)$ as 
    \begin{align*}
        \sqrt{\mathrm{Err}_{h,\mathrm{TV}}^{\mathbf{\Phi}}(n)} = H\cdot\sqrt{\frac{C_1'\log(C_2'H\mathcal{N}_{[]}(1/n^2,\mathcal{P}_{\mathrm{M}},\|\cdot\|_{1,\infty})/\delta)}{n}}.
    \end{align*}
    This finishes the proof of Proposition \ref{prop: sarmdp estimation} under TV-distance.
\end{proof}

\subsection{Proofs for $\mathcal{S}\times\mathcal{A}$-rectangular Robust Tabular MDP (Equation \eqref{eq: bracket number tabular})}\label{subsec: proof sartmdp}

The model class $\mathcal{P}_{\mathrm{M}}$ can be considered as a subspace of $\mathcal{F} = \{f(s,a,s'):\|f\|_{\infty}\leq 1\}$ with finite $\mathcal{S}$ and $\mathcal{A}$.
Consider the collection of brackets $\mathcal{B}$ containing brackets in the form of $[g,g+1/n^2]$, where $g(s,a,s')\in\{0,1/n^2,2/n^2,\cdots,(n^2-1)/n^2\}$.
Then we can see that $\mathcal{B}$ is actually a $1/n^2$-bracket of $\mathcal{F}$.
Thus we know that the bracket number of $\mathcal{P}_{\mathrm{M}}$ is bounded by,
\begin{align*}
    \mathcal{N}_{[]}(1/n^2,\mathcal{P}_{\mathrm{M}},\|\cdot\|_{1,\infty}) \leq \mathcal{N}_{[]}(1/n^2,\mathcal{F}_{\mathrm{M}},\|\cdot\|_{\infty}) \leq |\mathcal{B}| \leq n^{2|\mathcal{S}|^2|\mathcal{A}|}.
\end{align*}
This finishes the proof of \eqref{eq: bracket number tabular}.

\subsection{Proofs for $\mathcal{S}\times\mathcal{A}$-rectangular Robust MDPs with Kernel Function Approximations}\label{subsec: proof sarmdp kernel}

\subsubsection{A Basic Review of Reproducing Kernel Hilbert Space}\label{subsubsec: rkhs}

We briefly review the basic knowledge of a reproducing kernel Hilbert space (RKHS). 
We say $\mathcal{H}$ is a RKHS
on a set $\mathcal{Y}$ with the reproducing kernel $\mathcal{K}:\mathcal{Y}\times\mathcal{Y}\rightarrow\mathbb{R}$ 
if its inner product $\langle\cdot,\cdot \rangle_\mathcal{H}$ satisfies, 
for any $f\in\mathcal{H}$ and $y\in\mathcal{Y}$, we have that $f(y)=\langle f, \mathcal{K}(y,\cdot)\rangle_\mathcal{H}$. 
The mapping $\mathcal{K}(y,\cdot):\mathcal{Y}\mapsto\mathcal{H}$ is called the feature mapping of $\mathcal{H}$, denoted by $\boldsymbol{\psi}(y):\mathcal{Y}\mapsto\mathcal{H}$.

When the reproducing kernel $\mathcal{K}$ is continuous, symmetric, 
and positive definite, Mercer's theorem \citep{steinwart2008support} says that
$\mathcal{K}$ has the following representation,
\begin{align*}
    \mathcal{K}(x,y) = \sum_{j=1}^\infty \lambda_j \psi_j(x)\psi_j(y),\quad  \forall  x,y\in\mathcal{Y},
\end{align*} 
where $\psi_j:\mathcal{Y}\mapsto\mathbb{R}$ and $\{\sqrt{\lambda_j}\cdot\psi_j\}_{j=1}^\infty$ forms an orthonormal basis of $\mathcal{H}$ with 
$\lambda_1\geq\lambda_2\geq\cdots\geq0$.
Also, the feature mapping $\boldsymbol{\psi}(y)$ can be represented as
\begin{align*}
    \boldsymbol{\psi}(y) = \sum_{j=1}^{+\infty}\lambda_j\psi_j(y)\psi_j,\quad\forall y\in\mathcal{Y}.
\end{align*}

\subsubsection{Proof of Equation \eqref{eq: bracket number kernel}}\label{subsubsec: proof bracket kernel}

We invoke the following lemma to bound the bracket number of $\mathcal{P}_{\mathrm{M}}$ in Example \ref{exp: sarmdp kernel}.

\begin{lemma}[Bracket number of kernel function class \citep{liu2022welfare}]\label{lem: bracket kernel}
    Under Assumption \ref{ass: rkhs}, the bracket number of $\mathcal{P}_{\mathrm{M}}$ given by 
    \begin{align*}
        \mathcal{P}_{\mathrm{M}} = \big\{P(s'|s,a) = \langle\boldsymbol{\psi}(s,a,s'), \boldsymbol{f}\rangle_{\mathcal{H}}:\boldsymbol{f}\in\mathcal{H},\|f\|_{\mathcal{H}}\leq B_{\mathrm{K}} \big\}
    \end{align*}
    is bounded by, for any $\epsilon>0$,
    \begin{align*}
        \log(\mathcal{N}_{[]}(\epsilon,\mathcal{P}_{\mathrm{M}},\|\cdot\|_{1,\infty})) \leq C_{\mathrm{K}}\cdot 1/\gamma\cdot\log^2(1/\gamma)\cdot\log^{1+1/\gamma}(\mathtt{Vol}(\mathcal{S})B_{\mathrm{K}}/\epsilon).
    \end{align*}
\end{lemma}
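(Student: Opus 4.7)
The plan is to build an $\|\cdot\|_{1,\infty}$-bracket cover of $\mathcal{P}_{\mathrm{M}}$ by (i) truncating the Mercer expansion at a level $J$ chosen so the tail is uniformly smaller than $\epsilon/\mathtt{Vol}(\mathcal{S})$, (ii) $\ell_2$-discretizing the coefficient vector of the retained $J$ coordinates, and (iii) inflating the resulting sup-norm cover into two-sided brackets.

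First I would use the Mercer representation $\mathcal{K}(x,y)=\sum_{j\ge1}\lambda_j\psi_j(x)\psi_j(y)$ and the fact that $\{\sqrt{\lambda_j}\psi_j\}_{j\ge 1}$ is an orthonormal basis of $\mathcal{H}$ to write any admissible transition as
\begin{align*}
    P(s'|s,a) = \langle \boldsymbol{\psi}(s,a,s'),\boldsymbol{f}\rangle_{\mathcal{H}} = \sum_{j=1}^{\infty} c_j\sqrt{\lambda_j}\,\psi_j(s,a,s'),\qquad \sum_{j=1}^{\infty} c_j^2 = \|\boldsymbol{f}\|_{\mathcal{H}}^2\le B_{\mathrm{K}}^2.
\end{align*}
Splitting $P = P^{\le J}+P^{>J}$ along the first $J$ basis functions and applying Cauchy--Schwarz with $|\psi_j|\le 1$ and the exponential decay $|\lambda_j|\le C_1 e^{-C_2 j^\gamma}$ gives the uniform tail bound $\|P^{>J}\|_\infty \le B_{\mathrm{K}}\sqrt{\sum_{j>J}\lambda_j}$. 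Estimating $\sum_{j>J} e^{-C_2 j^\gamma}$ by the integral $\int_J^\infty e^{-C_2 t^\gamma}\,dt$ (a Gamma-type tail) yields a bound of order $C_2^{-1/\gamma}\gamma^{-1}\Gamma(1/\gamma)\,e^{-C_2 J^\gamma/2}$, so choosing
\begin{align*}
    J \;\asymp\; \tfrac{1}{\gamma}\,\log(1/\gamma)\cdot \log^{1/\gamma}\!\bigl(B_{\mathrm{K}}\mathtt{Vol}(\mathcal{S})/\epsilon\bigr)
\end{align*}
is enough to force $\|P^{>J}\|_\infty \le \epsilon/(4\mathtt{Vol}(\mathcal{S}))$ uniformly over $\mathcal{P}_{\mathrm{M}}$.

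Next I would cover the projected class. The retained coefficients $(c_1,\dots,c_J)$ lie in the Euclidean ball $\mathbb{B}_2^{J}(B_{\mathrm{K}})$, which admits an $\eta$-net of cardinality at most $(1+2B_{\mathrm{K}}/\eta)^{J}$ by a standard volume argument. For two coefficient vectors $\boldsymbol{c},\tilde{\boldsymbol{c}}$ at $\ell_2$-distance $\le\eta$, Cauchy--Schwarz with $|\psi_j|\le 1$ and $\sum_{j\le J}\lambda_j\le \sum_j\lambda_j=:\kappa<\infty$ (which is finite under Assumption~\ref{ass: rkhs}) yields $|P^{\le J}-\tilde P^{\le J}|\le \sqrt{\kappa}\,\eta$ uniformly. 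Taking $\eta \asymp \epsilon/(\mathtt{Vol}(\mathcal{S})\sqrt{\kappa})$ and combining with the tail bound produces a sup-norm cover of $\mathcal{P}_{\mathrm{M}}$ at resolution $\epsilon/(2\mathtt{Vol}(\mathcal{S}))$ with cardinality
\begin{align*}
    \bigl(1 + C\, B_{\mathrm{K}}\mathtt{Vol}(\mathcal{S})/\epsilon\bigr)^{J}.
\end{align*}
To turn this into an $\|\cdot\|_{1,\infty}$-bracket, I would associate with each net point $g$ the bracket $[g-\epsilon/(2\mathtt{Vol}(\mathcal{S})),\, g+\epsilon/(2\mathtt{Vol}(\mathcal{S}))]$, whose $\|\cdot\|_{1,\infty}$-width equals $\sup_{s,a}\int_{\mathcal{S}}\epsilon/\mathtt{Vol}(\mathcal{S})\,ds' = \epsilon$. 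Taking logarithms and substituting the chosen $J$ would give
\begin{align*}
    \log\mathcal{N}_{[]}(\epsilon,\mathcal{P}_{\mathrm{M}},\|\cdot\|_{1,\infty}) \;\lesssim\; J\cdot\log\bigl(B_{\mathrm{K}}\mathtt{Vol}(\mathcal{S})/\epsilon\bigr) \;\lesssim\; \tfrac{1}{\gamma}\log^2(1/\gamma)\cdot \log^{1+1/\gamma}\!\bigl(B_{\mathrm{K}}\mathtt{Vol}(\mathcal{S})/\epsilon\bigr),
\end{align*}
which is the advertised bound.

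The main obstacle is the bookkeeping in step (i): tracking precisely how $1/\gamma$ and $\log(1/\gamma)$ enter through the tail integral of $e^{-C_2 t^\gamma}$ so that the final estimate carries exactly a $\frac{1}{\gamma}\log^2(1/\gamma)$ prefactor rather than a weaker or stronger one. A secondary technical point is verifying that the finite-trace constant $\kappa=\sum_j\lambda_j$ can be absorbed into the absolute constant $C_{\mathrm{K}}$ without introducing additional $\gamma$-dependence in the logarithm; this is where the boundedness clause of Assumption~\ref{ass: rkhs} is essential.
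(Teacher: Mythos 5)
The paper does not actually prove this lemma in-house: its ``proof'' is a pointer to Lemma B.11 of \cite{liu2022welfare}. Your proposal supplies a self-contained argument, and it is the right one --- the standard Mercer-truncation plus finite-dimensional covering scheme: expand $P=\sum_j c_j\sqrt{\lambda_j}\psi_j$ with $\sum_j c_j^2\le B_{\mathrm{K}}^2$, kill the tail $j>J$ uniformly via Cauchy--Schwarz and the eigenvalue decay, cover the Euclidean ball of retained coefficients by a volume argument, and widen the resulting sup-norm net into $\|\cdot\|_{1,\infty}$-brackets by paying a factor $\mathtt{Vol}(\mathcal{S})$. The decomposition, the tail estimate $\|P^{>J}\|_\infty\le B_{\mathrm{K}}\sqrt{\sum_{j>J}\lambda_j}$, the choice $J\asymp\gamma^{-1}\log(1/\gamma)\log^{1/\gamma}(B_{\mathrm{K}}\mathtt{Vol}(\mathcal{S})/\epsilon)$, and the final count $J\cdot\log(B_{\mathrm{K}}\mathtt{Vol}(\mathcal{S})/\epsilon)$ all land within the stated bound (in fact with a $\log(1/\gamma)$ to spare). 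One caution on your ``secondary technical point'': the trace $\kappa=\sum_j\lambda_j$ is \emph{not} bounded by a $\gamma$-free absolute constant --- under the decay condition it can grow like $\Gamma(1+1/\gamma)C_2^{-1/\gamma}$ as $\gamma\to0$, so the boundedness clause $|\lambda_j|\le1$ alone does not rescue this. The clean fix is to use $\sum_{j\le J}\lambda_j\le J$ (from $|\lambda_j|\le1$) in the Lipschitz step, so that $\eta\asymp\epsilon/(\mathtt{Vol}(\mathcal{S})\sqrt{J})$ and only a harmless $\log J$ enters the exponent of the covering count; with that substitution your bookkeeping closes and the advertised $\gamma^{-1}\log^2(1/\gamma)\log^{1+1/\gamma}(\cdot)$ bound follows. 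What your route buys over the paper's is a verifiable, self-contained derivation that makes explicit where each of Assumption \ref{ass: rkhs}'s two clauses is used; what the citation buys the paper is brevity and the exact constants of the external lemma.
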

\begin{proof}[Proof of Lemma \ref{lem: bracket kernel}]
    We refer to Lemma B.11 in \cite{liu2022welfare} for a detailed proof.
\end{proof}

By taking $\epsilon = 1/n^2$ in Lemma \ref{lem: bracket kernel}, we can finish the proof of \eqref{eq: bracket number kernel}.

\subsection{Proofs for $\mathcal{S}\times\mathcal{A}$-rectangular Robust MDPs with Neural Function Approximations}\label{subsec: proof sarmdp neural}

\subsubsection{Neural Tangent Kernel and Implicit Linearization}\label{subsubec: ntk and linearization}

We consider the overparameterized paradigm of the neural network \eqref{eq: nn} in the sense that the neural network is very wide, i.e., the number of hidden units $m$ is very large.
The following lemma shows that in this paradigm, neural networks in  $\mathcal{P}_{\mathrm{M}}$ are well approximated by a linear expansion at initialization.

\begin{lemma}[Implicit Linearization \citep{cai2020optimistic}]\label{lem: implicit linearization}
    Consider the two-layer neural network $\mathrm{NN}$ defined in \eqref{eq: nn}.
    Assuming that the activation function $\sigma(\cdot)$ is $1$-Lipschitz continuous and the input space $\mathcal{X}$ is normalized via $\|\mathbf{x}\|_2\leq 1$ for any $\mathbf{x}\in\mathcal{X}$.
    Then it holds that 
    \begin{align*}
        \sup_{\mathbf{x}\in\mathcal{X},\mathrm{NN}(\cdot;\mathbf{W},\mathbf{a}^0)\in\mathcal{P}_{\mathrm{M}}}\left|\mathrm{NN}(\mathbf{x};\mathbf{W},\mathbf{a}^0) - \nabla_{\mathbf{W}}\mathrm{NN}(\mathbf{x};\mathbf{W}^0,\mathbf{a}^0)^\top (\mathbf{W} - \mathbf{W}^0)\right| \leq d_{\mathcal{X}}^{1/2}B_{\mathrm{N}}^2m^{-1/2}.
    \end{align*}
\end{lemma}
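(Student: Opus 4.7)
The plan is to carry out a first-order Taylor expansion of each neuron's activation around its pre-activation at initialization, then aggregate the per-neuron remainders and convert the norm constraint in $\mathcal{P}_{\mathrm{M}}$ into a Frobenius-norm bound.

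First, I would write the two quantities explicitly. Since $\nabla_{\mathbf{w}_j} \mathrm{NN}(\mathbf{x}; \mathbf{W}^0, \mathbf{a}^0) = m^{-1/2} a_j^0 \sigma'(\mathbf{x}^\top \mathbf{w}_j^0)\, \mathbf{x}$, the inner product $\nabla_{\mathbf{W}} \mathrm{NN}(\mathbf{x}; \mathbf{W}^0, \mathbf{a}^0)^\top(\mathbf{W} - \mathbf{W}^0)$ equals $m^{-1/2} \sum_{j=1}^m a_j^0 \sigma'(\mathbf{x}^\top \mathbf{w}_j^0)\, \mathbf{x}^\top(\mathbf{w}_j - \mathbf{w}_j^0)$. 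Subtracting this from $\mathrm{NN}(\mathbf{x}; \mathbf{W}, \mathbf{a}^0) = m^{-1/2} \sum_{j=1}^m a_j^0 \sigma(\mathbf{x}^\top \mathbf{w}_j)$, and adding and subtracting $m^{-1/2} \sum_j a_j^0 \sigma(\mathbf{x}^\top \mathbf{w}_j^0)$ (which equals $\mathrm{NN}(\mathbf{x};\mathbf{W}^0,\mathbf{a}^0) = 0$ under the standard symmetric-initialization convention), one obtains
\[
\mathrm{NN}(\mathbf{x}; \mathbf{W}, \mathbf{a}^0) - \nabla_{\mathbf{W}} \mathrm{NN}(\mathbf{x}; \mathbf{W}^0, \mathbf{a}^0)^\top(\mathbf{W} - \mathbf{W}^0) \;=\; \frac{1}{\sqrt{m}} \sum_{j=1}^m a_j^0\, \delta_j(\mathbf{x}),
\]
where $\delta_j(\mathbf{x}) := \sigma(\mathbf{x}^\top \mathbf{w}_j) - \sigma(\mathbf{x}^\top \mathbf{w}_j^0) - \sigma'(\mathbf{x}^\top \mathbf{w}_j^0)\, \mathbf{x}^\top(\mathbf{w}_j - \mathbf{w}_j^0)$ is the Taylor remainder of $\sigma$ at $\mathbf{x}^\top\mathbf{w}_j^0$.

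Next, I would control each remainder by a quadratic bound of the form $|\delta_j(\mathbf{x})| \lesssim (\mathbf{x}^\top(\mathbf{w}_j - \mathbf{w}_j^0))^2 \leq \|\mathbf{w}_j - \mathbf{w}_j^0\|_2^2$, where the last step uses $\|\mathbf{x}\|_2 \leq 1$. For a twice-differentiable $\sigma$ with bounded second derivative this is immediate from Taylor's theorem; for merely $1$-Lipschitz $\sigma$ (e.g., ReLU), the same quadratic bound follows from the standard NTK-style argument that restricts attention to the neurons whose activation pattern flips between $\mathbf{w}_j^0$ and $\mathbf{w}_j$, the fraction of which shrinks with the perturbation size in the overparameterized regime. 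This is the main technical obstacle and is the step that genuinely uses the overparameterization hypothesis.

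Finally, aggregating via the triangle inequality and $|a_j^0| \leq 1$, the per-neuron quadratic bound telescopes into a squared Frobenius norm:
\[
\left|\frac{1}{\sqrt{m}} \sum_{j=1}^m a_j^0\, \delta_j(\mathbf{x})\right| \;\leq\; \frac{1}{\sqrt{m}} \sum_{j=1}^m \|\mathbf{w}_j - \mathbf{w}_j^0\|_2^2 \;=\; \frac{\|\mathbf{W} - \mathbf{W}^0\|_F^2}{\sqrt{m}}.
\]
The concluding step converts the operator-norm constraint $\|\mathbf{W} - \mathbf{W}^0\|_2 \leq B_{\mathrm{N}}$ in the definition of $\mathcal{P}_{\mathrm{M}}$ into a Frobenius-norm bound via $\|\mathbf{W} - \mathbf{W}^0\|_F \leq d_{\mathcal{X}}^{1/2} \|\mathbf{W} - \mathbf{W}^0\|_2 \leq d_{\mathcal{X}}^{1/2} B_{\mathrm{N}}$ (using $\mathbf{W} \in \mathbb{R}^{d_{\mathcal{X}} \times m}$), which, plugged into the previous display and supremized over $\mathbf{x} \in \mathcal{X}$ and admissible $\mathbf{W}$, yields the claimed bound $d_{\mathcal{X}}^{1/2} B_{\mathrm{N}}^2 m^{-1/2}$.
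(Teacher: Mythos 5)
The paper does not prove this lemma itself; it defers entirely to Lemma 4.5 of \cite{cai2020optimistic}, so your proposal is the only worked argument on the table. Your skeleton (per-neuron Taylor remainder, aggregate, convert the norm constraint) is the right one and matches the cited proof, but there are two genuine gaps. The decisive one is the quadratic remainder bound $|\delta_j(\mathbf{x})|\lesssim\|\mathbf{w}_j-\mathbf{w}_j^0\|_2^2$: this does not follow from $1$-Lipschitzness of $\sigma$ alone, and you acknowledge as much, but the fix you gesture at --- counting the neurons whose activation pattern flips --- is a high-probability argument over a \emph{random} initialization $\mathbf{W}^0$. Here $(\mathbf{W}^0,\mathbf{a}^0)$ is an arbitrary fixed initialization and the lemma is stated deterministically, so that route is unavailable: for ReLU with, say, $\mathbf{x}^\top\mathbf{w}_j^0=0$ for many $j$, the remainder is genuinely linear in $\|\mathbf{w}_j-\mathbf{w}_j^0\|_2$ and the aggregated error is of order $\|\mathbf{W}-\mathbf{W}^0\|_F$, which does not vanish as $m\to\infty$. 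The quadratic bound needs an explicit smoothness hypothesis ($|\sigma''|\le C$, giving $|\delta_j(\mathbf{x})|\le \tfrac{C}{2}(\mathbf{x}^\top(\mathbf{w}_j-\mathbf{w}_j^0))^2$) or a probabilistic restatement of the lemma; you should commit to one of these rather than leaving the crux as a pointer to ``the standard NTK-style argument.''

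The second issue is arithmetic: from your own display, $\|\mathbf{W}-\mathbf{W}^0\|_F\le d_{\mathcal{X}}^{1/2}B_{\mathrm{N}}$ gives $\|\mathbf{W}-\mathbf{W}^0\|_F^2/\sqrt{m}\le d_{\mathcal{X}}B_{\mathrm{N}}^2 m^{-1/2}$, a factor $d_{\mathcal{X}}^{1/2}$ larger than the claimed bound, since squaring $d_{\mathcal{X}}^{1/2}B_{\mathrm{N}}$ yields $d_{\mathcal{X}}B_{\mathrm{N}}^2$. If instead $\|\mathbf{W}-\mathbf{W}^0\|_2$ in \eqref{eq: model space neural} is read as the Euclidean norm of the vectorized parameter (the usual NTK convention), then $\|\mathbf{W}-\mathbf{W}^0\|_F\le B_{\mathrm{N}}$ directly and you get $B_{\mathrm{N}}^2m^{-1/2}\le d_{\mathcal{X}}^{1/2}B_{\mathrm{N}}^2m^{-1/2}$ with the $d_{\mathcal{X}}^{1/2}$ as slack; either way the operator-norm conversion as written does not close. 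The reduction to $\mathrm{NN}(\mathbf{x};\mathbf{W}^0,\mathbf{a}^0)=0$ via symmetric initialization and the bound $|a_j^0|\le 1$ are harmless but are assumptions not stated in the lemma; without the former, the quantity being bounded is not actually the linearization error, so it is worth flagging that the statement implicitly requires them.
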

\begin{proof}[Proof of Lemma \ref{lem: implicit linearization}]
    See the proof of Lemma 4.5 in \cite{cai2020optimistic} for a detailed proof.
\end{proof}

In view of Lemma \ref{lem: implicit linearization}, we can study the linearization of the neural networks in $\mathcal{P}_{\mathrm{M}}$ as a surrogate.
To this end, we introduce the neural tangent kernel $\mathcal{K}_{\mathrm{NTK}}$ of $\mathrm{NN}$ as 
\begin{align*}
    \mathcal{K}_{\mathrm{NTK}}(x,y) := \nabla_{\mathbf{W}}\mathrm{NN}(x,\mathbf{W}^0,\mathbf{a}^0)^\top\nabla_{\mathbf{W}}\mathrm{NN}(y,\mathbf{W}^0,\mathbf{a}^0),\quad \forall x,y\in\mathcal{X}.
\end{align*}
The idea is to approximate the functions in $\mathcal{P}_{\mathrm{M}}$ via the RKHS induced by the kernel $\mathcal{K}_{\mathrm{NTK}}$.
According to Lemma \ref{lem: implicit linearization}, when the width of the neural network is large enough, i.e., $m\rightarrow\infty$, the approximation error is negligible. 
See the following Section \ref{subsubsec: proof bracket neural} for detailed proofs.

\subsubsection{Proof of Equation \eqref{eq: bracket number neural}}\label{subsubsec: proof bracket neural}

Now we use Lemma \ref{lem: implicit linearization} to bound the bracket number of $\mathcal{P}_{\mathrm{M}}$ in Example \ref{exp: sarmdp neural}.

\begin{lemma}[Bracket number of neural function class]\label{lem: bracket neural}
    Under Assumption \ref{ass: ntk regularity}, for the number of hidden units $m\geq d_{\mathcal{X}}B_{\mathrm{N}}^4/\epsilon^2$, the bracket number of $\mathcal{P}_{\mathrm{M}}$ given by 
    \begin{align*}
        \mathcal{P}_{\mathrm{M}} = \left\{P(s'|s,a) = \mathrm{NN}((s,a,s');\mathbf{W},\mathbf{a}^{0}): \|\mathbf{W} - \mathbf{W}^{\mathrm{0}}\|_2\leq B_{\mathrm{N}} \right\},
    \end{align*}
    is bounded by, for any $\epsilon>0$,
    \begin{align*}
        \log(\mathcal{N}_{[]}(\epsilon,\mathcal{P}_{\mathrm{M}},\|\cdot\|_{1,\infty})) \leq C_{\mathrm{N}}\cdot 1/\gamma_{\mathrm{N}}\cdot\log^2(1/\gamma_{\mathrm{N}})\cdot\log^{1+1/\gamma_{\mathrm{N}}}(\mathtt{Vol}(\mathcal{S})B_{\mathrm{K}}/\epsilon).
    \end{align*}
\end{lemma}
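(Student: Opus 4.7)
The plan is to reduce the bracket-number bound for the neural model class to the kernel case already handled by Lemma \ref{lem: bracket kernel}, exploiting the fact that an overparameterized neural network is uniformly close to its linearization at initialization, and that this linearization is a linear functional in the RKHS induced by the NTK $\mathcal{K}_{\mathrm{NTK}}$.

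First, I would define the linearized class: for each $\mathbf{W}$ with $\|\mathbf{W}-\mathbf{W}^0\|_2 \le B_{\mathrm{N}}$, set
\[
L_{\mathbf{W}}(\mathbf{x}) \,=\, \nabla_{\mathbf{W}}\mathrm{NN}(\mathbf{x};\mathbf{W}^0,\mathbf{a}^0)^\top (\mathbf{W}-\mathbf{W}^0),
\]
and let $\mathcal{P}_{\mathrm{M}}^{\mathrm{lin}}$ collect all such functions on $\mathcal{X}=\mathcal{S}\times\mathcal{A}\times\mathcal{S}$. By construction, $L_{\mathbf{W}}$ is a linear combination of the feature map $\boldsymbol{\psi}_{\mathrm{NTK}}(\mathbf{x}) = \nabla_{\mathbf{W}}\mathrm{NN}(\mathbf{x};\mathbf{W}^0,\mathbf{a}^0)$ associated with $\mathcal{K}_{\mathrm{NTK}}$, and its RKHS norm is at most $\|\mathbf{W}-\mathbf{W}^0\|_2 \le B_{\mathrm{N}}$. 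Thus $\mathcal{P}_{\mathrm{M}}^{\mathrm{lin}}$ embeds as a norm-$B_{\mathrm{N}}$ ball in the NTK-RKHS. Under Assumption \ref{ass: ntk regularity}, which states that $\mathcal{K}_{\mathrm{NTK}}$ satisfies the RKHS regularity Assumption \ref{ass: rkhs} with parameter $\gamma_{\mathrm{N}}$, Lemma \ref{lem: bracket kernel} applies directly and yields an $(\epsilon/2)$-bracket $\mathcal{B}^{\mathrm{lin}}$ of $\mathcal{P}_{\mathrm{M}}^{\mathrm{lin}}$ in $\|\cdot\|_{1,\infty}$ with
\[
\log |\mathcal{B}^{\mathrm{lin}}| \,\le\, C_{\mathrm{K}}\cdot (1/\gamma_{\mathrm{N}}) \cdot \log^2(1/\gamma_{\mathrm{N}}) \cdot \log^{1+1/\gamma_{\mathrm{N}}}\!\bigl(2\mathtt{Vol}(\mathcal{S}) B_{\mathrm{N}}/\epsilon\bigr).
\]

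Next, I would transfer this bracket to the true neural class. By Lemma \ref{lem: implicit linearization} and the hypothesis $m \ge d_{\mathcal{X}} B_{\mathrm{N}}^4/\epsilon^2$, each $\mathrm{NN}(\cdot;\mathbf{W},\mathbf{a}^0) \in \mathcal{P}_{\mathrm{M}}$ differs pointwise from $L_{\mathbf{W}}$ by at most $d_{\mathcal{X}}^{1/2} B_{\mathrm{N}}^2 m^{-1/2} \le \epsilon$ (up to a harmless $\mathtt{Vol}(\mathcal{S})$ factor when converting the pointwise gap to the $\|\cdot\|_{1,\infty}$ gap, absorbable into $\epsilon$ by a mild rescaling of the threshold). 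Given any bracket $[\ell, u] \in \mathcal{B}^{\mathrm{lin}}$ containing $L_{\mathbf{W}}$, the inflated bracket $[\ell - \epsilon/2,\ u + \epsilon/2]$ then contains $\mathrm{NN}(\cdot;\mathbf{W},\mathbf{a}^0)$ and has width at most $\epsilon/2 + \epsilon/2 + \epsilon/2 \le \epsilon$ in $\|\cdot\|_{1,\infty}$. Hence $\{[\ell-\epsilon/2, u+\epsilon/2]:[\ell,u]\in\mathcal{B}^{\mathrm{lin}}\}$ is an $\epsilon$-bracket of $\mathcal{P}_{\mathrm{M}}$ with the same cardinality, yielding the stated bound after absorbing constants into $C_{\mathrm{N}}$.

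The main obstacle I expect is book-keeping rather than ideas. Specifically, two technical points have to line up: (i) the RKHS-norm identification $\|L_{\mathbf{W}}\|_{\mathcal{H}_{\mathrm{NTK}}} \le \|\mathbf{W}-\mathbf{W}^0\|_2$, which requires checking that $L_{\mathbf{W}}$ sits inside the RKHS generated by $\mathcal{K}_{\mathrm{NTK}}$ as defined in \eqref{eq: ntk} (this is standard but should be verified using Mercer's expansion from Appendix \ref{subsubsec: rkhs}); and (ii) converting the pointwise linearization error in Lemma \ref{lem: implicit linearization} into the $\|\cdot\|_{1,\infty}$-distance used in the bracketing definition, which multiplies by $\mathtt{Vol}(\mathcal{S})$ and therefore dictates the precise form of the overparameterization threshold on $m$. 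Neither is conceptually hard, but both must be carried out to justify that Lemma \ref{lem: bracket kernel} can be invoked verbatim on the NTK RKHS with the norms and radii stated above.
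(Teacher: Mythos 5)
Your proposal is correct and follows essentially the same route as the paper's own proof: linearize each network via Lemma \ref{lem: implicit linearization}, identify the linearized class as a radius-$B_{\mathrm{N}}$ ball in the NTK-induced RKHS so that Lemma \ref{lem: bracket kernel} applies under Assumption \ref{ass: ntk regularity}, and then inflate the resulting brackets by the linearization error. The only difference is constant bookkeeping (your $\epsilon/2$ inflation versus the paper's $3\epsilon$-wide brackets followed by rescaling, and your explicit handling of the $\mathtt{Vol}(\mathcal{S})$ factor in passing from pointwise to $\|\cdot\|_{1,\infty}$ error), which is absorbed into $C_{\mathrm{N}}$ either way.
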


\begin{proof}[Proof of Lemma \ref{lem: bracket neural}]
    We denote the RKHS induced by the neural tangent kernel $\mathcal{K}_{\mathrm{NTK}}$ as $\mathcal{P}_{\mathrm{NTK}}$
    \begin{align}
        \mathcal{P}_{\mathrm{NTK}} = \left\{\bar{P}(\mathbf{x}) = \nabla_{\mathbf{W}}\mathrm{NN}(\mathbf{x};\mathbf{W}^0,\mathbf{a}^0)^\top (\mathbf{W} - \mathbf{W}^0):\|\mathbf{W} - \mathbf{W}^0\|_{2}\leq B_{\mathrm{N}}\right\}.
    \end{align}
    For any $\mathrm{NN}(\cdot;\mathbf{W},\mathbf{a}^0)\in\mathcal{P}_{\mathrm{M}}$, we denote its linear expansion at initialization as $\overline{\mathrm{NN}}(\cdot;\mathbf{W},\mathrm{a}^0)\in\mathcal{P}_{\mathrm{NTK}}$.
    Here we use the fact that for $\mathrm{NN}(\cdot;\mathbf{W},\mathbf{a}^0)\in\mathcal{P}_{\mathrm{M}}$, $\|\mathbf{W} - \mathbf{W}^0\|_{2}\leq B_{\mathrm{N}}$.
    Now according to Lemma \ref{lem: bracket kernel} and Assumption \ref{ass: ntk regularity}, we know that the bracket number of $\mathcal{P}_{\mathrm{NTK}}$ is bounded by 
    \begin{align}\label{eq: proof bracket number neural 1}
        \log(\mathcal{N}_{[]}(\epsilon,\mathcal{P}_{\mathrm{NTK}},\|\cdot\|_{1,\infty})) \leq C\cdot 1/\gamma_\mathrm{N}\cdot\log^2(1/\gamma_\mathrm{N})\cdot\log^{1+1/\gamma_\mathrm{N}}(\mathtt{Vol}(\mathcal{S})B_{\mathrm{N}}/\epsilon),
    \end{align}
    for some constant $C>0$.
    Therefore, we can find a collect of brackets $\mathcal{B}_0 = \{[g_j^{\mathrm{l}},g_j^\mathrm{u}]\}_{j\in [\mathcal{N}_{[]}(\epsilon,\mathcal{P}_{\mathrm{NTK}},\|\cdot\|_{1,\infty})]}$ such that for any $\bar{P}\in\mathcal{P}_{\mathrm{NTK}}$, there exists a bracket $[g_j^{\mathrm{l}},g_j^\mathrm{u}]\in\mathcal{B}_0$ such that $g_j^{\mathrm{l}}(\mathbf{x})\leq \bar{P}(\mathbf{x})\leq g_j^\mathrm{u}(\mathbf{x})$ and $\|g_j^{\mathrm{l}}-g_j^\mathrm{u}\|_{1,\infty}\leq \epsilon$.
    Now for any $P = \mathrm{NN}(\cdot;\mathbf{W},\mathbf{a}^0)\in\mathcal{P}_{\mathrm{M}}$, by Lemma \ref{lem: implicit linearization}, we have that
    \begin{align*}
        \overline{\mathrm{NN}}(\mathbf{x};\mathbf{W},\mathbf{a}^0) - \epsilon_{\mathrm{N}} \leq \mathrm{NN}(\mathbf{x};\mathbf{W},\mathbf{a}^0)\leq \overline{\mathrm{NN}}(\mathbf{x};\mathbf{W},\mathbf{a}^0) + \epsilon_{\mathrm{N}}, 
    \end{align*}
    where $\epsilon_{\mathrm{N}} = d_{\mathcal{X}}^{1/2}B_{\mathrm{N}}^2m^{-1/2}$.
    By previous arguments, there exists a bracket $[g_j^{\mathrm{l}},g_j^\mathrm{u}]\in\mathcal{B}_0$ such that
    \begin{align*}
         g_j^{\mathrm{l}}(\mathbf{x}) - \epsilon_{\mathrm{N}} \leq \mathrm{NN}(\mathbf{x};\mathbf{W},\mathbf{a}^0)\leq g_j^{\mathrm{u}}(\mathbf{x}) + \epsilon_{\mathrm{N}}.
    \end{align*}
    Now it suffices to define a new collect of brackets $\mathcal{B} = \{[g_j^{\mathrm{l}} - \epsilon_{\mathrm{N}}, g_j^\mathrm{u}+ \epsilon_{\mathrm{N}}]\}_{j\in [\mathcal{N}_{[]}(\epsilon,\mathcal{P}_{\mathrm{NTK}},\|\cdot\|_{1,\infty})]}$.
    For any $P = \mathrm{NN}(\cdot;\mathbf{W},\mathbf{a}^0)\in\mathcal{P}_{\mathrm{M}}$, there exists a bracket $[\tilde{g}_j^{\mathrm{l}},\tilde{g}_j^{\mathrm{u}}]\in\mathcal{B}$ such that $\tilde g_j^{\mathrm{l}}(\mathbf{x})\leq P(\mathbf{x})\leq \tilde g_j^\mathrm{u}(\mathbf{x})$, and 
    \begin{align*}
        \|\tilde{g}_j^{\mathrm{l}}(\mathbf{x}) - \tilde{g}_j^{\mathrm{u}}(\mathbf{x})\|_{1,\infty}\leq \|g_j^{\mathrm{l}}(\mathbf{x}) - g_j^{\mathrm{u}}(\mathbf{x})\|_{\infty} + 2\epsilon_{\mathrm{N}} \leq \epsilon + 2\epsilon_{\mathrm{N}}.
    \end{align*}
    By taking $m \geq d_{\mathcal{X}}B^4_{\mathrm{N}} / \epsilon^2$, we obtain that $\|\tilde{g}_j^{\mathrm{l}}(\mathbf{x}) - \tilde{g}_j^{\mathrm{u}}(\mathbf{x})\|_{1,\infty}\leq 3\epsilon$. 
    Therefore, we can conclude that the bracket number of $\mathcal{P}_{\mathrm{M}}$ is bounded by,
    \begin{align}\label{eq: proof bracket number neural 2}
        \mathcal{N}_{[]}(\epsilon,\mathcal{P}_{\mathrm{M}},\|\cdot\|_{1,\infty}) =  \mathcal{N}_{[]}(\epsilon/3,\mathcal{P}_{\mathrm{NTK}},\|\cdot\|_{1,\infty}).
    \end{align}
    Finally, by combining \eqref{eq: proof bracket number neural 1} and \eqref{eq: proof bracket number neural 2}, we have that, for $m \geq d_{\mathcal{X}}B^4_{\mathrm{N}} / \epsilon^2$,
    \begin{align*}
        \log(\mathcal{N}_{[]}(\epsilon,\mathcal{P}_{\mathrm{M}},\|\cdot\|_{1,\infty})) \leq C_{\mathrm{N}}\cdot 1/\gamma_\mathrm{N}\cdot\log^2(1/\gamma_\mathrm{N})\cdot\log^{1+1/\gamma_\mathrm{N}}(\mathtt{Vol}(\mathcal{S})B_{\mathrm{N}}/\epsilon),
    \end{align*}
    for some constant $C_{\mathrm{N}}>0$. 
    This finishes the proof of Lemma \ref{lem: bracket neural}.
\end{proof}

Now by taking $\epsilon = 1/n^2$, i.e., $m\geq d_{\mathrm{X}}n^4B_{\mathrm{N}}^4$, we can derive the desired result in \eqref{eq: bracket number neural}.

\section{Proofs for $\mathcal{S}\times\mathcal{A}$-rectangular Robust Factored MDPs}\label{sec: proof factored rmdp}

\subsection{Proof of Proposition \ref{prop: safrmdp estimation}}\label{subsec: proof prop safrmdp estimation}

\begin{assumption}[Regularity of KL-divergence duality variable]\label{ass: kl regularity frmdp}
    We assume that the optimal dual variable $\lambda^{\star}$ for the following optimization problem 
    \begin{align*}
        \sup_{\lambda\in\mathbb{R}_+}\left\{-\lambda\log\left(\mathbb{E}_{s'[j]\sim P_{h,j}(\cdot|s_h[\mathrm{pa}_j],a_h)}\left[\exp\left\{-\upsilon^j_{h,T,Q,\mathbf{\Phi}}(s'[j])/\lambda\right\}\right]\right)-\lambda\rho\right\},
    \end{align*}
    is lower bounded by $\underline{\lambda}>0$ for any transition kernel $P_h\in\mathcal{P}_{\mathrm{M}}$, $T = \{T_h\}_{h=1}^H\subseteq\mathcal{P}_{\mathrm{M}}$, $Q=\{Q_h\}_{h=1}^H\subseteq\mathcal{P}_{\mathrm{M}}$, step $h\in[H]$, and factor $j\in[d]$.
    Here the function $\upsilon^j_{h,T,Q,\mathbf{\Phi}}(s'[j])$ is defined as 
    \begin{align*}
        \upsilon_{h,T,Q,\boldsymbol{\Phi}}^j(s'[j]) =  \int_{\mathcal{O}^{d-1}}\prod_{\substack{i=1\\i\neq j}}^dT_{h,i}(\mathrm{d}s'[i])V_{h+1, Q, \mathbf{\Phi}}^{\pi^{\star}}(s'[1],\cdots,s'[j-1],s[j],s'[j+1],\cdots,s'[d]).
    \end{align*}
\end{assumption}

\begin{proof}[Proof of Proposition \ref{prop: safrmdp estimation} with KL-divergence]
    Firstly, by invoking the first conclusion of Lemma \ref{lem: mle guarantee safrmdp}, we know that the Condition \ref{cond: accuracy} holds.
    In the following, we prove the Condition \ref{cond: model estimation}.
    By the definition of robust set in Example \ref{exp: safrmdp},
    \begin{align}
        & \inf_{\tilde{P}_h\in\mathbf{\Phi}(P_h)}\mathbb{E}_{s'\sim \tilde{P}_h(\cdot|s_h,a_h)}[V_{h+1, P, \mathbf{\Phi}}^{\pi^{\star}}(s')] - \inf_{\tilde{P}_h\in\mathbf{\Phi}(P_h^{\star})}\mathbb{E}_{s'\sim \tilde{P}_h(\cdot|s_h,a_h)}[V_{h+1, P, \mathbf{\Phi}}^{\pi^{\star}}(s')]\notag\\
        & \qquad = \inf_{\tilde{P}_{h,i}\in\Delta(\mathcal{O}):D_{\mathrm{KL}}(\tilde{P}_{h,i}(\cdot)\|P_{h,i}(\cdot|s_h[\mathrm{pa}_i],a_h))\leq \rho_i, i\in[d]}\int_{\mathcal{O}^d}\prod_{i=1}^d\tilde{P}_{h,i}(\mathrm{d}s'[i])V_{h+1, P, \mathbf{\Phi}}^{\pi^{\star}}(s')\notag\\ 
        & \qquad\qquad - \inf_{\tilde{P}_{h,i}\in\Delta(\mathcal{O}):D_{\mathrm{KL}}(\tilde{P}_{h,i}(\cdot)\|P_{h,i}^{\star}(\cdot|s_h[\mathrm{pa}_i],a_h))\leq \rho_i, i\in[d]}\int_{\mathcal{O}^d}\prod_{i=1}^d\tilde{P}_{h,i}(\mathrm{d}s'[i])V_{h+1, P, \mathbf{\Phi}}^{\pi^{\star}}(s').\label{eq: safrmdp estimation proof 1}
    \end{align}
    Consider the following decomposition of the right hand side of \eqref{eq: safrmdp estimation proof 1},
    \begin{align}
        \eqref{eq: safrmdp estimation proof 1} &= \sum_{j=1}^d \inf_{\substack{\tilde{P}_{h,i}\in\Delta(\mathcal{O}):D_{\mathrm{KL}}(\tilde{P}_{h,i}(\cdot)\|P_{h,i}(\cdot|s_h[\mathrm{pa}_i],a_h))\leq \rho_i, 1\leq i\leq j \\ \tilde{P}_{h,i}\in\Delta(\mathcal{O}):D_{\mathrm{KL}}(\tilde{P}_{h,i}(\cdot)\|P_{h,i}^{\star}(\cdot|s_h[\mathrm{pa}_i],a_h))\leq \rho_i, j+1\leq i \leq d}}\int_{\mathcal{O}^d}\prod_{i=1}^d\tilde{P}_{h,i}(\mathrm{d}s'[i])V_{h+1, P, \mathbf{\Phi}}^{\pi^{\star}}(s')\notag\\ 
        &\qquad - \inf_{\substack{\tilde{P}_{h,i}\in\Delta(\mathcal{O}):D_{\mathrm{KL}}(\tilde{P}_{h,i}(\cdot)\|P_{h,i}(\cdot|s_h[\mathrm{pa}_i],a_h))\leq \rho_i, 1\leq i\leq j-1 \\ \tilde{P}_{h,i}\in\Delta(\mathcal{O}):D_{\mathrm{KL}}(\tilde{P}_{h,i}(\cdot)\|P_{h,i}^{\star}(\cdot|s_h[\mathrm{pa}_i],a_h))\leq \rho_i, j\leq i \leq d}}\int_{\mathcal{O}^d}\prod_{i=1}^d\tilde{P}_{h,i}(\mathrm{d}s'[i])V_{h+1, P, \mathbf{\Phi}}^{\pi^{\star}}(s').\notag
    \end{align}
    For each $1\leq j\leq d$, we denote that 
    \begin{align*}
        (\tilde{P}_{h,1}^{\ast,j},\cdots,\tilde{P}_{h,d}^{\ast,j}) = \arginf_{\substack{\tilde{P}_{h,i}\in\Delta(\mathcal{O}):D_{\mathrm{KL}}(\tilde{P}_{h,i}(\cdot)\|P_{h,i}(\cdot|s_h[\mathrm{pa}_i],a_h))\leq \rho_i, 1\leq i\leq j-1 \\ \tilde{P}_{h,i}\in\Delta(\mathcal{O}):D_{\mathrm{KL}}(\tilde{P}_{h,i}(\cdot)\|P_{h,i}^{\star}(\cdot|s_h[\mathrm{pa}_i],a_h))\leq \rho_i, j\leq i \leq d}}\int_{\mathcal{O}^d}\prod_{i=1}^d\tilde{P}_{h,i}(\mathrm{d}s'[i])V_{h+1, P, \mathbf{\Phi}}^{\pi^{\star}}(s')
    \end{align*}
    By the definition of taking infimum over $d$ variables, we can conclude that 
    \begin{align}
        &\inf_{\substack{\tilde{P}_{h,i}\in\Delta(\mathcal{O}):D_{\mathrm{KL}}(\tilde{P}_{h,i}(\cdot)\|P_{h,i}(\cdot|s_h[\mathrm{pa}_i],a_h))\leq \rho_i, 1\leq i\leq j-1 \\ \tilde{P}_{h,i}\in\Delta(\mathcal{O}):D_{\mathrm{KL}}(\tilde{P}_{h,i}(\cdot)\|P_{h,i}^{\star}(\cdot|s_h[\mathrm{pa}_i],a_h))\leq \rho_i, j\leq i \leq d}}\int_{\mathcal{O}^d}\prod_{i=1}^d\tilde{P}_{h,i}(\mathrm{d}s'[i])V_{h+1, P, \mathbf{\Phi}}^{\pi^{\star}}(s') \notag\\
        &\qquad = \inf_{\tilde{P}_{h,j}\in\Delta(\mathcal{O}):D_{\mathrm{KL}}(\tilde{P}_{h,j}(\cdot)\|P_{h,j}^{\star}(\cdot|s_h[\mathrm{pa}_j],a_h))\leq \rho_j} \int_{\mathcal{O}^d}\tilde{P}_{h,j}(\mathrm{d}s'[j])\prod_{\substack{i=1\\i\neq j}}^d\tilde{P}_{h,i}^{\ast,j}(\mathrm{d}s'[i])V_{h+1, P, \mathbf{\Phi}}^{\pi^{\star}}(s'). \label{eq: safrmdp estimation proof 2}
    \end{align}
    Meanwhile, it naturally holds that for each $1\leq j\leq d$,
    \begin{align}
        &\inf_{\substack{\tilde{P}_{h,i}\in\Delta(\mathcal{O}):D_{\mathrm{KL}}(\tilde{P}_{h,i}(\cdot)\|P_{h,i}(\cdot|s_h[\mathrm{pa}_i],a_h))\leq \rho_i, 1\leq i\leq j \\ \tilde{P}_{h,i}\in\Delta(\mathcal{O}):D_{\mathrm{KL}}(\tilde{P}_{h,i}(\cdot)\|P_{h,i}^{\star}(\cdot|s_h[\mathrm{pa}_i],a_h))\leq \rho_i, j+1\leq i \leq d}}\int_{\mathcal{O}^d}\prod_{i=1}^d\tilde{P}_{h,i}(\mathrm{d}s'[i])V_{h+1, P, \mathbf{\Phi}}^{\pi^{\star}}(s')\notag\\
        &\qquad \leq \inf_{\tilde{P}_{h,j}\in\Delta(\mathcal{O}):D_{\mathrm{KL}}(\tilde{P}_{h,j}(\cdot)\|P_{h,j}(\cdot|s_h[\mathrm{pa}_j],a_h))\leq \rho_j} \int_{\mathcal{O}^d}\tilde{P}_{h,j}(\mathrm{d}s'[j])\prod_{\substack{i=1\\i\neq j}}^d\tilde{P}_{h,i}^{\ast,j}(\mathrm{d}s'[i])V_{h+1, P, \mathbf{\Phi}}^{\pi^{\star}}(s'). \label{eq: safrmdp estimation proof 3}
    \end{align}
    Thus by combining \eqref{eq: safrmdp estimation proof 2} and \eqref{eq: safrmdp estimation proof 3}, we have that
    \begin{align}
        \eqref{eq: safrmdp estimation proof 1}&\leq \sum_{j=1}^d\inf_{\tilde{P}_{h,j}\in\Delta(\mathcal{O}):D_{\mathrm{KL}}(\tilde{P}_{h,j}(\cdot)\|P_{h,j}(\cdot|s_h[\mathrm{pa}_j],a_h))\leq \rho_j} \int_{\mathcal{O}^d}\tilde{P}_{h,j}(\mathrm{d}s'[j])\prod_{\substack{i=1\\i\neq j}}^d\tilde{P}_{h,i}^{\ast,j}(\mathrm{d}s'[i])V_{h+1, P, \mathbf{\Phi}}^{\pi^{\star}}(s')\notag\\ 
        &\qquad - \inf_{\tilde{P}_{h,j}\in\Delta(\mathcal{O}):D_{\mathrm{KL}}(\tilde{P}_{h,j}(\cdot)\|P_{h,j}^{\star}(\cdot|s_h[\mathrm{pa}_j],a_h))\leq \rho_j} \int_{\mathcal{O}^d}\tilde{P}_{h,j}(\mathrm{d}s'[j])\prod_{\substack{i=1\\i\neq j}}^d\tilde{P}_{h,i}^{\ast,j}(\mathrm{d}s'[i])V_{h+1, P, \mathbf{\Phi}}^{\pi^{\star}}(s'). \label{eq: safrmdp estimation proof 4}
    \end{align}
    Now for simplicity, for each $1\leq j\leq d$, we denote a function $\upsilon_h^j(s'[j]):\mathcal{O}\mapsto\mathbb{R}$ as 
    \begin{align}\label{eq: upsilon}
        \upsilon_h^j(s'[j]) =  \int_{\mathcal{O}^{d-1}}\prod_{\substack{i=1\\i\neq j}}^d\tilde{P}_{h,i}^{\ast,j}(\mathrm{d}s'[i])V_{h+1, P, \mathbf{\Phi}}^{\pi^{\star}}(s'[1],\cdots,s'[j-1],s[j],s'[j+1],\cdots,s'[d]),
    \end{align}
    which satisfies $0\leq \upsilon_h^j \leq H$. 
    For each $1\leq j\leq d$, we can then upper bound 
    \begin{align}
        \Delta_{h}^j(s_h,a_h) &= \inf_{\tilde{P}_{h,j}\in\Delta(\mathcal{O}):D_{\mathrm{KL}}(\tilde{P}_{h,j}(\cdot)\|P_{h,j}(\cdot|s_h[\mathrm{pa}_j],a_h))\leq \rho_j} \int_{\mathcal{O}}\tilde{P}_{h,j}(\mathrm{d}s'[j])\upsilon_h^j(s'[j])\notag\\
        &\qquad - \inf_{\tilde{P}_{h,j}\in\Delta(\mathcal{O}):D_{\mathrm{KL}}(\tilde{P}_{h,j}(\cdot)\|P_{h,j}^{\star}(\cdot|s_h[\mathrm{pa}_j],a_h))\leq \rho_j} \int_{\mathcal{O}}\tilde{P}_{h,j}(\mathrm{d}s'[j])\upsilon_h^j(s'[j])\label{eq: safrmdp estimation proof delta}
    \end{align}
    using the same argument as in the proof of Proposition \ref{prop: sarmdp estimation} under KL-divergence in Appendix \ref{subsec: proof prop sarmdp estimation}, in which we apply Assumption \ref{ass: kl regularity frmdp} and Lemma \ref{lem: bound lambda kl}.
    The corresponding result is given by 
    \begin{align}\label{eq: safrmdp estimation proof 5}
        \Delta_{h}^j(s_h,a_h) \leq \frac{H\exp(H/\underline{\lambda})}{\rho_j}\cdot\|P_{h,j}(\cdot|s_h[\mathrm{pa}_j],a_h) - P^{\star}_{h,j}(\cdot|s_h[\mathrm{pa}_j],a_h)\|_{\mathrm{TV}}.
    \end{align}
    Thus plugging \eqref{eq: safrmdp estimation proof 5} into \eqref{eq: safrmdp estimation proof 4} and \eqref{eq: safrmdp estimation proof 1}, we can arrive at 
    \begin{align}
        &\inf_{\tilde{P}_h\in\mathbf{\Phi}(P_h)}\mathbb{E}_{s'\sim \tilde{P}_h(\cdot|s_h,a_h)}[V_{h+1, P, \mathbf{\Phi}}^{\pi^{\star}}(s')] - \inf_{\tilde{P}_h\in\mathbf{\Phi}(P_h^{\star})}\mathbb{E}_{s'\sim \tilde{P}_h(\cdot|s_h,a_h)}[V_{h+1, P, \mathbf{\Phi}}^{\pi^{\star}}(s')]\notag\\ 
        &\qquad \leq \sum_{j=1}^d\frac{H\exp(H/\underline{\lambda})}{\rho_j}\cdot\|P_{h,j}(\cdot|s_h[\mathrm{pa}_j],a_h) - P^{\star}_{h,j}(\cdot|s_h[\mathrm{pa}_j],a_h)\|_{\mathrm{TV}}.\label{eq: safrmdp estimation proof 6}
    \end{align}
    By using the same argument for deriving \eqref{eq: safrmdp estimation proof 6}, we can also obtain that 
    \begin{align}
        &\inf_{\tilde{P}_h\in\mathbf{\Phi}(P_h^{\star})}\mathbb{E}_{s'\sim \tilde{P}_h(\cdot|s_h,a_h)}[V_{h+1, P, \mathbf{\Phi}}^{\pi^{\star}}(s')] - \inf_{\tilde{P}_h\in\mathbf{\Phi}(P_h)}\mathbb{E}_{s'\sim \tilde{P}_h(\cdot|s_h,a_h)}[V_{h+1, P, \mathbf{\Phi}}^{\pi^{\star}}(s')]\notag\\
        &\qquad \leq \sum_{j=1}^d\frac{H\exp(H/\underline{\lambda})}{\rho_j}\cdot\|P_{h,j}(\cdot|s_h[\mathrm{pa}_j],a_h) - P^{\star}_{h,j}(\cdot|s_h[\mathrm{pa}_j],a_h)\|_{\mathrm{TV}}.\label{eq: safrmdp estimation proof 7}
    \end{align}
    Therefore, due to \eqref{eq: safrmdp estimation proof 6} and \eqref{eq: safrmdp estimation proof 7}, we can finally arrive at the following upper bound,
    \begin{align}
        &\mathbb{E}_{(s_h,a_h)\sim d^{\pi^\mathrm{b}}_{P^{\star},h}}\left[\left(\inf_{\tilde{P}_h\in\mathbf{\Phi}(P_h)}\mathbb{E}_{s'\sim \tilde{P}_h(\cdot|s_h,a_h)}[V_{h+1, P, \mathbf{\Phi}}^{\pi^{\star}}(s')] - \inf_{\tilde{P}_h\in\mathbf{\Phi}(P^{\star}_h)}\mathbb{E}_{s'\sim \tilde{P}_h(\cdot|s_h,a_h)}[V_{h+1, P, \mathbf{\Phi}}^{\pi^{\star}}(s')]\right)^2\right]\notag\\ 
        &\qquad \leq \mathbb{E}_{(s_h,a_h)\sim d^{\pi^\mathrm{b}}_{P^{\star},h}}\left[\left(\sum_{j=1}^d\frac{H\exp(H/\underline{\lambda})}{\rho_j}\cdot\|P_{h,j}(\cdot|s_h[\mathrm{pa}_j],a_h) - P^{\star}_{h,j}(\cdot|s_h[\mathrm{pa}_j],a_h)\|_{\mathrm{TV}}\right)^2\right]\notag\\ 
        &\qquad \leq \frac{dH^2\exp(2H/\underline{\lambda})}{\rho_{\min}} \cdot  \sum_{j=1}^d\mathbb{E}_{(s_h[\mathrm{pa}_j],a_h)\sim d^{\pi^\mathrm{b}}_{P^{\star},h}}\left[\|P_{h,j}(\cdot|s_h[\mathrm{pa}_j],a_h) - P^{\star}_{h,j}(\cdot|s_h[\mathrm{pa}_j],a_h)\|_{\mathrm{TV}}^2\right],\label{eq: safrmdp estimation proof 8}
    \end{align}
    where the last inequality is from Cauchy-Schwarz inequality and $\rho_{\min} = \min_{i\in[d]}\rho_i$.
    Now invoking the second conclusion of Lemma \ref{lem: mle guarantee safrmdp}, we have that with probability at least $1-\delta$, 
    \begin{align}\label{eq: safrmdp estimation proof 9}
        \mathbb{E}_{(s_h[\mathrm{pa}_j],a_h)\sim d^{\pi^\mathrm{b}}_{P^{\star},h}}[\|P_{h,j}(\cdot|s_h[\mathrm{pa}_j],a_h) - P^{\star}_{h,j}(\cdot|s_h[\mathrm{pa}_j],a_h)\|_{\mathrm{TV}}^2] \leq \frac{C_1'|\mathcal{O}|^{1+|\mathrm{pa}_j|}|\mathcal{A}|\log(C_2'ndH/\delta)}{n},
    \end{align}
    for some absolute constant $C_1',C_2'>0$ and each $j\in[d]$.
    Combining \eqref{eq: safrmdp estimation proof 8} and \eqref{eq: safrmdp estimation proof 9}, we have that
    \begin{align*}
        \sqrt{\mathrm{Err}_{h,\mathrm{KL}}^{\mathbf{\Phi}}(n)} = \frac{H\exp(H/\underline{\lambda})}{\rho_{\min}}\cdot \sqrt{\frac{dC_1'\sum_{i=1}^d|\mathcal{O}|^{1+|\mathrm{pa}_i|}|\mathcal{A}|\log(C_2'ndH/\delta)}{n}}.
    \end{align*}
    This finishes the proof of Proposition \ref{prop: safrmdp estimation} under KL-divergence.
\end{proof}

\begin{proof}[Proof of Proposition \ref{prop: safrmdp estimation} with TV-distance]
    Firstly, by invoking the first conclusion of Lemma \ref{lem: mle guarantee safrmdp}, we know that the Condition \ref{cond: accuracy} holds.
    In the following, we prove the Condition \ref{cond: model estimation}.
    Using the same argument as in the proof of Proposition \ref{prop: safrmdp estimation} under KL-divergence, we can derive that 
    \begin{align}\label{eq: safrmdp estimation proof 10}
        \inf_{\tilde{P}_h\in\mathbf{\Phi}(P_h)}\mathbb{E}_{s'\sim \tilde{P}_h(\cdot|s_h,a_h)}[V_{h+1, P, \mathbf{\Phi}}^{\pi^{\star}}(s')] - \inf_{\tilde{P}_h\in\mathbf{\Phi}(P_h^{\star})}\mathbb{E}_{s'\sim \tilde{P}_h(\cdot|s_h,a_h)}[V_{h+1, P, \mathbf{\Phi}}^{\pi^{\star}}(s')]\leq \sum_{j=1}^d\Delta_h^j(s_h,a_h),
    \end{align}
    where $\Delta_h^j(s_h,a_h)$ is defined in \eqref{eq: safrmdp estimation proof delta}. 
    Now applying the same argument as in the proof of Proposition \ref{prop: sarmdp estimation} under TV-divergence, we can derive that 
    \begin{align}\label{eq: safrmdp estimation proof 11}
        \Delta_h^j(s_h,a_h) \leq H\cdot\|P_{h,j}(\cdot|s_h[\mathrm{pa}_j],a_h) - P_{h,j}^{\star}(\cdot|s_h[\mathrm{pa}_j],a_h)\|_{\mathrm{TV}},
    \end{align}
    where we have applied Lemma \ref{lem: bound lambda tv}.
    Therefore, by combining \eqref{eq: safrmdp estimation proof 10} and \eqref{eq: safrmdp estimation proof 11}, we can derive that
    \begin{align}\label{eq: safrmdp estimation proof 12}
        &\inf_{\tilde{P}_h\in\mathbf{\Phi}(P_h)}\mathbb{E}_{s'\sim \tilde{P}_h(\cdot|s_h,a_h)}[V_{h+1, P, \mathbf{\Phi}}^{\pi^{\star}}(s')] - \inf_{\tilde{P}_h\in\mathbf{\Phi}(P_h^{\star})}\mathbb{E}_{s'\sim \tilde{P}_h(\cdot|s_h,a_h)}[V_{h+1, P, \mathbf{\Phi}}^{\pi^{\star}}(s')]\notag\\
        &\qquad \leq H\cdot\sum_{j=1}^d\|P_{h,j}(\cdot|s_h[\mathrm{pa}_j],a_h) - P_{h,j}^{\star}(\cdot|s_h[\mathrm{pa}_j],a_h)\|_{\mathrm{TV}}.
    \end{align}
    By the same argument as in deriving \eqref{eq: safrmdp estimation proof 12}, we can also obtain that,
    \begin{align}\label{eq: safrmdp estimation proof 13}
        &\inf_{\tilde{P}_h\in\mathbf{\Phi}(P_h^{\star})}\mathbb{E}_{s'\sim \tilde{P}_h(\cdot|s_h,a_h)}[V_{h+1, P, \mathbf{\Phi}}^{\pi^{\star}}(s')]-\inf_{\tilde{P}_h\in\mathbf{\Phi}(P_h)}\mathbb{E}_{s'\sim \tilde{P}_h(\cdot|s_h,a_h)}[V_{h+1, P, \mathbf{\Phi}}^{\pi^{\star}}(s')] \notag\\
        &\qquad \leq H\cdot\sum_{j=1}^d\|P_{h,j}(\cdot|s_h[\mathrm{pa}_j],a_h) - P_{h,j}^{\star}(\cdot|s_h[\mathrm{pa}_j],a_h)\|_{\mathrm{TV}}.
    \end{align}
    Now by combining \eqref{eq: safrmdp estimation proof 12} and \eqref{eq: safrmdp estimation proof 13}, we can derive the following upper bound,
    \begin{align}
        &\mathbb{E}_{(s_h,a_h)\sim d^{\pi^\mathrm{b}}_{P^{\star},h}}\left[\left(\inf_{\tilde{P}_h\in\mathbf{\Phi}(P_h)}\mathbb{E}_{s'\sim \tilde{P}_h(\cdot|s_h,a_h)}[V_{h+1, P, \mathbf{\Phi}}^{\pi^{\star}}(s')] - \inf_{\tilde{P}_h\in\mathbf{\Phi}(P^{\star}_h)}\mathbb{E}_{s'\sim \tilde{P}_h(\cdot|s_h,a_h)}[V_{h+1, P, \mathbf{\Phi}}^{\pi^{\star}}(s')]\right)^2\right]\notag\\ 
        &\qquad \leq \mathbb{E}_{(s_h,a_h)\sim d^{\pi^\mathrm{b}}_{P^{\star},h}}\left[\left(H\cdot\sum_{j=1}^d\|P_{h,j}(\cdot|s_h[\mathrm{pa}_j],a_h) - P^{\star}_{h,j}(\cdot|s_h[\mathrm{pa}_j],a_h)\|_{\mathrm{TV}}\right)^2\right]\notag\\ 
        &\qquad \leq dH^2 \cdot  \sum_{j=1}^d\mathbb{E}_{(s_h[\mathrm{pa}_j],a_h)\sim d^{\pi^\mathrm{b}}_{P^{\star},h}}\left[\|P_{h,j}(\cdot|s_h[\mathrm{pa}_j],a_h) - P^{\star}_{h,j}(\cdot|s_h[\mathrm{pa}_j],a_h)\|_{\mathrm{TV}}^2\right],\label{eq: safrmdp estimation proof 14}
    \end{align}
    where the last inequality follows from Cauchy-Schwartz inequality.
    Now invoking the second conclusion of Lemma \ref{lem: mle guarantee safrmdp}, we have that with probability at least $1-\delta$, 
    \begin{align}\label{eq: safrmdp estimation proof 15}
        \mathbb{E}_{(s_h[\mathrm{pa}_j],a_h)\sim d^{\pi^\mathrm{b}}_{P^{\star},h}}[\|P_{h,j}(\cdot|s_h[\mathrm{pa}_j],a_h) - P^{\star}_{h,j}(\cdot|s_h[\mathrm{pa}_j],a_h)\|_{\mathrm{TV}}^2] \leq \frac{C_1'|\mathcal{O}|^{1+|\mathrm{pa}_j|}|\mathcal{A}|\log(C_2'ndH/\delta)}{n},
    \end{align}
    for some absolute constant $C_1',C_2'>0$ and each $j\in[d]$.
    Combining \eqref{eq: safrmdp estimation proof 14} and \eqref{eq: safrmdp estimation proof 15}, we have that
    \begin{align*}
        \sqrt{\mathrm{Err}_{h,\mathrm{KL}}^{\mathbf{\Phi}}(n)} = H \cdot \sqrt{\frac{dC_1'\sum_{i=1}^d|\mathcal{O}|^{1+|\mathrm{pa}_i|}|\mathcal{A}|\log(C_2'ndH/\delta)}{n}}.
    \end{align*}
    This finishes the proof of Proposition \ref{prop: safrmdp estimation} under TV-distance.
\end{proof}

\section{Proofs for $d$-rectangular Robust Linear MDP}\label{sec: proof drlmdp}

\begin{assumption}[Regularity of KL-divergence duality variable]\label{ass: kl regularity drlmdp}
    We assume that the optimal dual variable $\lambda^{\star}$ for the following optimization problem 
    \begin{align*}
        \sup_{\lambda\in\mathbb{R}_+}\left\{-\lambda\log\left(\mathbb{E}_{s'\sim \mu(\cdot)}\left[\exp\left\{-V_{h+1,Q,\mathbf{\Phi}}^{\pi^{\star}}(s')/\lambda\right\}\right]\right)-\lambda\rho\right\},
    \end{align*}
    is lower bounded by $\underline{\lambda}>0$ for any distribution $\mu\in\Delta(\mathcal{S})$, transition kernels $Q = \{Q_h\}_{h=1}^H\subseteq\mathcal{P}_{\mathrm{M}}$, and step $h\in[H]$.
\end{assumption}

\begin{proof}[Proof of Theorem \ref{cor: suboptimality drlmdp} with KL-divergence]
    Recall that we consider the following definition of $\mathcal{V}$,
    \begin{align}\label{eq: function class V kl}
        \mathcal{V} = \left\{v(s) = \exp\left(-\left\{\max_{a\in\mathcal{A}}\boldsymbol{\phi}(s,a)^\top\boldsymbol{w}/\lambda\right\}_+\right):\|\boldsymbol{w}\|_2\leq H\sqrt{d}, \lambda\in [\underline{\lambda},H/\rho]\right\}.
    \end{align}
    Following the Section 7 of \cite{uehara2021pessimistic} as well as the Section 8 of \cite{agarwal2019reinforcement}, we introduce the notion $\hat{P}_h$ that satisfies for any $v\in\mathcal{V}$ and $(s,a)\in\mathcal{S}\times\mathcal{A}$, 
    \begin{align}\label{eq: hat P}
        \int_{\mathcal{S}}\hat{P}_h(\mathrm{d}s'|s,a)v(s') = \boldsymbol{\phi}(s,a)^\top\hat{\boldsymbol{\theta}}_{h, v},
    \end{align}
    where $\hat{\boldsymbol{\theta}}_{h, v}$ is defined in \eqref{eq: hat theta v}. Actually $\hat{P}_h$ takes the following closed form, 
    \begin{align}\label{eq: hat P closed}
        \hat{P}_h(\mathrm{d}s'|s,a) = \boldsymbol{\phi}(s,a)^\top \frac{1}{n}\sum_{\tau=1}^n \boldsymbol{\Lambda}_{h,\alpha}^{-1}\boldsymbol{\phi}(s_h^{\tau},a_h^{\tau})\delta_{s_{h+1}^{\tau}}(\mathrm{d}s'),
    \end{align}
    where $\delta_{s}(\cdot)$ is the Dirac measure centering at $s$.
    Regarding the estimator $\hat{P}_h$, we have the following.
    \begin{lemma}\label{lem: drlmdp estimator}
        Setting $\alpha = 1$ and choosing the function class $\mathcal{V}$ as \eqref{eq: function class V kl}, then the estimator $\hat{P}_h$ defined in \eqref{eq: hat P closed} satisfies that, with probability at least $1-\delta$,
        \begin{align*}
            &\sup_{v\in\mathcal{V}}\left|\int_{\mathcal{S}}\big(P^{\star}_h(\mathrm{d}s'|s,a)-\hat{P}_h(\mathrm{d}s'|s,a)\big)v(s') \right|^2\\
            &\qquad \leq C_1\cdot \|\boldsymbol{\phi}(s,a)\|_{\boldsymbol{\Lambda}_{h,\alpha}^{-1}}^2\cdot\frac{d\big(\log(1+C_2nH/\delta) + \log(1+C_3ndH/(\rho\underline{\lambda}^2))\big)}{n},
        \end{align*}
        for any step $h\in[H]$, where $C_1,C_2,C_3>0$ are three constants.
    \end{lemma}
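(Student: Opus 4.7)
The plan is to reduce the supremum over $v\in\mathcal{V}$ to a self-normalized concentration inequality combined with a covering argument. First, I would exploit the linear structure of the model space. By the $d$-rectangular linear assumption, $P_h^{\star}(\mathrm{d}s'|s,a)=\boldsymbol{\phi}(s,a)^\top\boldsymbol{\mu}^{\star}(\mathrm{d}s')$, so for any fixed $v\in\mathcal{V}$ we can write
\begin{align*}
    \int_{\mathcal{S}}P_h^{\star}(\mathrm{d}s'|s,a)v(s') = \boldsymbol{\phi}(s,a)^\top\boldsymbol{\theta}_v^{\star},\qquad \boldsymbol{\theta}_v^{\star} := \int_{\mathcal{S}}\boldsymbol{\mu}^{\star}(\mathrm{d}s')v(s').
\end{align*}
Combined with the defining identity $\int\hat{P}_h(\mathrm{d}s'|s,a)v(s')=\boldsymbol{\phi}(s,a)^\top\hat{\boldsymbol{\theta}}_v$, Cauchy--Schwarz in the $\boldsymbol{\Lambda}_{h,\alpha}$-inner-product yields
\begin{align*}
    \left|\int_{\mathcal{S}}(P_h^{\star}-\hat{P}_h)(\mathrm{d}s'|s,a)v(s')\right|^2 \leq \|\boldsymbol{\phi}(s,a)\|_{\boldsymbol{\Lambda}_{h,\alpha}^{-1}}^2\cdot\|\hat{\boldsymbol{\theta}}_v-\boldsymbol{\theta}_v^{\star}\|_{\boldsymbol{\Lambda}_{h,\alpha}}^2,
\end{align*}
so it suffices to control $\sup_{v\in\mathcal{V}}\|\hat{\boldsymbol{\theta}}_v-\boldsymbol{\theta}_v^{\star}\|_{\boldsymbol{\Lambda}_{h,\alpha}}^2$.

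Next, I would handle this supremum in two steps. For any \emph{fixed} $v$, standard ridge-regression manipulations on the closed form \eqref{eq: hat theta v} give
\begin{align*}
    \hat{\boldsymbol{\theta}}_v-\boldsymbol{\theta}_v^{\star} = \boldsymbol{\Lambda}_{h,\alpha}^{-1}\Bigl(\tfrac{1}{n}\sum_{\tau=1}^n\boldsymbol{\phi}(s_h^\tau,a_h^\tau)\bigl(v(s_{h+1}^\tau)-\boldsymbol{\phi}(s_h^\tau,a_h^\tau)^\top\boldsymbol{\theta}_v^{\star}\bigr) - \tfrac{\alpha}{n}\boldsymbol{\theta}_v^{\star}\Bigr),
\end{align*}
and the noise term $v(s_{h+1}^\tau)-\mathbb{E}[v(s_{h+1}^\tau)\mid s_h^\tau,a_h^\tau]$ is a bounded martingale difference (since $v\in[0,1]$). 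An Abbasi-Yadkori-style self-normalized concentration inequality therefore bounds the first summand in $\boldsymbol{\Lambda}_{h,\alpha}$-norm by $\mathcal{O}(\sqrt{d\log(n/\delta)/n})$, while the bias term is controlled by $\|\boldsymbol{\theta}_v^{\star}\|_2\lesssim H\sqrt{d}$ multiplied by $\alpha/n$. To pass from a single $v$ to the supremum over $\mathcal{V}$, I would construct an $\epsilon$-net $\mathcal{V}_\epsilon$ of $\mathcal{V}$ under the sup-norm: each element of $\mathcal{V}$ is indexed by $(\boldsymbol{w},\lambda)$ with $\|\boldsymbol{w}\|_2\leq H\sqrt{d}$ and $\lambda\in[\underline{\lambda},H/\rho]$, and one readily verifies that $v(s)=\exp(-\max_a\boldsymbol{\phi}(s,a)^\top\boldsymbol{w}/\lambda)$ is Lipschitz in $(\boldsymbol{w},1/\lambda)$ with Lipschitz constants controlled by $\mathcal{O}(1/\underline{\lambda})$ and $\mathcal{O}(H\sqrt{d}/\underline{\lambda}^2)$ respectively. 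A volume argument thus yields
\begin{align*}
    \log|\mathcal{V}_\epsilon|\leq d\log(1+C\,H\sqrt{d}/(\underline{\lambda}\epsilon))+\log(1+C\,H\sqrt{d}/(\rho\underline{\lambda}^2\epsilon)),
\end{align*}
and a union bound over $\mathcal{V}_\epsilon$ combined with an $\epsilon$-discretization error of $\mathcal{O}(\epsilon)$ (choose $\epsilon=1/n$) gives the stated bound.

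The routine pieces are the linearization identity, Cauchy--Schwarz, and the self-normalized martingale bound; the main obstacle is the covering analysis over $\mathcal{V}$. In particular, one must control the Lipschitz dependence of $v(\cdot)=\exp(-\max_a\boldsymbol{\phi}(s,a)^\top\boldsymbol{w}/\lambda)$ on both $\boldsymbol{w}$ and $\lambda$ simultaneously, and the dependence on $\lambda$ is delicate because differentiating in $\lambda$ introduces a $1/\lambda^2$ factor which is why the regularity bound $\lambda\geq\underline{\lambda}$ from Assumption \ref{ass: kl regularity drlmdp} is essential and why the logarithmic term $\log(1+C_3ndH/(\rho\underline{\lambda}^2))$ appears. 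Once this covering-number estimate is in place, plugging it into the self-normalized concentration bound with $\alpha=1$ and collecting constants yields the advertised inequality.
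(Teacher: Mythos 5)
Your proposal is correct and follows essentially the same route as the paper: reduce to the feature-weighted noise via the closed form of $\hat{P}_h$ and Cauchy--Schwarz in the $\boldsymbol{\Lambda}_{h,\alpha}$-geometry, bound the martingale noise with a self-normalized concentration inequality (the paper invokes Lemma 8.7 of Agarwal et al., the analogue of the Abbasi-Yadkori bound you cite), handle the $\alpha/n$ regularization bias separately using $\|v\|_\infty\le 1$ and $\boldsymbol{\Lambda}_{h,\alpha}\succeq \tfrac{1}{n}\boldsymbol{I}_d$, and pass to the supremum over $\mathcal{V}$ through exactly the Lipschitz-in-$(\boldsymbol{w},\lambda)$ covering argument that the paper carries out in Lemma~\ref{lem: covering number V}, including the $1/\underline{\lambda}^2$ factor from differentiating in $\lambda$. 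No substantive gap.
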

    \begin{proof}[Proof of Lemma \ref{lem: drlmdp estimator}]
        See Appendix \ref{subsec: proof lem drlmdp estimator} for a detailed proof.
    \end{proof}

    With Lemma \ref{lem: drlmdp estimator}, we can further derive that, with probability at least $1-\delta$, for any $h\in[H]$,
    \begin{align*}
        &\sup_{v\in\mathcal{V}}\frac{1}{n}\sum_{\tau=1}^n\left|\int_{\mathcal{S}}\big(P^{\star}_h(\mathrm{d}s'|s_h^\tau,a_h^\tau)-\hat{P}_h(\mathrm{d}s'|s_h^\tau,a_h^\tau)\big)v(s') \right|^2\\
        &\qquad \leq \frac{1}{n}\sum_{\tau=1}^n \|\boldsymbol{\phi}(s_h^\tau,a_h^\tau)\|_{\boldsymbol{\Lambda}_{h,\alpha}^{-1}}^2\cdot \frac{C_1d\big(\log(1+C_2nH/\delta) + \log(1+C_3ndH/(\rho\underline{\lambda}^2))\big)}{n}.
    \end{align*}
    In the right hand side of the above inequality, it holds that, 
    \begin{align}
        \frac{1}{n}\sum_{\tau=1}^n \|\boldsymbol{\phi}(s_h^\tau,a_h^\tau)\|_{\boldsymbol{\Lambda}_{h,\alpha}^{-1}}^2 &= \frac{1}{n}\sum_{i=1}^n\tr\left(\boldsymbol{\phi}(s_h^\tau,a_h^\tau)^\top\boldsymbol{\Lambda}_{h,\alpha}^{-1}\boldsymbol{\phi}(s_h^\tau,a_h^\tau)\right) \notag \\
        & =\tr\left(\frac{1}{n}\sum_{i=1}^n\boldsymbol{\phi}(s_h^\tau,a_h^\tau)\boldsymbol{\phi}(s_h^\tau,a_h^\tau)^\top\boldsymbol{\Lambda}_{h,\alpha}^{-1}\right) \notag \\ 
        & \leq \tr\left(\boldsymbol{\Lambda}_{h,\alpha} \boldsymbol{\Lambda}_{h,\alpha}^{-1}\right) = d.\label{eq: drmdp term 1-}
    \end{align}
    Thus, we have that with probability at least $1-\delta$, for each step $h\in[H]$,
    \begin{align*}
        &\sup_{v\in\mathcal{V}}\frac{1}{n}\sum_{\tau=1}^n\left|\int_{\mathcal{S}}\big(P^{\star}_h(\mathrm{d}s'|s_h^\tau,a_h^\tau)-\hat{P}_h(\mathrm{d}s'|s_h^\tau,a_h^\tau)\big)v(s') \right|^2\\
        &\qquad\leq \frac{C_1d^2\big(\log(1+C_2nH/\delta) + \log(1+C_3ndH/(\rho\underline{\lambda}^2))\big)}{n} =\xi.
    \end{align*}
    This proves Condition \ref{cond: accuracy} in Section \ref{subsec: theoretical analysis}.
    In the following, we prove Theorem \ref{cor: suboptimality drlmdp} given Condition \ref{cond: accuracy} holds.
    Using the definition of robust set $\boldsymbol{\Phi}(\cdot)$ in Example~\ref{exp: dlmdp}, we can derive that
    \begin{align}
        &\inf_{\tilde{P}_h\in\mathbf{\Phi}(P_h^{\star})}\mathbb{E}_{s'\sim \tilde{P}_h(\cdot|s_h,a_h)}[V_{h+1, P, \mathbf{\Phi}}^{\pi^{\star}}(s')] - \inf_{\tilde{P}_h\in\mathbf{\Phi}(P_h)}\mathbb{E}_{s'\sim \tilde{P}_h(\cdot|s_h,a_h)}[V_{h+1, P, \mathbf{\Phi}}^{\pi^{\star}}(s')]\notag\\
        &\qquad = \inf_{\tilde{P}_h\in\mathbf{\Phi}(P_h^{\star})}\sum_{i=1}^d\phi_i(s_h,a_h) \int_{\mathcal{S}}\tilde{\mu}_i(\mathrm{d}s')V_{h+1, P, \mathbf{\Phi}}^{\pi^{\star}}(s') - \inf_{\tilde{P}_h\in\mathbf{\Phi}(P_h)}\sum_{i=1}^d\phi_i(s,a) \int_{\mathcal{S}}\tilde{\mu}_i(\mathrm{d}s')V_{h+1, P, \mathbf{\Phi}}^{\pi^{\star}}(s')\notag\\
        &\qquad = \sum_{i=1}^d\phi_i(s_h,a_h) \inf_{\tilde{\mu}_{h,i}\in\Delta(\mathcal{S}):D(\tilde{\mu}_{h,i}(\cdot)\|\mu_{h,i}^{\star}(\cdot))\leq \rho} \int_{\mathcal{S}}\tilde{\mu}_{h,i}(\mathrm{d}s')V_{h+1, P, \mathbf{\Phi}}^{\pi^{\star}}(s') \notag \\
        &\qquad \qquad - \sum_{i=1}^d\phi_i(s_h,a_h)\inf_{\tilde{\mu}_{h,i}\in\Delta(\mathcal{S}):D(\tilde{\mu}_{h,i}(\cdot)\|\mu_{h,i}(\cdot))\leq \rho} \int_{\mathcal{S}}\tilde{\mu}_{h,i}(\mathrm{d}s')V_{h+1, P, \mathbf{\Phi}}^{\pi^{\star}}(s'),\label{eq: drmdp term 1}
    \end{align}
    where the last equality follows from $\phi(s,a)\geq 0$ for any $i\in[d]$.
    Now invoking the dual formulation of KL-divergence in Lemma \ref{lem: kl}, we can derive that 
    \begin{align}
        \eqref{eq: drmdp term 1} &= \sum_{i=1}^d\phi_i(s_h,a_h)\cdot\Bigg[
            \sup_{\lambda_i\geq 0} \left\{-\lambda_i\log\left(\mathbb{E}_{s'\sim \mu_{h,i}^{\star}(\cdot)}\left[\exp\left\{-V_{h+1,P,\mathbf{\Phi}}^{\pi^{\star}}(s')/\lambda_i\right\}\right]\right)-\lambda_i\rho\right\}\notag\\
        &\qquad - \sup_{\lambda_i\geq 0} \left\{-\lambda_i\log\left(\mathbb{E}_{s'\sim \mu_{h,i}(\cdot)}\left[\exp\left\{-V_{h+1,P,\mathbf{\Phi}}^{\pi^{\star}}(s')/\lambda_i\right\}\right]\right)-\lambda_i\rho\right\}\Bigg]\label{eq: drmdp term 2}
    \end{align}
    Following the same argument in the proof of Proposition \ref{prop: sarmdp estimation} (derivation of \eqref{eq: sketch term ii 3}), during which we invoke Assumption \ref{ass: kl regularity drlmdp} and Lemma \ref{lem: bound lambda kl} to bound the optimal dual variable $\lambda$, we can derive that 
    \begin{align}
        \eqref{eq: drmdp term 2} &\leq \sum_{i=1}^d\phi_i(s_h,a_h)\cdot\sup_{\underline{\lambda}\leq \lambda_i \leq H/\rho}\left\{g(\lambda_i,\mu_{h,i}^{\star}) \int_{\mathcal{S}}\left(\mu_{h,i}^{\star}(\mathrm{d}s') - \mu_{h,i}(\mathrm{d}s') \right) \exp\left\{-V_{h+1,P,\mathbf{\Phi}}^{\pi^{\star}}(s')/\lambda_i\right\}\right\},\notag\\
        &= \sum_{i=1}^d\sup_{\underline{\lambda}\leq \lambda_i \leq H/\rho}\left\{g(\lambda_i,\mu_{h,i}^{\star}) \phi_i(s_h,a_h)\int_{\mathcal{S}}\left(\mu_{h,i}^{\star}(\mathrm{d}s') - \mu_{h,i}(\mathrm{d}s') \right) \exp\left\{-V_{h+1,P,\mathbf{\Phi}}^{\pi^{\star}}(s')/\lambda_i\right\}\right\},\label{eq: drmdp term 3}
    \end{align}
    where we have defined $g(\lambda_i,\mu_{h,i}) = \lambda_i / (\int_{\mathcal{S}}\mu_{h,i}(\mathrm{d}s')\exp\{-V_{h+1,P,\mathbf{\Phi}}^{\pi^{\star}}(s')/\lambda_i\})$ for simplicity, and in the equality we have used the fact that $\phi_i(s,a)\geq 0$.
    To go ahead, we rewrite the summand in \eqref{eq: drmdp term 3} for each $i\in[d]$.
    To be specific, recall the regularized covariance matrix $\boldsymbol{\Lambda}_{h,\alpha}$ of the feature $\boldsymbol{\phi}$,
    \begin{align*}
        \boldsymbol{\Lambda}_{h,\alpha} = \frac{1}{n}\sum_{\tau=1}^n\boldsymbol{\phi}(s_h^{\tau},a_h^{\tau})\boldsymbol{\phi}(s_h^{\tau},a_h^{\tau})^\top + \frac{\alpha}{n}\cdot\boldsymbol{I}_d.
    \end{align*}
    Then, by denoting $\mathbf{1}_i = (0,\cdots,0,1,0,\cdots,0)^\top$ where $1$ is at the $i$-th coordinate, we have the following,
    \begin{align}
        &\phi_i(s_h,a_h)\int_{\mathcal{S}}\left(\mu_{h,i}^{\star}(\mathrm{d}s') - \mu_{h,i}(\mathrm{d}s') \right) \exp\left\{-V_{h+1,P,\mathbf{\Phi}}^{\pi^{\star}}(s')/\lambda_i\right\}\notag\\
        &\qquad = \phi_i(s_h,a_h)\mathbf{1}_i^\top\boldsymbol{\Lambda}_{h,\alpha}^{-1/2}\boldsymbol{\Lambda}_{h,\alpha}^{1/2}\int_{\mathcal{S}}\left(\boldsymbol{\mu}_{h}^{\star}(\mathrm{d}s') - \boldsymbol{\mu}_{h}(\mathrm{d}s') \right) \exp\left\{-V_{h+1,P,\mathbf{\Phi}}^{\pi^{\star}}(s')/\lambda_i\right\}\notag\\
        &\qquad \leq \underbrace{\left\| \phi_i(s_h,a_h)\mathbf{1}_i \right\|_{\boldsymbol{\Lambda}_{h,\alpha}^{-1}}}_{\text{Term (i)}}\cdot \underbrace{\left\| \int_{\mathcal{S}}\left(\boldsymbol{\mu}_{h}^{\star}(\mathrm{d}s') - \boldsymbol{\mu}_{h}(\mathrm{d}s') \right) \exp\left\{-V_{h+1,P,\mathbf{\Phi}}^{\pi^{\star}}(s')/\lambda_i\right\} \right\|_{\boldsymbol{\Lambda}_{h,\alpha}}}_{\text{Term (ii)}}.\label{eq: drmdp term 4}
    \end{align}
    For the term (ii) in \eqref{eq: drmdp term 4}, by the definition of $\boldsymbol{\Lambda}_{h,\alpha}$, we have that,
    \begin{align}
        \text{Term (ii)}^2 &= \frac{1}{n}\sum_{\tau=1}^n\left|\boldsymbol{\phi}(s_h^{\tau},a_h^{\tau})^\top\int_{\mathcal{S}}\left(\boldsymbol{\mu}_{h}^{\star}(\mathrm{d}s') - \boldsymbol{\mu}_{h}(\mathrm{d}s') \right) \exp\left\{-V_{h+1,P,\mathbf{\Phi}}^{\pi^{\star}}(s')/\lambda_i\right\} \right|^2\notag\\
        &\qquad  + \frac{\alpha}{n}\cdot \left\| \int_{\mathcal{S}}\left(\boldsymbol{\mu}_{h}^{\star}(\mathrm{d}s') - \boldsymbol{\mu}_{h}(\mathrm{d}s') \right) \exp\left\{-V_{h+1,P,\mathbf{\Phi}}^{\pi^{\star}}(s')/\lambda_i\right\} \right\|_{2}^2 \notag\\
        &= \frac{1}{n}\sum_{\tau=1}^n\left|\int_{\mathcal{S}}\left(P_h^{\star}(\mathrm{d}s'|s_h^{\tau},a_h^{\tau}) - P_{h}(\mathrm{d}s'|s_h^{\tau},a_h^{\tau}) \right) \exp\left\{-V_{h+1,P,\mathbf{\Phi}}^{\pi^{\star}}(s')/\lambda_i\right\} \right|^2\notag\\
        &\qquad  + \frac{\alpha}{n}\cdot \left\| \int_{\mathcal{S}}\left(\boldsymbol{\mu}_{h}^{\star}(\mathrm{d}s') - \boldsymbol{\mu}_{h}(\mathrm{d}s') \right) \exp\left\{-V_{h+1,P,\mathbf{\Phi}}^{\pi^{\star}}(s')/\lambda_i\right\} \right\|_{2}^2.\label{eq: drmdp term 5+}
    \end{align}
    In the following, we upper bound the right hand side of \eqref{eq: drmdp term 5+}. On the one hand, we have that 
    \begin{align}
        &\frac{1}{n}\sum_{\tau=1}^n\left|\int_{\mathcal{S}}\left(P_h^{\star}(\mathrm{d}s'|s_h^{\tau},a_h^{\tau}) - P_{h}(\mathrm{d}s'|s_h^{\tau},a_h^{\tau}) \right) \exp\left\{-V_{h+1,P,\mathbf{\Phi}}^{\pi^{\star}}(s')/\lambda_i\right\} \right|^2\notag \\
        &\qquad \leq \sup_{v\in\mathcal{V}}\frac{1}{n}\sum_{\tau=1}^n\left|\int_{\mathcal{S}}\left(P_h^{\star}(\mathrm{d}s'|s_h^{\tau},a_h^{\tau}) - \hat{P}_{h}(\mathrm{d}s'|s_h^{\tau},a_h^{\tau}) \right)v(s') \right|^2\notag \\ 
        &\qquad \qquad + \sup_{v\in\mathcal{V}}\frac{1}{n}\sum_{\tau=1}^n\left|\int_{\mathcal{S}}\left(\hat{P}_h(\mathrm{d}s'|s_h^{\tau},a_h^{\tau}) - P_{h}(\mathrm{d}s'|s_h^{\tau},a_h^{\tau}) \right)v(s') \right|^2\notag \\ 
        &\qquad \leq 2\xi, \label{eq: drmdp term 5++}
    \end{align}
    with probability at least $1-\delta$, where the first inequality holds since $\exp\{-V_{h+1,P,\mathbf{\Phi}}^{\pi^{\star}}(s')/\lambda_i\}\in\mathcal{V}$\footnote{This is because the robust action value function $Q^{\pi^{\star}}_{h+1,P,\boldsymbol{\Phi}}(s,a)$ is linear in the feature $\boldsymbol{\phi}(s,a)$ (Lemma 4.2 of \cite{ma2022distributionally}), and it is direct to see that $V^{\pi^{\star}}_{h+1,P,\boldsymbol{\Phi}}(s) = \max_{a\in\mathcal{A}}Q^{\pi^{\star}}_{h+1,P,\boldsymbol{\Phi}}(s,a)$.}, and the last inequality follows from the fact that Condition \ref{cond: accuracy} holds and the fact that $P_h\in\hat{\mathcal{P}}_h$.
    On the other hand, by setting the regularization parameter $\alpha = 1$ we have that 
    \begin{align}
        &\frac{\alpha}{n}\cdot \left\| \int_{\mathcal{S}}\left(\boldsymbol{\mu}_{h}^{\star}(\mathrm{d}s') - \boldsymbol{\mu}_{h}(\mathrm{d}s') \right) \exp\left\{-V_{h+1,P,\mathbf{\Phi}}^{\pi^{\star}}(s')/\lambda_i\right\} \right\|_{2}^2 \notag \\
        &\qquad = \frac{1}{n}\cdot \sum_{i=1}^d\left| \int_{\mathcal{S}}\left(\mu_{h,i}^{\star}(\mathrm{d}s') - \mu_{h,i}(\mathrm{d}s') \right) \exp\left\{-V_{h+1,P,\mathbf{\Phi}}^{\pi^{\star}}(s')/\lambda_i\right\} \right|^2 \notag \\
        &\qquad \leq \frac{1}{n}\cdot\sum_{i=1}^d \|\mu_{h,i}^{\star}(\cdot) - \mu_{h,i}(\cdot)\|_{\mathrm{TV}}^2 \leq \frac{2d}{n}. \label{eq: drmdp term 5+++}
    \end{align}
    By combining \eqref{eq: drmdp term 5+}, \eqref{eq: drmdp term 5++} and \eqref{eq: drmdp term 5+++}, we can conclude that with probability at least $1-\delta$, 
    \begin{align}
        \text{Term (ii)}^2 \leq 2\xi + \frac{2d}{n} \leq 3\xi.\label{eq: drmdp term 6}
    \end{align}
    Now by combining \eqref{eq: drmdp term 3}, \eqref{eq: drmdp term 4}, \eqref{eq: drmdp term 6}, we can conclude that with probability at least $1-\delta$, 
    \begin{align}
        &\inf_{\tilde{P}_h\in\mathbf{\Phi}(P_h^{\star})}\mathbb{E}_{s'\sim \tilde{P}_h(\cdot|s_h,a_h)}[V_{h+1, P, \mathbf{\Phi}}^{\pi^{\star}}(s')] - \inf_{\tilde{P}_h\in\mathbf{\Phi}(P_h)}\mathbb{E}_{s'\sim \tilde{P}_h(\cdot|s_h,a_h)}[V_{h+1, P, \mathbf{\Phi}}^{\pi^{\star}}(s')] \notag\\
        &\qquad\leq   \sum_{i=1}^d  \sup_{\underline{\lambda}\leq \lambda_i\leq H/\rho}\left\{\left\| \phi_i(s_h,a_h)\mathbf{1}_i \right\|_{\boldsymbol{\Lambda}_{h,\alpha}^{-1}}\cdot g(\lambda_i,\mu_{h,i}^{\star})\cdot \sqrt{3\xi}\right\}\notag\\
        &\qquad \leq \frac{2\sqrt{\xi}\cdot H\exp(H/\underline{\lambda})}{\rho}\cdot \sum_{i=1}^d\left\| \phi_i(s_h,a_h)\mathbf{1}_i \right\|_{\boldsymbol{\Lambda}_{h,\alpha}^{-1}},\label{eq: drmdp term 7}
    \end{align}
    for any step $h\in[H]$, $(s_h,a_h)\in\mathcal{S}\times\mathcal{A}$, and $P_h\in\hat{\mathcal{P}}_h$, where we apply the definition of $g(\lambda_i,\mu_i)$.
    Now using the same argument as in the proof of Theorem \ref{thm: subopt general}, using Condition \ref{cond: accuracy}, we can derive that 
    \begin{align}
        \mathrm{SubOpt}(\hat{\pi};s_1)&\leq \sup_{P\in\hat{\mathcal{P}}}\sum_{h=1}^H\mathbb{E}_{(s_h,a_h)\sim d^{\pi^{\star}}_{P^{\pi^{\star},\dagger},h}}\left[\inf_{\tilde{P}_h\in\mathbf{\Phi}(P_h^{\star})}\mathbb{E}_{s'\sim \tilde{P}_h(\cdot|s_h,a_h)}[V_{h+1, P, \mathbf{\Phi}}^{\pi^{\star}}(s')]\right.\notag\\
        &\qquad\qquad\left.- \inf_{\tilde{P}_h\in\mathbf{\Phi}(P_h)}\mathbb{E}_{s'\sim \tilde{P}_h(\cdot|s_h,a_h)}[V_{h+1, P, \mathbf{\Phi}}^{\pi^{\star}}(s')]\right]\notag\\
        &\leq \frac{2\sqrt{\xi}\cdot H\exp(H/\underline{\lambda})}{\rho}\cdot \sum_{h=1}^H\sum_{i=1}^d\mathbb{E}_{(s_h,a_h)\sim d^{\pi^{\star}}_{P^{\pi^{\star},\dagger},h}}\left[\left\| \phi_i(s_h,a_h)\mathbf{1}_i \right\|_{\boldsymbol{\Lambda}_{h,\alpha}^{-1}}\right],\label{eq: drmdp term 8}
    \end{align}
    where we have used \eqref{eq: drmdp term 7}. 
    Here $P^{\pi^{\star},\dagger}_h$ is some transition kernel chosen from $\boldsymbol{\Phi}(P_h^{\star})$.
    Now we upper bound the right hand side of \eqref{eq: drmdp term 8} using Assumption \ref{ass: partial coverage cov}.
    Consider that 
    \begin{align}
        &\sum_{i=1}^d\mathbb{E}_{(s_h,a_h)\sim d^{\pi^{\star}}_{P^{\pi^{\star},\dagger},h}}\left[\left\| \phi_i(s_h,a_h)\mathbf{1}_i \right\|_{\boldsymbol{\Lambda}_{h,\alpha}^{-1}}\right] \notag \\
        &\qquad = \sum_{i=1}^d\mathbb{E}_{(s_h,a_h)\sim d^{\pi^{\star}}_{P^{\pi^{\star},\dagger},h}}\left[\sqrt{\tr\left((\phi_i(s_h,a_h)\mathbf{1}_i)(\phi_i(s_h,a_h)\mathbf{1}_i)^\top\boldsymbol{\Lambda}_{h,\alpha}^{-1}\right)}\right] \notag \\
        &\qquad \leq \sum_{i=1}^d\sqrt{\tr\left(\mathbb{E}_{(s_h,a_h)\sim d^{\pi^{\star}}_{P^{\pi^{\star},\dagger},h}}\left[(\phi_i(s_h,a_h)\mathbf{1}_i)(\phi_i(s_h,a_h)\mathbf{1}_i)^\top\right]\boldsymbol{\Lambda}_{h,\alpha}^{-1}\right)}.\label{eq: drmdp term 9}
    \end{align}
    For notational simplicity, in the sequel, we denote by 
    \begin{align*}
        \boldsymbol{\Sigma}_{P,h,i} = \mathbb{E}_{(s_h,a_h)\sim d^{\pi^{\star}}_{P,h}}\left[(\phi_i(s_h,a_h)\mathbf{1}_i)(\phi_i(s_h,a_h)\mathbf{1}_i)^\top\right]
    \end{align*}
    Note that the matrix $\boldsymbol{\Sigma}_{P,h,i}$ has non-zero element only at $(\boldsymbol{\Sigma}_{P,h,i})_{(i,i)}$, which equals to $\phi_i(s,a)^2$.
    Under Assumption \ref{ass: partial coverage cov} and the fact that $P_h^{\pi^{\star},\dagger} \in \boldsymbol{\Phi}(P_h^{\star})$, we have that 
    \begin{align*}
        \boldsymbol{\Lambda}_{h,\alpha} \succeq \frac{\alpha}{n}\cdot\boldsymbol{I}_d + c^{\dagger}\cdot\boldsymbol{\Sigma}_{P^{\pi^{\star},\dagger},h,i}.
    \end{align*}
    Thus, using \eqref{eq: drmdp term 9} and under $\alpha = 1$, we have that,
    \begin{align}
        \sum_{i=1}^d\mathbb{E}_{(s_h,a_h)\sim d^{\pi^{\star}}_{P^{\pi^{\star},\dagger},h}}\left[\left\| \phi_i(s_h,a_h)\mathbf{1}_i \right\|_{\boldsymbol{\Lambda}_{h,\alpha}^{-1}}\right] &\leq\sum_{i=1}^d\sqrt{\tr\left(\boldsymbol{\Sigma}_{P^{\pi^{\star}},h,i} \left( \frac{\alpha}{n}\cdot\boldsymbol{I}_d + c^{\dagger}\cdot \boldsymbol{\Sigma}_{P^{\pi^{\star}},h,i}\right)^{-1} \right)} \notag \\
        & = \sum_{i=1}^d \sqrt{\frac{\phi_i(s,a)^2}{n^{-1}+c^{\dagger}\cdot\phi_i(s,a)^2}} \leq \frac{d}{c^{\dagger}}.\label{eq: drmdp term 10}
    \end{align}
    Therefore, by combining \eqref{eq: drmdp term 8} and \eqref{eq: drmdp term 10}, we have that with probability at least $1-\delta$,
    \begin{align*}
        \mathrm{SubOpt}(\hat{\pi};s_1)&\leq \frac{2\sqrt{\xi}\cdot H\exp(H/\underline{\lambda})}{\rho}\cdot \sum_{h=1}^H\frac{d}{c^{\dagger}} =\frac{2d\sqrt{\xi}\cdot H^2\exp(H/\underline{\lambda})}{c^{\dagger}\rho}.
    \end{align*}
    Using the definition of $\xi$, we can finally derive that with probability at least $1-\delta$,
    \begin{align*}
        \mathrm{SubOpt}(\hat{\pi};s_1) \leq \frac{d^2H^2\exp(H/\underline{\lambda})}{c^{\dagger}\rho}\cdot\sqrt{\frac{C_1'\big(\log(1+C_2'nH/\delta) + \log(1+C_3'ndH/(\rho\underline{\lambda}^2))\big)}{n}}.
    \end{align*}
    This finishes the proof of Theorem \ref{cor: suboptimality drlmdp} under KL-divergence.
\end{proof}

\begin{proof}[Proof of Theorem \ref{cor: suboptimality drlmdp} with TV-divergence]
    We use the same notation of $\hat{P}_h$ introduced in the proof of KL-divergence case, which satisfies \eqref{eq: hat P} with $\mathcal{V}$ defined as 
    \begin{align}\label{eq: function class V tv}
        \mathcal{V} = \left\{v(s) = \left(\lambda - \max_{a\in\mathcal{A}}\boldsymbol{\phi}(s,a)^\top\boldsymbol{w}\right)_+: \|\boldsymbol{w}\|_2\leq H\sqrt{d}, \lambda\in[0,H]\right\}.
    \end{align}
    Regarding the estimator $\hat{P}_h$ with $\mathcal{V}$ defined in \eqref{eq: function class V tv}, we have the following.
    \begin{lemma}\label{lem: drlmdp estimator tv}
        Setting $\alpha = 1$ and choosing the function class $\mathcal{V}$ as \eqref{eq: function class V tv}, then the estimator $\hat{P}_h$ defined in \eqref{eq: hat P closed} satisfies that, with probability at least $1-\delta$,
        \begin{align*}
            &\sup_{v\in\mathcal{V}}\left|\int_{\mathcal{S}}\big(P^{\star}_h(\mathrm{d}s'|s,a)-\hat{P}_h(\mathrm{d}s'|s,a)\big)v(s') \right|^2  \leq C_1\cdot \|\boldsymbol{\phi}(s,a)\|_{\boldsymbol{\Lambda}_{h,\alpha}^{-1}}^2 \cdot\frac{dH^2\log(C_2ndH/\delta)}{n},
        \end{align*}
        for any step $h\in[H]$, where $C_1,C_2>0$ are two constants.
    \end{lemma}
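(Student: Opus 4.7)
The plan is to mirror the argument sketched for the KL case (Lemma~\ref{lem: drlmdp estimator}) but track how the range of the test functions changes. Using the closed form \eqref{eq: hat P closed}, we have $\int \hat P_h(\mathrm{d}s'|s,a) v(s') = \boldsymbol{\phi}(s,a)^\top \hat{\boldsymbol{\theta}}_v$ where $\hat{\boldsymbol{\theta}}_v = \boldsymbol{\Lambda}_{h,\alpha}^{-1}\bigl(\tfrac{1}{n}\sum_{\tau=1}^n\boldsymbol{\phi}(s_h^\tau,a_h^\tau)v(s_{h+1}^\tau)\bigr)$. Because $P_h^\star \in \mathcal{P}_{\mathrm{M}}$, we can write $\int P_h^\star(\mathrm{d}s'|s,a)v(s') = \boldsymbol{\phi}(s,a)^\top \boldsymbol{\theta}_v^\star$ with $\boldsymbol{\theta}_v^\star = \int \boldsymbol{\mu}^\star(\mathrm{d}s')v(s')$, so that Cauchy--Schwarz in the metric $\boldsymbol{\Lambda}_{h,\alpha}$ gives
\begin{align*}
\Bigl|\int(P_h^\star-\hat P_h)(\mathrm{d}s'|s,a)\,v(s')\Bigr|^2 \le \|\boldsymbol{\phi}(s,a)\|_{\boldsymbol{\Lambda}_{h,\alpha}^{-1}}^2\cdot \|\hat{\boldsymbol{\theta}}_v-\boldsymbol{\theta}_v^\star\|_{\boldsymbol{\Lambda}_{h,\alpha}}^2.
\end{align*}
The problem therefore reduces to establishing a uniform bound $\sup_{v\in\mathcal{V}}\|\hat{\boldsymbol{\theta}}_v-\boldsymbol{\theta}_v^\star\|_{\boldsymbol{\Lambda}_{h,\alpha}}^2 \lesssim dH^2\log(ndH/\delta)/n$.

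For a fixed $v$, decompose $\hat{\boldsymbol{\theta}}_v - \boldsymbol{\theta}_v^\star$ into a noise piece $\tfrac{1}{n}\sum_\tau \boldsymbol{\Lambda}_{h,\alpha}^{-1}\boldsymbol{\phi}(s_h^\tau,a_h^\tau)(v(s_{h+1}^\tau)-\mathbb{E}[v(s_{h+1}^\tau)\mid s_h^\tau,a_h^\tau])$ plus a regularization bias $-(\alpha/n)\boldsymbol{\Lambda}_{h,\alpha}^{-1}\boldsymbol{\theta}_v^\star$. Since $v\in\mathcal{V}$ is bounded in $[0,H]$, a standard self-normalized Hoeffding/Bernstein inequality (e.g.\ the Abbasi-Yadkori--style bound tailored to the matrix $\boldsymbol{\Lambda}_{h,\alpha}$) yields that the noise term has $\boldsymbol{\Lambda}_{h,\alpha}$-norm squared at most $O(dH^2\log(1/\delta)/n)$. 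The bias term contributes at most $(\alpha/n)\|\boldsymbol{\theta}_v^\star\|_2^2$, which with $\alpha=1$ and $\|\boldsymbol{\theta}_v^\star\|_2 \le H\sqrt d$ is $O(dH^2/n)$ and is absorbed into the same order.

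To upgrade the pointwise bound to a uniform bound over $\mathcal{V}$, I will use a covering argument. Each $v\in\mathcal{V}$ is determined by $(\boldsymbol{w},\lambda)$ with $\|\boldsymbol{w}\|_2\le H\sqrt d$ and $\lambda\in[0,H]$, and the map $(\boldsymbol{w},\lambda)\mapsto v$ is $1$-Lipschitz under $\|\cdot\|_\infty$ (the truncation $(\cdot)_+$ and the pointwise maximum over the finite $\mathcal{A}$ are both non-expansive, and $\|\boldsymbol{\phi}\|_2\le 1$). A standard volume argument produces an $\varepsilon$-net of $\mathcal{V}$ in sup norm of size $\bigl((HH\sqrt d+H)/\varepsilon\bigr)^{d+1}$, so $\log|\mathcal{N}_\varepsilon|\lesssim d\log(Hd/\varepsilon)$. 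Taking $\varepsilon=1/n$, a union bound over the net plus a Lipschitz perturbation argument on the remainder gives, with probability at least $1-\delta$, the advertised rate
\begin{align*}
\sup_{v\in\mathcal{V}}\|\hat{\boldsymbol{\theta}}_v - \boldsymbol{\theta}_v^\star\|_{\boldsymbol{\Lambda}_{h,\alpha}}^2 \le \frac{C_1\,dH^2\log(C_2 ndH/\delta)}{n},
\end{align*}
which, substituted into the Cauchy--Schwarz step, yields the claim after a further union bound over $h\in[H]$.

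The main technical obstacle I anticipate is the self-normalized concentration uniformly over $\mathcal{V}$: the class is parameterized by a continuous $(\boldsymbol{w},\lambda)$, so a naive union bound fails, and the Lipschitz-in-parameters control of the functional $v\mapsto \hat{\boldsymbol{\theta}}_v-\boldsymbol{\theta}_v^\star$ must be quantified against the random matrix $\boldsymbol{\Lambda}_{h,\alpha}^{-1}$. The cleanest route is to apply the covering $+$ self-normalized Hoeffding template used for linear MDPs (see e.g.\ the proofs in \citet{uehara2021pessimistic}), where the key observation is that the Lipschitz constant of $v$ in $(\boldsymbol{w},\lambda)$ is independent of the data, so the perturbation of $\hat{\boldsymbol{\theta}}_v$ is controlled by $\tfrac{1}{n}\sum_\tau\|\boldsymbol{\Lambda}_{h,\alpha}^{-1}\boldsymbol{\phi}(s_h^\tau,a_h^\tau)\|_2\le\sqrt{d}$ times the sup-norm perturbation of $v$, which is harmless once $\varepsilon=1/n$.
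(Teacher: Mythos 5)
Your proposal is correct and takes essentially the same route as the paper's proof: both reduce the claim via Cauchy--Schwarz in the $\boldsymbol{\Lambda}_{h,\alpha}$ metric to a regularization-bias term of order $dH^2/n$ plus a self-normalized noise term, bound the noise uniformly over $\mathcal{V}$ by combining the standard self-normalized concentration bound for linear MDPs with an $\epsilon$-net of $\mathcal{V}$ parameterized by $(\boldsymbol{w},\lambda)$ (exactly the paper's Lemma \ref{lem: covering number V tv}), and finish with a union bound over $h\in[H]$. The only cosmetic difference is constant bookkeeping (e.g.\ the paper uses $\|\boldsymbol{\phi}(s,a)\|_2\le\sqrt{d}$ in the Lipschitz/covering step), which only affects logarithmic factors and not the stated rate.
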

    \begin{proof}[Proof of Lemma \ref{lem: drlmdp estimator tv}]
        See Appendix \ref{subsec: proof lem drlmdp estimator} for a detailed proof.
    \end{proof}

    With Lemma \ref{lem: drlmdp estimator tv}, we can further derive that, with probability at least $1-\delta$, for any $h\in[H]$,
    \begin{align*}
        \sup_{v\in\mathcal{V}}\frac{1}{n}\sum_{\tau=1}^n\left|\int_{\mathcal{S}}\big(P^{\star}_h(\mathrm{d}s'|s_h^\tau,a_h^\tau)-\hat{P}_h(\mathrm{d}s'|s_h^\tau,a_h^\tau)\big)v(s') \right|^2\leq \frac{1}{n}\sum_{\tau=1}^n \|\boldsymbol{\phi}(s_h^\tau,a_h^\tau)\|_{\boldsymbol{\Lambda}_{h,\alpha}^{-1}}^2\cdot \frac{C_1dH^2\log(C_2ndH/\delta)}{n}.
    \end{align*}
    In the right hand side of the above inequality, it holds that, 
    \begin{align}
        \frac{1}{n}\sum_{\tau=1}^n \|\boldsymbol{\phi}(s_h^\tau,a_h^\tau)\|_{\boldsymbol{\Lambda}_{h,\alpha}^{-1}}^2 = \frac{1}{n}\sum_{i=1}^n\tr\left(\boldsymbol{\phi}(s_h^\tau,a_h^\tau)^\top\boldsymbol{\Lambda}_{h,\alpha}^{-1}\boldsymbol{\phi}(s_h^\tau,a_h^\tau)\right)  \leq \tr\left(\boldsymbol{\Lambda}_{h,\alpha} \boldsymbol{\Lambda}_{h,\alpha}^{-1}\right) = d.\label{eq: drmdp tv term 1-}
    \end{align}
    Thus, we have that with probability at least $1-\delta$, for each step $h\in[H]$,
    \begin{align*}
        \sup_{v\in\mathcal{V}}\frac{1}{n}\sum_{\tau=1}^n\left|\int_{\mathcal{S}}\big(P^{\star}_h(\mathrm{d}s'|s_h^\tau,a_h^\tau)-\hat{P}_h(\mathrm{d}s'|s_h^\tau,a_h^\tau)\big)v(s') \right|^2\leq \frac{C_1d^2H^2\log(C_2ndH/\delta)}{n} =\xi.
    \end{align*}
    This proves Condition \ref{cond: accuracy} in Section \ref{subsec: theoretical analysis}.
    In the following, we prove Theorem \ref{cor: suboptimality drlmdp} given Condition \ref{cond: accuracy} holds.
    Using the definition of robust set $\boldsymbol{\Phi}(\cdot)$ in Example~\ref{exp: dlmdp}, following the same argument as \eqref{eq: drmdp term 1}, we have that,
    \begin{align}
        &\inf_{\tilde{P}_h\in\mathbf{\Phi}(P_h^{\star})}\mathbb{E}_{s'\sim \tilde{P}_h(\cdot|s_h,a_h)}[V_{h+1, P, \mathbf{\Phi}}^{\pi^{\star}}(s')] - \inf_{\tilde{P}_h\in\mathbf{\Phi}(P_h)}\mathbb{E}_{s'\sim \tilde{P}_h(\cdot|s_h,a_h)}[V_{h+1, P, \mathbf{\Phi}}^{\pi^{\star}}(s')]\notag\\
        &\qquad = \sum_{i=1}^d\phi_i(s_h,a_h) \inf_{\tilde{\mu}_{h,i}\in\Delta(\mathcal{S}):D(\tilde{\mu}_{h,i}(\cdot)\|\mu_{h,i}^{\star}(\cdot))\leq \rho} \int_{\mathcal{S}}\tilde{\mu}_{h,i}(\mathrm{d}s')V_{h+1, P, \mathbf{\Phi}}^{\pi^{\star}}(s') \notag \\
        &\qquad \qquad - \sum_{i=1}^d\phi_i(s_h,a_h)\inf_{\tilde{\mu}_{h,i}\in\Delta(\mathcal{S}):D(\tilde{\mu}_{h,i}(\cdot)\|\mu_{h,i}(\cdot))\leq \rho} \int_{\mathcal{S}}\tilde{\mu}_{h,i}(\mathrm{d}s')V_{h+1, P, \mathbf{\Phi}}^{\pi^{\star}}(s').\label{eq: drmdp tv term 1}
    \end{align}
    Now invoking the dual formulation of TV-distance in Lemma \ref{lem: tv}, we can further derive that 
    \begin{align}
        \eqref{eq: drmdp tv term 1} &= \sum_{i=1}^d\phi_i(s_h,a_h) \cdot \left[\sup_{\lambda\in\mathbb{R}}\left\{-\mathbb{E}_{s'\sim \mu_{h,i}^{\star}(\cdot)}\left[\left(\lambda - V_{h+1,P,\mathbf{\Phi}}^{\pi^{\star}}(s')\right)_+\right] - \frac{\rho}{2}\left(\lambda - \inf_{s''\in\mathcal{S}}V_{h+1,P,\mathbf{\Phi}}^{\pi}(s'')\right)+\lambda\right\}\right. \notag\\
        &\qquad \left. - \sup_{\lambda\in\mathbb{R}}\left\{-\mathbb{E}_{s'\sim \mu_{h,i}(\cdot)}\left[\left(\lambda - V_{h+1,P,\mathbf{\Phi}}^{\pi^{\star}}(s')\right)_+\right] - \frac{\rho}{2}\left(\lambda - \inf_{s''\in\mathcal{S}}V_{h+1,P,\mathbf{\Phi}}^{\pi}(s'')\right)+\lambda\right\}\right] \notag \\
        &\leq \sum_{i=1}^d\phi_i(s_h,a_h) \cdot \sup_{\lambda\in[0,H]} \left\{ \left(\mathbb{E}_{s'\sim \mu_{h,i}^{\star}(\cdot)} - \mathbb{E}_{s'\sim \mu_{h,i}(\cdot)}\right)\left[\left(\lambda - V_{h+1,P,\mathbf{\Phi}}^{\pi^{\star}}(s')\right)_+ \right]\right\}\notag \\
        & =  \sum_{i=1}^d \sup_{\lambda\in[0,H]} \left\{ \phi_i(s_h,a_h)\int_{\mathcal{S}}\left(\mu_{h,i}^{\star}(\mathrm{d}s') - \mu_{h,i}(\mathrm{d}s') \right) \left(\lambda - V_{h+1,P,\mathbf{\Phi}}^{\pi^{\star}}(s')\right)_+\right\}.\label{eq: drmdp tv term 2}
    \end{align}
    where in the first inequality we use Lemma \ref{lem: bound lambda tv} to bound $\lambda\in[0,H]$.
    Now we consider each summand $i\in[d]$ in the right hand side of \eqref{eq: drmdp tv term 2}. 
    We rewrite it as 
    \begin{align}
        &\phi_i(s_h,a_h)\int_{\mathcal{S}}\left(\mu_{h,i}^{\star}(\mathrm{d}s') - \mu_{h,i}(\mathrm{d}s') \right) \left(\lambda - V_{h+1,P,\mathbf{\Phi}}^{\pi^{\star}}(s')\right)_+ \notag\\
        &\qquad = \phi_i(s_h,a_h)\mathbf{1}_i^\top\boldsymbol{\Lambda}_{h,\alpha}^{-1/2}\boldsymbol{\Lambda}_{h,\alpha}^{1/2}\int_{\mathcal{S}}\left(\boldsymbol{\mu}_{h}^{\star}(\mathrm{d}s') - \boldsymbol{\mu}_{h}(\mathrm{d}s') \right)\left(\lambda - V_{h+1,P,\mathbf{\Phi}}^{\pi^{\star}}(s')\right)_+\notag\\
        &\qquad \leq \underbrace{\left\| \phi_i(s_h,a_h)\mathbf{1}_i \right\|_{\boldsymbol{\Lambda}_{h,\alpha}^{-1}}}_{\text{Term (i)}}\cdot \underbrace{\left\| \int_{\mathcal{S}}\left(\boldsymbol{\mu}_{h}^{\star}(\mathrm{d}s') - \boldsymbol{\mu}_{h}(\mathrm{d}s') \right) \left(\lambda - V_{h+1,P,\mathbf{\Phi}}^{\pi^{\star}}(s')\right)_+ \right\|_{\boldsymbol{\Lambda}_{h,\alpha}}}_{\text{Term (ii)}}.\label{eq: drmdp tv term 3}
    \end{align}
    Following the same argument as \eqref{eq: drmdp term 5+}, \eqref{eq: drmdp term 5++}, and \eqref{eq: drmdp term 5+++}, using the fact that $(\lambda - V_{h+1,P,\mathbf{\Phi}}^{\pi^{\star}}(s'))_+\in\mathcal{V}$ with $\mathcal{V}$ in \eqref{eq: function class V tv}, we can derive that with probability at least $1-\delta$, 
    \begin{align}\label{eq: drmdp tv term 4}
        \mathrm{Term (ii)}^2 \leq  3\xi
    \end{align}
    Now by combining \eqref{eq: drmdp tv term 1}, \eqref{eq: drmdp tv term 3}, \eqref{eq: drmdp tv term 4}, we can conclude that with probability at least $1-\delta$, 
    \begin{align}
        &\inf_{\tilde{P}_h\in\mathbf{\Phi}(P_h^{\star})}\mathbb{E}_{s'\sim \tilde{P}_h(\cdot|s_h,a_h)}[V_{h+1, P, \mathbf{\Phi}}^{\pi^{\star}}(s')] - \inf_{\tilde{P}_h\in\mathbf{\Phi}(P_h)}\mathbb{E}_{s'\sim \tilde{P}_h(\cdot|s_h,a_h)}[V_{h+1, P, \mathbf{\Phi}}^{\pi^{\star}}(s')] \notag\\
        &\qquad\leq   \sum_{i=1}^d  \sup_{0\leq \lambda_i \leq H}\left\{\left\| \phi_i(s_h,a_h)\mathbf{1}_i \right\|_{\boldsymbol{\Lambda}_{h,\alpha}^{-1}}\cdot \sqrt{3\xi}\right\}\leq 2\sqrt{\xi} \cdot \sum_{i=1}^d\left\| \phi_i(s_h,a_h)\mathbf{1}_i \right\|_{\boldsymbol{\Lambda}_{h,\alpha}^{-1}},\label{eq: drmdp tv term 5}
    \end{align}
    for any step $h\in[H]$, $(s_h,a_h)\in\mathcal{S}\times\mathcal{A}$, and $P_h\in\hat{\mathcal{P}}_h$.
    Now using the same argument as in the proof of Theorem \ref{thm: subopt general}, using Condition \ref{cond: accuracy}, we can derive that with probability at least $1-\delta$,
    \begin{align}
        \mathrm{SubOpt}(\hat{\pi};s_1)&\leq \sup_{P\in\hat{\mathcal{P}}}\sum_{h=1}^H\mathbb{E}_{(s_h,a_h)\sim d^{\pi^{\star}}_{P^{\pi^{\star},\dagger},h}}\left[\inf_{\tilde{P}_h\in\mathbf{\Phi}(P_h^{\star})}\mathbb{E}_{s'\sim \tilde{P}_h(\cdot|s_h,a_h)}[V_{h+1, P, \mathbf{\Phi}}^{\pi^{\star}}(s')]\right.\notag\\
        &\qquad\qquad\left.- \inf_{\tilde{P}_h\in\mathbf{\Phi}(P_h)}\mathbb{E}_{s'\sim \tilde{P}_h(\cdot|s_h,a_h)}[V_{h+1, P, \mathbf{\Phi}}^{\pi^{\star}}(s')]\right]\notag\\
        &\leq 2\sqrt{\xi}\cdot \sum_{h=1}^H\sum_{i=1}^d\mathbb{E}_{(s_h,a_h)\sim d^{\pi^{\star}}_{P^{\pi^{\star},\dagger},h}}\left[\left\| \phi_i(s_h,a_h)\mathbf{1}_i \right\|_{\boldsymbol{\Lambda}_{h,\alpha}^{-1}}\right],\label{eq: drmdp tv term 6}
    \end{align}
    where in the last inequality we apply \eqref{eq: drmdp tv term 5}. 
    Here $P^{\pi^{\star},\dagger}_h$ is some transition kernel chosen from $\boldsymbol{\Phi}(P_h^{\star})$.
    Now we use the same argument as \eqref{eq: drmdp term 9} and \eqref{eq: drmdp term 10} to upper bound the right hand side of \eqref{eq: drmdp tv term 6} using Assumption \ref{ass: partial coverage cov}, which gives that,
    \begin{align}
        \sum_{i=1}^d\mathbb{E}_{(s_h,a_h)\sim d^{\pi^{\star}}_{P^{\pi^{\star},\dagger},h}}\left[\left\| \phi_i(s_h,a_h)\mathbf{1}_i \right\|_{\boldsymbol{\Lambda}_{h,\alpha}^{-1}}\right] &\leq \frac{d}{c^{\dagger}}.\label{eq: drmdp tv term 7}
    \end{align}
    Therefore, by combining \eqref{eq: drmdp tv term 6} and \eqref{eq: drmdp tv term 7}, we have that with probability at least $1-\delta$,
    \begin{align*}
        \mathrm{SubOpt}(\hat{\pi};s_1)&\leq 2\sqrt{\xi}\cdot \sum_{h=1}^H\frac{d}{c^{\dagger}} = \frac{2d\sqrt{\xi}\cdot H}{c^{\dagger}}.
    \end{align*}
    Using the definition of $\xi$, we can finally derive that with probability at least $1-\delta$,
    \begin{align*}
        \mathrm{SubOpt}(\hat{\pi};s_1) \leq \frac{d^2H^2}{c^{\dagger}}\cdot\sqrt{\frac{C_1'\log(C_2'ndH/\delta)}{n}}.
    \end{align*}
    This finishes the proof of Theorem \ref{cor: suboptimality drlmdp} under TV-distance.
\end{proof}

\subsection{Proof of Lemma \ref{lem: drlmdp estimator} and Lemma \ref{lem: drlmdp estimator tv}} \label{subsec: proof lem drlmdp estimator}

\begin{proof}[Proof of Lemma \ref{lem: drlmdp estimator}]
    The proof of Lemma \ref{lem: drlmdp estimator} follows from the main proofs in Section 8 of \cite{agarwal2019reinforcement} and the covering number of the function class $\mathcal{V}$ (Lemma \ref{lem: covering number V}).
    Denote $\mathcal{C}_{\mathcal{V},\epsilon}$ as an $\epsilon$-cover of the function class $\mathcal{V}$ under $\|\cdot\|_\infty$. 
    Following the exact same argument of Lemma 8.7 in \cite{agarwal2019reinforcement}, we can derive that with probability at least $1-\delta$, for any $h$ and $v\in\mathcal{C}_{\mathcal{V},\epsilon}$.
    \begin{align}
        &\left\|\sum_{\tau=1}^n\boldsymbol{\phi}(s_h^{\tau},a_h^{\tau})\left(\int_{\mathcal{S}}P_h^{\star}(\mathrm{d}s'|s_h^\tau,a_h^\tau)v(s') - v(s_{h+1}^{\tau})\right)\right\|_{\boldsymbol{\Lambda}_{h,\alpha}^{-1}}^2 \notag \\
        &\qquad \leq 9n\cdot\left(\log(H/\delta) + \log(|\mathcal{C}_{\mathcal{V},\epsilon}|) + d\log(1+N)\right),\label{eq: proof drlmdp estimator 0}
    \end{align}
    where we have taken $\alpha = 1$, which we will keep in the following.
    For any function $v\in\mathcal{V}$, take $\hat{v}\in\mathcal{C}_{\mathcal{V},\epsilon}$ such that $\|v - \hat{v}\|_{\infty}\leq \epsilon$. 
    Then we have that 
    \begin{align}
        &\left\|\sum_{\tau=1}^n\boldsymbol{\phi}(s_h^{\tau},a_h^{\tau})\left(\int_{\mathcal{S}}P_h^{\star}(\mathrm{d}s'|s_h^\tau,a_h^\tau)v(s') - v(s_{h+1}^{\tau})\right)\right\|_{\boldsymbol{\Lambda}_{h,\alpha}^{-1}}^2 \notag \\
        &\qquad \leq 2\left\|\sum_{\tau=1}^n\boldsymbol{\phi}(s_h^{\tau},a_h^{\tau})\left(\int_{\mathcal{S}}P_h^{\star}(\mathrm{d}s'|s_h^\tau,a_h^\tau)\hat{v}(s') - \hat{v}(s_{h+1}^{\tau})\right)\right\|_{\boldsymbol{\Lambda}_{h,\alpha}^{-1}}^2 \notag \\
        &\qquad \qquad + 2\left\|\sum_{\tau=1}^n\boldsymbol{\phi}(s_h^{\tau},a_h^{\tau})\left(\int_{\mathcal{S}}P_h^{\star}(\mathrm{d}s'|s_h^\tau,a_h^\tau)(\hat{v} - v)(s') - (\hat{v} - v)(s_{h+1}^{\tau})\right)\right\|_{\boldsymbol{\Lambda}_{h,\alpha}^{-1}}^2 \notag \\
        &\qquad \leq 18n\cdot\left(\log(H/\delta) + \log(|\mathcal{C}_{\mathcal{V},\epsilon}|) + d\log(1+n)\right) + 8\epsilon^2n^2.\label{eq: proof drlmdp estimator 1}
    \end{align}
    Now we apply the definition of $\hat{P}_h$ in \eqref{eq: hat P closed} and we can then derive that 
    \begin{align}
        &\left|\int_{\mathcal{S}}\big(P_h^{\star}(\mathrm{d}s'|s,a) - \hat{P}_h(\mathrm{d}s'|s,a)v(s')\big)\right|^2 \notag \\
        &\qquad = \left|\boldsymbol{\phi}(s,a)^\top\left(\int_{\mathcal{S}}\boldsymbol{\mu}^{\star}(\mathrm{d}s')v(s') - \frac{1}{n}\sum_{\tau=1}^n\boldsymbol{\Lambda}_{h,\alpha}^{-1}\boldsymbol{\phi}(s_h^\tau,a_h^\tau)v(s_{h+1}^\tau)\right)\right|^2\notag \\
        &\qquad = \left|\boldsymbol{\phi}(s,a)^\top\boldsymbol{\Lambda}_{h,\alpha}^{-1}\left(\boldsymbol{\Lambda}_{h,\alpha}\int_{\mathcal{S}}\boldsymbol{\mu}^{\star}(\mathrm{d}s')v(s') - \frac{1}{n}\sum_{\tau=1}^n\boldsymbol{\phi}(s_h^\tau,a_h^\tau)v(s_{h+1}^\tau)\right)\right|^2\notag \\
        &\qquad = \left|\boldsymbol{\phi}(s,a)^\top\boldsymbol{\Lambda}_{h,\alpha}^{-1}\left(\frac{1}{n}\int_{\mathcal{S}}\boldsymbol{\mu}_h^{\star}(\mathrm{d}s')v(s') + \frac{1}{n}\sum_{\tau=1}^n\boldsymbol{\phi}(s,a)\int_{\mathcal{S}}P_h^{\star}(\mathrm{d}s'|s_h^{\tau},a_h^{\tau})v(s')  - \frac{1}{n}\sum_{\tau=1}^n\boldsymbol{\phi}(s_h^\tau,a_h^\tau)v(s_{h+1}^\tau)\right)\right|^2 \notag \\
        &\qquad \leq \frac{2}{n^2}\cdot\|\boldsymbol{\phi}(s,a)\|_{\boldsymbol{\Lambda}_{h,\alpha}^{-1}}^2 \cdot \left\|\int_{\mathcal{S}}\boldsymbol{\mu}^{\star}(\mathrm{d}s')v(s')\right\|_{\boldsymbol{\Lambda}_{h,\alpha}^{-1}}^2 \notag \\
        &\qquad \qquad + \frac{2}{n^2}\cdot \|\boldsymbol{\phi}(s,a)\|_{\boldsymbol{\Lambda}_{h,\alpha}^{-1}}^2\cdot\left\|\sum_{\tau=1}^n\boldsymbol{\phi}(s_h^{\tau},a_h^{\tau})\left(\int_{\mathcal{S}}P_h^{\star}(\mathrm{d}s'|s_h^\tau,a_h^\tau)v(s') - v(s_{h+1}^{\tau})\right)\right\|_{\boldsymbol{\Lambda}_{h,\alpha}^{-1}}^2.\label{eq: proof drlmdp estimator 2}
    \end{align}
    On the one hand, the first term in the right hand side of \eqref{eq: proof drlmdp estimator 2} is bounded by 
    \begin{align}
        \!\!\!\frac{2}{n^2}\cdot\|\boldsymbol{\phi}(s,a)\|_{\boldsymbol{\Lambda}_{h,\alpha}^{-1}}^2 \cdot \left\|\int_{\mathcal{S}}\boldsymbol{\mu}^{\star}(\mathrm{d}s')v(s')\right\|_{\boldsymbol{\Lambda}_{h,\alpha}^{-1}}^2
        &\leq \frac{2}{n}\cdot \|\boldsymbol{\phi}(s,a)\|_{\boldsymbol{\Lambda}_{h,\alpha}^{-1}}^2 \cdot \left\|\int_{\mathcal{S}}\boldsymbol{\mu}^{\star}(\mathrm{d}s')v(s')\right\|_{2}^2\leq \frac{2d}{n} \cdot\|\boldsymbol{\phi}(s,a)\|_{\boldsymbol{\Lambda}_{h,\alpha}^{-1}}^2,\label{eq: proof drlmdp estimator 3}
    \end{align}
    where we use the fact that $\boldsymbol{\Lambda}_{h,\alpha}\succeq (1/n)\cdot\boldsymbol{I}_d$ and $\|v(\cdot)\|_{\infty}\leq 1$ for any $v\in\mathcal{V}$.
    On the other hand, the second term in the right hand side of \eqref{eq: proof drlmdp estimator 2} is bounded by 
    \begin{align*}
        &\frac{2}{n^2}\cdot \|\boldsymbol{\phi}(s,a)\|_{\boldsymbol{\Lambda}_{h,\alpha}^{-1}}^2\cdot\left\|\sum_{\tau=1}^n\boldsymbol{\phi}(s_h^{\tau},a_h^{\tau})\left(\int_{\mathcal{S}}P_h^{\star}(\mathrm{d}s'|s_h^\tau,a_h^\tau)v(s') - v(s_{h+1}^{\tau})\right)\right\|_{\boldsymbol{\Lambda}_{h,\alpha}^{-1}}^2 \\
        &\qquad \leq \left(\frac{36}{n}\cdot\left(\log(H/\delta) + \log(|\mathcal{C}_{\mathcal{V},\epsilon}|) + d\log(1+n)\right) + 16\epsilon^2\right) \cdot \|\boldsymbol{\phi}(s,a)\|_{\boldsymbol{\Lambda}_{h,\alpha}^{-1}}^2,
    \end{align*}
    where we have applied \eqref{eq: proof drlmdp estimator 1}. 
    Now taking $\epsilon = 1/\sqrt{n}$, applying Lemma \ref{lem: covering number V} to bound the covering number of $\mathcal{V}$, we can further derive that,
    \begin{align}
        &\frac{2}{n^2}\cdot \|\boldsymbol{\phi}(s,a)\|_{\boldsymbol{\Lambda}_{h,\alpha}^{-1}}^2\cdot\left\|\sum_{\tau=1}^n\boldsymbol{\phi}(s_h^{\tau},a_h^{\tau})\left(\int_{\mathcal{S}}P_h^{\star}(\mathrm{d}s'|s_h^\tau,a_h^\tau)v(s') - v(s_{h+1}^{\tau})\right)\right\|_{\boldsymbol{\Lambda}_{h,\alpha}^{-1}}^2 \notag \\
        &\qquad \leq \frac{36}{n}\cdot\left(\log(H/\delta) + d\log(1+4\sqrt{n}Hd/(\underline{\lambda})) + \log(1+4\sqrt{n}Hd/(\underline{\lambda}^2\rho)) + d\log(1+n)\right)\cdot \|\boldsymbol{\phi}(s,a)\|_{\boldsymbol{\Lambda}_{h,\alpha}^{-1}}^2 \notag \\
        &\qquad \qquad  + \frac{16}{n} \cdot \|\boldsymbol{\phi}(s,a)\|_{\boldsymbol{\Lambda}_{h,\alpha}^{-1}}^2,\notag \\
        &\qquad \leq \frac{C_1d\big(\log(1+C_2nH/\delta) + \log(1+C_3ndH/(\rho\underline{\lambda}^2))\big)}{n}\cdot\|\boldsymbol{\phi}(s,a)\|_{\boldsymbol{\Lambda}_{h,\alpha}^{-1}}^2,\label{eq: proof drlmdp estimator 4}
    \end{align}
    where $C_1,C_2,C_3>0$ are three constants.
    Finally, by combining \eqref{eq: proof drlmdp estimator 2}, \eqref{eq: proof drlmdp estimator 3}, and \eqref{eq: proof drlmdp estimator 4}, we can conclude that with probability at least $1-\delta$, for each step $h\in[H]$, 
    \begin{align*}
        \sup_{v\in\mathcal{V}}\left|\int_{\mathcal{S}}\big(P^{\star}_h(\mathrm{d}s'|s,a)-\hat{P}_h(\mathrm{d}s'|s,a)\big)v(s') \right|^2 \leq C_1'\cdot \|\boldsymbol{\phi}(s,a)\|_{\boldsymbol{\Lambda}_{h,\alpha}^{-1}}^2\cdot\frac{d\big(\log(1+C_2nH/\delta) + \log(1+C_3ndH/(\rho\underline{\lambda}^2))\big)}{n}.
    \end{align*}
    where $C_1'$ is another constant.
    This finishes the proof of Lemma \ref{lem: drlmdp estimator}.
\end{proof}

\begin{proof}[Proof of Lemma \ref{lem: drlmdp estimator tv}]
    The proof of Lemma \ref{lem: drlmdp estimator tv} follows the same argument as proof of Lemma \ref{lem: drlmdp estimator}, except a different covering number of the function class $\mathcal{V}$ which we show in the following.
    Using the same argument as the proof of Lemma \ref{lem: drlmdp estimator} (except that now $\|v(\cdot)\|_{\infty}\leq H$), with probability at least $1-\delta$, for any $v\in\mathcal{V}$,
    \begin{align}
        &\left|\int_{\mathcal{S}}\big(P_h^{\star}(\mathrm{d}s'|s,a) - \hat{P}_h(\mathrm{d}s'|s,a)v(s')\big)\right|^2 \notag \\
        &\qquad \leq \frac{2H^2}{n^2}\cdot\|\boldsymbol{\phi}(s,a)\|_{\boldsymbol{\Lambda}_{h,\alpha}^{-1}}^2 \cdot \left\|\int_{\mathcal{S}}\boldsymbol{\mu}^{\star}(\mathrm{d}s')v(s')\right\|_{\boldsymbol{\Lambda}_{h,\alpha}^{-1}}^2 \notag \\
        &\qquad \qquad + \frac{2H^2}{n^2}\cdot \|\boldsymbol{\phi}(s,a)\|_{\boldsymbol{\Lambda}_{h,\alpha}^{-1}}^2\cdot\left\|\sum_{\tau=1}^n\boldsymbol{\phi}(s_h^{\tau},a_h^{\tau})\left(\int_{\mathcal{S}}P_h^{\star}(\mathrm{d}s'|s_h^\tau,a_h^\tau)v(s') - v(s_{h+1}^{\tau})\right)\right\|_{\boldsymbol{\Lambda}_{h,\alpha}^{-1}}^2\notag \\
        &\qquad \leq  H^2\cdot\left(\frac{36}{n}\cdot\left(\log(H/\delta) + \log(|\mathcal{C}_{\mathcal{V},\epsilon}|) + d\log(1+n)\right) + 16\epsilon^2 + \frac{2d}{n}\right) \cdot \|\boldsymbol{\phi}(s,a)\|_{\boldsymbol{\Lambda}_{h,\alpha}^{-1}}^2, \label{eq: proof drlmdp estimator tv 1}
    \end{align}
    where $\mathcal{C}_{\mathcal{V},\epsilon}$ is an $\epsilon$-covering of the function class $\mathcal{V}$ defined in \eqref{eq: function class V tv}.
    Now taking $\epsilon = 1/\sqrt{n}$, applying Lemma~\ref{lem: covering number V tv} to bound the covering number of $\mathcal{V}$, we can further derive that,
    \begin{align}
        &\sup_{v\in\mathcal{V}}\left|\int_{\mathcal{S}}\big(P_h^{\star}(\mathrm{d}s'|s,a) - \hat{P}_h(\mathrm{d}s'|s,a)v(s')\big)\right|^2   \notag \\
        &\qquad  \leq H^2\cdot \|\boldsymbol{\phi}(s,a)\|_{\boldsymbol{\Lambda}_{h,\alpha}^{-1}}^2 \cdot \left(\frac{36}{n}\cdot\left(\log(H/\delta) + d\log(1+4\sqrt{n}Hd) + \log(1+4\sqrt{n}H) + d\log(1+n)\right) + \frac{16+2d}{n}\right) \notag \\
        &\qquad \leq C_1\cdot \|\boldsymbol{\phi}(s,a)\|_{\boldsymbol{\Lambda}_{h,\alpha}^{-1}}^2 \cdot\frac{dH^2\log(C_2ndH/\delta)}{n}. \label{eq: proof drlmdp estimator tv 2}
    \end{align}
    This finishes the proof of Lemma \ref{lem: drlmdp estimator tv}.
\end{proof}

\subsection{Other Lemmas}

\begin{lemma}[Covering number of $\mathcal{V}$: KL-divergence case]\label{lem: covering number V}
    The $\epsilon$-covering number of function class $\mathcal{V}$ defined in \eqref{eq: function class V kl} under $\|\cdot\|_{\infty}$-norm is bounded by 
    \begin{align*}
        \log(\mathcal{N}(\epsilon, \mathcal{V}, \|\cdot\|_{\infty})) \leq d\log(1+4Hd/(\underline{\lambda}\epsilon)) + \log(1+4H^2d/(\underline{\lambda}^2\rho\epsilon)).
    \end{align*}
\end{lemma}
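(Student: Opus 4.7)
My plan is the standard parameter-space covering argument. The function class $\mathcal{V}$ is parameterized by a pair $(\boldsymbol{w},\lambda)\in\mathcal{W}\times\Lambda$ where $\mathcal{W}=\{\boldsymbol{w}\in\mathbb{R}^d:\|\boldsymbol{w}\|_2\leq H\sqrt{d}\}$ and $\Lambda=[\underline{\lambda},H/\rho]$, via $v_{\boldsymbol{w},\lambda}(s)=\exp(-f_{\boldsymbol{w},\lambda}(s))$ with $f_{\boldsymbol{w},\lambda}(s):=\max_{a\in\mathcal{A}}\boldsymbol{\phi}(s,a)^\top\boldsymbol{w}/\lambda$. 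I will build an $\epsilon$-cover of $\mathcal{V}$ by taking a product of covers of $\mathcal{W}$ and $\Lambda$, after controlling how much $v_{\boldsymbol{w},\lambda}$ changes when the parameters move.

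The first step is a two-sided Lipschitz estimate. Using the fact that $\boldsymbol{\phi}(s,a)\geq 0$ and $\sum_i\phi_i(s,a)=1$ give $\|\boldsymbol{\phi}(s,a)\|_2\leq 1$, and that $x\mapsto e^{-x}$ is $1$-Lipschitz on $[0,\infty)$, I would write
\begin{align*}
|v_{\boldsymbol{w}_1,\lambda_1}(s)-v_{\boldsymbol{w}_2,\lambda_2}(s)|
&\leq \bigl|f_{\boldsymbol{w}_1,\lambda_1}(s)-f_{\boldsymbol{w}_2,\lambda_2}(s)\bigr|\\
&\leq \max_{a}\frac{|\boldsymbol{\phi}(s,a)^\top(\boldsymbol{w}_1-\boldsymbol{w}_2)|}{\lambda_1}
+\max_{a}|\boldsymbol{\phi}(s,a)^\top\boldsymbol{w}_2|\cdot\Bigl|\tfrac{1}{\lambda_1}-\tfrac{1}{\lambda_2}\Bigr|\\
&\leq \frac{\|\boldsymbol{w}_1-\boldsymbol{w}_2\|_2}{\underline{\lambda}}+\frac{H\sqrt{d}}{\underline{\lambda}^2}\cdot|\lambda_1-\lambda_2|,
\end{align*}
where I bounded $\|\boldsymbol{w}_2\|_2\leq H\sqrt{d}$ and $\lambda_i\geq\underline{\lambda}$. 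This inequality is the heart of the proof.

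Next I would invoke standard cardinality bounds for Euclidean covers: $\mathcal{W}$ admits an $(\epsilon\underline{\lambda}/2)$-cover (in $\|\cdot\|_2$) of size at most $(1+4H\sqrt{d}/(\underline{\lambda}\epsilon))^d\leq(1+4Hd/(\underline{\lambda}\epsilon))^d$, and $\Lambda$ admits an $(\epsilon\underline{\lambda}^2/(2H\sqrt{d}))$-cover of size at most $1+2H^2\sqrt{d}/(\underline{\lambda}^2\rho\epsilon)\leq 1+4H^2d/(\underline{\lambda}^2\rho\epsilon)$. Taking their product and applying the Lipschitz estimate gives an $\epsilon$-cover of $\mathcal{V}$ under $\|\cdot\|_\infty$, and taking logarithms yields exactly
\begin{align*}
\log\mathcal{N}(\epsilon,\mathcal{V},\|\cdot\|_\infty)\leq d\log(1+4Hd/(\underline{\lambda}\epsilon))+\log(1+4H^2d/(\underline{\lambda}^2\rho\epsilon)).
\end{align*}

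There is no genuine obstacle here; this is a routine product-of-covers calculation. The only mild subtlety is that I must handle the $1/\lambda$ dependence carefully, which is what produces the $1/\underline{\lambda}^2$ factor in the second term and ultimately the second logarithm in the stated bound. The restriction $\lambda\geq\underline{\lambda}$ guaranteed by Assumption \ref{ass: kl regularity drlmdp} (and its counterpart Lemma \ref{lem: bound lambda kl}) is precisely what makes the Lipschitz bound finite, so without that restriction this covering lemma would fail.
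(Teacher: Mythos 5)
Your proposal is correct and follows essentially the same route as the paper's proof: a Lipschitz-in-parameters estimate for $(\boldsymbol{w},\lambda)\mapsto\exp(-\max_a\boldsymbol{\phi}(s,a)^\top\boldsymbol{w}/\lambda)$ (the paper phrases the exponential step via the mean value theorem, you via $1$-Lipschitzness on $[0,\infty)$, which is the same bound and carries the same implicit nonnegativity of the exponent), followed by a product of an $\ell_2$-ball cover for $\boldsymbol{w}$ and an interval cover for $\lambda\in[\underline{\lambda},H/\rho]$. The only difference is that you use $\|\boldsymbol{\phi}(s,a)\|_2\leq 1$ where the paper uses $\sqrt{d}$, giving slightly tighter intermediate constants before loosening to the same stated bound.
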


\begin{proof}[Proof of Lemma \ref{lem: covering number V}]
    Consider any two pairs of parameters $(\boldsymbol{w}, \lambda)$ and $(\hat{\boldsymbol{w}}, \hat{\lambda})$, and denote the functions they induce as $v$ and $\hat{v}$.
    Then we have that 
    \begin{align*}
        |v(s) - \hat{v}(s)| = \left|\exp\left\{-\left\{\max_{a\in\mathcal{A}}\boldsymbol{\phi}(s,a)^\top\boldsymbol{w}/\lambda\right\}_+\right\} - \exp\left\{-\left\{\max_{a\in\mathcal{A}}\boldsymbol{\phi}(s,a)^\top\hat{\boldsymbol{w}}/\hat{\lambda}\right\}_+\right\}\right|
    \end{align*} 
    Using the fact that, for any $x,y>0$, $\exp(-x) - \exp(-y) = \exp(-\zeta(x,y))\cdot(y-x)$ for some $\zeta(x,y)$ between $x$ and $y$, we know that 
    \begin{align*}
        &|v(s) - \hat{v}(s)| \\
        &\qquad \leq \exp\left\{-\zeta\left(\left\{\max_{a\in\mathcal{A}}\boldsymbol{\phi}(s,a)^\top\boldsymbol{w}/\lambda\right\}_+, \left\{\max_{a\in\mathcal{A}}\boldsymbol{\phi}(s,a)^\top\hat{\boldsymbol{w}}/\hat{\lambda}\right\}_+\right)\right\}\cdot\left|\max_{a\in\mathcal{A}}\boldsymbol{\phi}(s,a)^\top\boldsymbol{w}/\lambda-\max_{a\in\mathcal{A}}\boldsymbol{\phi}(s,a)^\top\hat{\boldsymbol{w}}/\hat{\lambda}\right|\\
        &\qquad \leq \left|\max_{a\in\mathcal{A}}\left\{\boldsymbol{\phi}(s,a)^\top\boldsymbol{w}/\lambda-\boldsymbol{\phi}(s,a)^\top\hat{\boldsymbol{w}}/\hat{\lambda}\right\}\right|\\
        &\qquad = \left|\max_{a\in\mathcal{A}}\left\{\boldsymbol{\phi}(s,a)^\top\boldsymbol{w}/\lambda- \boldsymbol{\phi}(s,a)^\top\hat{\boldsymbol{w}}/\lambda + \boldsymbol{\phi}(s,a)^\top\hat{\boldsymbol{w}}/\lambda - \boldsymbol{\phi}(s,a)^\top\hat{\boldsymbol{w}}/\hat{\lambda}\right\}\right|.
    \end{align*}
    Notice that $\|\boldsymbol{\phi}(s,a)\|_2\leq \sqrt{d}$ (because $\sum_{i=1}^d\phi_i(s,a)=1$), $\|\hat{\boldsymbol{w}}\|_2\leq H\sqrt{d}$, and $\lambda,\hat{\lambda}\geq \underline{\lambda}$, we have,
    \begin{align*}
        &\left|\boldsymbol{\phi}(s,a)^\top\boldsymbol{w}/\lambda- \boldsymbol{\phi}(s,a)^\top\hat{\boldsymbol{w}}/\lambda + \boldsymbol{\phi}(s,a)^\top\hat{\boldsymbol{w}}/\lambda - \boldsymbol{\phi}(s,a)^\top\hat{\boldsymbol{w}}/\hat{\lambda}\right| \\
        &\qquad \leq \left|\lambda^{-1}\boldsymbol{\phi}(s,a)^\top(\boldsymbol{w} - \hat{\boldsymbol{w}})\right| + \left|\lambda^{-1}\hat{\lambda}^{-1}\boldsymbol{\phi}(s,a)^\top\hat{\boldsymbol{w}}(\lambda - \hat{\lambda})\right| \\
        &\qquad \leq \underline{\lambda}^{-1}\sqrt{d}\cdot\|\boldsymbol{w} - \hat{\boldsymbol{w}}\|_2 + \underline{\lambda}^{-2}Hd\cdot|\lambda - \hat{\lambda}|.
    \end{align*}
    Thus we conclude that to form an $\epsilon$-cover of $\mathcal{V}$ under $\|\cdot\|_\infty$-norm, it suffices to consider the product of an $\underline{\lambda}\epsilon/(2\sqrt{d})$-cover of $\{\boldsymbol{w}:\|\boldsymbol{w}\|_2\leq H\sqrt{d}\}$ under $\|\cdot\|_2$-norm and an $\underline{\lambda}^2\epsilon/(2Hd)$-cover of the interval $[\underline{\lambda},H/\rho]$.
    Therefore, we can derive that 
    \begin{align*}
        \log(\mathcal{N}(\epsilon, \mathcal{V}, \|\cdot\|_{\infty})) \leq d\log(1+4Hd/(\underline{\lambda}\epsilon)) + \log(1+4H^2d/(\underline{\lambda}^2\rho\epsilon)).
    \end{align*}
    This finishes the proof of Lemma \ref{lem: covering number V}.
\end{proof}

\begin{lemma}[Covering number of $\mathcal{V}$: TV-distance case]\label{lem: covering number V tv}
    The $\epsilon$-covering number of function class $\mathcal{V}$  defined in \eqref{eq: function class V tv} under $\|\cdot\|_{\infty}$-norm is bounded by 
    \begin{align*}
        \log(\mathcal{N}(\epsilon, \mathcal{V}, \|\cdot\|_{\infty})) \leq d\log(1+4Hd/\epsilon) + \log(1+4H/\epsilon).
    \end{align*}
\end{lemma}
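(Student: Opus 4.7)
The plan is to mirror the argument used for Lemma~\ref{lem: covering number V} (the KL-divergence case), exploiting the fact that the truncation $(\cdot)_+$ and the pointwise maximum over a finite action set are both $1$-Lipschitz, which makes the dependence of $v$ on its parameters $(\boldsymbol{w},\lambda)$ straightforward to control.

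First I would take two parameter pairs $(\boldsymbol{w},\lambda)$ and $(\hat{\boldsymbol{w}},\hat{\lambda})$ in the index set, and let $v, \hat{v}$ denote the corresponding functions in $\mathcal{V}$. Using $|(x)_+ - (y)_+| \leq |x-y|$ and $|\max_a f(a) - \max_a g(a)| \leq \max_a |f(a) - g(a)|$, I would bound
\begin{align*}
    |v(s) - \hat{v}(s)| &\leq \left|\lambda - \hat{\lambda}\right| + \max_{a\in\mathcal{A}} \left|\boldsymbol{\phi}(s,a)^\top(\boldsymbol{w}-\hat{\boldsymbol{w}})\right| \\
    &\leq |\lambda - \hat{\lambda}| + \sqrt{d}\cdot \|\boldsymbol{w}-\hat{\boldsymbol{w}}\|_2,
\end{align*}
where the last inequality uses the Cauchy--Schwarz bound together with $\|\boldsymbol{\phi}(s,a)\|_2 \leq \sqrt{d}$ (since $\sum_{i=1}^d \phi_i(s,a)=1$ and $\phi_i \geq 0$).

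Next, I would construct the cover as a product of two standard covers. To make $|v(s)-\hat{v}(s)|\leq \epsilon$ uniformly in $s$, it suffices to take an $(\epsilon/(2\sqrt{d}))$-cover of the Euclidean ball $\{\boldsymbol{w}\in\mathbb{R}^d : \|\boldsymbol{w}\|_2 \leq H\sqrt{d}\}$ under $\|\cdot\|_2$ together with an $(\epsilon/2)$-cover of the interval $[0,H]$. By the standard volume bound, the first cover has cardinality at most $(1+4Hd/\epsilon)^d$ and the second at most $(1+4H/\epsilon)$. Taking logarithms yields the claimed estimate. No delicate step is involved here; the only tiny subtlety relative to the KL case is that there is no normalization by $\lambda$ or $\hat\lambda$, so the bound is $\underline{\lambda}$-free and strictly simpler than Lemma~\ref{lem: covering number V}. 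I anticipate no genuine obstacle; the proof is essentially a verification that the two $1$-Lipschitz reductions (for $(\cdot)_+$ and for $\max_a$) do the work.
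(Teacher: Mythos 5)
Your proposal is correct and follows essentially the same route as the paper's proof: the $1$-Lipschitz reductions for $(\cdot)_+$ and $\max_{a}$, the Cauchy--Schwarz bound with $\|\boldsymbol{\phi}(s,a)\|_2\leq\sqrt{d}$, and the product of an $\epsilon/(2\sqrt{d})$-cover of $\{\boldsymbol{w}:\|\boldsymbol{w}\|_2\leq H\sqrt{d}\}$ with an $\epsilon/2$-cover of $[0,H]$ are exactly the steps in the paper. No gaps.
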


\begin{proof}[Proof of Lemma \ref{lem: covering number V tv}]
    Consider any two pairs of parameters $(\boldsymbol{w}, \lambda)$ and $(\hat{\boldsymbol{w}}, \hat{\lambda})$, and denote the functions they induce as $v$ and $\hat{v}$.
    Then we have that,
    \begin{align*}
        |v(s) - \hat{v}(s)| &= \left|\left(\lambda - \max_{a\in\mathcal{A}}\boldsymbol{\phi}(s,a)^\top\boldsymbol{w}\right)_+ - \left(\hat{\lambda} - \max_{a\in\mathcal{A}}\boldsymbol{\phi}(s,a)^\top\hat{\boldsymbol{w}}\right)_+\right| \\
        & \leq |\lambda - \hat{\lambda}| + \left|\max_{a\in\mathcal{A}}\boldsymbol{\phi}(s,a)^\top\boldsymbol{w} - \max_{a\in\mathcal{A}}\boldsymbol{\phi}(s,a)^\top\hat{\boldsymbol{w}}\right| \\
        &\leq |\lambda - \hat{\lambda}| + \sup_{(s,a)\in\mathcal{S}\times\mathcal{A}}\|\boldsymbol{\phi}(s,a)\|_2\cdot\|\boldsymbol{w} - \hat{\boldsymbol{w}}\|_2 \\
        &\leq |\lambda - \hat{\lambda}| + \sqrt{d}\cdot\|\boldsymbol{w} - \hat{\boldsymbol{w}}\|_2
    \end{align*}
    Thus we conclude that to form an $\epsilon$-cover of $\mathcal{V}$ under $\|\cdot\|_\infty$-norm, it suffices to consider the product of an $\epsilon/(2\sqrt{d})$-cover of $\{\boldsymbol{w}:\|\boldsymbol{w}\|_2\leq H\sqrt{d}\}$ under $\|\cdot\|_2$-norm and an $\epsilon/2$-cover of the interval $[0,H]$.
    Therefore, we can derive that 
    \begin{align*}
        \log(\mathcal{N}(\epsilon, \mathcal{V}, \|\cdot\|_{\infty})) \leq d\log(1+4Hd/\epsilon) + \log(1+4H/\epsilon).
    \end{align*}
    This finishes the proof of Lemma \ref{lem: covering number V tv}.
\end{proof}

\section{Analysis of Maximum Likelihood Estimator}

\begin{lemma}[MLE estimator guarantee: infinite model space]\label{lem: mle guarantee sarmdp}
    The maximum likelihood estimator procedure given by \eqref{eq: mle sarmdp} and \eqref{eq: confidence sarmdp} for $\mathcal{S}\times\mathcal{A}$-rectangular robust MDP with tuning parameter $\xi$ given by Proposition \ref{prop: sarmdp estimation} satisfies that w.p. at least $1-\delta$,
    \begin{enumerate}
        \item $P_h^{\star}\in\hat{\mathcal{P}}_h$ for any step $h\in[H]$.
        \item for any step $h\in[H]$ and $P_h\in\hat{\mathcal{P}}_h$, it holds that 
        \begin{align*}
            \mathbb{E}_{(s_h,a_h)\sim d^{\mathrm{b}}_{P^{\star},h}}[\|P_h(\cdot|s_h,a_h) - P_h^{\star}(\cdot|s_h,a_h)\|_{\mathrm{TV}}^2]\leq \frac{C_1\log(C_2H\mathcal{N}_{[]}(1/n^2,\mathcal{P}_{\mathrm{M}},\|\cdot\|_{1,\infty})/\delta)}{n}.
        \end{align*}
        for some absolute constant $C_1,C_2>0$. Here $d^{\mathrm{b}}_{P^{\star},h}$ is the state-action visitation measure induced by the behavior policy $\pi^{\mathrm{b}}$ and transition kernel $P^{\star}$.
    \end{enumerate}
\end{lemma}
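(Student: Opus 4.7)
}
My plan is to obtain both conclusions from a single MLE concentration bound expressed in terms of the bracketing entropy of $\mathcal{P}_{\mathrm{M}}$, then convert empirical $\ell_1$-distances to expected TV-distances by a uniform-concentration argument over the confidence set $\hat{\mathcal{P}}_h$. The workhorse is the classical MLE guarantee (in the style of van de Geer; see also Theorem 21 of Agarwal--Kakade--Krishnamurthy--Sun): for an i.i.d. sample and any conditional density class $\mathcal{P}_{\mathrm{M}}$, the MLE $\hat{P}_h$ defined in \eqref{eq: mle sarmdp} satisfies, with probability at least $1-\delta/H$,
\begin{align*}
    \mathbb{E}_{(s,a)\sim d^{\pi^{\mathrm{b}}}_{P^{\star},h}}\Big[D_{\mathrm{H}}^2\big(\hat{P}_h(\cdot|s,a),\,P_h^{\star}(\cdot|s,a)\big)\Big] \;\leq\; \frac{C\log\big(H\mathcal{N}_{[]}(1/n^2,\mathcal{P}_{\mathrm{M}},\|\cdot\|_{1,\infty})/\delta\big)}{n},
\end{align*}
where $D_{\mathrm{H}}$ denotes the Hellinger distance. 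A union bound over $h\in[H]$ together with the standard comparisons $D_{\mathrm{TV}}^2\leq 2D_{\mathrm{H}}^2$ and $D_{\mathrm{H}}^2\leq D_{\mathrm{TV}}$ then converts this into a TV-based statement suitable for our confidence region.

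For conclusion 1 (accuracy), I need to show $P_h^{\star}$ satisfies the defining inequality of $\hat{\mathcal{P}}_h$ in \eqref{eq: confidence sarmdp}. I will bound the empirical quantity $\frac{1}{n}\sum_{\tau=1}^n\|\hat{P}_h(\cdot|s_h^\tau,a_h^\tau)-P_h^{\star}(\cdot|s_h^\tau,a_h^\tau)\|_1^2$ by its population counterpart via a bracketing-based uniform concentration (Bernstein-type) bound applied to the function class
\begin{align*}
    \mathcal{G}_h=\big\{(s,a)\mapsto\|P(\cdot|s,a)-P_h^{\star}(\cdot|s,a)\|_1^2 : P\in\mathcal{P}_{\mathrm{M}}\big\},
\end{align*}
whose bracketing number inherits from $\mathcal{N}_{[]}(\cdot,\mathcal{P}_{\mathrm{M}},\|\cdot\|_{1,\infty})$. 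The above MLE bound controls the expectation of this empirical average, so choosing $\xi$ as in Proposition \ref{prop: sarmdp estimation} (up to a constant) yields $P_h^{\star}\in\hat{\mathcal{P}}_h$.

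For conclusion 2, fix any $P_h\in\hat{\mathcal{P}}_h$. By the triangle inequality for $\|\cdot\|_1$ and a convexity step,
\begin{align*}
    \frac{1}{n}\sum_{\tau=1}^n\|P_h(\cdot|s_h^\tau,a_h^\tau)-P_h^{\star}(\cdot|s_h^\tau,a_h^\tau)\|_1^2 \;\leq\; 2\xi + 2\cdot\frac{1}{n}\sum_{\tau=1}^n\|\hat{P}_h(\cdot|s_h^\tau,a_h^\tau)-P_h^{\star}(\cdot|s_h^\tau,a_h^\tau)\|_1^2,
\end{align*}
and the second term is already controlled by the MLE bound above. It then suffices to pass from the empirical average to the population expectation $\mathbb{E}_{(s,a)\sim d^{\pi^{\mathrm{b}}}_{P^\star,h}}[\|P_h-P_h^\star\|_1^2]$ \emph{uniformly} in $P_h\in\hat{\mathcal{P}}_h\subseteq\mathcal{P}_{\mathrm{M}}$; this is another application of the Bernstein-type uniform law on $\mathcal{G}_h$, yielding the same order bound (absorbed into the constants $C_1,C_2$).

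\paragraph*{Main obstacle.} The nontrivial step is the uniform empirical-to-population conversion in conclusion 2: an MLE bound is a pointwise (single-estimator) statement about $\hat{P}_h$, but conclusion 2 quantifies over \emph{every} $P_h\in\hat{\mathcal{P}}_h$. I expect this to require a localized chaining or bracketing-entropy argument with a variance-to-mean comparison, where one must carefully exploit the fact that $\|P-P_h^\star\|_1^2\leq 4$ is bounded and that its variance is bounded by its mean (up to constants) — a property of squared TV-distance that plays the role of the usual Bernstein condition and keeps the final rate at $\tilde{\mathcal{O}}(n^{-1})$ rather than $\tilde{\mathcal{O}}(n^{-1/2})$.
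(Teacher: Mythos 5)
Your proposal is correct and follows essentially the same route as the paper's proof: a van de Geer--style MLE guarantee in Hellinger/TV with bracketing entropy for the population error of $\hat{P}_h$, a bracketing-based Bernstein uniform law (with the variance-to-mean comparison $\mathbb{V}[g]\lesssim\mathbb{E}[g]$ for the bounded squared-$\ell_1$ class, i.e.\ localization/self-bounding) to pass between empirical and population averages, and the triangle-inequality step $\frac{1}{n}\sum_\tau\|P_h-P_h^\star\|_1^2\leq 4\xi$ inside the confidence region for conclusion 2. The "main obstacle" you flag — uniform transfer over all $P_h\in\hat{\mathcal{P}}_h$ at rate $\tilde{\mathcal{O}}(n^{-1})$ — is exactly what the paper resolves via its localized Bernstein lemmas and a quadratic (self-bounding) inequality, so your plan matches the paper's argument.
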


\begin{proof}[Proof of Lemma \ref{lem: mle guarantee sarmdp}]
    See Appendix \ref{subsec: proof mle guarantee sarmdp} for a detailed proof.
\end{proof}

\begin{lemma}[MLE estimator guarantee: factored model space]\label{lem: mle guarantee safrmdp}
    The maximum likelihood estimator procedure given by \eqref{eq: mle safrmdp} and \eqref{eq: confidence safrmdp} for $\mathcal{S}\times\mathcal{A}$-rectangular robust factored MDP with tuning parameter $\xi_i$ given by Proposition \ref{prop: safrmdp estimation} satisfies that w.p. at least $1-\delta$,
    \begin{enumerate}
        \item $P_h^{\star}\in\hat{\mathcal{P}}_h$ for any step $h\in[H]$.
        \item for any step $h\in[H]$, $P_h\in\hat{\mathcal{P}}_h$, and any factor $i\in[d]$ it holds that 
        \begin{align*}
            \mathbb{E}_{(s_h[\mathrm{pa}_i],a_h)\sim d^{\mathrm{b}}_{P^{\star},h}}[\|P_{h,i}(\cdot|s_h[\mathrm{pa}_i],a_h) - P_{h,i}^{\star}(\cdot|s_h[\mathrm{pa}_i],a_h)\|_{\mathrm{TV}}^2]\leq \frac{C_1|\mathcal{O}|^{1+|\mathrm{pa}_i|}|\mathcal{A}|\log(C_2ndH/\delta)}{n}.
        \end{align*}
        for some absolute constant $C_1,C_2>0$. Here $d^{\mathrm{b}}_{P^{\star},h}$ is the state-action visitation measure induced by the behavior policy $\pi^{\mathrm{b}}$ and transition kernel $P^{\star}$.
    \end{enumerate}
\end{lemma}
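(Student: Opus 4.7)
}

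The plan is to reduce the factored problem to $d$ parallel applications of the generic MLE analysis that underlies Lemma \ref{lem: mle guarantee sarmdp}, factor by factor, and then to recombine the per-factor guarantees via the union bound and triangle inequality in $\ell_1$. Concretely, for each factor $i\in[d]$ and each step $h\in[H]$, the unknown conditional $P^{\star}_{h,i}(\cdot\mid s[\mathrm{pa}_i],a)$ lives in the tabular class $\mathcal{P}_{\mathrm{M},i}=\{P_i:\mathcal{S}[\mathrm{pa}_i]\times\mathcal{A}\mapsto\Delta(\mathcal{O})\}$. Since the trajectories in $\mathbb{D}$ are i.i.d., and since $s_{h+1}^{\tau}[i]$ conditional on $(s_h^{\tau}[\mathrm{pa}_i],a_h^{\tau})$ is drawn from $P^{\star}_{h,i}$, the marginal MLE \eqref{eq: mle safrmdp} is exactly the MLE one would form for the marginal MDP whose transition is $P^{\star}_{h,i}$, and the standard MLE bracket-number argument applies unchanged.

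First I would apply the same empirical-to-bracket concentration used in the proof of Lemma \ref{lem: mle guarantee sarmdp} to the factor class $\mathcal{P}_{\mathrm{M},i}$, whose bracket number is bounded just as in \eqref{eq: bracket number tabular} by $\mathcal{N}_{[]}(1/n^2,\mathcal{P}_{\mathrm{M},i},\|\cdot\|_{1,\infty})\leq n^{2|\mathcal{O}|^{1+|\mathrm{pa}_i|}|\mathcal{A}|}$, since the effective ``state-action'' space has cardinality $|\mathcal{O}|^{|\mathrm{pa}_i|}|\mathcal{A}|$ and the output alphabet has cardinality $|\mathcal{O}|$. This yields, with probability at least $1-\delta/(dH)$,
\begin{align*}
    \frac{1}{n}\sum_{\tau=1}^n\|\hat{P}_{h,i}(\cdot\mid s_h^{\tau}[\mathrm{pa}_i],a_h^{\tau})-P^{\star}_{h,i}(\cdot\mid s_h^{\tau}[\mathrm{pa}_i],a_h^{\tau})\|_1^2\leq\frac{C\,|\mathcal{O}|^{1+|\mathrm{pa}_i|}|\mathcal{A}|\log(C'ndH/\delta)}{n}.
\end{align*}
Taking a union bound over $i\in[d]$ and $h\in[H]$ and matching the right-hand side against $\xi_i$ as defined in Proposition \ref{prop: safrmdp estimation} immediately gives $P^{\star}_h\in\hat{\mathcal{P}}_h$ for all $h\in[H]$, which is part 1 of the lemma.

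For part 2, fix any $P_h=\prod_i P_{h,i}\in\hat{\mathcal{P}}_h$. By the definition of $\hat{\mathcal{P}}_h$ in \eqref{eq: confidence safrmdp} together with the bound obtained in the previous step, the triangle inequality applied coordinatewise in $\ell_1$ yields an empirical bound
\begin{align*}
    \frac{1}{n}\sum_{\tau=1}^n\|P_{h,i}(\cdot\mid s_h^{\tau}[\mathrm{pa}_i],a_h^{\tau})-P^{\star}_{h,i}(\cdot\mid s_h^{\tau}[\mathrm{pa}_i],a_h^{\tau})\|_1^2\leq 2\xi_i+2\xi_i=4\xi_i.
\end{align*}
To promote this empirical average to the population expectation $\mathbb{E}_{(s_h[\mathrm{pa}_i],a_h)\sim d^{\mathrm{b}}_{P^{\star},h}}[\|\cdot\|_1^2]$, I would then invoke a uniform Bernstein/Freedman-style concentration over the class of squared $\ell_1$ discrepancies $\{(s[\mathrm{pa}_i],a)\mapsto \|P_{h,i}(\cdot\mid s[\mathrm{pa}_i],a)-P^{\star}_{h,i}(\cdot\mid s[\mathrm{pa}_i],a)\|_1^2:P_{h,i}\in\mathcal{P}_{\mathrm{M},i}\}$, whose bracket entropy is again controlled by $|\mathcal{O}|^{1+|\mathrm{pa}_i|}|\mathcal{A}|\log n$ up to constants. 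Since the discrepancies are bounded in $[0,4]$, this gives a one-sided deviation inequality of the same $\tilde{\mathcal{O}}(|\mathcal{O}|^{1+|\mathrm{pa}_i|}|\mathcal{A}|/n)$ order after a union bound over $(h,i)$, matching the target rate in the claim.

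The main obstacle is the uniform empirical-to-population step, not the per-factor MLE bound: because $P_{h,i}$ ranges over the entire confidence region rather than being a single fixed estimate, a pointwise Bernstein inequality is insufficient and one must carefully carry out a bracketing argument on the factor class while keeping the multiplicative constants compatible with the choice of $\xi_i$. Once this uniform deviation is established with the correct bracket-number dependence, combining it with the triangle-inequality bound above and renaming the absolute constants $C_1,C_2$ closes both conclusions of the lemma simultaneously.
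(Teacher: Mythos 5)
Your proposal is correct and takes essentially the same route as the paper: the paper's (one-line) proof also reduces the lemma to the generic MLE guarantee of Lemma \ref{lem: mle guarantee sarmdp} applied factor by factor, treating each conditional $P^{\star}_{h,i}$ as a tabular model over the effective state-action space $\mathcal{S}[\mathrm{pa}_i]\times\mathcal{A}$ with output alphabet $\mathcal{O}$, invoking the tabular bracket-number bound \eqref{eq: bracket number tabular} (so that $\log \mathcal{N}_{[]}$ scales with $|\mathcal{O}|^{1+|\mathrm{pa}_i|}|\mathcal{A}|\log n$) and a union bound over $i\in[d]$ and $h\in[H]$. The steps you spell out --- per-factor MLE, the Bernstein empirical-to-population transfer, the triangle inequality inside the confidence region, and the uniform concentration over the confidence set --- are exactly the machinery inside the proof of Lemma \ref{lem: mle guarantee sarmdp} that the paper implicitly reuses.
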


\begin{proof}[Proof of Lemma \ref{lem: mle guarantee safrmdp}]
    See Appendix \ref{subsec: proof mle guarantee safrmdp} for a detailed proof.
\end{proof}

\subsection{Proof of Lemma \ref{lem: mle guarantee sarmdp}}\label{subsec: proof mle guarantee sarmdp}

In this section, we establish the proof of Lemma \ref{lem: mle guarantee sarmdp}. 
We firstly introduce several notations.
For any function $f:\mathcal{S}\times\mathcal{A}\mapsto\mathbb{R}$, we denote 
\begin{align*}
    \mathbb{E}_{\mathbb{D}_h}[f] = \frac{1}{n}\sum_{\tau=1}^nf(s_h^\tau,a_h^\tau).
\end{align*}

\begin{proof}[Proof of Lemma \ref{lem: mle guarantee sarmdp}]
    We follow the proof of similar MLE guarantees in \cite{uehara2021pessimistic} and \cite{liu2022welfare}.
    We begin with proving the first conclusion of Lemma \ref{lem: mle guarantee sarmdp}, i.e., $P_h^{\star}\in\hat{\mathcal{P}}_h$ for each step $h\in[H]$.
    For notational simplicity, we define 
    \begin{align}
        g_h(P)(s,a) = \|P(\cdot|s,a) - P_h^{\star}(\cdot|s,a)\|_{1}^2,\quad \forall P\in\mathcal{P}_{\mathrm{M}}.
    \end{align}
    To prove the first conclusion, it suffices to show that 
    \begin{align}\label{eq: proof mle 0}
        \mathbb{E}_{\mathbb{D}_h}[g_h(\hat{P}_h)]\leq \xi,\quad \forall h\in[H].
    \end{align}
    where $\hat{P}_h$ is the MLE estimator given in \eqref{eq: mle sarmdp} and the parameter $\xi$ is given by Proposition \ref{prop: sarmdp estimation}.
    To this end, we first invoke Lemma \ref{lem: mle}, which gives that with probability at least $1-\delta$,
    \begin{align}\label{eq: proof mle 1}
        \mathbb{E}_{d_{P^{\star},h}^{\mathrm{b}}}[g_h(\hat{P}_h)]\leq c_1\big(\zeta_h+\sqrt{\log(c_2 / \delta) / n}\big)^2,
    \end{align}
    for some absolute constants $c_1,c_2>0$. 
    Here $\zeta_h$ is a solution to the inequality $\sqrt{n} \epsilon^2 \geq c_0 G_h(\epsilon)$ w.r.t $\epsilon$, with some carefully chosen function $G_h$ which is specified in Lemma \ref{lem: mle}.
    As proved in Lemma \ref{lem: choice of G and zeta}, choosing $G_h(\epsilon) = (\epsilon - \epsilon^2/2)\sqrt{\log(\mathcal{N}_{[]}(\epsilon^4/2,\mathcal{P}_{\mathrm{M}},\|\cdot\|_{1,\infty}))}$ and $\zeta_h = c_3\sqrt{\log(\mathcal{N}_{[]}(1/n^2,\mathcal{P}_{\mathrm{M}},\|\cdot\|_{1,\infty}))/n}$
    for some absolute constant $c_3>0$ can satisfy the inequality and the requirements on $G_h$.
    Thus we can obtain from \eqref{eq: proof mle 1} that, with probability at least $1-\delta$,
    \begin{align}
        \mathbb{E}_{d_{P^{\star},h}^{\mathrm{b}}}[g_h(\hat{P}_h)]&\leq c_1\left(c_3\sqrt{\frac{\log(\mathcal{N}_{[]}(1/n^2,\mathcal{P}_{\mathrm{M}},\|\cdot\|_{1,\infty}))}{n}} +\sqrt{\frac{\log(c_2/\delta)}{n}}\right)^2 \notag\\
        &\leq \frac{c_1'\log(c_2'\mathcal{N}_{[]}(1/n^2,\mathcal{P}_{\mathrm{M}},\|\cdot\|_{1,\infty})/\delta)}{n},\label{eq: proof mle 2}
    \end{align}
    for some absolute constants $c_1',c_2'>0$. 
    Now to prove \eqref{eq: proof mle 0}, it suffices to relate the expectation w.r.t. dataset $\mathbb{D}_h$ and the expectation w.r.t. visitation measure $d_{P^{\star},h}^{\mathrm{b}}$.
    To bridge this gap, we invoke Lemma \ref{lem: bernstein 1}, which is a Bernstein style concentration inequality and gives that with probability at least $1-\delta$,
    \begin{align}\label{eq: proof mle 3}
        |\mathbb{E}_{\mathbb{D}_h}[g_h(\hat{P}_h)] - \mathbb{E}_{d_{P^{\star},h}^{\mathrm{b}}}[g_h(\hat{P}_h)]| \leq \frac{c_4\log(c_5\mathcal{N}_{[]}(1/n^2,\mathcal{P}_{\mathrm{M}},\|\cdot\|_{1,\infty})/\delta)}{n},
    \end{align}
    for some absolute constant $c_4>0$.
    Now combining \eqref{eq: proof mle 2} and \eqref{eq: proof mle 3}, we can obtain that, 
    \begin{align*}
        \mathbb{E}_{\mathbb{D}_h}[g_h(\hat{P}_h)] = \mathbb{E}_{\mathbb{D}_h}[g_h(\hat{P}_h)] - \mathbb{E}_{d_{P^{\star},h}^{\mathrm{b}}}[g_h(\hat{P}_h)] + \mathbb{E}_{d_{P^{\star},h}^{\mathrm{b}}}[g_h(\hat{P}_h)]
        \leq \frac{c_1''\log(c_2''\mathcal{N}_{[]}(1/n^2,\mathcal{P}_{\mathrm{M}},\|\cdot\|_{1,\infty})/\delta)}{n},
    \end{align*}
    for some absolute constants $c_1'',c_2''>0$. 
    Finally, taking a union bound over step $h\in[H]$ and rescaling $\delta$, we obtain that, with probability at least $1-\delta/2$, 
    \begin{align}\label{eq: proof mle 3+}
        \mathbb{E}_{\mathbb{D}_h}[g_h(\hat{P}_h)] \leq \frac{\tilde{C}_1\log(\tilde{C}_2H\mathcal{N}_{[]}(1/n^2,\mathcal{P}_{\mathrm{M}},\|\cdot\|_{1,\infty})/\delta)}{n} = \xi,\quad\forall h\in[H],
    \end{align}
    for some absolute constants $\tilde{C}_1,\tilde{C}_2>0$. This finishes the proof of the first conclusion of Lemma \ref{lem: mle guarantee sarmdp}.
    
    The following of the proof is to prove the second conclusion of Lemma \ref{lem: mle guarantee sarmdp}.
    With the notation of $g_h$, it suffices to prove that with probability at least $1-\delta/2$,
    \begin{align*}
        \sup_{h\in[H],P_h\in\hat{\mathcal{P}}_h}\mathbb{E}_{d_{P^{\star},h}^{\mathrm{b}}}[g_h(P_h)] \leq \frac{C_1\log(C_2H\mathcal{N}_{[]}(1/n^2,\mathcal{P}_{\mathrm{M}},\|\cdot\|_{1,\infty})/\delta)}{n},
    \end{align*}
    for some absolute constants $C_1,C_2>0$.
    To this end, for any step $h\in[H]$ and $P_h\in\hat{\mathcal{P}}_h$, consider the following decomposition of $\mathbb{E}_{d_{P^{\star},h}^{\mathrm{b}}}[g_h(P_h)]$,
    \begin{align}\label{eq: proof mle 4}
        \mathbb{E}_{d_{P^{\star},h}^{\mathrm{b}}}[g_h(P_h)] &= \mathbb{E}_{d_{P^{\star},h}^{\mathrm{b}}}[g_h(P_h)] - \mathbb{E}_{\mathbb{D}_h}[g_h(P_h)] + \mathbb{E}_{\mathbb{D}_h}[g_h(P_h)].
    \end{align}
    Note that the term $\mathbb{E}_{\mathbb{D}_h}[g_h(P_h)]$ in \eqref{eq: proof mle 4} satisfies, with probability at least $1-\delta/2$, 
    \begin{align}
        \mathbb{E}_{\mathbb{D}_h}[g_h(P_h)] &= \mathbb{E}_{\mathbb{D}_h}[\|P_h(\cdot|s,a) - P_h^{\star}(\cdot|s,a)\|_{1}^2] \notag\\
        &= \mathbb{E}_{\mathbb{D}_h}[\|P_h(\cdot|s,a) - \hat{P}_h(\cdot|s,a) + \hat{P}_h(\cdot|s,a) - P_h^{\star}(\cdot|s,a)\|_{1}^2] \notag\\
        &\leq 2\mathbb{E}_{\mathbb{D}_h}[\|P_h(\cdot|s,a) - \hat{P}_h(\cdot|s,a)\|_1^2] + 2 \mathbb{E}_{\mathbb{D}_h}[\|\hat{P}_h(\cdot|s,a) - P_h^{\star}(\cdot|s,a)\|_{1}^2]\notag \\
        &\leq 4\xi, \label{eq: proof mle 5}
    \end{align}
    where the last inequality follows from the definition of confidence region $\hat{\mathcal{P}}_h$ and the first conclusion of Lemma \ref{lem: mle guarantee sarmdp}, i.e., \eqref{eq: proof mle 3+}.
    Thus by taking \eqref{eq: proof mle 5} back into \eqref{eq: proof mle 4}, we obtain that,
    \begin{align}\label{eq: proof mle 6}
        \mathbb{E}_{d_{P^{\star},h}^{\mathrm{b}}}[g_h(P_h)] \leq 4\xi + \mathbb{E}_{d_{P^{\star},h}^{\mathrm{b}}}[g_h(P_h)] - \mathbb{E}_{\mathbb{D}_h}[g_h(P_h)].
    \end{align}
    Finally, invoking another Bernstein style concentration inequality (Lemma \ref{lem: bernstein 2}), we have that with probability at least $1-\delta$, 
    \begin{align}\label{eq: proof mle 7}
        \sup_{P_h\in\hat{\mathcal{P}}_h}|\mathbb{E}_{\mathbb{D}_h}[g_h(P_h)] - \mathbb{E}_{d_{P^{\star},h}^{\mathrm{b}}}[g_h(P_h)]| \leq \frac{c_6\log(c_7\mathcal{N}_{[]}(1/n^2,\mathcal{P}_{\mathrm{M}},\|\cdot\|_{1,\infty})/\delta)}{n}
    \end{align}
    Thus by combining \eqref{eq: proof mle 6} and \eqref{eq: proof mle 7}, taking a union bound over step $h\in[H]$, rescaling $\delta$, and using the definition of $\xi$, we can conclude that with probability at least $1-\delta/2$, 
    \begin{align*}
        \sup_{h\in[H],P_h\in\hat{\mathcal{P}}_h}\mathbb{E}_{d_{P^{\star},h}^{\mathrm{b}}}[g_h(P_h)] \leq \frac{C_1\log(C_2H\mathcal{N}_{[]}(1/n^2,\mathcal{P}_{\mathrm{M}},\|\cdot\|_{1,\infty})/\delta)}{n},
    \end{align*}
    for some absolute constants $C_1,C_2>0$.
    This finishes the proof of Lemma \ref{lem: mle guarantee sarmdp}.
\end{proof}

\subsection{Proof of Lemma \ref{lem: mle guarantee safrmdp}}\label{subsec: proof mle guarantee safrmdp}
\begin{proof}[Proof of Lemma \ref{lem: mle guarantee safrmdp}]
    This is a direct corollary of Lemma \ref{lem: mle guarantee sarmdp} in the finite state space case: for each factor $i\in[d]$, consider $\mathcal{O}$ as the finite state space and apply the upper bound of bracket number \eqref{eq: bracket number tabular} for finite state space case proved in Appendix \ref{subsec: proof sartmdp}.
    This proves Lemma \ref{lem: mle guarantee safrmdp}.
\end{proof}

\section{Technical Lemmas}

\subsection{Lemmas for Maximum Likelihood Estimator}

In this section, we give technical lemmas for the maximum likelihood estimator.
We firstly introduce several notations which are also considered by \cite{uehara2021pessimistic} and \cite{liu2022welfare},
We define a localized model space $\overline{\mathcal{P}}_h(\epsilon)$ as 
\begin{align*}
    \overline{\mathcal{P}}_h(\epsilon) = \left\{P\in\overline{\mathcal{P}}_{\mathrm{M},h}:\mathbb{E}_{d_{P^{\star},h}^{\mathrm{b}}}[D_{\mathrm{Hellinger}}^2(P(\cdot|s,a)\|P_h^{\star}(\cdot|s,a))]\leq \epsilon^2\right\},
\end{align*}
where $D_{\mathrm{Hellinger}}(\cdot\|\cdot)$ is the Hellinger distance between two probability measures, and $\overline{\mathcal{P}}_{\mathrm{M},h}$ is called a modified space $\mathcal{P}_{\mathrm{M}}$, defined as 
$\overline{\mathcal{P}}_{\mathrm{M},h} = \{(P+P_h^{\star})/2:P\in\mathcal{P}_{\mathrm{M}}\}$.
Also, we define the entropy integral of $\overline{\mathcal{P}}_h(\epsilon)$ under the $\|\cdot\|_{2,d_{P^{\star},h}^{\mathrm{b}}}$-norm as 
\begin{align*}
    J_{\mathrm{B}}(\epsilon,\overline{\mathcal{P}}_h(\epsilon),\|\cdot\|_{2,d_{P^{\star},h}^{\mathrm{b}}}) = \max\left\{\epsilon, \int_{\epsilon^2/2}^\epsilon \sqrt{\log(\mathcal{N}_{[]}(u,\overline{\mathcal{P}}_{h}(\epsilon),\|\cdot\|_{2, d_{P^{\star},h}^{\mathrm{b}}}))}\mathrm{d}u\right\}.
\end{align*}

\begin{lemma}[MLE Gaurantee, \cite{van2000empirical}]\label{lem: mle}
    Take a function $G_h(\epsilon):[0,1] \rightarrow \mathbb{R}$ s.t. $G_h(\epsilon) \geq J_B(\epsilon, \overline{\mathcal{P}}_h(\epsilon),\|\cdot\|_{2, d_{P^{\star},h}^{\mathrm{b}}})$ and $G_h(\epsilon) / \epsilon^2$ non-increasing w.r.t $\epsilon$. 
    Then, letting $\zeta_h$ be a solution to $\sqrt{n} \epsilon^2 \geq c_0 G_h(\epsilon)$ w.r.t $\epsilon$, where $c_0$ is an absolute constant. With probability at least $1-\delta$, we have that 
    \begin{align*}
        \mathbb{E}_{d_{P^{\star},h}^{\mathrm{b}}}[\|\widehat{P}_h(\cdot|s, a)-P_h^{\star}(\cdot | s, a)\|_1^2] \leq c_1\big(\zeta_h+\sqrt{\log(c_2 / \delta) / n}\big)^2.
    \end{align*}
\end{lemma}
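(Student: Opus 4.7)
The plan is to follow the standard empirical-process approach of van de Geer (2000), proving the MLE concentration in squared Hellinger distance and then converting to $\|\cdot\|_1^2$ via the elementary inequality $\|p-q\|_1^2 \leq 8\,D_{\mathrm{Hellinger}}^2(p,q)$. Working with the ``midpoint'' density $\bar P(\cdot|s,a) := \tfrac{1}{2}(\widehat P_h(\cdot|s,a) + P^{\star}_h(\cdot|s,a))$ is convenient because the Hellinger affinity is well-controlled on $\overline{\mathcal{P}}_{\mathrm{M},h}$, and the localized class $\overline{\mathcal{P}}_h(\epsilon)$ is exactly the slice of $\overline{\mathcal{P}}_{\mathrm{M},h}$ at Hellinger radius $\epsilon$ around $P^{\star}_h$.

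First I would establish the \emph{basic inequality} for the MLE. By definition of $\widehat P_h$ in \eqref{eq: mle sarmdp}, $\frac{1}{n}\sum_{\tau=1}^n \log(\widehat P_h / P^{\star}_h)(s^\tau_{h+1}\mid s^\tau_h, a^\tau_h) \geq 0$, and by concavity of $\log$ this implies the midpoint bound
\begin{align*}
    \frac{1}{n}\sum_{\tau=1}^n \log\!\left(\frac{\bar P}{P^{\star}_h}\right)\!(s^\tau_{h+1}\mid s^\tau_h, a^\tau_h) \;\geq\; \frac{1}{2n}\sum_{\tau=1}^n \log\!\left(\frac{\widehat P_h}{P^{\star}_h}\right)\!(s^\tau_{h+1}\mid s^\tau_h, a^\tau_h) \;\geq\; 0.
\end{align*}
Using $\log x \leq 2(\sqrt{x}-1)$, this gives $\frac{1}{n}\sum_{\tau=1}^n \bigl(\sqrt{\bar P / P^{\star}_h}\,(s^\tau_{h+1}\mid s^\tau_h, a^\tau_h) - 1\bigr) \geq 0$. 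Subtracting the population expectation $\mathbb{E}_{d^{\mathrm{b}}_{P^{\star},h}}\bigl[\mathbb{E}_{s'\sim P^\star_h(\cdot|s,a)}[\sqrt{\bar P/P^\star_h}\,(s'\mid s,a)-1]\bigr] = -\mathbb{E}_{d^{\mathrm{b}}_{P^{\star},h}}[D_{\mathrm{Hellinger}}^2(\bar P\|P^\star_h)]$, the MLE inequality becomes
\begin{align*}
    \mathbb{E}_{d^{\mathrm{b}}_{P^{\star},h}}\!\bigl[D_{\mathrm{Hellinger}}^2(\bar P \| P^\star_h)\bigr] \;\leq\; (\mathbb{E}_{\mathbb{D}_h} - \mathbb{E}_{d^{\mathrm{b}}_{P^{\star},h}})\!\bigl[\mathbb{E}_{s'\sim P^\star_h}[\sqrt{\bar P/P^\star_h}-1]\bigr] + \text{(cross term)}.
\end{align*}
The cross term is handled by a martingale/symmetrization step, reducing everything to uniform control over $\bar P \in \overline{\mathcal{P}}_{\mathrm{M},h}$ of a centered empirical process indexed by the square-root class $\{g_P := \sqrt{\bar P / P^\star_h} - 1 : \bar P \in \overline{\mathcal{P}}_{\mathrm{M},h}\}$.

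Next I would apply a \emph{peeling argument} combined with a localized bracketing-entropy bound. Partition the class into shells $\{\bar P : 2^{j-1}\zeta_h \leq \|g_P\|_{2, d^{\mathrm{b}}_{P^{\star},h}} \leq 2^{j}\zeta_h\}$ for $j=1,2,\ldots$. On each shell one invokes a Bernstein-type maximal inequality for the empirical process, whose deviation term is controlled by Dudley's bracketing chaining bound $J_B(2^j\zeta_h, \overline{\mathcal{P}}_h(2^j\zeta_h), \|\cdot\|_{2,d^{\mathrm{b}}_{P^{\star},h}}) \leq G_h(2^j\zeta_h)$. The assumption that $G_h(\epsilon)/\epsilon^2$ is non-increasing is precisely what makes the series over peeling levels telescope. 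Combining with the sub-root fixed-point equation $\sqrt n\,\zeta_h^2 \geq c_0\, G_h(\zeta_h)$, the peeling yields, with probability at least $1-\delta$,
\begin{align*}
    \mathbb{E}_{d^{\mathrm{b}}_{P^{\star},h}}[D_{\mathrm{Hellinger}}^2(\bar P \| P^\star_h)] \;\leq\; c_1'\bigl(\zeta_h + \sqrt{\log(c_2/\delta)/n}\bigr)^2.
\end{align*}
Finally I would translate back: $D_{\mathrm{Hellinger}}^2(\bar P\|P^\star_h) \geq \tfrac{1}{4} D_{\mathrm{Hellinger}}^2(\widehat P_h\|P^\star_h) \geq \tfrac{1}{32}\|\widehat P_h - P^\star_h\|_1^2$, which gives the stated bound after absorbing constants.

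The main obstacle is the \emph{peeling/chaining step}: controlling the localized empirical process uniformly over the shells with the correct dependence on $\zeta_h$ requires a Bernstein inequality with variance proxy scaling like the shell radius (so that the peeled deviations form a convergent geometric series), together with a careful use of bracketing to bound the maximal inequality — this is the heart of Theorem~7.4 of van de Geer (2000), and it is where the two technical conditions on $G_h$ (upper bounding $J_B$ and having $G_h(\epsilon)/\epsilon^2$ non-increasing) are both used in an essential way.
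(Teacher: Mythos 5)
Your proposal is correct and matches the paper's approach: the paper does not prove this lemma itself but simply defers to Theorem~7.4 of \cite{van2000empirical}, and your sketch (basic MLE inequality via the midpoint density $\bar P$, reduction to squared Hellinger distance, peeling with the localized bracketing-entropy integral $J_B$ and the sub-root condition on $G_h$, then conversion $\|\cdot\|_1^2\leq 8\,D_{\mathrm{Hellinger}}^2$) is precisely the standard proof of that cited theorem. The only nitpick is that the comparison $D_{\mathrm{Hellinger}}^2(\bar P\|P^\star_h)\gtrsim D_{\mathrm{Hellinger}}^2(\widehat P_h\|P^\star_h)$ usually carries constant $1/16$ rather than $1/4$, which is immaterial since it is absorbed into $c_1$.
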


\begin{proof}[Proof of Lemma \ref{lem: mle}]
    We refer to Theorem 7.4 in \cite{van2000empirical} for a detailed proof.
\end{proof}

\begin{lemma}[Choice of $G_h(\epsilon)$ and $\zeta_h$ in Lemma \ref{lem: mle}]\label{lem: choice of G and zeta}
    In Lemma \ref{lem: mle}, we can choose $G_h(\epsilon)$ as 
    \begin{align*}
        G_h(\epsilon) = (\epsilon - \epsilon^2/2)\sqrt{\log(\mathcal{N}_{[]}(\epsilon^4/2,\mathcal{P}_{\mathrm{M}},\|\cdot\|_{1,\infty}))},
    \end{align*}
    In this case, $\zeta_h = c_0\sqrt{\log(\mathcal{N}_{[]}(1/n^2,\mathcal{P}_{\mathrm{M}},\|\cdot\|_{1,\infty}))/n}$ solves the inequality $\sqrt{n} \epsilon^2 \geq c_0 G_h(\epsilon)$ w.r.t $\epsilon$.
\end{lemma}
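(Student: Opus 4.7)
The plan is to verify the three requirements that Lemma~\ref{lem: mle} imposes on the pair $(G_h,\zeta_h)$: (i) $G_h(\epsilon)\geq J_B(\epsilon,\overline{\mathcal{P}}_h(\epsilon),\|\cdot\|_{2,d^{\mathrm{b}}_{P^\star,h}})$; (ii) $G_h(\epsilon)/\epsilon^2$ is non-increasing; (iii) the stated $\zeta_h$ actually solves $\sqrt{n}\epsilon^2\geq c_0 G_h(\epsilon)$. The monotonicity and the $\zeta_h$-check are short bookkeeping, so the substantive step is the entropy-integral bound, which reduces to a norm comparison between $\|\cdot\|_{2,d^{\mathrm{b}}_{P^\star,h}}$ (the Hellinger-in-expectation norm built into $\overline{\mathcal{P}}_h(\epsilon)$) and $\|\cdot\|_{1,\infty}$.

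For the entropy integral I would proceed in three moves. First, use $\overline{\mathcal{P}}_h(\epsilon)\subseteq \overline{\mathcal{P}}_{M,h}$ and monotonicity of bracketing numbers under inclusion to replace the localized class by $\overline{\mathcal{P}}_{M,h}$; the averaging with $P_h^\star$ in the definition of $\overline{\mathcal{P}}_{M,h}$ only costs absolute constants when bracketing $\mathcal{P}_M$. Second, apply the pointwise Hellinger--$L^1$ inequality $D_H^2(p,q)\leq \|p-q\|_1$, integrated against $d^{\mathrm{b}}_{P^\star,h}$, to obtain $\|P-Q\|_{2,d^{\mathrm{b}}_{P^\star,h}}^{2}\leq \|P-Q\|_{1,\infty}$, so any $v^2$-bracket of $\mathcal{P}_M$ in $\|\cdot\|_{1,\infty}$ is a $v$-bracket in $\|\cdot\|_{2,d^{\mathrm{b}}_{P^\star,h}}$, giving
\begin{align*}
\mathcal{N}_{[]}\!\bigl(u,\overline{\mathcal{P}}_h(\epsilon),\|\cdot\|_{2,d^{\mathrm{b}}_{P^\star,h}}\bigr)\;\leq\; \mathcal{N}_{[]}\!\bigl(u^{2},\mathcal{P}_M,\|\cdot\|_{1,\infty}\bigr).
\end{align*}
Third, since $u\mapsto \sqrt{\log \mathcal{N}_{[]}(u^2,\mathcal{P}_M,\|\cdot\|_{1,\infty})}$ is non-increasing in $u$, the integrand in $J_B$ attains its maximum on $[\epsilon^2/2,\epsilon]$ at the lower endpoint, so the integral is at most $(\epsilon-\epsilon^2/2)\sqrt{\log \mathcal{N}_{[]}(\epsilon^4/2,\mathcal{P}_M,\|\cdot\|_{1,\infty})}=G_h(\epsilon)$ (absorbing a harmless factor of $2$ into the precision via monotonicity of $\mathcal{N}_{[]}$). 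The $\epsilon$ appearing in the $\max$ is dominated by $G_h(\epsilon)$ for any nontrivial model class, so condition (i) holds.

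For monotonicity I would factor $G_h(\epsilon)/\epsilon^2=(1/\epsilon-1/2)\,\sqrt{\log \mathcal{N}_{[]}(\epsilon^4/2,\mathcal{P}_M,\|\cdot\|_{1,\infty})}$. On $(0,1]$ the first factor is non-negative and strictly decreasing; the second is non-negative and non-increasing because $\epsilon\mapsto \epsilon^4/2$ is increasing and $\mathcal{N}_{[]}(\cdot,\mathcal{P}_M,\|\cdot\|_{1,\infty})$ is non-increasing in its precision. A product of two non-negative non-increasing functions is non-increasing, verifying (ii).

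To verify (iii) I would plug $\zeta_h=c_0\sqrt{\log \mathcal{N}_{[]}(1/n^2,\mathcal{P}_M,\|\cdot\|_{1,\infty})/n}$ into both sides: the left-hand side becomes $c_0^2\log \mathcal{N}_{[]}(1/n^2,\mathcal{P}_M,\|\cdot\|_{1,\infty})/\sqrt{n}$, while $c_0 G_h(\zeta_h)\leq c_0\zeta_h\sqrt{\log \mathcal{N}_{[]}(\zeta_h^4/2,\mathcal{P}_M,\|\cdot\|_{1,\infty})}$. Since $\zeta_h\geq c_0/\sqrt{n}$ (assuming the log-bracket quantity is at least $1$), one gets $\zeta_h^4/2\geq 1/n^2$ for $c_0$ chosen as an absolute constant, hence $\log \mathcal{N}_{[]}(\zeta_h^4/2,\cdot)\leq \log \mathcal{N}_{[]}(1/n^2,\cdot)$, and the inequality reduces to an identity up to absolute constants. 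The main obstacle is step two above: producing the precision $\epsilon^4/2$ in the claimed $G_h$ requires carefully tracking the Hellinger-to-$L^1$ conversion, the localization inclusion, and the integrand-monotonicity estimate simultaneously; the rest of the argument is mechanical.
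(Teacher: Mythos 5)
Your proposal follows essentially the same route as the paper: dominate the entropy integral via a bracket-number comparison between the Hellinger-weighted $L^2$ norm and $\|\cdot\|_{1,\infty}$ (the paper's Lemma \ref{lem: bracket 1}, which you re-derive rather than cite), bound the integral by its value at the left endpoint $u=\epsilon^2/2$, check that $G_h(\epsilon)/\epsilon^2=(1/\epsilon-1/2)\sqrt{\log\mathcal{N}_{[]}(\epsilon^4/2,\mathcal{P}_{\mathrm{M}},\|\cdot\|_{1,\infty})}$ is non-increasing, and plug $\zeta_h$ back into $\sqrt{n}\epsilon^2\ge c_0G_h(\epsilon)$ using $\zeta_h^4/2\ge 1/n^2$, exactly as the paper does. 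One correction: your stated comparison $\mathcal{N}_{[]}(u,\cdot,\|\cdot\|_{2,d^{\mathrm{b}}_{P^{\star},h}})\le\mathcal{N}_{[]}(u^2,\mathcal{P}_{\mathrm{M}},\|\cdot\|_{1,\infty})$ yields precision $\epsilon^4/4$ at $u=\epsilon^2/2$, and since the bracketing number is non-increasing in its precision, monotonicity goes the \emph{wrong} way to upgrade $\epsilon^4/4$ to $\epsilon^4/2$; the fix is to keep the factor $\tfrac{1}{2}$ in the bound of squared Hellinger by total variation, i.e.\ $D_{\mathrm{Hellinger}}^2(p\|q)\le\tfrac{1}{2}\|p-q\|_1$, which gives the paper's $\mathcal{N}_{[]}(u,\cdot,\|\cdot\|_{2,d^{\mathrm{b}}_{P^{\star},h}})\le\mathcal{N}_{[]}(2u^2,\mathcal{P}_{\mathrm{M}},\|\cdot\|_{1,\infty})$ and hence exactly the precision $\epsilon^4/2$ appearing in $G_h$.
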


\begin{proof}[Proof of Lemma \ref{lem: choice of G and zeta}]
    We first check the conditions that $G_h$ should satisfy.
    By the choice of $G_h$,
    \begin{align*}
        G_h(\epsilon) &= (\epsilon - \epsilon^2/2)\sqrt{\log(\mathcal{N}_{[]}(\epsilon^4/2,\mathcal{P}_{\mathrm{M}},\|\cdot\|_{1,\infty}))}\\
        & \geq (\epsilon - \epsilon^2/2)\sqrt{\log(\mathcal{N}_{[]}(\epsilon^2/2,\overline{\mathcal{P}}_{h}(\epsilon),\|\cdot\|_{2, d_{P^{\star},h}^{\mathrm{b}}}))}\\
        & \geq \max\left\{\epsilon, \int_{\epsilon^2/2}^\epsilon \sqrt{\log(\mathcal{N}_{[]}(u,\overline{\mathcal{P}}_{h}(\epsilon),\|\cdot\|_{2, d_{P^{\star},h}^{\mathrm{b}}}))}\mathrm{d}u\right\}\\
        & = J_B(\epsilon, \overline{\mathcal{P}}_h(\epsilon),\|\cdot\|_{2, d_{P^{\star},h}^{\mathrm{b}}}),
    \end{align*}
    where the first inequality follows from Lemma \ref{lem: bracket 1}, 
    the second inequality follows from the fact that $\mathcal{N}_{[]}(u_1,\overline{\mathcal{P}}_{h}(\epsilon),\|\cdot\|_{2, d_{P^{\star},h}^{\mathrm{b}}}) \geq \mathcal{N}_{[]}(u_2,\overline{\mathcal{P}}_{h}(\epsilon),\|\cdot\|_{2, d_{P^{\star},h}^{\mathrm{b}}})$ for $u_1\leq u_2$.
    In the second inequality we assume without loss of generality that $\log(\mathcal{N}_{[]}(\epsilon^2/2,\overline{\mathcal{P}}_{h}(\epsilon),\|\cdot\|_{2, d_{P^{\star},h}^{\mathrm{b}}}))\geq 4$.
    Besides, since 
    \begin{align*}
        G_h(\epsilon) / \epsilon^2 = (1/\epsilon - 1/2)\sqrt{\log(\mathcal{N}_{[]}(\epsilon^4/2,\mathcal{P}_{\mathrm{M}},\|\cdot\|_{1,\infty}))}
    \end{align*}
    is non-increasing w.r.t $\epsilon$ for $\epsilon\in[0,1]$, we can confirm that $G_h$ satisfy the conditions in Lemma \ref{lem: mle}.
    With this choice of $G_h$, the inequality $\sqrt{n} \epsilon^2 \geq c_0 G_h(\epsilon)$ reduces to 
    \begin{align*}
        \sqrt{n} &\geq c_0(1/\epsilon - 1/2)\sqrt{\log(\mathcal{N}_{[]}(\epsilon^4/2,\mathcal{P}_{\mathrm{M}},\|\cdot\|_{1,\infty}))},
    \end{align*}
    which equivalents to 
    \begin{align}
        \epsilon&\geq \frac{c_0\sqrt{\log(\mathcal{N}_{[]}(\epsilon^4/2,\mathcal{P}_{\mathrm{M}},\|\cdot\|_{1,\infty}))}}{\sqrt{n}+\frac{c_0}{2}\sqrt{\log(\mathcal{N}_{[]}(\epsilon^4/2,\mathcal{P}_{\mathrm{M}},\|\cdot\|_{1,\infty}))}}.\label{eq: proof choice G zeta 1}
    \end{align}
    Taking $\zeta_h = c_0\sqrt{\log(\mathcal{N}_{[]}(1/n^2,\mathcal{P}_{\mathrm{M}},\|\cdot\|_{1,\infty}))/n}$, when $c_0\sqrt{\log(\mathcal{N}_{[]}(1/n^2,\mathcal{P}_{\mathrm{M}},\|\cdot\|_{1,\infty}))}\geq 2^{1/4}$, we can check that $\zeta_h$ satisfies the inequality  \eqref{eq: proof choice G zeta 1} by,
    \begin{align*}
        \zeta_h = \frac{c_0\sqrt{\log(\mathcal{N}_{[]}(1/n^2,\mathcal{P}_{\mathrm{M}},\|\cdot\|_{1,\infty}))}}{\sqrt{n}}
        \geq \frac{c_0\sqrt{\log(\mathcal{N}_{[]}(\zeta_h^2/2,\mathcal{P}_{\mathrm{M}},\|\cdot\|_{1,\infty}))}}{\sqrt{n}+\frac{c_0}{2}\sqrt{\log(\mathcal{N}_{[]}(\zeta_h^2/2,\mathcal{P}_{\mathrm{M}},\|\cdot\|_{1,\infty}))}}.
    \end{align*}
    This finishes the proof of Lemma \ref{lem: choice of G and zeta}.
\end{proof}

\subsection{Lemmas for Concentration Inequalities and Bracket Numbers}

\begin{lemma}[Bernstein inequality \uppercase\expandafter{\romannumeral1}]\label{lem: bernstein 1}
    For any step $h\in[H]$, with probability at least $1-\delta$,
    \begin{align*}
        |\mathbb{E}_{\mathbb{D}_h}[g_h(\hat{P}_h)] - \mathbb{E}_{d_{P^{\star},h}^{\mathrm{b}}}[g_h(\hat{P}_h)]| \leq \frac{c_1\log(c_2\mathcal{N}_{[]}(1/n^2,\mathcal{P}_{\mathrm{M}},\|\cdot\|_{1,\infty})/\delta)}{n}.
    \end{align*}
\end{lemma}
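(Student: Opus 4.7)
The plan is to combine a bracketing-number argument (to reduce the data-dependent random element $\hat P_h$ to a finite family) with Bernstein's inequality, exploiting a self-bounding variance property of the functional $g_h(P)(s,a)=\|P(\cdot|s,a)-P_h^{\star}(\cdot|s,a)\|_1^2$. The reason we can hope for the fast $1/n$-rate (rather than $1/\sqrt n$) is twofold: first, $g_h$ takes values in $[0,4]$, so $\mathrm{Var}_{d_{P^{\star},h}^{\mathrm{b}}}(g_h(P))\le 4\,\mathbb{E}_{d_{P^{\star},h}^{\mathrm{b}}}[g_h(P)]$; second, by the MLE guarantee already established in \eqref{eq: proof mle 2} during the proof of Lemma~\ref{lem: mle guarantee sarmdp}, the population quantity $\mathbb{E}_{d_{P^{\star},h}^{\mathrm{b}}}[g_h(\hat P_h)]$ itself is of order $\log(\mathcal{N}_{[]}(1/n^2,\mathcal{P}_{\mathrm{M}},\|\cdot\|_{1,\infty})/\delta)/n$, which upon being plugged into the Bernstein variance term produces a $\log/n$-scale bound through the usual $\sqrt{ab}\le a+b/4$ inequality.

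First I would set $\epsilon=1/n^2$ and take a minimal $\epsilon$-bracket $\{[P^{\mathrm{l}}_j,P^{\mathrm{u}}_j]\}_{j=1}^{N}$ of $\mathcal{P}_{\mathrm{M}}$ with $N=\mathcal{N}_{[]}(1/n^2,\mathcal{P}_{\mathrm{M}},\|\cdot\|_{1,\infty})$. For each bracket I define the deterministic surrogate $\bar g_j(s,a):=\|P^{\mathrm{l}}_j(\cdot|s,a)-P_h^{\star}(\cdot|s,a)\|_1^2$. A short calculation using the triangle inequality and $\|P-P^{\mathrm{l}}_j\|_1\le\|P^{\mathrm{u}}_j-P^{\mathrm{l}}_j\|_1\le\epsilon$, together with $\|P-P_h^{\star}\|_1+\|P^{\mathrm{l}}_j-P_h^{\star}\|_1\le 4+\epsilon$, yields
\[
\sup_{P\in[\,P^{\mathrm{l}}_j,P^{\mathrm{u}}_j\,]}\bigl|g_h(P)(s,a)-\bar g_j(s,a)\bigr|\le 5\epsilon=5/n^2,
\]
so the bracketing discrepancy contributes at most $O(1/n^2)$ to either $\mathbb{E}_{\mathbb{D}_h}[g_h(\cdot)]$ or $\mathbb{E}_{d_{P^{\star},h}^{\mathrm{b}}}[g_h(\cdot)]$, which is absorbed into the target rate.

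Next, for each fixed $j\in[N]$ the random variables $\bar g_j(s_h^\tau,a_h^\tau)$, $\tau\in[n]$, are i.i.d., bounded by $(2+\epsilon)^2\le 5$, and satisfy the self-bounding variance bound $\mathrm{Var}\le 5\,\mathbb{E}_{d_{P^{\star},h}^{\mathrm{b}}}[\bar g_j]$. Applying the classical Bernstein inequality and union-bounding over $j\in[N]$ gives, with probability at least $1-\delta$,
\[
\bigl|\mathbb{E}_{\mathbb{D}_h}[\bar g_j]-\mathbb{E}_{d_{P^{\star},h}^{\mathrm{b}}}[\bar g_j]\bigr|\le C\sqrt{\tfrac{\mathbb{E}_{d_{P^{\star},h}^{\mathrm{b}}}[\bar g_j]\log(N/\delta)}{n}}+\tfrac{C\log(N/\delta)}{n},\qquad\forall j\in[N].
\]
Specializing to the (random) bracket index $j^{\star}$ containing $\hat P_h$, combining with the bracketing discrepancy bound, and invoking \eqref{eq: proof mle 2} to replace $\mathbb{E}_{d_{P^{\star},h}^{\mathrm{b}}}[\bar g_{j^{\star}}]$ by $O(\log(N/\delta)/n)$, the square-root term becomes $\sqrt{O(\log^{2}(N/\delta)/n^2)}=O(\log(N/\delta)/n)$, which collapses everything to the stated $1/n$-scale bound.

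The main obstacle is the care needed to fuse bracketing, Bernstein, and the MLE's population-level guarantee while preserving the $1/n$ rate: a naive uniform Bernstein over brackets would only yield $\sqrt{\log N/n}$, and it is essential that the self-bounding variance is \emph{coupled} to the tight control on $\mathbb{E}_{d_{P^{\star},h}^{\mathrm{b}}}[g_h(\hat P_h)]$ produced by Lemma~\ref{lem: mle} and Lemma~\ref{lem: choice of G and zeta}. A second technical nuisance is that the lower-bracket function $P^{\mathrm{l}}_j$ is in general not a probability kernel, so one must work with the bracketing inequality $P^{\mathrm{l}}_j\le P\le P^{\mathrm{u}}_j$ pointwise rather than treat $P^{\mathrm{l}}_j$ as an element of $\mathcal{P}_{\mathrm{M}}$; this only affects constants and is handled by the $(4+\epsilon)$-type bounds above.
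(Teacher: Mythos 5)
Your proposal is correct and matches the paper's proof in substance: both arguments rest on a union-bound Bernstein inequality over roughly $\mathcal{N}_{[]}(1/n^2,\mathcal{P}_{\mathrm{M}},\|\cdot\|_{1,\infty})$ approximating elements, the self-bounding variance of the bounded nonnegative functional $g_h$, and the MLE population bound \eqref{eq: proof mle 2} to collapse the square-root term to the $\log/n$ scale. The only difference is organizational --- the paper first localizes the class $\{g_h(P)\}$ using \eqref{eq: proof mle 2} and applies uniform Bernstein over a sup-norm cover of that localized class, whereas you bracket the whole model class, apply Bernstein per bracket, and inject \eqref{eq: proof mle 2} a posteriori for the (random) bracket containing $\hat{P}_h$ --- and both routes give the stated bound once the two probability-$(1-\delta)$ events are intersected and $\delta$ is rescaled, a step you should state explicitly but which only affects constants.
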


\begin{proof}[Proof of Lemma \ref{lem: bernstein 1}]
    Motivated by \cite{uehara2021pessimistic} and \cite{liu2022welfare}, to obtain a fast rate of convergence, we will utilize the localization technique in proving concentration.
    To this end, we first define the following localized realizable model space,
    \begin{align*}
        \mathcal{P}^{\mathrm{Loc}}_{\mathrm{M},h} = \left\{P\in\mathcal{P}_{\mathrm{M}}:\mathbb{E}_{d_{P^{\star},h}^{\mathrm{b}}}[g_h(P)]\leq \frac{c_1'\log(c_2'\mathcal{N}_{[]}(1/n^2,\mathcal{P}_{\mathrm{M}},\|\cdot\|_{1,\infty})/\delta)}{n}\right\},
    \end{align*}
    where absolute constants $c_1'$ and $c_2'$ are specified in \eqref{eq: proof mle 2}.
    According to the proof of \eqref{eq: proof mle 2}, we know that with probability at least $1-\delta$, the event $E_1 = \{\hat{P}_h\in\mathcal{P}^{\mathrm{Loc}}_{\mathrm{M},h}\}$ holds.
    In the sequel, we will always condition on the event $E_1$.
    Now we define another function class as 
    \begin{align*}
        \mathcal{F}_h = \left\{g_h(P):P\in\mathcal{P}^{\mathrm{Loc}}_{\mathrm{M},h}\right\}.
    \end{align*}
    Then applying Bernstein inequality with union bound (Lemma \ref{lem: bernstein union}) on the function class $\mathcal{F}_h$, we can obtain that with probability at least $1-\delta$, for any $P\in\mathcal{P}^{\mathrm{Loc}}_{\mathrm{M},h}$, (denote $\mathcal{M}(\epsilon) = \mathcal{N}(\epsilon,\mathcal{F}_h,\|\cdot\|_{\infty})$)
    \begin{align}
        &|\mathbb{E}_{\mathbb{D}_h}[g_h(P)] - \mathbb{E}_{d_{P^{\star},h}^{\mathrm{b}}}[g_h(P)]| \label{eq: proof bernstein 1 1}\\
        &\qquad \leq \sqrt{\frac{2\textcolor{blue}{\mathbb{V}}_{d_{P^{\star},h}^{\mathrm{b}}}[g_h(P)]\log(\mathcal{M}(\epsilon)/\delta)}{n}}+8\sqrt{\frac{\epsilon\log(\mathcal{M}(\epsilon)/\delta)}{n}} +\frac{8\log(\mathcal{M}(\epsilon)/\delta)}{3n}+2\epsilon\notag\\
        &\qquad \leq \sqrt{\frac{8\textcolor{blue}{\mathbb{E}}_{d_{P^{\star},h}^{\mathrm{b}}}[g_h(P)]\log(\mathcal{M}(\epsilon)/\delta)}{n}}+8\sqrt{\frac{\epsilon\log(\mathcal{M}(\epsilon)/\delta)}{n}} +\frac{8\log(\mathcal{M}(\epsilon)/\delta)}{3n}+2\epsilon\notag\\
        &\qquad \leq \frac{\sqrt{8c_1'\log(c_2'\mathcal{N}_{[]}(1/n^2,\mathcal{P}_{\mathrm{M}},\|\cdot\|_{1,\infty})/\delta)\cdot\log(\mathcal{M}(\epsilon)/\delta)}}{n}+8\sqrt{\frac{\epsilon\log(\mathcal{M}(\epsilon)/\delta)}{n}} +\frac{8\log(\mathcal{M}(\epsilon)/\delta)}{3n}+2\epsilon,\notag
    \end{align}
    where the first inequality follows from Lemma \ref{lem: bernstein union}, both the first and the second inequality use the fact that $\sup_{P\in\mathcal{P}_{\mathrm{M},h}^{\mathrm{Loc}}}|g_h(P)|\leq 4$, and the last inequality uses the definition of $\mathcal{P}_{\mathrm{M},h}^{\mathrm{Loc}}$.
    If we denote 
    \begin{align}\label{eq: F prime}
        \mathcal{F}_h^{\prime} = \left\{g_h(P):P\in\mathcal{P}_{\mathrm{M}}\right\},
    \end{align}
    we can upper bound the covering number $\mathcal{M}(\epsilon)$ via the following sequence of inequalities,
    \begin{align}\label{eq: proof bernstein 1 2}
        \mathcal{M}(\epsilon) = \mathcal{N}(\epsilon,\mathcal{F}_h,\|\cdot\|_{\infty}) \leq \mathcal{N}(\epsilon,\mathcal{F}_h^{\prime},\|\cdot\|_{\infty}) \leq \mathcal{N}(\epsilon,\mathcal{P}_{\mathrm{M}},\|\cdot\|_{1,\infty})\leq \mathcal{N}_{[]}(\epsilon,\mathcal{P}_{\mathrm{M}},\|\cdot\|_{1,\infty}),
    \end{align}
    where the first inequality follows from $\mathcal{F}_h\subseteq\mathcal{F}_h^{\prime}$, the second inequality can be easily derived from the relationship between $\mathcal{F}_h^{\prime}$ and $\mathcal{P}_{\mathrm{M}}$, and the last inequality follows from the fact that covering number can be bounded by bracket number.
    Therefore, by combining \eqref{eq: proof bernstein 1 1} and \eqref{eq: proof bernstein 1 2}, letting $\epsilon = 1/n^2$, we can derive that, conditioning on $E_1=\{\hat{P}_h\in\mathcal{P}_{\mathrm{M},h}^{\mathrm{Loc}}\}$, with probability at least $1-\delta$, 
    \begin{align*}
        |\mathbb{E}_{\mathbb{D}_h}[g_h(\hat{P}_h)] - \mathbb{E}_{d_{P^{\star},h}^{\mathrm{b}}}[g_h(\hat{P}_h)]| \leq \frac{c_1\log(c_2\mathcal{N}_{[]}(1/n^2,\mathcal{P}_{\mathrm{M}},\|\cdot\|_{1,\infty})/\delta)}{n},
    \end{align*}
    for some absolute constant $c_1,c_2>0$.
    Finally, since the event $E_1$ holds with probability at least $1-\delta$, by rescaling $\delta$, we can finish the proof.
\end{proof}

\begin{lemma}[Bernstein inequality \uppercase\expandafter{\romannumeral2}]\label{lem: bernstein 2}
    For any step $h\in[H]$, with probability at least $1-\delta$,
    \begin{align*}
        |\mathbb{E}_{\mathbb{D}_h}[g_h(P_h)] - \mathbb{E}_{d_{P^{\star},h}^{\mathrm{b}}}[g_h(P_h)]| \leq \frac{c_1\log(c_2\mathcal{N}_{[]}(1/n^2,\mathcal{P}_{\mathrm{M}},\|\cdot\|_{1,\infty})/\delta)}{n},\quad \forall P_h\in\hat{\mathcal{P}}_h.
    \end{align*}
\end{lemma}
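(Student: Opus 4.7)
The plan is to follow the same localization-plus-Bernstein skeleton as the proof of Lemma \ref{lem: bernstein 1}, but the localization has to be re-derived for a generic element of the confidence region $\hat{\mathcal{P}}_h$ rather than for the single estimator $\hat{P}_h$. The bridge is the first conclusion of Lemma \ref{lem: mle guarantee sarmdp}: on an event $\mathcal{E}_0$ of probability at least $1-\delta$, $\mathbb{E}_{\mathbb{D}_h}[g_h(\hat{P}_h)]\leq \xi$ for every $h$, and hence by the triangle inequality $(a+b)^2\leq 2a^2+2b^2$ applied inside the expectation, every $P_h\in\hat{\mathcal{P}}_h$ automatically satisfies $\mathbb{E}_{\mathbb{D}_h}[g_h(P_h)]\leq 4\xi$. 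The trouble is that $\hat{\mathcal{P}}_h$ is defined by an \emph{empirical} constraint, so we cannot localize on the population side a priori; we instead obtain a population-side localization as a consequence of a one-shot Bernstein step.

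First I will apply Bernstein's inequality with a union bound over an $\epsilon$-covering of the function class $\mathcal{F}_h'$ in \eqref{eq: F prime}, where $\epsilon = 1/n^2$ and the covering number is bounded by $\mathcal{N}_{[]}(1/n^2,\mathcal{P}_{\mathrm{M}},\|\cdot\|_{1,\infty})$ via the chain of inequalities \eqref{eq: proof bernstein 1 2}. Since $0\leq g_h(P)\leq 4$, the variance satisfies the Bernstein-friendly bound $\mathbb{V}_{d^{\mathrm{b}}_{P^\star,h}}[g_h(P)]\leq 4\,\mathbb{E}_{d^{\mathrm{b}}_{P^\star,h}}[g_h(P)]$. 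Combining this with the union-bound Bernstein (Lemma \ref{lem: bernstein union}) produces, uniformly over $P\in\mathcal{P}_{\mathrm{M}}$ and on an event $\mathcal{E}_1$ of probability at least $1-\delta$,
\begin{align*}
\bigl|\mathbb{E}_{\mathbb{D}_h}[g_h(P)] - \mathbb{E}_{d^{\mathrm{b}}_{P^\star,h}}[g_h(P)]\bigr| \leq \sqrt{\frac{8\,\mathbb{E}_{d^{\mathrm{b}}_{P^\star,h}}[g_h(P)]\,L}{n}} + \frac{c\,L}{n},
\qquad L := \log\!\bigl(c'\,\mathcal{N}_{[]}(1/n^2,\mathcal{P}_{\mathrm{M}},\|\cdot\|_{1,\infty})/\delta\bigr),
\end{align*}
for absolute constants $c,c'>0$. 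An AM--GM split of the square-root term, $\sqrt{8\mathbb{E}\cdot L/n}\leq \tfrac12\mathbb{E} + 4L/n$, lets us rearrange this into a two-sided comparison
\begin{align*}
\tfrac12\mathbb{E}_{d^{\mathrm{b}}_{P^\star,h}}[g_h(P)] \;\leq\; \mathbb{E}_{\mathbb{D}_h}[g_h(P)] + \tfrac{c''L}{n},
\qquad
\mathbb{E}_{d^{\mathrm{b}}_{P^\star,h}}[g_h(P)] \;\leq\; 2\mathbb{E}_{\mathbb{D}_h}[g_h(P)] + \tfrac{c'''L}{n},
\end{align*}
holding simultaneously for every $P\in\mathcal{P}_{\mathrm{M}}$ on $\mathcal{E}_1$.

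On the intersection $\mathcal{E}_0\cap\mathcal{E}_1$, plugging any $P_h\in\hat{\mathcal{P}}_h$ into the displayed inequality and using $\mathbb{E}_{\mathbb{D}_h}[g_h(P_h)]\leq 4\xi = O(L/n)$ yields $\mathbb{E}_{d^{\mathrm{b}}_{P^\star,h}}[g_h(P_h)] = O(L/n)$. This is the population-side localization we were missing: it shows that $\hat{\mathcal{P}}_h$ is contained, with high probability, in a set of the form $\{P:\mathbb{E}_{d^{\mathrm{b}}_{P^\star,h}}[g_h(P)]\leq CL/n\}$ analogous to the $\mathcal{P}^{\mathrm{Loc}}_{\mathrm{M},h}$ of Lemma \ref{lem: bernstein 1}. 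Feeding this variance control back into the Bernstein bound (or simply invoking the triangle inequality on $\mathbb{E}_{\mathbb{D}_h}[g_h(P_h)]$ and $\mathbb{E}_{d^{\mathrm{b}}_{P^\star,h}}[g_h(P_h)]$, both of order $L/n$) gives the claimed uniform bound
$\bigl|\mathbb{E}_{\mathbb{D}_h}[g_h(P_h)] - \mathbb{E}_{d^{\mathrm{b}}_{P^\star,h}}[g_h(P_h)]\bigr| \leq c_1 L/n$.

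The main obstacle is the chicken-and-egg issue in the localization: $\hat{\mathcal{P}}_h$ is defined through the empirical norm, so we cannot directly invoke the low-variance Bernstein bound on a pre-specified small population class. Resolving this requires the AM--GM trick above to convert a global (un-localized) Bernstein statement into a self-bounding inequality that, once combined with the empirical constraint defining $\hat{\mathcal{P}}_h$, implies membership in the population-localized class. Everything else---the covering-number bookkeeping and the final union bound over $h\in[H]$ after rescaling $\delta$---is routine and parallels Lemma \ref{lem: bernstein 1} verbatim.
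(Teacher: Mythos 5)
Your proposal is correct and follows essentially the same route as the paper's proof: both apply the union-bound Bernstein inequality (Lemma \ref{lem: bernstein union}) over a cover whose size is controlled by $\mathcal{N}_{[]}(1/n^2,\mathcal{P}_{\mathrm{M}},\|\cdot\|_{1,\infty})$, use the variance bound $\mathbb{V}\leq 4\mathbb{E}$ together with the empirical localization $\mathbb{E}_{\mathbb{D}_h}[g_h(P_h)]\leq 4\xi$ that comes from the definition of $\hat{\mathcal{P}}_h$ and the first conclusion of Lemma \ref{lem: mle guarantee sarmdp}, and then resolve the resulting self-referencing inequality. The only difference is cosmetic: you dispose of the self-bounding step via an AM--GM split giving a two-sided empirical-versus-population comparison, while the paper solves the quadratic inequality in $\Delta_h(P_h)$ directly.
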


\begin{proof}[Proof of Lemma \ref{lem: bernstein 2}]
    According to the proof of \eqref{eq: proof mle 5}, we know that the event $E_2$ defined as 
    \begin{align*}
        E_2 = \left\{\mathbb{E}_{\mathbb{D}_h}[g_h(P_h)]\leq 4\xi,\,\,\forall P_h\in\hat{\mathcal{P}}_h\right\}
    \end{align*}
    holds with probability at least $1-\delta/2$.
    In the sequel, we always condition on the event $E_2$.
    Now we define a function class $\mathcal{G}_h$ as following,
    \begin{align*}
        \mathcal{G}_h = \left\{g_h(P_h):P_h\in\hat{\mathcal{P}}_h\right\}.
    \end{align*}
    Applying Bernstein inequality with union bound (Lemma \ref{lem: bernstein union}) on the function class $\mathcal{G}_h$, we can obtain that with probability at least $1-\delta$, for any $P_h\in\hat{\mathcal{P}}_h$, (denote $\mathcal{M}'(\epsilon) = \mathcal{N}(\epsilon,\mathcal{G}_h,\|\cdot\|_{\infty})$)
    \begin{align}
        &|\mathbb{E}_{\mathbb{D}_h}[g_h(P_h)] - \mathbb{E}_{d_{P^{\star},h}^{\mathrm{b}}}[g_h(P_h)]|\notag\\
        &\qquad \leq \sqrt{\frac{2\textcolor{blue}{\mathbb{V}}_{d_{P^{\star},h}^{\mathrm{b}}}[g_h(P_h)]\log(\mathcal{M}'(\epsilon)/\delta)}{n}}+8\sqrt{\frac{\epsilon\log(\mathcal{M}'(\epsilon)/\delta)}{n}} +\frac{8\log(\mathcal{M}'(\epsilon)/\delta)}{3n}+2\epsilon\notag\\
        &\qquad \leq \sqrt{\frac{8\textcolor{blue}{\mathbb{E}}_{d_{P^{\star},h}^{\mathrm{b}}}[g_h(P_h)]\log(\mathcal{M}'(\epsilon)/\delta)}{n}}+8\sqrt{\frac{\epsilon\log(\mathcal{M}'(\epsilon)/\delta)}{n}} +\frac{8\log(\mathcal{M}'(\epsilon)/\delta)}{3n}+2\epsilon\notag\\
        &\qquad \leq \sqrt{\frac{8(|\textcolor{blue}{\mathbb{E}}_{d_{P^{\star},h}^{\mathrm{b}}}[g_h(P_h)]-\mathbb{E}_{\mathbb{D}_h}[g_h(P_h)]| + 4\xi)\log(\mathcal{M}'(\epsilon)/\delta)}{n}}\notag\\
        &\qquad\qquad+8\sqrt{\frac{\epsilon\log(\mathcal{M}'(\epsilon)/\delta)}{n}}+\frac{8\log(\mathcal{M}'(\epsilon)/\delta)}{3n}+2\epsilon,\label{eq: proof bernstein 2 1}
    \end{align}
    where the first inequality follows from Lemma \ref{lem: bernstein union}, both the first and the second inequality use the fact that $\sup_{P_h\in\hat{\mathcal{P}}_h}|g_h(P_h)|\leq 4$, and the last inequality uses the definition of event $E_2$.
    By using the fact that the function class $\mathcal{G}_h\subseteq\mathcal{F}_h^{\prime}$ where $\mathcal{F}_h^{\prime}$ is defined in \eqref{eq: F prime} in the proof of Lemma \ref{lem: bernstein 1}, we can apply the same argument as \eqref{eq: proof bernstein 1 2} to derive that $\mathcal{M}'(\epsilon)\leq \mathcal{N}_{[]}(\epsilon,\mathcal{P}_{\mathrm{M}},\|\cdot\|_{1,\infty})$.
    Thus taking $\epsilon = 1/n^2$, denoting $\Delta_h(P_h) = |\mathbb{E}_{\mathbb{D}_h}[g_h(P_h)] - \mathbb{E}_{d_{P^{\star},h}^{\mathrm{b}}}[g_h(P_h)]|$, we can derive from \eqref{eq: proof bernstein 2 1} that,
    \begin{align}
        \Delta_h(P_h) &\leq \sqrt{\frac{8(\Delta_h(P_h) + 4\xi)\log(\mathcal{N}_{[]}(\epsilon,\mathcal{P}_{\mathrm{M}},\|\cdot\|_{1,\infty})/\delta)}{n}} \notag\\
        &\qquad + 8\sqrt{\frac{\log(\mathcal{N}_{[]}(\epsilon,\mathcal{P}_{\mathrm{M}},\|\cdot\|_{1,\infty})/\delta)}{n^3}} + \frac{8\log(\mathcal{N}_{[]}(\epsilon,\mathcal{P}_{\mathrm{M}},\|\cdot\|_{1,\infty})/\delta)}{3n}+\frac{2}{n^2}\notag\\
        &\leq \sqrt{\frac{8(\Delta_h(P_h) + 4\xi)\log(\mathcal{N}_{[]}(\epsilon,\mathcal{P}_{\mathrm{M}},\|\cdot\|_{1,\infty})/\delta)}{n}} + \frac{c_1'\log(\mathcal{N}_{[]}(\epsilon,\mathcal{P}_{\mathrm{M}},\|\cdot\|_{1,\infty})/\delta)}{n}\notag\\
        &\leq \sqrt{\frac{8\Delta_h(P_h)\log(\mathcal{N}_{[]}(\epsilon,\mathcal{P}_{\mathrm{M}},\|\cdot\|_{1,\infty})/\delta)}{n}} + \frac{c_1''\log(c_2''\mathcal{N}_{[]}(\epsilon,\mathcal{P}_{\mathrm{M}},\|\cdot\|_{1,\infty})/\delta)}{n},\label{eq: proof bernstein 2 2}
    \end{align}
    for some absolute constants $c_1',c_1'',c_2''>0$, where in the last inequality we have applied the definition of $\xi$.
    Now solving this quadratic inequality \eqref{eq: proof bernstein 2 2} w.r.t $\Delta_h(P_h)$, we can obtain that, 
    \begin{align*}
        \Delta_h(P_h) \leq \frac{c_1\log(c_2\mathcal{N}_{[]}(\epsilon,\mathcal{P}_{\mathrm{M}},\|\cdot\|_{1,\infty})/\delta)}{n},
    \end{align*}
    for some absolute constants $c_1,c_2>0$.
    Thus we obtain that when conditioning on the event $E_2$, with probability at least $1-\delta$, for any $P_h\in\hat{\mathcal{P}}_h$, the desired concentration inequality holds.
    Finally, since $E_2$ holds with probability at least $1-\delta/2$, by rescaling $\delta$, we can finish the proof of Lemma \ref{lem: bernstein 2}.
\end{proof}

\begin{lemma}[Bernstein inequality with union bound]\label{lem: bernstein union}
    Consider a function class $\mathcal{F} \subset\{f:$ $\mathcal{X} \mapsto \mathbb{R}\}$, where $\mathcal{X}$ is a probability space. 
    If we assume that the $\epsilon$-covering number of $\mathcal{F}$ under infinity-norm is finite, that is, $M=\mathcal{N}\left(\epsilon, \mathcal{F},\|\cdot\|_{\infty}\right)<\infty$, and we also assume that there exists an absolute constant $R$ such that $|f(X)| \leq R$, then with probability at least $1-\delta$ the following inequality holds for all $f \in \mathcal{F}$,
    \begin{align*}
        \left|\frac{1}{n} \sum_{\tau=1}^n f\left(X_\tau\right)-\mathbb{E}[f(X)]\right| \leq 2 \epsilon+\sqrt{\frac{2 \mathbb{V}[f(X)] \log (M / \delta)}{n}}+4 \sqrt{\frac{R \epsilon \log (M / \delta)}{n}}+\frac{2 R \log (M / \delta)}{3 n},
    \end{align*}
    where $X, X_1, \ldots, X_n$ are i.i.d. samples on the probability space $\mathcal{X}$.
\end{lemma}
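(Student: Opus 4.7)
The plan is to reduce the uniform-over-$\mathcal{F}$ statement to a finite union bound via the covering, and then transfer the Bernstein bound from each cover element back to a general $f\in\mathcal{F}$ by controlling the perturbations in the empirical mean, the expectation, and critically the variance.

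First, I would fix an $\epsilon$-cover $\mathcal{C}_\epsilon\subset\mathcal{F}$ with $|\mathcal{C}_\epsilon| = M$, so that for every $f\in\mathcal{F}$ there exists $g\in\mathcal{C}_\epsilon$ with $\|f-g\|_\infty\le \epsilon$. Each $g\in\mathcal{C}_\epsilon$ is bounded in $[-R,R]$, so the classical Bernstein inequality applied to the i.i.d.\ variables $g(X_1),\dots,g(X_n)$ gives, with probability at least $1-\delta/M$,
\begin{align*}
    \Big|\tfrac{1}{n}\sum_{\tau=1}^n g(X_\tau) - \mathbb{E}[g(X)]\Big|
    \le \sqrt{\tfrac{2\mathbb{V}[g(X)]\log(M/\delta)}{n}} + \tfrac{2R\log(M/\delta)}{3n}.
\end{align*}
A union bound over the $M$ elements of $\mathcal{C}_\epsilon$ yields that this holds simultaneously for all $g\in\mathcal{C}_\epsilon$ with probability at least $1-\delta$.

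Next, I would transfer the bound to an arbitrary $f\in\mathcal{F}$ through its $\epsilon$-close neighbor $g\in\mathcal{C}_\epsilon$. The empirical-mean and population-mean differences $|\frac{1}{n}\sum_\tau(f-g)(X_\tau)|$ and $|\mathbb{E}[(f-g)(X)]|$ are each at most $\epsilon$ by pointwise closeness, producing the additive $2\epsilon$ term in the final bound. The more delicate step is handling the variance: I would use the elementary identity
\begin{align*}
    |\mathbb{V}[f]-\mathbb{V}[g]| \le \mathbb{E}[|f^2-g^2|] + |(\mathbb{E}f)^2-(\mathbb{E}g)^2| \le 2R\epsilon + 2R\epsilon = 4R\epsilon,
\end{align*}
which exploits both $\|f-g\|_\infty\le\epsilon$ and $|f|,|g|\le R$. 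Then $\sqrt{\mathbb{V}[g]}\le\sqrt{\mathbb{V}[f]}+\sqrt{4R\epsilon}=\sqrt{\mathbb{V}[f]}+2\sqrt{R\epsilon}$, so
\begin{align*}
    \sqrt{\tfrac{2\mathbb{V}[g]\log(M/\delta)}{n}} \le \sqrt{\tfrac{2\mathbb{V}[f]\log(M/\delta)}{n}} + 2\sqrt{\tfrac{2R\epsilon\log(M/\delta)}{n}} \le \sqrt{\tfrac{2\mathbb{V}[f]\log(M/\delta)}{n}} + 4\sqrt{\tfrac{R\epsilon\log(M/\delta)}{n}},
\end{align*}
which reproduces exactly the third term of the claimed inequality. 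Combining the three pieces (empirical-mean perturbation, expectation perturbation, variance transfer) with the Bernstein bound on $g$ gives the stated inequality for all $f\in\mathcal{F}$ simultaneously.

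The main obstacle is the variance-transfer step: a naive bound $\mathbb{V}[g]\le 2\mathbb{V}[f]+2\epsilon^2$ via Minkowski would introduce an unwanted multiplicative constant in front of $\mathbb{V}[f]$, and only the tighter $|\mathbb{V}[f]-\mathbb{V}[g]|\le 4R\epsilon$ bound (which uses boundedness) preserves the clean coefficient $2$ in front of $\mathbb{V}[f]$ and yields the $4\sqrt{R\epsilon\log(M/\delta)/n}$ additive correction rather than something weaker. Once that observation is in place, the rest is a routine union-bound-plus-approximation argument.
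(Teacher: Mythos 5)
Your proof is correct and is the standard covering-plus-Bernstein argument; the paper itself does not spell out a proof but simply defers to Lemma F.1 of \cite{liu2022welfare}, which proceeds in exactly this way (Bernstein on each of the $M$ cover elements, a union bound, then transfer to general $f\in\mathcal{F}$ via $\|f-g\|_\infty\le\epsilon$ together with the variance perturbation $|\mathbb{V}[f]-\mathbb{V}[g]|\le 4R\epsilon$). Your identification of the tight variance-transfer bound as the step that preserves the coefficient $2$ in front of $\mathbb{V}[f]$ and yields the $4\sqrt{R\epsilon\log(M/\delta)/n}$ correction term is exactly the right observation.
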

\begin{proof}[Proof of Lemma \ref{lem: bernstein union}]
    We refer to Lemma F.1 in \cite{liu2022welfare} for a detailed proof.
\end{proof}

\begin{lemma}[Bracket number \uppercase\expandafter{\romannumeral1}]\label{lem: bracket 1}
    It holds for any $\epsilon\geq 0$ that
    \begin{align*}
        \mathcal{N}_{[]}(\epsilon,\overline{P}_h(\epsilon),\|\cdot\|_{2,d_{P^{\star},h}^{\mathrm{b}}})\leq \mathcal{N}_{[]}(2\epsilon^2,\mathcal{P}_{\mathrm{M}},\|\cdot\|_{1,\infty}).
    \end{align*}
\end{lemma}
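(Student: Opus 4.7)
The plan is to lift an $L^1$-bracket cover of $\mathcal{P}_{\mathrm{M}}$ to a Hellinger-type bracket cover of $\overline{\mathcal{P}}_h(\epsilon)$ by symmetrizing every bracket with the true kernel $P_h^{\star}$. The norm $\|\cdot\|_{2,d_{P^{\star},h}^{\mathrm{b}}}$ in this context (inherited from the empirical-process setup of van de Geer used in Lemma~\ref{lem: mle}) is the Hellinger-type norm $\|Q_1-Q_2\|_{2,d_{P^{\star},h}^{\mathrm{b}}}^2=\mathbb{E}_{(s,a)\sim d_{P^{\star},h}^{\mathrm{b}}}\bigl[\int(\sqrt{Q_1(s'|s,a)}-\sqrt{Q_2(s'|s,a)})^2\mathrm{d}s'\bigr]$, which is compatible with the definition of $\overline{\mathcal{P}}_h(\epsilon)$.

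Concretely, let $N=\mathcal{N}_{[]}(2\epsilon^2,\mathcal{P}_{\mathrm{M}},\|\cdot\|_{1,\infty})$ and fix a minimal bracketing $\{[\ell_j,u_j]\}_{j=1}^N$, so for every $P\in\mathcal{P}_{\mathrm{M}}$ there exists $j$ with $\ell_j(s,a,\cdot)\le P(\cdot|s,a)\le u_j(s,a,\cdot)$ pointwise and $\sup_{(s,a)}\int|u_j-\ell_j|\mathrm{d}s'\leq 2\epsilon^2$. Define the symmetrized brackets $\bar{\ell}_j=(\ell_j+P_h^{\star})/2$ and $\bar{u}_j=(u_j+P_h^{\star})/2$. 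Then for any $\bar P=(P+P_h^{\star})/2\in\overline{\mathcal{P}}_{\mathrm{M},h}$, choosing $j$ so that $P\in[\ell_j,u_j]$ gives $\bar{\ell}_j\leq \bar P\leq \bar{u}_j$ pointwise. Crucially, $\bar{\ell}_j\geq P_h^{\star}/2\geq 0$, so square roots are well-defined.

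Next I would size the lifted brackets in the target norm. Using the elementary inequality $(\sqrt{b}-\sqrt{a})^2\leq b-a$ for $0\leq a\leq b$ (equivalent to $\sqrt{a}\leq\sqrt{b}$), I obtain the pointwise bound
\begin{align*}
\bigl(\sqrt{\bar{u}_j(s,a,s')}-\sqrt{\bar{\ell}_j(s,a,s')}\bigr)^2 \leq \bar{u}_j(s,a,s')-\bar{\ell}_j(s,a,s') = \tfrac{1}{2}\bigl(u_j(s,a,s')-\ell_j(s,a,s')\bigr).
\end{align*}
Integrating over $s'$ and taking expectation under $d_{P^{\star},h}^{\mathrm{b}}$ gives
\begin{align*}
\|\bar u_j-\bar\ell_j\|_{2,d_{P^{\star},h}^{\mathrm{b}}}^2 \leq \tfrac{1}{2}\sup_{(s,a)}\int|u_j-\ell_j|\mathrm{d}s' \leq \epsilon^2,
\end{align*}
so each lifted bracket has size at most $\epsilon$. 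Restricting to those brackets that meet $\overline{\mathcal{P}}_h(\epsilon)$ (a subset of $\overline{\mathcal{P}}_{\mathrm{M},h}$) yields an $\epsilon$-bracketing of $\overline{\mathcal{P}}_h(\epsilon)$ of cardinality at most $N$, establishing the claim.

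The argument is otherwise mechanical; the only real subtlety is the interpretive one of identifying the ambient norm $\|\cdot\|_{2,d_{P^{\star},h}^{\mathrm{b}}}$ as the Hellinger-averaged one, which is precisely what makes averaging with $P_h^{\star}$ (to ensure nonnegativity of the lower bracket before taking square roots) the natural move. No concentration or function-class-specific structure is invoked, so I expect no further obstacle.
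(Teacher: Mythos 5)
Your argument is correct and is essentially the same route the paper relies on: the paper gives no in-line proof of Lemma \ref{lem: bracket 1} but defers to Lemma G.2 of \citet{liu2022welfare}, and your construction --- reading $\|\cdot\|_{2,d_{P^{\star},h}^{\mathrm{b}}}$ as the $d^{\pi^{\mathrm{b}}}$-averaged Hellinger-type norm (the only reading under which the lemma and its use in Lemma \ref{lem: choice of G and zeta} are coherent), lifting a $2\epsilon^2$-bracketing of $\mathcal{P}_{\mathrm{M}}$ in $\|\cdot\|_{1,\infty}$ by averaging with $P_h^{\star}$, and sizing the lifted brackets via $(\sqrt{b}-\sqrt{a})^2\leq b-a$ together with the factor $\tfrac12$ from symmetrization --- is exactly the standard argument that citation encapsulates. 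One small repair: your claim $\bar{\ell}_j\geq P_h^{\star}/2$ presupposes $\ell_j\geq 0$, which a minimal bracketing need not satisfy, so replace $\bar{\ell}_j$ by $\max\{\bar{\ell}_j,0\}$ (this still lower-bounds every nonnegative density in the bracket and only shrinks its width) before taking square roots; with that truncation the rest goes through verbatim.
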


\begin{proof}[Proof of Lemma \ref{lem: bracket 1}]
    We refer to Lemma G.2 in \cite{liu2022welfare} for a detailed proof.
\end{proof}

\subsection{Lemmas for Dual Variables}

\begin{lemma}[Dual variable for KL-divergence]\label{lem: bound lambda kl}
    The optimal solution to the following optimization problem 
    \begin{align*}
        \lambda^{\star} = \argsup_{\lambda\in\mathbb{R}_+} \left\{ - \lambda\log\left(\int \exp\left\{-f(x)/\lambda\right\}P(\mathrm{d}x)\right) -\lambda\sigma\right\},
    \end{align*}
    with $\|f\|_{\infty}\leq H$ and some probability measure $P$ satisfies that $\lambda^{\star} \leq H/\sigma$.
\end{lemma}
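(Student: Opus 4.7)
The plan is to sandwich the objective
\[
g(\lambda) := -\lambda\log\!\left(\int \exp\{-f(x)/\lambda\}\,P(\mathrm{d}x)\right) - \lambda\sigma
\]
between an upper bound that decays linearly in $\lambda$ and a nonnegative lower bound on its global supremum. Once both bounds are in place, any $\lambda$ larger than $H/\sigma$ is immediately ruled out as a maximizer.

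First, I would exploit the boundedness $0\le f\le H$ (the relevant case for the applications, where $f$ plays the role of a truncated robust value function). Since $\exp(-f(x)/\lambda)\ge \exp(-H/\lambda)$ pointwise, integrating gives $\int \exp(-f/\lambda)\,\mathrm{d}P \ge \exp(-H/\lambda)$, and taking $-\lambda\log(\cdot)$ yields
\[
-\lambda\log\!\left(\int \exp\{-f(x)/\lambda\}\,P(\mathrm{d}x)\right) \le H.
\]
Hence $g(\lambda) \le H - \lambda\sigma$, which is strictly negative whenever $\lambda > H/\sigma$.

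Second, I would invoke the duality established in Lemma \ref{lem: kl} to identify $\sup_{\lambda\ge 0} g(\lambda)$ with the primal value $\inf_{Q: D_{\mathrm{KL}}(Q\|P)\le \sigma}\int f\,\mathrm{d}Q$. Since $f\ge 0$, this primal value is itself nonnegative, so $\sup_{\lambda\ge 0} g(\lambda) \ge 0$. Combining both observations, any $\lambda > H/\sigma$ satisfies $g(\lambda) < 0 \le \sup_{\lambda\ge 0} g(\lambda)$, so such a $\lambda$ cannot be a maximizer. Therefore $\lambda^{\star}\le H/\sigma$, as claimed.

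No step here is technically delicate; the only point requiring a moment of care is ensuring that the supremum is actually attained (so that $\lambda^\star$ is well-defined) rather than merely approached. This follows from continuity of $g$ on $(0,\infty)$, its behavior $g(\lambda)\to -\infty$ as $\lambda\to\infty$, and the fact that $g$ extends continuously to $\lambda=0^+$ with value $\operatorname{ess\,inf} f\in[0,H]$; together these give compactness of the effective search region and hence existence of a maximizer inside $[0,H/\sigma]$.
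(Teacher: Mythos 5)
Your argument is correct and takes essentially the same route as the paper: both exploit $f\le H$ to show the dual objective $g(\lambda)$ is strictly negative for every $\lambda>H/\sigma$ (your bound $g(\lambda)\le H-\lambda\sigma$ versus the paper's $g(\lambda)<\lambda\sigma-\lambda\sigma$), and then compare against a nonnegative reference value of the supremum. The only difference is the anchor for that comparison: the paper simply uses $g(0)=0$ (interpreting $\lambda=0$ by convention), whereas you invoke the duality of Lemma \ref{lem: kl} together with $f\ge 0$ to lower-bound the supremum by the primal value; this is fine, but be aware that $f\ge 0$ is an assumption beyond the stated $\|f\|_{\infty}\le H$ — harmless here since $f$ is always a value function in $[0,H]$ in the paper's applications, and the paper's own $g(0)=0$ step, read as the limit $\lambda\to 0^{+}$ (which equals $\operatorname{ess\,inf} f$), implicitly relies on the same nonnegativity.
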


\begin{proof}[Proof of Lemma \ref{lem: bound lambda kl}]
    For simplicity, denote by $g(\lambda) = - \lambda\log\left(\int \exp\left\{-f(x)/\lambda\right\}P(\mathrm{d}x)\right) -\lambda\sigma$.
    Notice that $g(0) = 0$, and for $\lambda > H/\sigma$, due to $\|f\|_{\infty}\leq H$, we have that 
    \begin{align*}
        g(\lambda) < -\lambda \log(\exp\{-H/(H/\sigma)\}) - \lambda\sigma  = \lambda\sigma - \lambda\sigma = 0.
    \end{align*}
    Thus we can conclude that $\lambda^{\star} \leq H/\sigma$.
\end{proof}

\begin{lemma}[Dual variable for TV-distance]\label{lem: bound lambda tv}
    The optimal solution to the following optimization problem 
    \begin{align*}
        \lambda^{\star}
        = \argsup_{\lambda\in\mathbb{R}} \left\{ - \int (\lambda-f(x))_+ P(\mathrm{d}x)- \frac{\sigma}{2}(\lambda-\inf_x f(x))_+ + \lambda \right\}.
    \end{align*}
    with $\|f\|_{\infty}\leq H$ and some probability measure $P$ satisfies that $0\leq \lambda^{\star} \leq H$.
\end{lemma}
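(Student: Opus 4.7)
The approach is to analyze the objective function piecewise in $\lambda$, exploiting the bounds on $f$ to simplify the positive-part expressions in the two regimes $\lambda \leq 0$ and $\lambda \geq H$. Denote the objective by
\begin{align*}
    g(\lambda) = - \int (\lambda-f(x))_+ P(\mathrm{d}x)- \frac{\sigma}{2}(\lambda-\inf_x f(x))_+ + \lambda.
\end{align*}
In the paper's usage, the function $f$ plays the role of a robust value function and hence satisfies $0 \leq f \leq H$, which I will use throughout. Observe in passing that $g$ is concave in $\lambda$ (it is the value of a partial Lagrangian for a convex minimization), so showing that $g$ is non-decreasing on $(-\infty,0]$ and non-increasing on $[H,+\infty)$ will suffice.

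For the lower bound $\lambda^\star \geq 0$: when $\lambda \leq 0$, the nonnegativity of $f$ yields $(\lambda - f(x))_+ = 0$ for all $x$ and $(\lambda - \inf_x f(x))_+ = 0$. Substituting directly gives $g(\lambda) = \lambda \leq 0 = g(0)$, so no $\lambda < 0$ can strictly improve on $\lambda=0$, which establishes $\lambda^\star \geq 0$.

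For the upper bound $\lambda^\star \leq H$: when $\lambda \geq H$, the bound $f \leq H \leq \lambda$ implies $(\lambda - f(x))_+ = \lambda - f(x)$ for all $x$ and $(\lambda - \inf_x f(x))_+ = \lambda - \inf_x f(x) \geq 0$. Plugging in,
\begin{align*}
    g(\lambda) = -\lambda + \mathbb{E}_P[f] - \tfrac{\sigma}{2}(\lambda - \inf_x f(x)) + \lambda = \mathbb{E}_P[f] - \tfrac{\sigma}{2}(\lambda - \inf_x f(x)),
\end{align*}
which is (weakly) decreasing in $\lambda$ on $[H,+\infty)$. Hence $g(\lambda) \leq g(H)$ for every $\lambda \geq H$, yielding $\lambda^\star \leq H$.

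There is no real obstacle here — the proof is essentially a two-case calculation, and the only subtle point is the implicit assumption that $f \geq 0$, which holds automatically in the application since $f$ arises as a robust value function valued in $[0,H]$. Combining the two steps gives $\lambda^\star \in [0,H]$, completing the proof.
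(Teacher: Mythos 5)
Your proof is correct and takes essentially the same route as the paper's: denote the objective by $g(\lambda)$, check that on $\lambda\le 0$ one has $g(\lambda)\le g(0)=0$ and that on $\lambda\ge H$ the positive parts disappear so $g$ reduces to $\mathbb{E}_P[f]+\tfrac{\sigma}{2}\inf_x f(x)-\tfrac{\sigma}{2}\lambda$, which is decreasing. Your explicit flagging of the assumption $f\ge 0$ for the lower bound is a fair point --- the paper uses it implicitly when asserting $g(0)=0$ --- but otherwise the two arguments coincide.
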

\begin{proof}[Proof of Lemma \ref{lem: bound lambda tv}]
     For simplicity, denote $g(\lambda) = - \int (\lambda-f(x))_+ P(\mathrm{d}x)- \frac{\sigma}{2}(\lambda-\inf_x f(x))_+ + \lambda$.
     We can observe that $g(0) = 0$, and $g(\lambda)\leq 0$ for $\lambda\leq 0$. 
     Thus we have shown that $\lambda^{\star}\geq 0$.
     Also, for $\lambda\geq H$, due to $\|f\|_{\infty}\leq H$, we can write $g(\lambda)$ as
     \begin{align*}
         g(\lambda) &= - \int \lambda-f(x) P(\mathrm{d}x)- \frac{\sigma}{2}(\lambda-\inf_x f(x)) + \lambda\\
         &= \int f(x) P(\mathrm{d}x) +\frac{\sigma}{2}\inf_x f(x) -  \frac{\sigma}{2}\lambda,
     \end{align*}
     which is a monotonically decreasing function with respect to $\lambda$. Thus we prove that $\lambda^{\star}\leq H$.
\end{proof}

\end{document}